\title{\textbf{Theoretical Compression Bounds for Wide Multilayer Perceptrons}}
\author{%
  Houssam El Cheairi\thanks{MIT. Email: houssamc@mit.edu} \and
  David Gamarnik\thanks{MIT. Email: gamarnik@mit.edu} \and
  Rahul Mazumder\thanks{MIT. Email: rahulmaz@mit.edu}
}
\date{\today}
\newtheorem{lemma}{Lemma}
\newtheorem{proposition}{Proposition}
\newtheorem{definition}{Definition}
\newtheorem{assumption}{Assumption}
\newtheorem{problem}{Problem}
\newtheorem{remark}{Remark}
\newtheorem{corollary}{Corollary}
\newcommand{\E}{\mathbb{E}}
\newcommand{\R}{\mathbb{R}}
\def \R{ \mathbb{R}}
\def \A{ \mathcal{A}}
\def \T{ \mathcal{T}}
\def \EE{\mathcal{E}}
\def \Z{\mathbf{Z}}
\def \l{\left}
\def \r{\right}
\def \b{\mathbf}
\def \p{\mathbb{P}}
\def \E{\mathbb{E}}
\def \Sm{\mathcal{S}}
\numberwithin{equation}{section}
\begin{document}
\maketitle

\begin{abstract}

Pruning and quantization techniques have been broadly successful in reducing the number of parameters needed for large neural networks, yet theoretical justification for their empirical success falls short. We consider a randomized greedy compression algorithm for pruning and quantization post-training and use it to rigorously show the existence of pruned/quantized subnetworks of multilayer perceptrons (MLPs) with competitive performance. We further extend our results to structured pruning of MLPs and convolutional neural networks (CNNs), thus providing a unified analysis of pruning in wide networks. Our results are free of data assumptions, and showcase a tradeoff between compressibility and  network width. The algorithm we consider bears some similarities with \textit{Optimal Brain Damage} (OBD) and can be viewed as a post-training randomized version of it. The theoretical results we derive bridge the gap between theory and application for pruning/quantization, and provide a justification for the empirical success of compression in wide multilayer perceptrons.
\end{abstract}


\tableofcontents

\section{Introduction}\label{chapter3:section:introduction}

Over the past decade, neural networks have achieved remarkable empirical success in various machine learning applications including computer vision, natural language processing, speech recognition, and image/text generation to name a few. However, this success comes at the cost of exceedingly large and overparametrized architectures, often consisting of billions of parameters. Moreover, training these large networks has proven to be extremely expensive, requiring extensive amounts of resources both in terms of data and computing power. Naturally, the latter issues limit the deployability of large networks in embedded systems as one is typically constrained by operational costs of storage, training, and inference.

Neural network compression has therefore emerged as an attempt to reduce the size of networks without compromising their accuracy. Fundamentally, compression hinges on the idea that overparametrized networks contain \textit{redundancies} that can be exploited/eliminated to construct \textit{lighter} networks with similar prediction performance.  Many different compression techniques have been put forward including pruning \cite{lecun1989optimal}, quantization \cite{han2015deep}, neural architecture search \cite{chen2021adabert}, knowledge distillation \cite{hinton2014distilling}, weight clustering \cite{ullrich2017soft}, token pruning \cite{wei2023joint}, and  more. Out of the aforementioned techniques, pruning has received much attention as it has  proven to work well in practice and can be exploited by hardware more easily in the case of structured pruning \cite{he2023structured}. In fact, even simple pruning routines such as magnitude-based pruning and random pruning have had some success in the literature for moderate sparsities \cite{han2015deep,liuunreasonable, gadhikar2023random}. Nonetheless, theoretical justification for the success of these heuristic algorithms remains somewhat elusive. Compression comes in different settings that can be roughly categorized as follows:
\begin{enumerate}
    \item \textit{Compression at initialization}. The dense network is compressed before training in the hope of obtaining a sparse  network that can achieve a competitive loss once trained. Such sparse networks are  dubbed \textit{lottery tickets} \cite{frankle2018lottery, frankle2020pruning, kumar2024no}, as they use the very limited information available at initialization to achieve compression. 
    \item  \textit{Compression concurrently with training}. Typically, consists of a sequence of compression/training steps, whereby some of the weights are compressed then the network (or only a subnetwork) is trained further  \cite{lecun1989optimal, lym2019prunetrain, lasby2023dynamic}. The process is repeated until the desired compression rate and convergence  are achieved. This approach often leads to the best compression rates in practice at a computational training cost. 
    \item \textit{Compression post-training}. A dense network is first trained then compressed \cite{hassibi1992second, lazarevich2021post, kwon2022fast, zhang2024plug}, typically using simple heuristics such as magnitude-based pruning. 
\end{enumerate}

In this paper, we solely focus on the third setting for \textit{pruning} and \textit{quantization}. The practical motivations of  post-training compression are twofold: First, post-training compression eliminates the computational cost of further training. Second, many of the practically used networks are based on concatenating a dense pretrained model with a smaller MLP, before freezing the former and training the latter to learn specific data. Hence, replacing the dense pretrained models by compressed equivalents with minimal accuracy alteration would improve the efficiency of such pipelines in terms of storage, training cost and speed of inference.

Pruning is typically achieved by setting some of the network weights  to zero, and possibly adjusting  other weights. Moreover, we distinguish between unstructured and structured pruning. In the former, the pruned weights' positions need not follow any specific pattern within their layer. In contrast, structured pruning removes entire network structures, such as neurons for MLPs and filters/channels for CNNs. The weights set to zero are then ignored both at inference and training. On the other hand, quantization is achieved by replacing the weights with a discrete approximation from a predetermined finite set of  possible values. Namely, each weight $\b{W}_{ij}$ is replaced by a value $\hat{\b{W}}_{ij} \in \{v_1,\ldots,v_k\}$ where $v_1,\ldots,v_k$ are scalars, and typically $k \ll {\rm dim}(\b{W})$. Throughout this paper, we use \textit{compression} to  refer to both pruning and quantization interchangeably, and it will be clear from context whether we are only interested in pruning or quantization.

The compression procedure we consider borrows ideas from the \textit{Optimal Brain Damage} (OBD) algorithm \cite{lecun1989optimal}, and can be thought of as a randomized post-training iterative variant of OBD. Namely, our theoretical analysis hinges on a second order Taylor approximation of loss. The randomization we introduce leads to a random sparse network, as opposed to a deterministic sparse network. Crucially, we are able to derive bounds on the loss of the random sparse network on average, which in turn proves the existence of a nonrandom sparse network with the same guarantees.

The framework we use to analyze the performance of our algorithm is based on the Lindeberg interpolation method \cite{chatterjee2006generalization}. Informally, this technique allows us to bound the average variations in an MLP's loss upon perturbing one of its weights with a random variable. When the perturbation has zero mean, the first order term in the perturbed loss is zero, which leaves us with second order terms. Hence, we can use the second order terms as proxies for the \textit{importance} of weights. That is, we can select the weight whose perturbation leads to the \textit{least} loss variation, and switch it with its perturbed equivalent. The use of the interpolation technique provides two key benefits for our analysis: First, it allows us to iteratively control the discrepancy in loss following a single weight perturbation, as opposed to a fully perturbed weight layer. Second, this technique casts the problem of bounding the variations of the loss analytically as a simple calculus problem involving a one variable scalar function. Furthermore, we demonstrate the strength and versatility of this technique by extending our results to structured pruning of MLPs and CNNs with little effort.

While our results are theoretical in nature, we conduct several numerical simulations to validate the  content of our findings empirically. Namely, we train MLPs and CNNs on regression and classification tasks, then use our pruning approach to obtain compressed networks post-training. We vary the width of the initial networks and show that the pruning error decays with width for a fixed compression rate, which is consistent with our theoretical  results


We next summarize relevant results from the theoretical literature on compression bounds and compare them to our results. Early research on pruning dates back to the seminal papers of LeCun \cite{lecun1989optimal} and Hassibi \cite{hassibi1992second} which introduced second order pruning routines dubbed \textit{Optimal Brain Damage} (OBD) and \textit{Optimal Brain Surgeon} (OBS) respectively. These pruning algorithms are based on iteratively zeroing the weight that leads to the smallest increase in the loss,  whereby the loss is approximated by its second order Taylor series. Many papers have iterated over the main ideas behind OBD/OBS \cite{frantar2022optimal, yu2022combinatorial, kurtic2022optimal, benbaki2023fast}. Some simple error bounds on OBS have been given in \cite{dong2017learning}, though the bounds scale linearly with the depth, and involve products of operator norms of the pruned weight matrices, which could scale exponentially with dimension (see discussion following Assumption \ref{chapter3:assumption:1}). Our pruning approach uses the second order approximation of loss as well, but in a randomized fashion: the selected weight is replaced by a discrete random variable. The introduction of randomization is crucial for our analysis, as it cancels out the first-order term in the Taylor expansion of the loss variation. Whereas, the first order term in OBS is neglected as the network's loss is assumed to be minimal in its current state, which is achieved by adding training steps after each  compression step. 

Although the greedy aspect of our algorithm is not novel, and random pruning methods such as Bernoulli masks have been proposed before, our specific combination of greedy randomization with a novel importance score is new. Crucially, our theoretical analysis introduces the use of  the Lindeberg interpolation to eliminate first-order contributions entirely. To our knowledge, this is the first application of the Lindeberg interpolation technique in the pruning or quantization literature. In contrast, the standard analysis of OBD removes the first-order terms by assuming minimal loss (zero gradient), thus requiring retraining the model after each pruning step. In contrast, we do not train the model further, nor make any null gradient assumption. While the ideas behind our approach (randomization, interpolation) are elementary, we are, to the best of our knowledge, the first to demonstrate their successful application in pruning/quantization, and  consequently prove that pruning at linear sparsities is achievable without any data assumptions. 

A recent analysis for random and magnitude-based pruning is given in \cite{qian2021probabilistic}. The latter work is perhaps the most relevant for this paper, as we borrow aspects of its setting, including some  assumptions on the weights scale. However, the authors are only able to derive guarantees for sublinear pruning rates,  and thus do not explain the success of pruning at linear sparsities. Moreover, the error metric they use is a worst-case approximation error over the entire $\ell_2$ unit ball, i.e., they require the pruned network to uniformly approximate the original network well on the $\ell_2$  unit ball. The latter metric is rather restrictive, as the performance of neural networks  is typically evaluated by taking the average of a given loss over some distribution (e.g., training/validation data).  Furthermore, they require the weights to be independent random variables, while we do not, as it need not hold in practice. In fact, some compression algorithms exploit weight correlations \cite{kuznedelev2023cap}.

An algorithm based on quantizing neurons  deterministically in pre-trained networks with provable guarantees is given in \cite{lybrand2021greedy}, but the analysis is limited to one-layer networks with Gaussian input data and quantization alphabet $\{-1,0,1\}$. The latter results for one-layer networks were extended to mixtures of Gaussian data and more general alphabets in \cite{zhang2023post}. Another follow-up work \cite{zhang2024unified} by the latter authors extends a \textit{stochastic path following quantization} (SPFQ) method to pruning, and derives theoretical error bounds for one-layer networks that scale logarithmically with the layer's dimension. On the other hand, our theoretical bounds apply to deep networks of any depth, and do not require any distributional assumption on the input data. Furthermore, our algorithm is not restricted to act independently on neurons, unlike the aforementioned works.

Another line of theoretical analysis derives generalization bounds for compressed network based on the PAC-Bayes framework \cite{zhou2018non}. However, the authors rely on the existence of good compression algorithms, and thus do not show the feasibility of compression. Similar bounds are given in \cite{arora2018stronger} for compression by truncation of singular values.

Importance sampling pruning algorithms are derived in \cite{baykal2018data, liebenwein2019provable} for MLPs and CNNs, where sampling distributions based on \textit{sensitivity} scores are constructed over the parameters of the networks in order to retain important weights and discard redundant ones. However, the bounds given in \cite{baykal2018data, liebenwein2019provable} do not imply feasibility of pruning at linear sparsities, nor at any other prescribed level. The upper bound on the subnetwork parameters given in Theorem 4 of \cite{baykal2018data} can scale (in the worst case) larger than the size of the initial dense network. Indeed, the quantities $S_i^\ell$ are lower bounded by 1, and the quantities $\hat{\Delta}_k$ can scale as $\mathcal{O}({\rm width})$.  Hence, the term $\hat{\Delta}^{\ell \to}$ can scale as $\mathcal{O}(({\rm width})^{{\rm depth}-\ell+1})$. Combining these leads to a very crude bound of $\mathcal{O}(({\rm depth})^3 \times ({\rm width})^{2{\rm depth}-1})$ parameters, much larger than the dense network’s $\mathcal{O}({\rm depth} \times ({\rm width})^2)$ number of parameters. The authors of \cite{baykal2018data} do not clarify the scaling of the key quantities $\hat{\Delta}^{\ell\to}, S_i^\ell$, nor do they derive comprehensive upper bounds on them based on network parameters (width/depth/weights norm, etc), or claim a specific achievable pruning regime (linear/sublinear). In general, \cite{baykal2018data} does not show the feasibility of pruning at any comparable regime.

While most pruning algorithms are backward in the sense that they start from a dense network and gradually remove weights, a forward approach with provable asymptotic guarantees has been considered in \cite{ye2020good}. However, the provided theoretical analysis is limited to networks with two layers and does not directly show the interplay between sparsity level and degree of overparametrization.

Recently, the Lottery Ticket Hypothesis (LTH) has been put forward by \cite{frankle2018lottery} conjecturing that a trained network contains a sparse subnetwork capable of matching the original network's accuracy when trained from scratch. A stronger conjecture was subsequently made in \cite{ramanujan2020s}, and was later proven in \cite{malach2020proving} for multilayer perceptrons with ReLU activations, and generalized in \cite{orseau2020logarithmic, pensia2020optimal}. However, finding these sparse subnetworks has proven to be difficult \cite{frankle2020pruning}. Recently, a theoretical explanation for this difficulty based on the \textit{Law of Robustness} was put forward in \cite{kumar2024no}. However, these works deal with compression at initialization, while our work focuses on post-training compression, a fundamentally different setting. Namely, SLTH does not account for the empirical success of compression after training. In contrast, we provide the first rigorous proof that post-training pruning is feasible at linear sparsity levels.

Furthermore, the theoretical literature on SLTH is limited, and the most general statements of SLTH remain conjectural. Namely, showing that one can find sparse subnetworks  of dense networks  at initialization with good approximation error is an open problem. Thus, the general statement of SLTH cannot be compared to our results. Works such as \cite{malach2020proving}  prove slightly different variants of STLH, and we expand on this next. Indeed, the notion of sparsity we adopt is derived from the dense starting network $\Phi$. In particular, we show that sparse subnetworks $\hat{\Phi}$ of $\Phi$ exist, representing the same initial network $\Phi$. On the other hand, SLTH results, as presented in e.g., \cite{malach2020proving}, show that given a target network $F$ and a sufficiently overparametrized random network $G$, there exists a subnetwork $\hat{G}$ of $G$ approximating $F$ without any further training. Crucially, this subnetwork $\hat{G}$ has a similar number (if not more) of parameters to $F$. Moreover, the network $G$ is a polynomial order of magnitude larger than $F$. This result does not imply that we can prune the target network 
 at linear sparsity or even sublinear sparsity, but instead that if we have access to a very dense and much larger random network $G$, we can find a subnetwork $\hat{G}$ similar to the target $F$, with similar number of parameters to $F$. That is, the sparsity does not relate to the target network. Whereas in our case, sparsity is defined explicitly with respect to the target network $F=\Phi$, and our matching notation for 
 $\hat{G}$ is $\hat{\Phi}$. Similar works to \cite{malach2020proving} include \cite{orseau2020logarithmic, pensia2020optimal} and show similar results. Additionally, theoretical results quantifying the achievable sparsity regime from SLTH are lacking. The most relevant work we could find is \cite{natale2024sparsity}, whereby the authors derive sparsity bounds between 
 $\hat{G}$ and $G$, but these results cannot be compared with ours, as we consider the sparsity between $F$ and $\hat{G}$.

We end this section with a brief overview on the organization of this paper. The next section introduces key notation used in the remainder of this work. Our theoretical results are grouped into three sections: unstructured compression of MLPs is presented in section~\ref{chapter3:section:unstructured.fcnn}, structured pruning of MLPs  in section~\ref{chapter3:section:structured.fcnn}, and structured pruning of CNNs in section~\ref{chapter3:section:structured.cnn}. Each of these sections provides a formal definition of the compression setting at hand and states our main compression bound. Our empirical results are presented in Section~\ref{chapter3:section:numerical.simulation}. All proofs are deferred to Sections~\ref{chapter3:appendix:unstructured.shallow} and~\ref{chapter3:appendix:structured.shallow}. In each of these proof sections, we first derive compression bounds for shallow MLPs (one- and two-layer perceptrons), and then extend the arguments to general MLPs. Finally, several necessary auxiliary lemmas are established in Section~\ref{chapter3:appendix:preliminary.results}.

\section{Notation}\label{chapter3:section:notations}

We introduce notation that will be used in the remainder of this paper. We use standard big-O notation $\mathcal{O}(.)$ to hide explicit constants. For $n\in \mathbb{Z}_{\geq 0}$, we use $[0, n]$ and  $[n]$ to denote the sets $\{0,\ldots, n\}$ and $\{1,\ldots, n\}$ respectively. We use bold letters to denote matrices, tensors and vectors.  Given two matrices $\b{A}, \b{B}$ of same dimensions, we let $\b{C} = \b{A} \odot \b{B}$ be the matrix with entries $\b{C}_{ij} = \b{A}_{ij}\b{B}_{ij}$. We denote by $\b{A}_{i,:}, \b{A}_{:,j}$ the $i$-th row and $j$-th column of $\b{A}$ respectively. We use $\|\cdot\|$ to denote the operator norm for matrices, $\|\cdot\|_q$ to denote standard $\ell_q$ norms for $q\in \mathbb{R}_{\geq 0}$, and $\|\cdot\|_{\infty}$ to denote the infinity norm. Moreover, for any function $\varphi : \mathbb{R}^n \to \mathbb{R}^m$ we let $\|\varphi\|_{\rm Lip} = \sup_{\b{x}\neq \b{y} \in \R^n} \|\varphi(\b{x}) - \varphi(\b{y})\|/\|\b{x} - \b{y}\|$. We introduce for $q\in \mathbb{R}_{\geq 0}$
\begin{align*}
    \mathbb{B}^k_{q}(\rho) \triangleq \l\{\b{x}\in \R^k \mid \|\b{x}\|_q \leq \rho\r\}.
\end{align*}
We use the  notation  $\b{Z}(t; i, j)$ where $\b{Z}$ is an  $n\times m$ matrix and $(i,j)\in [n]\times [m]$ to denote the matrix given by
$$
\b{Z}(t; i, j)_{\ell_1 \ell_2} =
\begin{cases}
\b{Z}_{\ell_1 \ell_2}, & (\ell_1,\ell_2) \neq (i,j),\\
t, & (\ell_1,\ell_2) = (i,j).
\end{cases}
$$
Given a constant $\kappa>0$ and a vector $\b{v}\in \R^d$, we denote by $[\b{v}]_{\kappa}$ the projection of $\b{v}$ into the Euclidean ball $\mathbb{B}^d_2(\kappa)$ of radius $\kappa$, i.e $[\b{v}]_{\kappa} = \arg\min_{\b{z}\in \mathbb{B}^d_2(\kappa)}\|\b{z}-\b{v}\|$. Given a function $f$ with input $\b{x}$ and output in $\R^n$, we let $f(\b{x}) = (f_1(\b{x}),\dots, f_{n}(\b{x}))$ be its coordinate functions. Given two reals $a,b \in \R$ we use the notation $a \vee b = \max(a, b)$ and $a \wedge b = \min(a, b)$. Given a random variable $X$ with distribution $\mathcal{D}$, we denote $\E_{X}$ the expectation over $X$. Similarly, given a collection $\mathcal{C}=\{X_1,\ldots,X_n\}$ of random variables, we denote $\E_{\mathcal{C}}$ the expectation over the joint distribution of $(X_1,\dots,X_n)$. We use $\b{Ber}^{n,m}_p$ to denote an $n$ by $m$ matrix with independent ${\rm Bernoulli}(p)$ entries. For a random vector $\b{x}$, we denote by ${\rm Var}(\b{x})$ the covariance matrix of $\b{x}$. Given a matrix $\b{A}\in \R^{n\times m}$ and $p\in (0,1)$, we say that $\b{A}$ is $p$-sparse if $\|\b{A}\|_0 \leq p nm$. Furthermore, we say that $\b{A}$ is $k$-discrete where $k\in \mathbb{Z}_{\geq 0}$ if the entries of $\b{A}$  take at most $k$ distinct values.  Given a function $f$ depending on $L$ total parameters, we say that $f$ is $p$-sparse ($k$-discrete) if at most $pL$ of those parameters are nonzero (all parameters take at most $k$ distinct values). 

Given a function $\varphi: \R^n \to \R^n$, we say that $\varphi$ is  entrywise  if it is applied entrywise and use the abused notation $\varphi(\b{z}) = (\varphi(\b{z}_1),\dots \varphi(\b{z}_n))$ for $\b{z}\in \R^n$. Furthermore, we denote by $\varphi^{(j)}, j\in \mathbb{Z}_{\geq 0}$ its derivatives. For $ M\in \mathbb{R}_{>0}, w \in [-M, M]$ and $k\in \mathbb{Z}_{\geq 1}$ we denote by $q(w; M, k)$ the random variable given by
$$
q(w;M,k) = 
\begin{cases} 
\text{sign}(w)\frac{\ell_w M}{k} & \text{with probability } 1 - \ell_w + \frac{k|w|}{M}, \\
\text{sign}(w)\frac{(\ell_w-1)M}{k} & \text{with probability } \ell_w - \frac{k|w|}{M},
\end{cases}
$$
where    $\ell_w = \min \l\{ \ell \mid \ell \in [k], |w| \leq \frac{\ell M}{k} \r\}$, and ${\rm sign}(w) = 1_{w\geq 0} - 1_{w<0}$. Finally, we introduce the following class of gate matrices.
\begin{definition}[Gate Matrix]
Let $n\in \mathbb{Z}_{>0}$ and $S\subset [n]$. Introduce $\b{P}_S = \sum_{j \in S} \b{e}_j \b{e}_j^\top$, where $(\b{e}_i)_{i\in[n]}$ are the canonical basis vectors of $\R^n$. For $z\in \mathbb{R}$, the gate matrix $\b{G}(z; S)$ is given by
\begin{align*}
    \b{G}(z; S) = \b{I}_n + (z-1) \b{P}_S.
\end{align*}
When $S={k}$, we also use the notation $\b{G}(z; k)$.
\end{definition}

\section{Unstructured Compression of Multilayer Perceptrons}\label{chapter3:section:unstructured.fcnn}

\subsection{Problem Formulation}\label{chapter3:section:problem.formulation.1}

We begin by formalizing the class of MLPs and the compression questions of interest. Let $m\in \mathbb{Z}_{\geq 1}$ and introduce the following function representing an $m$-layer MLP
\begin{align}
    \Phi: \quad \R^{n_{1}} \to \R^{n_{m+1}}, \quad \b{x} \mapsto \varphi_{m}\l(\b{W}_m \varphi_{m-1}\l(\dots \varphi_{1}\l(\b{W}_1 \b{x}\r)\r)\r),  \label{chapter3:fcnn}
\end{align}
where $\b{W_\ell} \in \R^{n_{\ell+1} \times n_{\ell}}$ are weight matrices, and $\varphi_\ell: \R^{n_{\ell+1}} \to \R^{n_{\ell+1}}$ are  activation functions. Moreover, for $\ell\in [m]$, we let $\Phi^\ell$ be the subnetwork of $\Phi$ at depth $\ell$, that is
\begin{align}
    \Phi^\ell: \quad \R^{n_{1}} \to \R^{n_{\ell+1}}, \quad \b{x} \mapsto \varphi_{\ell}\l(\b{W}_\ell \varphi_{\ell-1}\l(\dots \varphi_{1}\l(\b{W}_1 \b{x}\r)\r)\r).  \label{chapter3:fcnn.subnetwork}
\end{align}
Let $\mathcal{D}$ be a joint distribution of  data $(\b{x}, \b{y}) \in \mathbb{B}_{2}^{n_1}(1) \times \mathbb{R}^{n_{m+1}}$, and introduce
\begin{align}
    \mathcal{L}(\Phi; \mathcal{D}) \triangleq \E_{(\b{x}, \b{y}) \sim \mathcal{D}}\l[\l\| \Phi(\b{x}) - \b{y}\r\|^2\r].
\end{align}
Namely $\mathcal{L}(\Phi;\mathcal{D})$ is the $\ell_2$-squared expected loss with data distribution $\mathcal{D}$. A canonical example is given by taking $\mathcal{D}$ to be the uniform distribution over a finite set of training data, i.e., $\mathcal{D} = \frac{1}{N} \sum_{i\in [N]} \delta_{(\b{x}_i, y_i)}$. While we assume the covariates $\b{x}$ have norm at most $1$ for simplicity, our results readily extend to any distribution $\mathcal{D}$, as long as the covariates $\b{x}$ are bounded in  norm.  We are interested in the following two compression feasibility problems.
\begin{problem}\label{chapter3:problem:1}
    Given $p\in (0, 1)$, an error threshold $\varepsilon>0$, a data distribution $\mathcal{D}$ and a network $\Phi$, is there a $p$-sparse subnetwork $\hat{\Phi}$ of $\Phi$ such that
    \begin{align}
        \mathcal{L}(\hat{\Phi}; \mathcal{D}) \leq \mathcal{L}(\Phi; \mathcal{D}) + \varepsilon.
    \end{align}
\end{problem}
Namely, given a starting network $\Phi$ and a target sparsity level $p$, we want to prune the network $\Phi$ into a $p$-sparse network $\hat{\Phi}$ with minimal effect on the  $\ell_2$-squared loss  on the data distribution $\mathcal{D}$. 
\begin{problem}\label{chapter3:problem:2}
    Given a quantization level $k\in \mathbb{Z}_{\geq 1}$, an error threshold $\varepsilon>0$, a data distribution $\mathcal{D}$ and a network $\Phi$, is there a $k$-discrete subnetwork $\hat{\Phi}$ of $\Phi$ such that
    \begin{align}
        \mathcal{L}(\hat{\Phi}; \mathcal{D}) \leq \mathcal{L}(\Phi; \mathcal{D}) + \varepsilon.
    \end{align}
\end{problem}
Similarly,  given a starting network $\Phi$ and a target quantization level $k$, we want to compress the network $\Phi$ into a $k$-discrete network $\hat{\Phi}$ with minimal effect on the  $\ell_2$-squared loss  on the data distribution $\mathcal{D}$.

Both Problem~\ref{chapter3:problem:1} and Problem~\ref{chapter3:problem:2} have two aspects. On the one hand, there is a computational question: can one design efficient algorithms that construct such compressed subnetworks? On the other hand, there is a feasibility question: under which structural conditions on $\Phi$ is compression at a given rate (sparsity $p$ or discretization level $k$) information-theoretically possible? In this paper, we are interested in the latter. That is, we aim at deriving network properties under which compression is achievable at the desired rate (sparsity $p$, discretization level $k$). Nonetheless, the ideas behind our results can be  repurposed to derive explicit compression algorithms, or heuristics thereof.

To address these feasibility questions, we introduce a randomized greedy compression algorithm in the next section. Its role is auxiliary: by analyzing its performance, we obtain sufficient conditions under which sparse or quantized subnetworks with small excess loss exist.

\subsection{Algorithm Description}\label{chapter3:section:algorithm.description.1}
We now describe the compression algorithm we use to derive our bounds. Before detailing the layer-wise greedy procedure, we explain how it is applied to MLPs. Our compression strategy operates on some of the network layers, and is described by two  sets $\mathcal{W} , \mathcal{B} \subset [m]$ satisfying
\begin{align}
    \mathcal{W} \cap \mathcal{B} = \emptyset, \quad \text{and} \quad (\mathcal{W} \cup \mathcal{B}) \cap (\mathcal{B} + 1) = \emptyset. \label{chapter3:set}
\end{align}
The choice of $\mathcal{W}, \mathcal{B}$ is left unspecified to keep our presentation general. Let $c\triangleq \max_{\ell \in [m]} \|\b{W}_\ell \|$. We apply our compression procedure top-bottom as follows:
\begin{enumerate}
    \item If we are currently at layer $\ell \not \in \mathcal{W} \cup \mathcal{B}$, we leave the weight matrix $\b{W}_{\ell}$ as is, i.e., we set the $\ell$-th layer's weight matrix in $\hat{\Phi}$ to be $\hat{\b{W}}_{\ell} = \b{W}_{\ell}$.
    \item If we are currently at layer $\ell  \in \mathcal{W}$, we compress $\b{W}_{\ell}$ into a matrix $\hat{\b{W}}_{\ell}$, and set the $\ell$-th layer's weight matrix in $\hat \Phi$ to be $\hat{\b{W}}_{\ell}$. Furthermore, we add a projection operation to the output of layer $\ell$ into the $\ell_2$ ball $\mathbb{B}_2^{n_{\ell+1}}(\kappa)$,  with radius $\kappa =  c^\ell  \geq \prod_{1\leq i \leq \ell} \|\b{W}_{i}\|$.
    \item If we are currently at layer $\ell  \in \mathcal{B}$, we compress $\b{W}_{\ell}$ into a matrix $\hat{\b{W}}_{\ell}$, and set the $\ell$-th ($(\ell+1)$-th resp.) layer's weight matrix in $\hat{\Phi}$ to be $\hat{\b{W}}_{\ell}$ ($\b{W}_{\ell+1}$ resp.). In particular, we do not apply any compression to $\b{W}_{\ell+1}$. Furthermore, we add a projection operation to the output of layer $\ell+1$ into the $\ell_2$ ball $\mathbb{B}_2^{n_{\ell+2}}(\kappa)$  with radius $\kappa = c^{\ell+1} \geq \prod_{1\leq i \leq \ell+1} \|\b{W}_{i}\|$.
\end{enumerate}
We next provide a pseudocode for the compression algorithm in the unstructured setting for the convenience of the reader.
\begin{algorithm}
\caption{Unstructured Compression Algorithm}
\label{chapter3:alg:pruning}
\begin{algorithmic}[1]
\REQUIRE Data distribution $\mathcal{D}$, network $\Phi$, sets $\mathcal{W}, \mathcal{B}\subset[m]$ satisfying (\ref{chapter3:set}), pruning parameter $p\in(0,1)$ (or quantization parameter $k\in \mathbb{Z}_{\geq 1}$), fraction parameter $\alpha \in (0, 1)$.
\STATE{Let $\hat{\Phi}=\Phi$, and $c\triangleq \max_{\ell \in [m]} \|\b{W}_\ell\|$.}
\FOR{Layer index $\ell = 1$ to $m$}
    \IF{$\ell \not \in \mathcal{W} \cup \mathcal{B}$}
        \STATE{Set $\hat{\b{W}}_{\ell} = \b{W}_{\ell}$}
    \ELSE
        \IF{$\ell \in \mathcal{W}$}
            \FOR{Step $i = 1$ to $\lfloor \alpha n_{\ell} n_{\ell+1} \rfloor $}
                \FORALL{uncompressed weights $[\b{W}_{\ell}]_{ij}$}
                    \STATE{Evaluate the score $|\E_{t_{ij}}[\mathcal{L}(\hat{\Phi}^{\ell}_{t_{ij}};\mathcal{D})]-\mathcal{L}(\hat{\Phi}^{\ell};\mathcal{D})|$}, with $t_{ij}=[\b{W}_{\ell}]_{ij} \times {\rm Bernoulli}(p)/p$ for pruning or $t_{ij} = q([\b{W}_\ell]_{ij}; \|\b{W}_{\ell}\|_{\infty}, k)$ for quantization.
                \ENDFOR
                \STATE{Let $(i,j)$ be the index of the weight with least score.}
                \STATE{Set $[\hat{\b{W}}_{\ell}]_{ij} = t_{ij}$ and mark $[\b{W}_\ell]_{ij}$ as compressed.}
            \ENDFOR
            \STATE{Add a projection onto $\mathbb{B}^{n_{\ell+1}}_2 \l(c^\ell\r)$ on the output of the $\ell$-th layer in $\hat{\Phi}$.}
        \ELSE
            \FOR{Step $i = 1$ to $\lfloor \alpha n_{\ell} n_{\ell+1} \rfloor $}
                \FORALL{uncompressed weights $[\b{W}_{\ell}]_{ij}$}
                    \STATE{Evaluate the score $|\E_{t_{ij}}[\mathcal{L}(\hat{\Phi}^{\ell+1}_{t_{ij}};\mathcal{D})]-\mathcal{L}(\hat{\Phi}^{\ell+1};\mathcal{D})|$}, with $t_{ij}=[\b{W}_{\ell}]_{ij} \times {\rm Bernoulli}(p)/p$ for pruning or $t_{ij} = q([\b{W}_\ell]_{ij}; \|\b{W}_{\ell}\|_{\infty}, k)$ for quantization.
                \ENDFOR
                \STATE{Let $(i,j)$ be the index of the weight with least score.}
                \STATE{Set $[\hat{\b{W}}_{\ell}]_{ij} = t_{ij}$ and mark $[\b{W}_\ell]_{ij}$ as compressed.}
            \ENDFOR
            \STATE{Add a projection onto $\mathbb{B}^{n_{\ell+2}}_2 \l(c^{\ell+1}\r)$ on the output of the $(\ell+1)$-th layer in $\hat{\Phi}$.}
        \ENDIF
    \ENDIF
\ENDFOR
\RETURN $\hat{\Phi}.$
\end{algorithmic}
\end{algorithm}

The layer-wise compression routine we adopt is greedy: at each step, we prune or quantize the weight whose modification has the smallest effect on the loss. We next describe the procedure for a layer indexed by $\ell \in \mathcal{W}$. For every weight $[\b{W}_\ell]_{ij}$, we do the following: First, we freeze all weights except $[\b{W}_\ell]_{ij}$ in the current network $\Phi$, and consider the one-variable function $t \mapsto \mathcal{L}(\Phi^{\ell}_t; \mathcal{D})$, where $\Phi_t$ is obtained by setting $[\b{W}_\ell]_{ij} = t$. Second, we evaluate the \textit{score} of weight $[\b{W}_\ell]_{ij}$ given by  $ |\E_t[\mathcal{L}(\Phi^\ell_t; \mathcal{D})] - \mathcal{L}(\Phi^\ell; \mathcal{D})|$, where $t$ follows some discrete distribution to be chosen (e.g., Bernoulli). Finally, we rank all \textit{scores} and compress (i.e., set $[\hat{\b{W}}_\ell]_{ij}=t$) the weight $[\b{W}_\ell]_{ij}$ corresponding to the smallest \textit{score}. That is to say, we compress the weight to which the loss of the subnetwork $\Phi^\ell$ is the least sensitive on average. This operation is then repeated over the weights of the $\ell$-th layer, until a sufficiently large portion $\alpha \in (0, 1)$ of weights has been compressed. We note that when we evaluate the \textit{scores}, we use the most up-to-date network, that is the network obtained in the last compression step. The algorithm operates similarly for layers indexed by $\ell \in \mathcal{B}$, with the only difference being that the \textit{scores} are given  by $ |\E_t[\mathcal{L}(\Phi^{\ell+1}_t; \mathcal{D})] - \mathcal{L}(\Phi^{\ell+1}; \mathcal{D})|$ instead.

For pruning with sparsity level $p\in (0, 1)$, we  let $t = [\b{W}_{\ell}]_{ij} \times  {\rm Bernoulli}(p)/p$. 
For quantization with discretization level $2k$, we let $t = q([\b{W}_{\ell}]_{ij}; \|\b{W}_{\ell}\|_{\infty}, k)$. We now discuss the intuition behind our constructions of $t$. For both pruning and quantization $t$ satisfies $\E_t[t] = [\b{W}_{\ell}]_{ij}$, which cancels the first order mean variation of loss when $[\b{W}_{\ell}]_{ij}$ is replaced by $t$. Namely, the first order term in $t$ in $\mathcal{L}(\Phi^\ell_t; \mathcal{D}) - \mathcal{L}(\Phi^\ell; \mathcal{D})$ cancels once we take the expectation on $t$. In the case of pruning, the choice of $t$ ensures that the pruned weight $[\hat{\b{W}}_{\ell}]_{ij}=t$ is null with probability at least $1-p$, and thus the portion of pruned weights will be roughly $1-p$ (assuming $\alpha \approx 1$). In the case of quantization, we first construct a set $\Sm_\ell$ of at most $2k$ distinct values using $\|\b{W}_{\ell}\|_{\infty}$. Then, $t$ is designed to take the two values in $\Sm_\ell$ that are closest to $[\b{W}_{\ell}]_{ij}$. In essence, $t$ is a randomly rounded version of $[\b{W}_{\ell}]_{ij}$. We note that the set $\mathcal{S}_\ell$ may vary with $\ell$, but we can easily make a unified choice $ \Sm_\ell = \Sm$ across all layers by setting $t = q([\b{W}_{\ell}]_{ij}; \max_{r \in[m]}\|\b{W}_r\|_{\infty}, k)$ instead. We choose to state our results with $t = q([\b{W}_{\ell}]_{ij}; \|\b{W}_{\ell}\|_{\infty}, k)$ to showcase the difference in quantization tolerance by layer.

The addition of projections onto $\ell_2$ balls is necessary for our proofs. Indeed, projecting  ensures that the norms of the pruned network's outputs remain comparable to those of the original dense network. One could avoid projections and instead work with spectral norms of the pruned matrices. But the spectral norm of a pruned matrix can vary substantially, making such bounds crude. For instance, if $\b{W}_{\ell}$ is a random matrix with entries sampled independently from the uniform distribution over $[-c/\sqrt{n_{\ell} \vee n_{\ell+1}}, c/\sqrt{n_{\ell} \vee n_{\ell+1}}]$, then with high probability as $n_{\ell}\wedge n_{\ell+1} \to \infty$, there exists a mask matrix $\b{M}\in \{0,1\}^{n_{\ell+1}\times n_{\ell} }$ such that $\|\b{M}\|_0 \leq n_{\ell} n_{\ell+1}/2 + o(n_{\ell}n_{\ell+1})$ and $\|\b{W}_{\ell} \odot \b{M}\| = \Omega(c \sqrt{n_{\ell} \wedge n_{\ell+1}})$, whereas  originally $\|\b{W}_{\ell}\| \leq \Theta(c)$  with high probability. An example of such a mask matrix is given by $\b{M}_{ij} = 1_{[\b{W}_{\ell}]_{ij} > 0}$.  While it is unlikely that our algorithm follows similar adversarial patterns, we do not make any assumption on spectral norms of pruned matrices to keep our analysis general. Bounding spectral norms of pruned matrices from our algorithm is interesting but challenging since probabilistic tools from random matrix theory are limited when it comes to greedily randomized matrices, and thus requires extensive technical work. Spectral norm shift is an issue for compression algorithms, and heuristics have been derived to mitigate it (e.g. layer-wise penalization of weight matrices norms). The projection step shows that output magnitudes can be preserved without explicitly constraining the norms of layers, allowing for aggressive pruning.



\subsection{Main Results}\label{chapter3:section:main.results.1}

We present in this section our main theoretical guarantees for the compression scheme introduced in Section~\ref{chapter3:section:algorithm.description.1}. These results specify conditions on the architecture under which pruning and quantization at prescribed rates are feasible with controlled excess loss. 

Informally, we show that pruning (quantization) for layers indexed by $\mathcal{W}\cup\mathcal{B}$ can be achieved at any sparsity level $p$ (quantization level $2k$), provided that the latter layers satisfy some dimensionality properties. Namely, the layers indexed by $\mathcal{W}$ are  \textit{wide} layers, and those indexed by $\mathcal{B}$ are \textit{bottlenecks} of  \textit{tall} layers followed by \textit{wide} layers. Up to this point, we have made no assumptions on the data $\mathcal{D}$, and our results largely avoid making any non-trivial data assumption. Indeed,  we want to avoid data-based  properties that may facilitate compression in order to isolate network properties driving compression feasibility. While width is a key element in our results, our analysis provides indirect insights into weight interactions. Namely, the use of Hessian-based information incorporates correlations between weights. Similarly, our second order analysis of tall-then-wide layers relies implicitly on inter-layer weight interactions. Furthermore, our results for structured pruning in sections \ref{chapter3:section:structured.fcnn}, \ref{chapter3:section:structured.cnn} naturally exploit block-level weight interactions. Hence, several aspects of our analysis leverage implicitly weight interactions. We next list some key assumptions relevant for our results, and discuss their validity.

\begin{assumption}\label{chapter3:assumption:1}
\leavevmode\par\noindent
    \begin{itemize}
        \item \textnormal{(Activation Functions)}. For all $\ell\in [m]$,  the activation function $\varphi_\ell$  satisfies
            \begin{align}
                \varphi_\ell(\b{0}_{n_{\ell+1}}) &= \b{0}_{n_{\ell+1}} , \label{chapter3:assumption:1.1}\\
                \|\varphi_\ell\|_{\rm Lip} &\leq 1 .  \label{chapter3:assumption:1.2}
            \end{align}
            Moreover, if $\ell \in \mathcal{B}$, then $\varphi_\ell$ is a twice differentiable entrywise activation, and
            \begin{align}
                \sup_{x\in \R} |\varphi^{(2)}_{\ell} (x)| &\leq 1.  \label{chapter3:assumption:1.3}
            \end{align}
        \item \textnormal{(Weight Scaling)} There exists two constants $c_1, c_2\in \R_{>0}$ such that the following holds for all $\ell\in [m]$
            \begin{alignat}{2}
                &\|\b{W}_\ell\| &&\leq c_1 , \label{chapter3:assumption:1.4}\\
                &\|\b{W}_\ell\|_{\infty} &&\leq \frac{c_2}{\sqrt{ n_{\ell} \vee n_{\ell+1}}}  . \label{chapter3:assumption:1.5}
            \end{alignat}
    \item \textnormal{(Noisy Data)}
        There exists $\omega> 0$ such that the data distribution $\mathcal{D}$ satisfies
    \begin{align}
        \E_{(\b{x}, \b{y}) \sim \mathcal{D}} \l[ \|\b{y} - \E[\b{y} | \b{x}] \|^2 \r] \geq \omega^2. \label{chapter3:assumption:1.6}
    \end{align}
    \end{itemize}
\end{assumption}
We now discuss the validity of Assumption \ref{chapter3:assumption:1}. Items (\ref{chapter3:assumption:1.1}), (\ref{chapter3:assumption:1.2}) are satisfied by some usual activation functions such as ${\rm ReLU}$ and $\rm Tanh$. Moreover, (\ref{chapter3:assumption:1.1}) is not essential for our results to hold; we merely adopt it to simplify the presentation of our proofs. Assumption (\ref{chapter3:assumption:1.3}) is satisfied by various activations, including ${\rm Sigmoid}$ and ${\rm Tanh}$. Finally, our results hold verbatim if the constant $1$ in the right-hand side of (\ref{chapter3:assumption:1.2}) and (\ref{chapter3:assumption:1.3}) is replaced by any other explicit constant. We note that while ${\rm ReLU}$ does not satisfy (\ref{chapter3:assumption:1.3}) as it is not differentiable, one can  replace ${\rm ReLU}$ with a smooth approximation such as $\log(1+e^{\beta x})/\beta$ with $\beta>0$. 

The inequality (\ref{chapter3:assumption:1.4}) bounds the operator norm of  the MLP's layers, and is essential to control the network's sensitivity to perturbations. Indeed, spectral norms of weight matrices are linked to the MLP's robustness and generalization properties \cite{bartlett2017spectrally}. In that sense,  (\ref{chapter3:assumption:1.4}) ensures that the network (\ref{chapter3:fcnn}) is $c_1^m$-Lipschitz, which is a desirable property. Assumption (\ref{chapter3:assumption:1.5}) bounds the scale of  weights using the dimension of layers, and is based on the Glorot/Xavier initialization \cite{glorot2010understanding},  whereby the weights of each layer $\b{W}_\ell$ are initialized as independent random variables sampled from the uniform distribution over $[- \sqrt{6}/\sqrt{n_{\ell}+ n_{\ell+1}}, \sqrt{6}/\sqrt{n_{\ell} + n_{\ell+1}} ]$. We are aware that weight norms can grow considerably during training as evidenced in \cite{niehaus2024weight}. However, this growth is typically at the scale of a constant times the initial weights norm, which is well within (\ref{chapter3:assumption:1.5}) for a sufficiently large constant $c_2$. Furthermore, we note that if $\b{W}_{\ell}$ is a random matrix with independent mean zero entries, and second and fourth moments bounded  by $\mathcal{O}(1/ n_{\ell} \vee n_{\ell+1}) $ and $\mathcal{O}(1/(n_{\ell} \vee n_{\ell+1})^2)$ respectively, then (\ref{chapter3:assumption:1.4}) holds with high probability with $c_1=\mathcal{O}(c_2)$ by an application of Latala's inequality \cite{latala2005some}.  To further justify the adoption of (\ref{chapter3:assumption:1.4}) and (\ref{chapter3:assumption:1.5}), we compute the constants  $c_1, c_2$ for the MLP layers in TinyBERT (Transformer) and ResMLP (MLP) using Hugging Face weights. Namely, we compute $c_1 = \max_{\ell \in {\rm MLP}} \|\b{W}_\ell\|$ and $c_2 = \max_{\ell\in {\rm MLP}} \|\b{W}_\ell\|_{\infty} \sqrt{n_{\ell} \vee n_{\ell+1}}$. We obtain for TinyBERT: $c_1\approx 5.5$, $c_2\approx 28.4$, and for ResMLP: $c_1\approx 3.2$, $c_2\approx 20.5$. 

The bound (\ref{chapter3:assumption:1.6}) adds a noise condition on the data $\mathcal{D}$. Namely, no network can achieve a loss $\mathcal{L}(\Phi; \mathcal{D})$ below $\omega^2$. This assumption does not hold when the target $\b{y}$ is perfectly predictable from the covariates $\b{x}$. Since there are dense networks that can perfectly fit  data, it is harder to make performance comparisons with sparse networks that have  non-zero loss, which motivates the introduction of Assumption (\ref{chapter3:assumption:1.6}) to avoid such settings. This assumption is not necessary for our main results to hold, but leads to slightly more intuitive error bounds on the loss of compressed networks, as shown in Corollary \ref{chapter3:corollary:1}.

 We are now ready to state our main result for pruning. 
\begin{proposition}\label{chapter3:proposition:1}
    Suppose  (\ref{chapter3:assumption:1.1})-(\ref{chapter3:assumption:1.5}) hold in Assumption \ref{chapter3:assumption:1}. Let $\mathcal{R}$ be a distribution over $\mathbb{B}^{n_1}_{2}(1) $ and $\b{x} \sim \mathcal{R}$. Given $\xi\in(0,1)$ and $p\in (0,1)$ there exists positive constants  $\delta=\delta(\xi)$ and $n_0=n_0(p, \xi)$ such that if  $\min_{\ell \in \mathcal{W} \cup \mathcal{B}} (n_{\ell} \vee n_{\ell+1}) \geq n_0$ and
    \begin{alignat}{3}
        &\forall \ell\in \mathcal{W}, \quad && \frac{n_{\ell+1}}{n_{\ell }} &&\leq  \frac{p(1-\alpha)\delta}{ \alpha(1-p)} \frac{c_1^2 }{c_2^2}, \label{chapter3:proposition:1.w}\\
        &\forall \ell\in \mathcal{B}, \quad && \frac{1}{\sqrt{n_{\ell + 1}}} &&\leq \frac{p(1-\alpha)\delta}{\alpha(1-p)} \frac{1}{c_1^{\ell} c_2^2 \l(1 \vee c_1^{-4}\r) \l(1 \vee c_2^2 \vee \frac{c_2^2(1-p)}{p}\r)},  \label{chapter3:proposition:1.b.1}\\
        &\forall \ell\in \mathcal{B}, &&\frac{n_{\ell+2}}{n_{\ell+1}} &&\leq \frac{p(1-\alpha)\delta}{\alpha(1-p)} \frac{c_1^4 }{c_2^4} \label{chapter3:proposition:1.b.2},
    \end{alignat}
    with $\alpha\approx 0.99$, then there exists a  network $\hat{\Phi}$ given  by $\hat{\Phi}(\b{x}) = [\varphi_m(\hat{\b{W}}_m[\varphi_{m-1}(\dots \hat{\b{W}}_1\b{x})]_{\kappa_{m-1}})]_{\kappa_m} $ with $\kappa_\ell=c_1^\ell$ such that
    \begin{alignat}{2}
        \forall \ell \in &\mathcal{W}\cup \mathcal{B}, \quad \frac{\|\hat{\b{W}}_{\ell}\|_{0}}{n_{\ell} n_{\ell+1}} &&\leq 0.01 + 1.01p,  \label{chapter3:proposition:1.1}\\
        &\E_{\b{x}}\l[ \|\Phi(\b{x}) - \hat{\Phi}(\b{x})\|^{2} \r] &&\leq c_1^{2m} (1+\xi)^{m}\xi , \label{chapter3:proposition:1.2}
    \end{alignat}
    where $\hat{\b{W}}_{\ell}$ are the pruned weight matrices in the network $\hat{\Phi}$.  Furthermore, if $\mathcal{D}$ is a data distribution over $(\b{x}, \b{y}) \in \mathbb{B}^{n_1}_2(1) \times \R^{n_{m+1}}$, and $\varepsilon = (1+\xi)^m \xi$ then
    \begin{align}
        \mathcal{L}(\hat{\Phi}; \mathcal{D}) &\leq \mathcal{L}(\Phi; \mathcal{D}) + 2c_1^{m} \sqrt{\varepsilon \mathcal{L}(\Phi; \mathcal{D})} + c_1^{2m} \varepsilon.  \label{chapter3:proposition:1.3}
    \end{align}
\end{proposition}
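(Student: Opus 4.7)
The plan is to bound the expected squared output discrepancy (\ref{chapter3:proposition:1.2}) by a layer-wise Lindeberg-style interpolation, derive the sparsity count (\ref{chapter3:proposition:1.1}) by a concentration argument on independent Bernoulli factors, and deduce the loss inequality (\ref{chapter3:proposition:1.3}) by a straightforward Cauchy--Schwarz expansion. At the top level, I would write $\Phi - \hat\Phi$ as a telescoping sum over the compressed layers, and use (\ref{chapter3:assumption:1.2}) together with the projection step onto $\mathbb{B}^{n_{\ell+1}}_2(c_1^\ell)$ to propagate a layer-$\ell$ output perturbation of squared $L^2(\mathcal{R})$-norm $\eta_\ell$ to a final squared discrepancy of order $c_1^{2(m-\ell)}\eta_\ell$. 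Summing geometrically and distributing the budget $\xi$ across layers yields the $c_1^{2m}(1+\xi)^m\xi$ factor.

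The core work is to bound $\eta_\ell$ per compressed layer. Inside layer $\ell$, I would run the greedy replacement iteratively and apply Lindeberg interpolation weight by weight: conditional on the current partially compressed matrix, replacing a single entry $w_{ij}$ by $t_{ij}$ produces an expected squared output perturbation that, because $\E[t_{ij}] = w_{ij}$, has a vanishing first-order term and a second-order term of the form $\tfrac{1}{2}\Var(t_{ij}) \cdot H_{ij}$, where $H_{ij}$ encodes the relevant squared-sensitivity of the downstream map at that weight. For pruning we have $\Var(t_{ij}) \le w_{ij}^2 (1-p)/p \le c_2^2(1-p)/(p(n_\ell \vee n_{\ell+1}))$ by (\ref{chapter3:assumption:1.5}). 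The greedy rule then bounds each step's contribution by the \emph{average} score over the remaining $(1-\alpha)n_\ell n_{\ell+1}$ uncompressed weights, which in turn is at most $\sum_{i,j} H_{ij}$ divided by the current uncompressed count. Iterating over $\lfloor \alpha n_\ell n_{\ell+1}\rfloor$ steps introduces the $\alpha/(1-\alpha)$ factor visible in (\ref{chapter3:proposition:1.w})--(\ref{chapter3:proposition:1.b.2}), and an auxiliary lemma (from Section~\ref{chapter3:appendix:preliminary.results}) will be needed to ensure that $H_{ij}$ remains uniformly controlled throughout the greedy procedure, so that the partially compressed intermediate network keeps norms bounded; this is precisely where the choice $n_0(p,\xi)$ enters.

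For layers $\ell \in \mathcal{W}$, a single layer is analyzed, and $\sum_{i,j} H_{ij}$ produces a factor $c_2^2/c_1^2 \cdot (n_{\ell+1}/n_\ell) \cdot (\text{uncompressed-count})^{-1}$, matching (\ref{chapter3:proposition:1.w}). For $\ell \in \mathcal{B}$, the score and the sensitivity are taken at the output of layer $\ell+1$, so the Taylor expansion acquires contributions from the second derivative $\varphi_\ell^{(2)}$ (bounded by (\ref{chapter3:assumption:1.3})) and from cross-terms involving $\b{W}_{\ell+1}$. Splitting the Hessian into a diagonal piece (which gives the $1/\sqrt{n_{\ell+1}}$ scaling in (\ref{chapter3:proposition:1.b.1}) via the entrywise bound $c_2/\sqrt{n_{\ell+1}}$ on $\b{W}_{\ell+1}$) and an off-diagonal piece (which contributes the $n_{\ell+2}/n_{\ell+1}$ ratio of (\ref{chapter3:proposition:1.b.2})) is the technically delicate step; the factor $c_1^{\ell}$ arises because the sensitivity at layer $\ell+1$ is amplified by the input norm at layer $\ell$, which the projection caps at $c_1^\ell$.

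Finally, the sparsity bound (\ref{chapter3:proposition:1.1}) follows from a Chernoff concentration applied to $\sum_{i,j} 1_{t_{ij} \ne 0}$ on each compressed layer: the expected number of retained entries is $(1-\alpha) + \alpha p$ times $n_\ell n_{\ell+1}$, and picking $n_0$ large enough absorbs the concentration slack into the $0.01 + 1.01 p$ threshold. Combining this with Markov's inequality applied to the expected output discrepancy bound produces a single realization satisfying both (\ref{chapter3:proposition:1.1}) and (\ref{chapter3:proposition:1.2}) simultaneously. Inequality (\ref{chapter3:proposition:1.3}) is then obtained by expanding
\begin{align*}
\l\|\hat\Phi(\b{x})-\b{y}\r\|^2 = \l\|\Phi(\b{x})-\b{y}\r\|^2 + 2\l\langle \Phi(\b{x})-\b{y}, \hat\Phi(\b{x})-\Phi(\b{x})\r\rangle + \l\|\hat\Phi(\b{x})-\Phi(\b{x})\r\|^2,
\end{align*}
taking expectation under $\mathcal{D}$, and applying Cauchy--Schwarz to the cross term using (\ref{chapter3:proposition:1.2}) and the definition of $\mathcal{L}(\Phi;\mathcal{D})$. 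The hardest part is the bottleneck analysis, because tracking the second-order Taylor remainder across two weight layers and one activation — while keeping the intermediate norms bounded in the face of greedy weight modifications — is what forces both the $1/\sqrt{n_{\ell+1}}$ and $n_{\ell+2}/n_{\ell+1}$ conditions and controls the exponent of $c_1$ in (\ref{chapter3:proposition:1.b.1}).
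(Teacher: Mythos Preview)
Your proposal is essentially correct and tracks the paper's approach closely: per-layer Lindeberg interpolation with greedy weight selection, separate single-layer ($\mathcal{W}$) and two-layer ($\mathcal{B}$) second-order analyses, Bernoulli concentration for the sparsity count, and the Cauchy--Schwarz expansion for (\ref{chapter3:proposition:1.3}).

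One point worth sharpening: the paper does \emph{not} combine layers via a telescoping sum of propagated perturbations. Instead it proves a recursive inequality (Lemma~\ref{chapter3:lemma:9}): for each compressed layer one shows
\[
\E\l[\|\b{z}^\ell - [\hat{\b{z}}^\ell]_{\kappa_\ell}\|^2\r] \le (1+\varepsilon_1)\,c_1^2\,\E\l[\|\b{z}^{\ell-1} - [\hat{\b{z}}^{\ell-1}]_{\kappa_{\ell-1}}\|^2\r] + \varepsilon_2\, c_1^{2\ell},
\]
where $(\b{z}^{\ell-1}, [\hat{\b{z}}^{\ell-1}]_{\kappa_{\ell-1}})$ plays the role of the \emph{joint} input pair $(\b{x},\b{x}')$ fed to the layer-wise propositions. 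It is the multiplicative $(1+\varepsilon_1)$ accumulating over $m$ steps that produces $(1+\xi)^m$; a straight telescoping-and-summing argument would give an additive accumulation and the wrong shape of the final bound. This also explains why the single-layer and two-layer bounds in the paper are stated for a generic $(\b{x},\b{x}')\sim\mathcal{Q}_{\rho,\theta}$ rather than for a fixed input: the ``$\b{x}'$'' slot carries the already-perturbed intermediate of $\hat\Phi$.

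A second minor correction: the derandomization to a single realization satisfying both (\ref{chapter3:proposition:1.1}) and (\ref{chapter3:proposition:1.2}) is not Markov on the discrepancy. The paper (Lemma~\ref{chapter3:lemma:4}) conditions the nonnegative loss on the high-probability event $\{\|\b{b}\|_0 \le (1+\gamma)np\}$ and uses $\E[f]\ge \E[f\mid A]\,\p(A)$; Chernoff makes $\p(A)\ge 1/(1+\varepsilon)$ for $n\ge n_0$, so the conditional expectation is at most $(1+\varepsilon)\E[f]$, and a realization in $A$ achieving this exists. Your Chernoff-plus-Markov sketch would need a union bound over two events and does not directly give the clean $(1+\varepsilon)$ factor.
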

We also have a similar result for quantization.
\begin{proposition}\label{chapter3:proposition:2}
    Suppose  (\ref{chapter3:assumption:1.1})-(\ref{chapter3:assumption:1.5}) hold in Assumption \ref{chapter3:assumption:1}. Let $\mathcal{R}$ be a distribution over $\mathbb{B}^{n_1}_{2}(1) $, $\b{x} \sim \mathcal{R}$ and  $\xi\in (0, 1)$. Given  $k\in \mathbb{Z}_{\geq 1}$, let 
    \begin{align}
        \mathcal{K} &= \l\{\pm \frac{ic_2}{k \sqrt{n_{\ell} \vee n_{\ell+1}}} \biggm| i\in [k] \r\}. \nonumber
    \end{align}
    Then, there exists positive constants $\delta=\delta(\xi)$ and $n_0=n_0(k, \xi)$ such that if $\min_{\ell \in \mathcal{W} \cup \mathcal{B}} n_{\ell} \vee n_{\ell+1}\geq n_0$ and
    \begin{alignat}{3}
        &\forall \ell\in \mathcal{W}, \quad && \frac{n_{\ell+1}}{n_{\ell }} &&\leq  \frac{(1-\alpha) k^2 \delta}{\alpha} \frac{c_1^2 }{c_2^2}, \label{chapter3:proposition:2.w}\\
        &\forall \ell\in \mathcal{B}, \quad && \frac{1}{\sqrt{n_{\ell + 1}}} &&\leq \frac{(1-\alpha)k^2\delta}{\alpha} \frac{1}{c_1^\ell c_2^2 \l(1\vee c_1^4 \r) \l(1 \vee c_2^2k^{-1}\r)},\label{chapter3:proposition:2.b.1}\\
        &\forall \ell \in \mathcal{B}, &&\frac{n_{\ell+2}}{n_{\ell+1}} &&\leq \frac{(1-\alpha) k^2 \delta}{\alpha} \frac{c_1^4 }{c_2^4} \label{chapter3:proposition:2.b.2},
    \end{alignat}
    with $\alpha\approx 0.99$, then there exists a network $\hat{\Phi}$ given  by $\hat{\Phi}(\b{x}) = [\varphi_m(\hat{\b{W}}_m[\varphi_{m-1}(\dots \hat{\b{W}}_1\b{x})]_{\kappa_{m-1}})]_{\kappa_m} $ with $\kappa_\ell=c_1^\ell$ such that
    \begin{alignat}{2}
        \forall \ell \in &\mathcal{W}\cup \mathcal{B}, \quad \frac{\l\|\b{1}_{\hat{\b{W}}_{\ell} \not \in \mathcal{K}}\r\|_{0}}{n_{\ell} n_{\ell+1}} &&\leq 0.01,  \label{chapter3:proposition:2.1}\\
        &\E_{\b{x}}\l[ \|\Phi(\b{x}) - \hat{\Phi}(\b{x})\|^{2} \r] &&\leq c_1^{2m} (1+\xi)^{m}\xi,  \label{chapter3:proposition:2.2}
    \end{alignat}
    where $\hat{\b{W}}_{\ell}$ are the quantized weight matrices in the network $\hat{\Phi}$.  Furthermore, if $\mathcal{D}$ is a data distribution over $(\b{x}, \b{y}) \in \mathbb{B}^{n_1}_2(1) \times \R^{n_{m+1}}$, and $\varepsilon = (1+\xi)^m \xi$ then
    \begin{align}
        \mathcal{L}(\hat{\Phi}; \mathcal{D}) &\leq \mathcal{L}(\Phi; \mathcal{D}) + 2c_1^{m} \sqrt{\varepsilon \mathcal{L}(\Phi; \mathcal{D})} + c_1^{2m} \varepsilon.  \label{chapter3:proposition:2.3}
    \end{align}
\end{proposition}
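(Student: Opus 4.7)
The plan is to mirror the proof of Proposition~\ref{chapter3:proposition:1}, replacing the Bernoulli pruning surrogate with the quantization surrogate $q(w;M,k)$. Two facts drive the argument: (i) $\E[q(w;M,k)] = w$, so the first-order variation of the loss vanishes in expectation when a single weight $[\b{W}_\ell]_{ij}$ is swapped for $q([\b{W}_\ell]_{ij};\|\b{W}_\ell\|_\infty,k)$, and (ii) $\Var(q(w;M,k)) \leq M^2/(4k^2)$, which under (\ref{chapter3:assumption:1.5}) gives a per-weight variance of at most $c_2^2/(4k^2(n_\ell \vee n_{\ell+1}))$. Structurally this is the same bookkeeping as Proposition~\ref{chapter3:proposition:1} with the substitution $(1-p)/p \mapsto 1/k^2$, which is exactly the correspondence visible when comparing (\ref{chapter3:proposition:1.w})--(\ref{chapter3:proposition:1.b.2}) with (\ref{chapter3:proposition:2.w})--(\ref{chapter3:proposition:2.b.2}).

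I would proceed layer-by-layer from $\ell=1$ to $m$, carrying the invariant that each compressed intermediate network satisfies $\E_{\b x}\|\Phi(\b x) - \hat\Phi(\b x)\|^2 \leq c_1^{2\ell}(1+\xi)^\ell \xi_\ell$ for suitably small $\xi_\ell$, and that the projections $[\cdot]_{\kappa_\ell}$ with $\kappa_\ell = c_1^\ell$ keep the activations at scale $c_1^\ell$. For a layer $\ell\in\mathcal{W}$, I freeze all other weights and Taylor-expand the one-variable map $t\mapsto \mathcal{L}(\hat\Phi^\ell_t;\mathcal{D})$ to second order around $t=[\b{W}_\ell]_{ij}$; by mean-zero of the perturbation, $|\E_{t_{ij}}\mathcal{L}(\hat\Phi^\ell_{t_{ij}};\mathcal{D}) - \mathcal{L}(\hat\Phi^\ell;\mathcal{D})|$ is controlled by $\Var(t_{ij})$ times the second derivative, which in turn is bounded using the Lipschitz bound (\ref{chapter3:assumption:1.2}), the operator norm bound (\ref{chapter3:assumption:1.4}), and the projection radius $c_1^\ell$. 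Averaging this over all $n_\ell n_{\ell+1}$ weights and using the greedy minimum-score argument, the total accumulated score after $\lfloor \alpha n_\ell n_{\ell+1}\rfloor$ steps is bounded by $\alpha/(1-\alpha)$ times the average score, giving a contribution proportional to $c_2^2 n_{\ell+1}/(c_1^2 k^2 n_\ell)$, which (\ref{chapter3:proposition:2.w}) drives below $\xi$. For $\ell\in\mathcal{B}$, the score is evaluated on $\hat\Phi^{\ell+1}$ so the Taylor expansion sees both the activation $\varphi_{\ell+1}$ (using its second derivative via (\ref{chapter3:assumption:1.3})) and the uncompressed matrix $\b{W}_{\ell+1}$; the resulting second-order term decomposes into a diagonal piece and a cross piece, yielding the two dimensional conditions (\ref{chapter3:proposition:2.b.1}) and (\ref{chapter3:proposition:2.b.2}) (respectively governing the $1/\sqrt{n_{\ell+1}}$ suppression of the cross-term from summing over hidden units, and the $n_{\ell+2}/n_{\ell+1}$ aspect ratio of the wide layer that follows the bottleneck).

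The counting bound (\ref{chapter3:proposition:2.1}) is immediate: with $\alpha \approx 0.99$, all but a $0.01$-fraction of the entries of $\hat{\b{W}}_\ell$ are set equal to $q([\b{W}_\ell]_{ij};\|\b{W}_\ell\|_\infty,k)\in\mathcal{K}$, and the remaining entries are the uncompressed originals. The layer-wise bounds multiply through the Lipschitz-and-projection cascade to give $\E_{\b x}\|\Phi(\b x)-\hat\Phi(\b x)\|^2 \leq c_1^{2m}(1+\xi)^m\xi$, and (\ref{chapter3:proposition:2.3}) then follows from the identity $\|a-\b y\|^2-\|b-\b y\|^2 = \langle a-b,\,a+b-2\b y\rangle$ together with Cauchy--Schwarz, $\|\Phi(\b x)\|\leq c_1^m$, and $\|\hat\Phi(\b x)\|\leq c_1^m$ enforced by the outermost projection. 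The main obstacle I expect is the Lindeberg bookkeeping for $\ell\in\mathcal{B}$: a single quantization step couples $\b{W}_\ell$ and $\b{W}_{\ell+1}$ through a Hessian that mixes entries from two consecutive matrices, and showing that the cross-term has the right $1/\sqrt{n_{\ell+1}}$ suppression after greedy selection requires an explicit expansion of $\varphi_{\ell+1}$ and careful use of (\ref{chapter3:assumption:1.5}) to average over hidden units — this is the one place where the quantization analysis is not a mechanical translation of the pruning analysis and where the auxiliary lemmas in Section~\ref{chapter3:appendix:preliminary.results} will do most of the work.
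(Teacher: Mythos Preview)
Your proposal is correct and follows exactly the route taken by the paper: the authors state that the proof of Proposition~\ref{chapter3:proposition:2} is identical to that of Proposition~\ref{chapter3:proposition:1}, only substituting the quantization single- and two-layer building blocks (Propositions~\ref{chapter3:appendix:unstructured.single.proposition:1} and~\ref{chapter3:appendix:unstructured.two.layers.proposition:1}) for their pruning counterparts, which amounts precisely to your $(1-p)/p \mapsto 1/k^2$ replacement in the Lindeberg variance bookkeeping. Two minor remarks: the variance bound you quote is $M^2/(4k^2)$ while the paper uses the cruder $M^2/k^2$ coming directly from $|q(w;M,k)-w|\leq M/k$, and for (\ref{chapter3:proposition:2.3}) the paper uses only $\E\|\hat\Phi-\Phi\|^2\leq c_1^{2m}\varepsilon$ with Cauchy--Schwarz on the cross term $\langle \hat\Phi-\Phi,\Phi-\b y\rangle$ (the explicit bounds $\|\Phi\|,\|\hat\Phi\|\leq c_1^m$ are not invoked separately).
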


While the above propositions focus on deriving bounds on the $\ell_2$-squared loss $\mathcal{L}$, their results readily extend to many other losses. Indeed, suppose $d: \R^n \times \R^n \to \R_{\geq 0}$ is a metric and satisfies $\forall \b{a}, \b{b} \in \R^n, d(\b{a},\b{b}) \leq C \|\b{a}-\b{b}\|_2$ for some constant $C$. We then have using the triangle inequality that $\hat{\mathcal{L}}\triangleq \E[d(\hat{\Phi}(\b{x}), \b{y})] \leq \E[d(\Phi(\b{x}), \b{y})] + \E[d(\hat{\Phi}(\b{x}), \Phi(\b{x}))]\leq \mathcal{L} + C \E\|\hat{\Phi}(\b{x}) - \Phi(\b{x})\|$. Combining the latter with (\ref{chapter3:proposition:1.2}) and (\ref{chapter3:proposition:2.2}) would yield bounds on $\hat{\mathcal{L}}$. Examples of such losses are $\ell_p$ losses, Huber loss, and some Perceptual losses.

Assumptions (\ref{chapter3:proposition:1.w})-(\ref{chapter3:proposition:1.b.2}) for pruning and (\ref{chapter3:proposition:2.w})-(\ref{chapter3:proposition:2.b.2}) for quantization illustrate the driving mechanism of compression in our results. Namely, networks with architectures involving very \textit{wide}  layers or \textit{bottleneck} structures consisting in \textit{tall} layers followed by \textit{wide} layers need fewer parameters to achieve roughly the same data representation. The latter claim is quite intuitive: \textit{wide} layers embed a high-dimensional input into a low-dimensional space, thus retaining only some aspects of the input, which makes such layers  amenable to compression. On the other hand, a block consisting of a \textit{tall}-then-\textit{wide} layer initially expands the input into a high-dimensional space before projecting it into a low-dimensional space in order to learn relevant high dimensional features, which are harder to learn in the initial input space. However, this transformation does not alter the \textit{intrinsic} dimensionality of the input, which is intuitively at most the input's dimension. Thus, we should expect compression to work well on blocks of \textit{tall}-then-\textit{wide} layers. Informally, we require layers indexed by $\ell \in \mathcal{W}$  to satisfy $n_{\ell}\gg n_{\ell+1}$, and layers indexed by $\ell\in \mathcal{B}$ to satisfy $n_{\ell + 1}\gg n_{\ell+2}$. Moreover, we also require $n_{\ell+1}$ to grow exponentially with the depth $\ell$ for $\ell \in \mathcal{B}$.

We note that the dimension bounds corresponding to $\ell\in \mathcal{B}$ do not involve $n_{\ell}$, the input dimension of layer $\ell$. In particular, the compression results for $\ell\in \mathcal{B}$ extend to blocks of two wide layers $\b{W}_\ell, \b{W}_{\ell+1}$ as $n_\ell$ is not required to be small. However, one could simply prune  $\b{W}_\ell$ as a wide layer in the latter case by setting $\ell\in \mathcal{W}$. Therefore, we assume throughout this paper that $n_{\ell} \leq n_{\ell+1}$ for $\ell\in \mathcal{B}$ to avoid this redundancy.

\begin{remark}
    We note that while neither Proposition \ref{chapter3:proposition:1} nor \ref{chapter3:proposition:2} explicitly mention Algorithm \ref{chapter3:alg:pruning} in their statements, the crux of their proofs is based on a careful analysis of the performance of Algorithm \ref{chapter3:alg:pruning}. Specifically, we refer the reader to the proofs of Proposition \ref{chapter3:appendix:unstructured.single.proposition:3} and \ref{chapter3:appendix:unstructured.two.layers.proposition:3}. One can alternatively restate our results using the algorithm's compressed network with the caveat of obtaining random weight matrices $\hat{\b{W}}_{\ell}$.
\end{remark}
 Using Assumption (\ref{chapter3:assumption:1.6}), we can obtain the following  corollary from Propositions \ref{chapter3:proposition:1} and  \ref{chapter3:proposition:2}.
\begin{corollary}\label{chapter3:corollary:1}
In the settings of Proposition \ref{chapter3:proposition:1} and \ref{chapter3:proposition:2}, let $\varepsilon = (1+\xi)^m \xi$.  if $\mathcal{D}$ satisfies Assumption (\ref{chapter3:assumption:1.6}), then the following holds
\begin{align}
    \mathcal{L}(\hat{\Phi}; \mathcal{D}) &\leq \l(1 + \frac{2c_1^m \sqrt{\varepsilon}}{\omega} \r)\mathcal{L}(\Phi; \mathcal{D}) +  c_1^{2m} \varepsilon.
\end{align}
\end{corollary}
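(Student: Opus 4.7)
The plan is to combine the bound \eqref{chapter3:proposition:1.3} (equivalently \eqref{chapter3:proposition:2.3}) with a lower bound on $\mathcal{L}(\Phi;\mathcal{D})$ derived from the noise assumption \eqref{chapter3:assumption:1.6}. Observe that the right-hand sides of \eqref{chapter3:proposition:1.3} and \eqref{chapter3:proposition:2.3} already give us
\begin{align*}
    \mathcal{L}(\hat{\Phi};\mathcal{D}) \leq \mathcal{L}(\Phi;\mathcal{D}) + 2c_1^m \sqrt{\varepsilon\, \mathcal{L}(\Phi;\mathcal{D})} + c_1^{2m}\varepsilon,
\end{align*}
so the only task is to absorb the cross term $2c_1^m \sqrt{\varepsilon\, \mathcal{L}(\Phi;\mathcal{D})}$ into a multiplicative factor on $\mathcal{L}(\Phi;\mathcal{D})$.

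First, I would establish the standard bias-variance decomposition. Conditioning on $\b{x}$, the quantity $\Phi(\b{x}) - \E[\b{y}\mid \b{x}]$ is deterministic and $\E[\b{y} - \E[\b{y}\mid\b{x}]\mid \b{x}] = \b{0}$, so the cross term in the expansion of $\|\Phi(\b{x}) - \b{y}\|^2 = \|(\Phi(\b{x}) - \E[\b{y}\mid\b{x}]) - (\b{y} - \E[\b{y}\mid\b{x}])\|^2$ vanishes under $\E_{(\b{x},\b{y})\sim \mathcal{D}}$. This yields
\begin{align*}
    \mathcal{L}(\Phi;\mathcal{D}) = \E\l[\|\Phi(\b{x}) - \E[\b{y}\mid\b{x}]\|^2\r] + \E\l[\|\b{y} - \E[\b{y}\mid \b{x}]\|^2\r] \geq \omega^2,
\end{align*}
where the inequality uses \eqref{chapter3:assumption:1.6}. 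In particular $\sqrt{\mathcal{L}(\Phi;\mathcal{D})} \geq \omega$.

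Second, I would use this lower bound to rewrite the cross term. Since $\sqrt{\mathcal{L}(\Phi;\mathcal{D})} = \mathcal{L}(\Phi;\mathcal{D})/\sqrt{\mathcal{L}(\Phi;\mathcal{D})} \leq \mathcal{L}(\Phi;\mathcal{D})/\omega$, we get
\begin{align*}
    2c_1^m \sqrt{\varepsilon\, \mathcal{L}(\Phi;\mathcal{D})} = \frac{2c_1^m \sqrt{\varepsilon}}{\sqrt{\mathcal{L}(\Phi;\mathcal{D})}}\, \mathcal{L}(\Phi;\mathcal{D}) \leq \frac{2c_1^m \sqrt{\varepsilon}}{\omega}\, \mathcal{L}(\Phi;\mathcal{D}).
\end{align*}
Plugging this back into the bound from Propositions~\ref{chapter3:proposition:1}/\ref{chapter3:proposition:2} and factoring $\mathcal{L}(\Phi;\mathcal{D})$ gives exactly the claimed inequality.

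There is essentially no obstacle here — the corollary is a clean algebraic repackaging. The only two ingredients are the conditional-expectation orthogonality that converts the data-noise condition into a loss lower bound, and a division step that turns an additive square-root term into a multiplicative factor $(1 + 2c_1^m\sqrt{\varepsilon}/\omega)$ on $\mathcal{L}(\Phi;\mathcal{D})$.
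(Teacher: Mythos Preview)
Your proposal is correct and is exactly the intended argument: the paper does not spell out a proof of Corollary~\ref{chapter3:corollary:1}, merely stating it as an immediate consequence of Propositions~\ref{chapter3:proposition:1}/\ref{chapter3:proposition:2} together with Assumption~(\ref{chapter3:assumption:1.6}), and your bias--variance decomposition yielding $\mathcal{L}(\Phi;\mathcal{D})\geq \omega^2$ followed by the division step is precisely how that implication works.
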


\section{Structured Pruning of Multilayer Perceptrons}\label{chapter3:section:structured.fcnn}
\subsection{Problem Formulation}\label{chapter3:section:problem.formulation.2}

We now turn from unstructured to structured pruning. In this setting, compression acts at the level of neurons instead of individual connection weights. Using the same notation introduced in Section \ref{chapter3:section:unstructured.fcnn}, let $m\in \mathbb{Z}_{\geq 1}$ and $\Phi$ be an $m$-layer MLP as defined in (\ref{chapter3:fcnn}).  Let $\mathcal{N}(\Phi)$ be the total number of neurons in $\Phi$. Namely, $\mathcal{N}(\Phi) = \sum_{\ell=2}^{m} n_\ell$. In this section, we are interested in the following  structured pruning feasibility problems.
\begin{problem}\label{chapter3:problem:3}
    Given $p\in (0, 1)$, an error threshold $\varepsilon>0$, a data distribution $\mathcal{D}$ and a network $\Phi$, is there a subnetwork $\hat{\Phi}$ of $\Phi$ such that
    \begin{align}
        \mathcal{N}(\hat{\Phi}) \leq  p \mathcal{N}(\Phi), \quad  \text{and} \quad\mathcal{L}(\hat{\Phi}; \mathcal{D}) \leq \mathcal{L}(\Phi; \mathcal{D}) + \varepsilon.
    \end{align}
\end{problem}
We refer to the above pruning problem as \textit{structured} since the objective is to remove entire neurons rather than individual weights. In other words, we seek sparse weight matrices $\hat{\b{W}}$ whose columns (corresponding to input neurons) or rows (corresponding to output neurons) are set to zero. Structured pruning is often preferred in practice, as unstructured sparsity is more difficult to leverage efficiently in hardware implementations.

\subsection{Algorithm Description}\label{chapter3:section:algorithm.description.2}

We now describe the variant of the compression scheme used for structured pruning. It is a direct adaptation of the algorithm described in Section~\ref{chapter3:section:algorithm.description.1}. Specifically, we retain the same notation for $\mathcal{W}$ and $\mathcal{B}$, and employ the same top–bottom compression procedure together with the projection operations applied after each pruning step.

The main modification concerns the definition of layer-wise scores, which we detail next. We first consider the case of layers indexed by $\ell \in \mathcal{W}$. In the latter, we focus on pruning the columns of $\b{W}_\ell$, which corresponds to pruning the input neurons and setting $\hat{\b{W}}_\ell = \b{W}_\ell \b{D}$ where $\b{D} = {\rm diag}(h_1,\dots,h_{n_{\ell}})$ is a sparse diagonal matrix in $\R^{n_{\ell}}$. For every column index $j \in [n_{\ell}]$, we do the following. First, we freeze all weights except $[\b{W}_\ell]_{:, j}$ in the current network $\Phi$, and consider the function $t \mapsto \mathcal{L}(\Phi^{\ell}_t; \mathcal{D})$, where $\Phi_t$ is obtained by replacing the $j$-th column in $\b{W}_\ell$ with $t\times [\b{W}_\ell]_{:, j}$. Second we evaluate the \textit{score} of column $[\b{W}_\ell]_{:, j}$ given by $|\E_t[\mathcal{L}(\Phi^\ell_t; \mathcal{D})] - \mathcal{L}(\Phi^{\ell}; \mathcal{D})|$, where $t$ has distribution ${\rm Bernoulli}(p)/p$, and $p$ is the target sparsity level. The scores of all columns are ranked, and we prune the column corresponding to the smallest score, i.e., we set $[\hat{\b{W}}_\ell]_{:, j} = t [\b{W}_\ell]_{:, j}$). This operation is repeated over the columns of the $\ell$-th layer, until a sufficiently large portion $\alpha \in (0, 1)$ of columns has been pruned. Similarly to the unstructured pruning algorithm, we use the most up-to-date network when evaluating the scores. In the case $\ell \in \mathcal{B}$, we focus on pruning the rows of $\b{W}_\ell$, which corresponds to pruning the output neurons and setting $\hat{\b{W}}_\ell = \b{D}\b{W}_\ell$ where $\b{D}$ is a sparse diagonal matrix in $\R^{n_{\ell+1}}$. The score for a given row in the $\ell$-th layer indexed by $i\in[n_{\ell+1}]$ is given by $|\E_t[\mathcal{L}(\Phi^{\ell+1}_t; \mathcal{D})] - \mathcal{L}(\Phi^{\ell+1}; \mathcal{D})|$ where $\Phi_t$ is obtained by setting $[\b{W}_\ell]_{i,:} = t [\b{W}_\ell]_{i,:}$, and $t$ has again distribution ${\rm Bernoulli}(p)/p$.

A key advantage of structured pruning for bottleneck layers $\b{W}_\ell, \b{W}_{\ell+1}$ with $\ell\in\mathcal{B}$ is the ability to also prune columns in the layer $\b{W}_{\ell+1}$. Indeed, Let $\mathcal{U}$ be the subset of $[n_{\ell+1}]$ corresponding to the indices of rows set to zero (Note that $\mathcal{U}$ is random), and let $\mathcal{V}=[n_{\ell+1}]\setminus \mathcal{U}$. Denote by $\b{z}\in \R^{n_{\ell}}$ the input of the $\ell$-th layer. If $\varphi_\ell$ is entrywise, then the output of the $(\ell+1)$-th layer is $\sum_{i\in \mathcal{V}} [\b{W}_{\ell+1}]_{:, i} \varphi_\ell([\b{W}_\ell]_{i,:} \b{z}) + \varphi(0)\sum_{i\in \mathcal{U}} [\b{W}_{\ell+1}]_{:, i}$. In particular, all the relevant information within the columns set $\{[\b{W}_{\ell+1}]_{:, i \mid i\in \mathcal{U}}\}$ is containing in their sum. Hence, it is sufficient to only keep one column equal to the sum, and remove the remaining $|\mathcal{U}|-1$ columns, which achieves a pruning rate $1 - (|\mathcal{U}|-1)/n_{\ell+1} \approx p$ for large $n_{\ell+1}$. Therefore, structured pruning allows us to simultaneously prune both $\b{W}_\ell$ and $\b{W}_{\ell+1}$ for $\ell \in \mathcal{B}$, whereas we were limited to pruning $\b{W}_\ell$ alone in the case of unstructured pruning.

\subsection{Main Results}\label{chapter3:section:main.results.2}
We now state the main theoretical guarantee corresponding to the structured pruning scheme described in the previous section.
\begin{proposition}\label{chapter3:proposition:3}
    Suppose  (\ref{chapter3:assumption:1.1})-(\ref{chapter3:assumption:1.5}) hold in Assumption \ref{chapter3:assumption:1}. Let $\mathcal{R}$ be a distribution over $\mathbb{B}^{n_1}_{2}(1) $ and $\b{x} \sim \mathcal{R}$. Given $\xi\in(0,1)$ and $p\in (0,1)$ there exists positive constants $\delta=\delta(\xi)$ and $n_0=n_0(p, \xi)$ such that if $\min_{\ell \in \mathcal{W} \cup \mathcal{B}} (n_{\ell} \vee n_{\ell+1}) \geq n_0$, and
    \begin{alignat}{3}
        &\forall \ell\in \mathcal{W}, \quad && \frac{n_{\ell+1}}{n_\ell} &&\leq  \frac{\alpha(1-p)\delta}{p(1-\alpha)} \frac{c_1^2}{c_2^2}, \label{chapter3:proposition:3.w}\\
        &\forall \ell\in \mathcal{B}, \quad &&\frac{n_{\ell+2}}{n_{\ell+1}} &&\leq \frac{p^2(1-p)^2\delta^2}{\alpha^2 (1-p)^2}\frac{(1 \wedge c_1^{8})}{c_1^{2(\ell+1)}c_2^2}  \wedge  \frac{ p(1-\alpha)\delta}{\alpha(1-p)} \frac{c_1^2}{c_2^2},\label{chapter3:proposition:3.b.1} \\
        &\forall \ell \in \mathcal{B}, \quad &&\frac{\sqrt{n_\ell n_{\ell+2}}}{n_{\ell+1}} &&\leq \frac{p(1-\alpha)\delta}{\alpha(1-p)} \frac{1}{c_1^{\ell-2}c_2^2\l(1\vee \frac{1-p}{p}\r)} , \label{chapter3:proposition:3.b.2}
    \end{alignat}
with $\alpha\approx 0.99$, then there exists a  network $\hat{\Phi}$ given  by $\hat{\Phi}(\b{x}) = [\varphi_m(\hat{\b{W}}_m[\varphi_{m-1}(\dots \hat{\b{W}}_1\b{x})]_{\kappa_{m-1}})]_{\kappa_m} $ with $\kappa_\ell=c_1^\ell$ such that
\begin{enumerate}
    \item\label{chapter3:proposition:3.1} $\E_{\b{x}}\l[ \|\Phi(\b{x}) - \hat{\Phi}(\b{x})\|^{2} \r] \leq c_1^{2m} (1+\xi)^{m}\xi$.
    \item \label{chapter3:proposition:3.2} For all $ \ell \in \mathcal{W}$, the matrix $\hat{\b{W}}_\ell$ has at most $1.01p$ fraction of its columns not set to zero.
    \item \label{chapter3:proposition:3.3} For all $ \ell \in \mathcal{B}$, the matrix $\hat{\b{W}}_\ell$ has at most $1.01p$ fraction of its rows not set to zero.
\end{enumerate}
 Furthermore, if $\mathcal{D}$ is a data distribution over $(\b{x}, \b{y}) \in \mathbb{B}^{n_1}_2(1) \times \R^{n_{m+1}}$, and $\varepsilon = (1+\xi)^m \xi$ then
    \begin{align}
        \mathcal{L}(\hat{\Phi}; \mathcal{D}) &\leq \mathcal{L}(\Phi; \mathcal{D}) + 2c_1^{m} \sqrt{\varepsilon \mathcal{L}(\Phi; \mathcal{D})} + c_1^{2m} \varepsilon.  \label{chapter3:proposition:3.4}
    \end{align}
\end{proposition}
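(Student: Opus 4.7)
The plan is to follow the same two-stage template that underlies Propositions~\ref{chapter3:proposition:1} and~\ref{chapter3:proposition:2}: first establish a shallow version of the bound for one- or two-layer subnetworks, then extend to deep MLPs by induction on depth using the projections $[\cdot]_{\kappa_\ell}$. The central analytic device is the Lindeberg interpolation paired with a greedy-versus-average argument. For each step of the structured scheme, the perturbation $t\cdot[\b{W}_\ell]_{:,j}$ or $t\cdot[\b{W}_\ell]_{i,:}$ with $t=\mathrm{Bernoulli}(p)/p$ satisfies $\E[t]=1$, so expanding $\mathcal{L}(\Phi^\ell_t;\mathcal{D})$ around $t=1$ kills the first-order term in expectation and reduces the per-step score to a second-order quantity of order $\mathrm{Var}(t)=(1-p)/p$. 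Since the greedy rule selects the score-minimizer, the score accepted at step $i$ is at most the arithmetic mean over the remaining columns/rows, and hence at most the full-layer average divided by the fraction remaining. Summing over $\alpha n_\ell$ steps therefore inflates the per-column/row average by at most $\log(1/(1-\alpha))$, which is absorbed into the $(1-\alpha)^{-1}$ factor appearing in \eqref{chapter3:proposition:3.w}--\eqref{chapter3:proposition:3.b.2}.

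For a wide layer $\ell\in\mathcal{W}$, I would express the loss variation from scaling the $j$-th column as a scalar function of $t$, Taylor-expand to second order, and use \eqref{chapter3:assumption:1.5} to obtain $\|[\b{W}_\ell]_{:,j}\|^2 \leq c_2^2\, n_{\ell+1}/(n_\ell\vee n_{\ell+1})$, together with the Lipschitz/operator bounds \eqref{chapter3:assumption:1.2}--\eqref{chapter3:assumption:1.4} to control the Jacobian of the tail. Averaging over the $n_\ell$ columns produces a per-step score of order $c_2^2\, c_1^{2(m-\ell)}/n_\ell$ times $(1-p)/p$. Collecting the $\alpha n_\ell$ steps, dividing by the ambient Lipschitz scale $c_1^{2m}$ coming from $\kappa_\ell=c_1^\ell$, and requiring the resulting per-layer error to stay below $\xi$ is precisely the inequality \eqref{chapter3:proposition:3.w}. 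The projection step serves to replace the potentially inflated $\|\hat{\b{W}}_\ell\|$ by the clean constant $c_1^\ell$, as justified in Section~\ref{chapter3:section:algorithm.description.1}.

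For a bottleneck $\ell\in\mathcal{B}$, the shallow analysis is applied to the two-layer block $\b{W}_{\ell+1}\,\varphi_\ell(\b{W}_\ell\,\cdot\,)$ viewed as a single object. The Lindeberg interpolation now runs along an entire row of $\b{W}_\ell$, and the remainder involves both $\b{W}_\ell$ and $\b{W}_{\ell+1}$ together with $\varphi_\ell^{(2)}$, which is controlled by \eqref{chapter3:assumption:1.3}. Splitting the second-order remainder into two contributions, one linear in $\b{W}_{\ell+1}$'s row norms and one quadratic via $\varphi_\ell^{(2)}$, produces the two terms in the $\wedge$ of \eqref{chapter3:proposition:3.b.1}; averaging over rows and using $\|[\b{W}_\ell]_{i,:}\|\,\|[\b{W}_{\ell+1}]_{:,i}\| \leq c_2^2 \sqrt{n_\ell n_{\ell+2}}/n_{\ell+1}$ yields the geometric-mean condition \eqref{chapter3:proposition:3.b.2}. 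Moreover, assumption \eqref{chapter3:assumption:1.1}, $\varphi_\ell(\b{0})=\b{0}$, gives column consolidation for free: each zeroed row of $\b{W}_\ell$ forces a zero post-activation, so the corresponding column of $\b{W}_{\ell+1}$ contributes nothing and may be removed without altering the output. This produces items~\ref{chapter3:proposition:3.2} and~\ref{chapter3:proposition:3.3} simultaneously.

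To pass from shallow to deep, I would chain the shallow bound layer by layer, compressing layer $\ell$ (or the block $\ell,\ell+1$ if $\ell\in\mathcal{B}$) while the preceding compressed layers are fixed. After each step, the projection onto $\mathbb{B}_2^{n_{\ell+1}}(c_1^\ell)$ guarantees that the pruned subnetwork up to that layer has bounded output, so the untouched suffix acts as a $c_1^{m-\ell}$-Lipschitz map on the accumulated perturbation, and compounding a per-layer budget of $(1+\xi)$ gives $\E[\|\Phi(\b{x})-\hat{\Phi}(\b{x})\|^2]\leq c_1^{2m}(1+\xi)^m\xi$, proving item~\ref{chapter3:proposition:3.1}. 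The loss bound \eqref{chapter3:proposition:3.4} then follows from the identity $\|a\|^2-\|b\|^2\leq 2\|a-b\|\|b\|+\|a-b\|^2$ applied to $a=\hat{\Phi}(\b{x})-\b{y}$ and $b=\Phi(\b{x})-\b{y}$, together with Cauchy--Schwarz on the cross-term to surface $\sqrt{\mathcal{L}(\Phi;\mathcal{D})}$. The main obstacle I anticipate is the bottleneck step: tracking the mixed second-order term that couples two weight matrices of different dimensions, keeping the depth-dependent $c_1^\ell$ factor in \eqref{chapter3:proposition:3.b.1}--\eqref{chapter3:proposition:3.b.2} sharp, and justifying the column-consolidation step cleanly when the set of pruned rows is itself random and selected in an order that depends on previously compressed weights.
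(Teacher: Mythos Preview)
Your proposal is correct and mirrors the paper's proof closely: the paper first establishes shallow structured-pruning bounds (Propositions~\ref{chapter3:appendix:structured.single.proposition} and~\ref{chapter3:appendix:structured.two.layers.proposition}, via Lemma~\ref{chapter3:lemma:8} for the second-derivative control and Lemma~\ref{chapter3:lemma:2} for the greedy-vs-average step), then chains them through the induction Lemma~\ref{chapter3:lemma:9} using the projections $[\cdot]_{\kappa_\ell}$, and finally derives \eqref{chapter3:proposition:3.4} via the same Cauchy--Schwarz expansion you describe. The only minor organizational difference is that the paper runs the Lindeberg analysis on a layer-local functional $\Psi(\b{A})=\E\|\b{W}_\ell\b{x}-\b{A}\b{x}'\|^2$ (resp.\ $\Psi(\b{A},\b{B})$) with coupled inputs $(\b{x},\b{x}')$ drawn from the dense/pruned prefix outputs, rather than on the full-network loss with a tail-Lipschitz factor; this makes the single-layer second derivative exactly quadratic and replaces your $c_1^{2(m-\ell)}$ suffix factor by the input-scale $\theta^2=\kappa_{\ell-1}^2$, but the resulting per-layer budget and the conditions \eqref{chapter3:proposition:3.w}--\eqref{chapter3:proposition:3.b.2} are the same.
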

The result of Corollary \ref{chapter3:corollary:1} also extends verbatim in the case of structured pruning if we add the assumption \ref{chapter3:assumption:1.6}.

\section{Structured Pruning of Convolutional Neural Networks}\label{chapter3:section:structured.cnn}

\subsection{Problem Formulation}\label{chapter3:section:problem.formulation.3}
We now turn to convolutional architectures. In this setting, the basic building blocks are convolutional layers rather than fully connected ones. Let $m\in \mathbb{Z}_{\geq 1}$, and introduce similarly to (\ref{chapter3:fcnn}) the following function representing an $m$-layer convolutional neural network (CNN)
\begin{align}
    \Phi: \R^{d_1 \times h_1 \times w_1} \to \R^{d_{m+1} \times h_{m+1} \times w_{m+1}}: \b{x} \mapsto \varphi_m(\b{K}_m * \varphi_{m-1}(\dots \varphi_1(\b{K}_1 * \b{x}))) \label{chapter3:cnn}
\end{align}
where $\b{K}_\ell \in \R^{d_{\ell+1} \times d_{\ell} \times q \times q}$ are four-dimensional tensors representing the convolutional filters in the network and $*$ is the convolution operator. In particular, for each pair $(o, i) \in [d_{\ell+1}]\times[d_{\ell}]$, the kernel $[\b{K}_\ell]_{o, i} \in \R^{q \times q}$ connects the $i$-th input feature map in the $\ell$-layer to the $o$-th output feature map in the same layer. Furthermore, we make the assumption that all kernels $[\b{K}_{\ell}]_{o, i}$ have dimension $q\times q$ to simplify the presentation of our guarantees, but our results extend more generally to networks with varying kernel dimensions. Moreover, we assume $q\leq \min_{\ell \in [m+1]} \max(h_\ell, w_{\ell})$. This assumption is not restrictive, as in practice $q$ is typically chosen from $\{3, 5, 7\}$.

We also use the notation $\Phi^\ell$ for $\ell\in[m]$ to denote the subnetwork of $\Phi$ at depth $\ell$. Let $\mathcal{F}(\Phi)$ be the total number of filters in $\Phi$. Namely, $\mathcal{F}(\Phi) = \sum_{\ell=2}^{m} d_\ell$. In this section, we are interested in the following  structured pruning feasibility problems.
\begin{problem}\label{chapter3:problem:4}
    Given $p\in (0, 1)$, an error threshold $\varepsilon>0$, a data distribution $\mathcal{D}$ and a network $\Phi$, is there a subnetwork $\hat{\Phi}$ of $\Phi$ such that
    \begin{align}
        \mathcal{F}(\hat{\Phi}) \leq  p \mathcal{F}(\Phi), \quad  \text{and} \quad\mathcal{L}(\hat{\Phi}; \mathcal{D}) \leq \mathcal{L}(\Phi; \mathcal{D}) + \varepsilon. 
    \end{align}
\end{problem}
The analysis of CNN architectures is more challenging than that of their MLP counterparts, primarily due to the complex nature of convolution operations. To address this technical difficulty, we first show that convolutional layers can be equivalently represented as fully connected layers with sparse, doubly circular weight matrices, which is a well known in the literature. In the following section, we further demonstrate that CNNs can be recast as MLPs with structured weight matrices and flattened input data. These equivalent representations enable us to extend the pruning analysis from Section~\ref{chapter3:section:structured.fcnn} to CNNs.

\subsection{Convolutional Networks Representation as Multilayer Perceptrons}\label{chapter3:section.cnn.networks.representation.fcnn}
In order to apply our MLP-based compression analysis to CNNs, we first express convolutional layers as linear maps acting on suitably vectorized inputs. We briefly recall the standard CNN representation. A convolutional layer maps an input tensor $\b{X} \in \R^{d_{\rm in} \times h_{\rm in} \times w_{\rm in}}$ to an output tensor $\b{Z} \in \R^{d_{\rm out} \times h_{\rm out} \times w_{\rm out}}$ using a kernel tensor $\b{K} \in \R^{d_{\rm out} \times d_{\rm in} \times q \times q}$. The height and width of the input and output feature maps are denoted by $h_{\rm in}, w_{\rm in}$ and $h_{\rm out}, w_{\rm out}$, where $d_{\rm in}$ and $d_{\rm out}$  are the number of input and output channels, and $q \times q$ is the spatial dimension of each convolutional kernel.

The output feature map $\b{Z}$ is obtained by applying $d_{\rm out}$ three-dimensional filters $\b{K}_o \in \R^{d_{\rm in} \times q \times q}$ with $o\in [d_{\rm out}]$. To simplify the exposition of our results, we assume that all the convolutional layers use unit stride and  circular padding. Moreover, we assume that the height and width of each feature map are identical. Under these assumptions, we have $h_{\rm in}=h_{\rm out}= w_{\rm in} = w_{\rm out}  \triangleq r$, and the convolution operation can be written as
\begin{align*}
    \b{Z}_{o, u, v} &= \sum_{i=1}^{d_{\rm in}} \sum_{a=1}^{r} \sum_{b=1}^{r} \b{K}_{o, i, a, b} \b{X}_{i, \langle u+a \rangle_r, \langle v+b \rangle_r}, \quad \forall (o, u, v) \in [d_{\rm out}] \times [r] \times [r]
\end{align*}
where $\langle t \rangle_r = ((t-1) \bmod r) + 1$ denotes circular indexing modulo $r$. In particular, $\langle t \rangle_r \in [r]$. For notational simplicity, we adopt the (slightly abused) shorthand $\b{X}_{i, u+a, v+b} = \b{X}_{i, \langle u+a \rangle_r, \langle v+b \rangle_r}$. Furthermore, we extend each kernel tensor $\b{K}_o$ to $\R^{d_{\rm in} \times r \times r}$ by setting $\b{K}_{o, i, a, b} = 0$ for all $i\in [d_{\rm in}]$ and $(a, b) \in [r]\times [r]$ satisfying $\max(a, b)> q$. Finally, let $\phi(u, v) = (u-1)r + v$. For $i\in [d_{\rm in}]$ define the column vector $\b{x}_i \in \R^{r^2}$ by
\begin{align*}
    [\b{x}_i]_{\phi(u, v)} = \b{X}_{i, u, v}, \quad \forall (u, v) \in [r] \times [r]
\end{align*}
and let $\b{x} = (\b{x}_1^\top, \dots, \b{x}_{d_{\rm in}}^\top)^\top \in \R^{d_{\rm in}r^2}$. Similarly, for each $o\in [d_{\rm out}]$ define the column vector $\b{z}_o \in \R^{r^2}$ by
\begin{align*}
    [\b{z}_o]_{ \phi(u, v)} = \b{Z}_{o, u, v}, \quad \forall (u, v) \in [r] \times [r]
\end{align*}
and let $\b{z} = (\b{z}_1^\top, \dots, \b{z}_{d_{\rm out}}^\top)^\top \in \R^{d_{\rm out} r^2}$.  For a given matrix $\b{U} \in \R^{r\times r}$, introduce the doubly block circulant matrix $\mathcal{C}(\b{U}) \in \R^{r^2 \times r^2}$ given by
\begin{align*}
    \l[\mathcal{C}(\b{U}) \b{y}\r]_{\phi(u, v)} &= \sum_{a=1}^{r}\sum_{b=1}^{r} \b{U}_{a, b} \b{y}_{\phi(\langle u+a \rangle_r, \langle v+b\rangle_r)}, \quad \forall \b{y} \in \R^{r^2},  \forall (u, v) \in [r] \times [r]
\end{align*}
Note in particular that $\mathcal{C}(\b{U})_{\phi(u, v), \phi(u', v')} = \b{U}_{\langle u'-u\rangle_r, \langle v'-v\rangle_r}$. Finally, let
\begin{align*}
    \mathbf{W}(\b{K}) =
    \begin{bmatrix}
    \mathcal{C}(\b{K}_{1,1}) & \cdots & \mathcal{C}(\b{K}_{1,d_{\mathrm{in}}}) \\
    \vdots  & \ddots & \vdots \\
    \mathcal{C}(\b{K}_{d_{\mathrm{out}},1}) & \cdots & \mathcal{C}(\b{K}_{d_{\mathrm{out}},d_{\mathrm{in}}})
    \end{bmatrix}
    \;\in\; \mathbb{R}^{d_{\mathrm{out}}r^{2}\times d_{\mathrm{in}}r^{2}}.
\end{align*}
Using Lemma \ref{chapter3:lemma:5}, it follows that $\b{z} = \b{W}(\b{K}) \b{x}$. Therefore, we can equivalently represent the CNN layer $\b{K}$ using the MLP layer $\b{W}(\b{K})$. Let the convolutional layers be given by $\b{K}_{\ell} \in \R^{d_{\ell+1} \times d_{\ell} \times r \times r}$ for $\ell \in [m]$, and let $\b{x} \in \R^{d_1 r^2}$ be the flattened input feature map. Then the CNN (\ref{chapter3:cnn}) can be represented equivalently by an MLP $\Phi$ given by
\begin{align*}
    \Phi(\b{x}) = \varphi_m(\b{W}_m \varphi_{m-1}(\dots \varphi_1(\b{W}_1 \b{x}))),
\end{align*}
where
\begin{align*}
    \mathbf{W}_\ell = \b{W}(\b{K}_\ell) =
    \begin{bmatrix}
    \mathcal{C}([\b{K}_\ell]_{1,1}) & \cdots & \mathcal{C}([\b{K}_\ell]_{1,d_{\ell}}) \\
    \vdots  & \ddots & \vdots \\
    \mathcal{C}([\b{K}_\ell]_{d_{\ell+1},1}) & \cdots & \mathcal{C}([\b{K}_\ell]_{d_{\ell+1},d_{\ell}})
    \end{bmatrix}
    \;\in\; \mathbb{R}^{d_{\ell+1}r^{2} \times d_{\ell}r^{2}}.
\end{align*}
In particular, pruning the $o$-th block row of size $r^2$ given by the matrix $[\mathcal{C}([\b{K}_\ell]_{o,1})  \cdots  \mathcal{C}([\b{K}_\ell]_{o,d_{\ell}})]$ corresponds to pruning the filter $\b{K}_{o, :, :, :}$. Similarly, pruning the $i$-th block column of size $r^2$ given by $[\mathcal{C}([\b{K}_\ell]_{1,i})  \cdots  \mathcal{C}([\b{K}_\ell]_{d_{\ell+1},i})]^\top$ corresponds to pruning the filter $[\b{K}_\ell]_{:,i,:,:}$. 

\subsection{Algorithm Description}\label{chapter3:section:algorithm.description.3}
Building on the MLP representation of CNN layers derived in the previous section, we now describe the corresponding structured pruning procedure. The algorithm we use is an adapted variant of the procedure described in Section \ref{chapter3:section:structured.fcnn}. For $\ell \in \mathcal{W}$, we prune column-blocks of $\b{W}_\ell$, which corresponds to removing  input filters, and set $\hat{\b{W}}_\ell = \b{W}_\ell \b{D}$, where $\b{D} = \b{G}(h; S)$ is a sparse diagonal matrix, and $S$ denotes the set of columns in $\b{W}_\ell$ corresponding to the pruned input filters. Namely, for every set of columns $S$ representing input filters, we freeze all other weights except $[\b{W}_{\ell}]_{:, S}$ in the current network $\Phi$ and consider the function $t\mapsto \mathcal{L}(\Phi_t^\ell; \mathcal{D})$, where $\Phi_t$ is obtained by replacing the column block $[\b{W}_\ell]_{:, S}$ with $t\times [\b{W}_\ell]_{:, S}$. The scores are then defined by $|\E\l[\mathcal{L}(\Phi_t^\ell; \mathcal{D})\r]-\mathcal{L}(\Phi^\ell; \mathcal{D})|$.
Similarly, for $\ell \in \mathcal{B}$, we prune row-blocks of $\b{W}_\ell$, which corresponds to removing output filters, and set $\hat{\b{W}}_\ell =  \b{D} \b{W}_\ell $, where $\b{D} = \b{G}(h; S)$ is again a sparse diagonal matrix, and $S$ denotes the set of rows in $\b{W}_\ell$ corresponding to the pruned output filters. As in the MLP case, the scores are given by $|\E\l[\mathcal{L}(\Phi_t^{\ell+1}; \mathcal{D})\r]-\mathcal{L}(\Phi^{\ell+1}; \mathcal{D})|$ where $\Phi_t$ is obtained by changing the block $[\b{W}_\ell]_{S, :}$ into $t\times [\b{W}_\ell]_{S, :}$, with $t={\rm Bernoulli}(p)/p$.

\subsection{Main Results}\label{chapter3:section:main.results.3}
We now combine the CNN–MLP representation and the structured pruning algorithm described above to obtain our main pruning guarantee for convolutional networks.
\begin{proposition}\label{chapter3:proposition:4}
    Suppose (\ref{chapter3:assumption:1.1})- (\ref{chapter3:assumption:1.4}) hold in Assumption \ref{chapter3:assumption:1}. Moreover, suppose that there exists a constant $c_2$ such that
    \begin{align}
        \forall \ell\in [m], \quad \|\b{W}_\ell\|_{\infty} \leq \frac{c_2}{q\sqrt{d_{\ell} \vee d_{\ell+1}}}. \label{chapter3:proposition:4.0}
    \end{align}
    Let $\mathcal{R}$ be a distribution over $\mathbb{B}^{n_1}_{2}(1) $ and $\b{x} \sim \mathcal{R}$. Given $\xi\in(0,1)$ and $p\in (0,1)$ there exists positive constants $\delta=\delta(\xi)$ and $n_0=n_0(p, \xi)$ such that if $\min_{\ell \in \mathcal{W} \cup \mathcal{B}} (d_{\ell} \vee d_{\ell+1}) \geq n_0$ and
    \begin{alignat}{3}
        &\forall \ell\in \mathcal{W}, \quad && \frac{d_{\ell+1}}{d_\ell} &&\leq \frac{\alpha(1-p)\delta}{p(1-\alpha)} \frac{c_1^2}{c_2^2q^2}, \label{chapter3:proposition:4.w}\\
        &\forall \ell\in \mathcal{B}, \quad &&\frac{d_{\ell+2}}{d_{\ell+1}} &&\leq \frac{p^2(1-p)^2\delta^2}{\alpha^2 (1-p)^2}\frac{(1 \wedge c_1^{8})}{c_1^{2(\ell+1)}c_2^2 q^2}  \wedge  \frac{ p(1-\alpha)\delta}{\alpha(1-p)} \frac{c_1^2}{c_2^2q^2}, \label{chapter3:proposition:4.b.1}\\
        &\forall \ell\in \mathcal{B}, \quad &&\frac{\sqrt{d_\ell d_{\ell+2}}}{d_{\ell+1}} &&\leq \frac{p(1-\alpha)\delta}{\alpha(1-p)} \frac{1}{c_1^{\ell-2}c_2^2q^2\l(1\vee \frac{1-p}{p}\r)}, \label{chapter3:proposition:4.b.2}
    \end{alignat}
     where $\alpha\approx 0.99$, then there exists a  network $\hat{\Phi}$ given  by $\hat{\Phi}(\b{x}) = [\varphi_m(\hat{\b{W}}_m[\varphi_{m-1}(\dots \hat{\b{W}}_1\b{x})]_{\kappa_{m-1}})]_{\kappa_m} $ with $\kappa_\ell=c_1^\ell$ such that
    \begin{enumerate}
        \item\label{chapter3:proposition:4.1} $\E_{\b{x}}\l[ \|\Phi(\b{x}) - \hat{\Phi}(\b{x})\|^{2} \r] \leq c_1^{2m} (1+\xi)^{m}\xi$.
        \item \label{chapter3:proposition:4.2} For all $ \ell \in \mathcal{W}$, the matrix $\hat{\b{W}}_\ell$ has at most $1.01p$ fraction of its input filters not set to zero.
        \item \label{chapter3:proposition:4.3} For all $ \ell \in \mathcal{B}$, the matrix $\hat{\b{W}}_\ell$ has at most $1.01p$ fraction of its output filters not set to zero.
    \end{enumerate}
    \begin{alignat}{2}
        &\E_{\b{x}}\l[ \|\Phi(\b{x}) - \hat{\Phi}(\b{x})\|^{2} \r] &&\leq c_1^{2m} (1+\xi)^{m}\xi. \label{chapter3:proposition:4.4}
    \end{alignat}
    Furthermore, if $\mathcal{D}$ is a data distribution over $(\b{x}, \b{y}) \in \mathbb{B}^{n_1}_2(1) \times \R^{n_{m+1}}$, and $\varepsilon = (1+\xi)^m \xi$ then
    \begin{align}
        \mathcal{L}(\hat{\Phi}; \mathcal{D}) &\leq \mathcal{L}(\Phi; \mathcal{D}) + 2c_1^{m} \sqrt{\varepsilon \mathcal{L}(\Phi; \mathcal{D})} + c_1^{2m} \varepsilon.  \label{chapter3:proposition:4.5}
    \end{align}
\end{proposition}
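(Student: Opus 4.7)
My plan is to reduce Proposition \ref{chapter3:proposition:4} to the structured MLP result of Proposition \ref{chapter3:proposition:3} via the representation developed in Section \ref{chapter3:section.cnn.networks.representation.fcnn}. I first rewrite $\Phi$ as the MLP $\b{x} \mapsto \varphi_m(\b{W}_m \varphi_{m-1}(\cdots \b{W}_1 \b{x}))$ on flattened inputs in $\R^{d_1 r^2}$, with $\b{W}_\ell = \b{W}(\b{K}_\ell)$ the doubly block circulant matrix associated with $\b{K}_\ell$. Under this correspondence, pruning the $i$-th input filter of layer $\ell$ is exactly zeroing the $r^2$-wide column block $[\b{W}_\ell]_{:, S_i}$, and pruning the $o$-th output filter is zeroing the $r^2$-tall row block $[\b{W}_\ell]_{S_o, :}$. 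Assumption (\ref{chapter3:assumption:1.4}) gives $\|\b{W}_\ell\| \leq c_1$ directly, while (\ref{chapter3:proposition:4.0}) controls $\|\b{W}_\ell\|_\infty$, so the structural hypotheses of Proposition \ref{chapter3:proposition:3} hold modulo the refined infinity-norm scaling, and the filter-pruning fractions in items (\ref{chapter3:proposition:4.2})--(\ref{chapter3:proposition:4.3}) translate one-to-one into block-pruning fractions in the MLP picture.

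\textbf{Block-level Lindeberg argument.} I then run the block-structured algorithm of Section \ref{chapter3:section:algorithm.description.3}, where the perturbation $t={\rm Bernoulli}(p)/p$ is applied as a scalar multiplier to a block $[\b{W}_\ell]_{:, S}$ or $[\b{W}_\ell]_{S, :}$. Since $\E[t-1]=0$, Lindeberg interpolation over the scalar $t$ cancels the first-order term, and the score reduces to a second-order expression dominated (modulo projection-absorbed cross terms) by $\mathrm{Var}(t)\cdot \E_{\b{x}}[\|[\b{W}_\ell]_{:, S}\, \b{h}_\ell[S]\|^2]$, where $\b{h}_\ell$ denotes the input to layer $\ell$ of norm at most $c_1^{\ell-1}$, amplified by $c_1^{2(m-\ell)}$ from the downstream Lipschitz/projection bound exactly as in Proposition \ref{chapter3:proposition:3}. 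For a column block $S$ of width $r^2$, the circulant sub-blocks $\mathcal{C}([\b{K}_\ell]_{o,i})$ are $q^2$-sparse per row, so the block contains at most $d_{\ell+1}r^2 q^2$ nonzero entries, each of magnitude at most $c_2/(q\sqrt{d_\ell \vee d_{\ell+1}})$, yielding $\|[\b{W}_\ell]_{:, S}\|_F^2 \leq c_2^2 d_{\ell+1} r^2/(d_\ell \vee d_{\ell+1})$. Combining this with the identity $\sum_i \|\b{h}_\ell[S_i]\|^2 = \|\b{h}_\ell\|^2$ and the greedy-selection gain $(1-\alpha)/\alpha$ recovers the per-layer error contribution, and telescoping across layers yields the overall bound $c_1^{2m}(1+\xi)^m \xi$ in (\ref{chapter3:proposition:4.4}); the data-loss bound (\ref{chapter3:proposition:4.5}) then follows by Cauchy--Schwarz on $\|\hat{\Phi}(\b{x}) - \b{y}\|^2 - \|\Phi(\b{x})-\b{y}\|^2$, exactly as in the proof of Proposition \ref{chapter3:proposition:3}.

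\textbf{Main obstacle.} The principal technical difficulty is the careful accounting of $q^2$ factors arising from the sparsity of doubly block circulant matrices, which is why the right-hand sides of (\ref{chapter3:proposition:4.w})--(\ref{chapter3:proposition:4.b.2}) carry an extra $1/q^2$ relative to the MLP analogues (\ref{chapter3:proposition:3.w})--(\ref{chapter3:proposition:3.b.2}): each filter acts on an $r^2$-column block yet carries only $r^2 q^2$ nonzeros per output channel, so the naive block Frobenius estimate must be sharpened using the per-entry scaling (\ref{chapter3:proposition:4.0}). A secondary but necessary step is verifying that the bottleneck collapse from Section \ref{chapter3:section:algorithm.description.2} preserves convolutional structure: when collapsing the input filters of $\b{W}_{\ell+1}$ associated with pruned outputs of $\b{W}_\ell$ into a single filter, the resulting block column must still be doubly block circulant, which holds because $\mathcal{C}$ is linear and a sum of doubly block circulant matrices of matching dimensions is doubly block circulant. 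Once these two structural points are settled, the remaining argument follows the proof of Proposition \ref{chapter3:proposition:3} verbatim with the identifications $n_\ell \leftrightarrow d_\ell$ and an extra $q^2$ factor from the block width.
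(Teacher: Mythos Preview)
Your high-level plan (represent the CNN as an MLP with $r^2$-block structure, then run a block Lindeberg argument) matches the paper, but the core estimate you propose is too weak and does not prove the proposition as stated.

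The gap is in your Frobenius bound. You write $\|[\b{W}_\ell]_{:,S}\|_F^2 \leq c_2^2 d_{\ell+1} r^2/(d_\ell\vee d_{\ell+1})$, which is correct but carries an $r^2$ factor. Plugging this into the second-order score and telescoping would yield conditions of the form $d_{\ell+1}/d_\ell \lesssim c_1^2/(c_2^2 r^2)$, not $c_1^2/(c_2^2 q^2)$ as in (\ref{chapter3:proposition:4.w})--(\ref{chapter3:proposition:4.b.2}). Since typically $q\ll r$ (e.g., $q\in\{3,5,7\}$ while $r$ is the feature-map side length), this is a genuinely weaker statement than the one you are trying to prove. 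You correctly identify the $q^2$ accounting as ``the principal technical difficulty,'' but your proposed tool does not resolve it.

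What the paper does instead is control the \emph{operator} norm of each $r^2\times r^2$ circulant block via Lemma \ref{chapter3:lemma:6}: since $\mathcal{C}(\b{U})$ is diagonalized by the 2D DFT, its eigenvalues are the DFT coefficients of $\b{U}$, giving $\|\mathcal{C}([\b{K}_\ell]_{o,i})\| \leq \|[\b{K}_\ell]_{o,i}\|_\infty \|[\b{K}_\ell]_{o,i}\|_0 \leq q^2\|\b{K}_\ell\|_\infty \leq c_2 q/\sqrt{d_\ell\vee d_{\ell+1}}$, independent of $r$. With this block operator-norm bound $\nu_\ell$ in hand, the paper invokes not Proposition \ref{chapter3:proposition:3} (which is the $k=1$ special case) but the general block-structured result Proposition \ref{chapter3:appendix:structured.deep} with $k=r^2$: its hypotheses (\ref{chapter3:appendix:structured.deep.w})--(\ref{chapter3:appendix:structured.deep.b.2}) involve $\nu_\ell^2 n_{\ell+1}/k = \nu_\ell^2 d_{\ell+1}$, into which $\nu_\ell \leq c_2 q/\sqrt{d_\ell\vee d_{\ell+1}}$ plugs directly to recover exactly the $c_2^2 q^2$ scaling. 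Your Frobenius norm throws away the circulant structure and hence the $r$-independence; that is the missing idea.
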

The result of Corollary \ref{chapter3:corollary:1} also extends verbatim in the case of structured pruning if we add the assumption \ref{chapter3:assumption:1.6}. We note that Assumption (\ref{chapter3:proposition:4.0}) bounds the magnitude of weights in CNNs using the kernel dimensions $q, d_{\rm in}$ and $d_{\rm out}$, and is the derived from the Glorot/Xavier and Kaiming initialization schemes in the case of CNNs \cite{he2015delving, glorot2010understanding}.

\section{Numerical Simulations}\label{chapter3:section:numerical.simulation}

To validate the theoretical insights from Propositions~\ref{chapter3:proposition:1},~\ref{chapter3:proposition:2},~\ref{chapter3:proposition:3}, and~\ref{chapter3:proposition:4}, we conduct a series of numerical simulations. The goal of these experiments is to illustrate the tradeoff between width and compressibility, and to demonstrate the effectiveness of the randomized greedy pruning algorithm analyzed in this work. Our setup for experiments is general and designed to be consistent across different pruning settings and learning tasks. 

\textbf{Data}. We use two benchmarks: the California Housing (regression), and the Digits  (classification) datasets. For each, the data $(\b{x}_i, \b{y}_i)_{i\in [N]}$ is split into an $80\%$ training set and a $20\%$ test set, with corresponding empirical distributions denoted by $\mathcal{D}^{\rm train}$ and $\mathcal{D}^{\rm test}$.

\textbf{Model Training}. We consider a series of networks $\Phi_w$ with varying width $w\in \mathbb{Z}_{\geq 0}$. For each width value $w$ within a predefined range $\{w_1,\ldots, w_K\}$, a dense model $\Phi_w$ is trained on $\mathcal{D}^{\rm train}$ using the Adam optimizer. We use the Mean Squared Error loss for regression, and the Cross-Entropy loss for classification tasks.

\textbf{Pruning Application}. For all our simulations, we set the pruning fraction to $\alpha=0.9$, and pruning  probability to $p=0.3$. This leads to an expected sparsity level of $1-\alpha + \alpha p = 37\%$ (i.e. roughly $37\%$ of weights remain in pruned layers). We apply our pruning algorithm on each trained network $\Phi_w$. Due to the randomized nature of this algorithm, we train several (randomly initialized) networks per width, and repeat the pruning  process 50 times for each trained network. This allows us to capture the expected value and confidence interval of our evaluation metrics across the randomness from pruning and training. We consider two pruning setups: $(\mathcal{W}, \mathcal{B})=(\{1\}, \emptyset)$ in blue and $(\mathcal{W}, \mathcal{B})=(\emptyset, \{2\}) $ in orange. While the pruning procedure described in the preceding sections includes projection operations, we omit them in our numerical experiments, as they are primarily required for theoretical analysis.  

\textbf{Evaluation}. Let $\hat{\Phi}_w$ be the pruned network obtained from $\Phi_w$. We measure the impact of pruning using two metrics
\begin{itemize}
    \item The adjusted $\ell_2$-squared distance between models' outputs given by
    \begin{align*}
        \Delta(\Phi, \hat{\Phi}) = c^{-2m}\E_{\mathcal{D}^{\rm test}}\l[\|\Phi(\b{x}) - \hat{\Phi}(\b{x})\|^2\r],
    \end{align*}
    where $c\triangleq\max_{\ell \in [m]} \|\b{W}_\ell\|$ is the maximum layer-wise operator norm and $m$ is the network depth. The multiplicative constant $c^{-2m}$ matches the scaling factor for the bounds presented in Proposition~\ref{chapter3:proposition:1}-\ref{chapter3:proposition:4}, and adjusts for depth and error variations caused by the final learned weights, which differ between each stochastic training run.
    \item A task-specific performance metric: classification accuracy or  regression R-squared, evaluated on both the dense and pruned models $\Phi$, $\hat{\Phi}$. 
\end{itemize}
As predicted by propositions~\ref{chapter3:proposition:1}-\ref{chapter3:proposition:4}, we observe a steady decrease of the error $\Delta(\Phi, \hat{\Phi})$ in all pruning settings as $w$ increases. Furthermore, the performance of pruned networks improves with width as well.

\renewcommand{\topfraction}{0.95}
\renewcommand{\bottomfraction}{0.95}
\renewcommand{\textfraction}{0.05}
\renewcommand{\floatpagefraction}{0.8}
\setcounter{topnumber}{3}
\setcounter{bottomnumber}{2}
\setcounter{totalnumber}{4}

\begin{figure}[H]
    \centering
    \includegraphics[width=\textwidth,trim=0 0 0 50,clip]{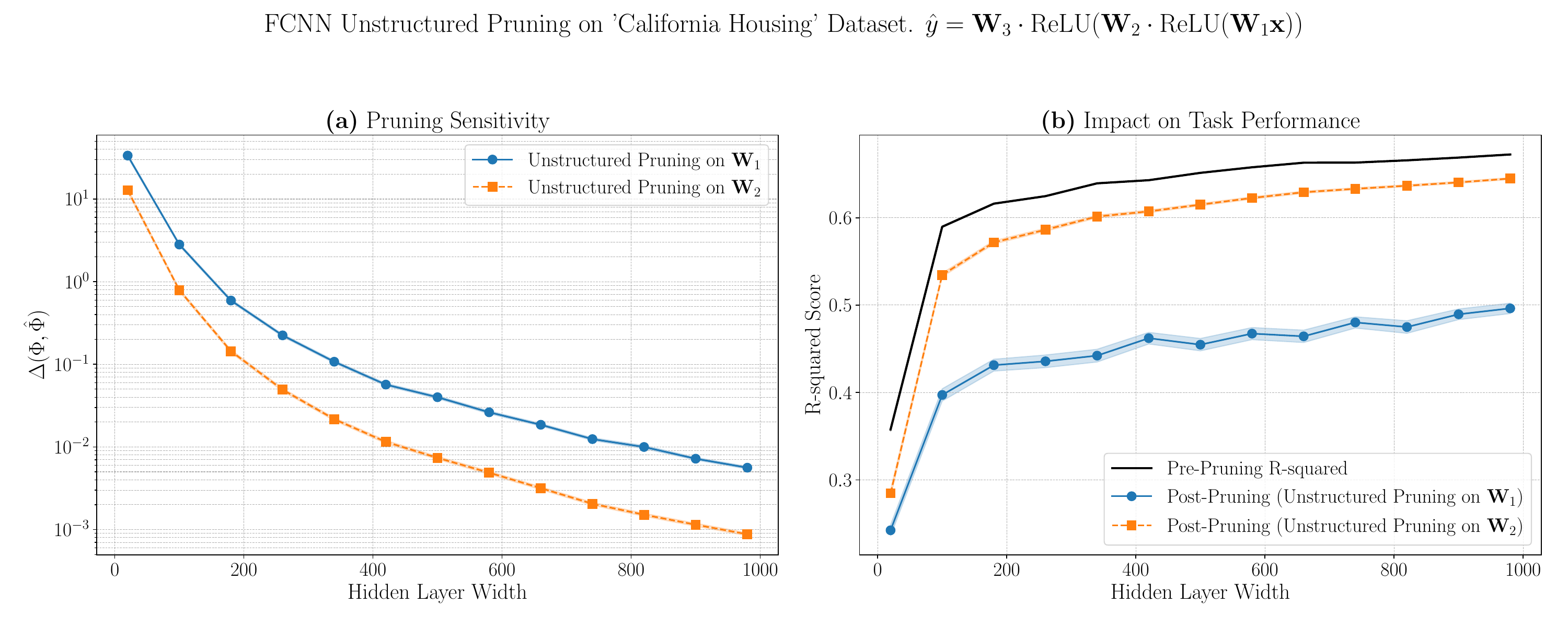}
    \caption{Evaluation plots for unstructured pruning  on the California Housing dataset using an MLP model $\Phi_w(\b{x}) = \b{W}_3{\rm ReLU}(\b{W}_2 {\rm ReLU}(\b{W}_1 \b{x}))$ with $\b{W}_1\in \R^{w\times 8}, \b{W}_2\in \R^{40\times w}, \b{W}_3\in \R^{1\times 40}$. }
    \label{chapter3:fig:fcnn_california_results_unstructured}
\end{figure}

\begin{figure}[H]
    \centering
    \includegraphics[width=\textwidth,trim=0 0 0 50,clip]{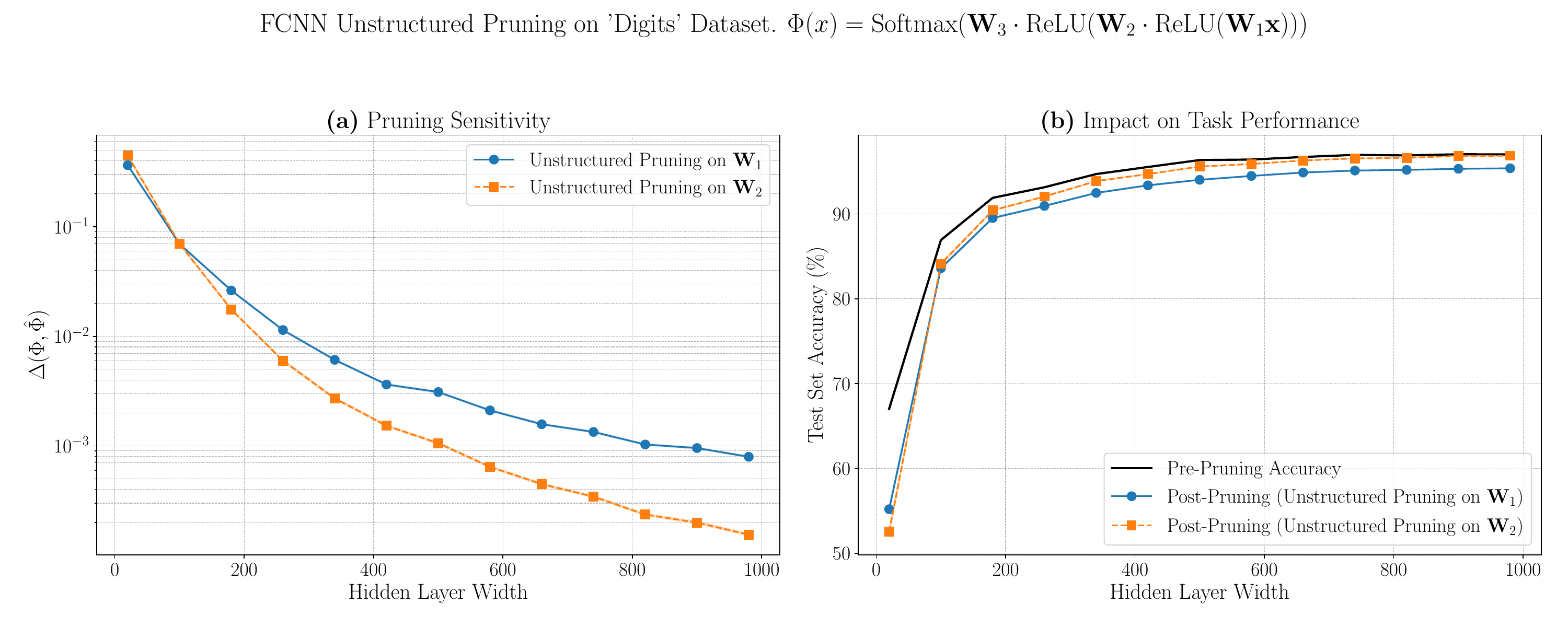}
    \caption{Evaluation plots for unstructured pruning on the Digits dataset using an MLP model $\Phi_w(\b{x}) = {\rm Softmax}(\b{W}_3{\rm ReLU}(\b{W}_2 {\rm ReLU}(\b{W}_1 \b{x})))$ with $\b{W}_1\in \R^{w\times 64}, \b{W}_2\in \R^{40\times w}, \b{W}_3\in \R^{10\times 40}$.}
    \label{chapter3:fig:fcnn_digits_results_unstructured}
\end{figure}

\begin{figure}[H]
    \centering
    \includegraphics[width=\textwidth,trim=0 0 0 50,clip]{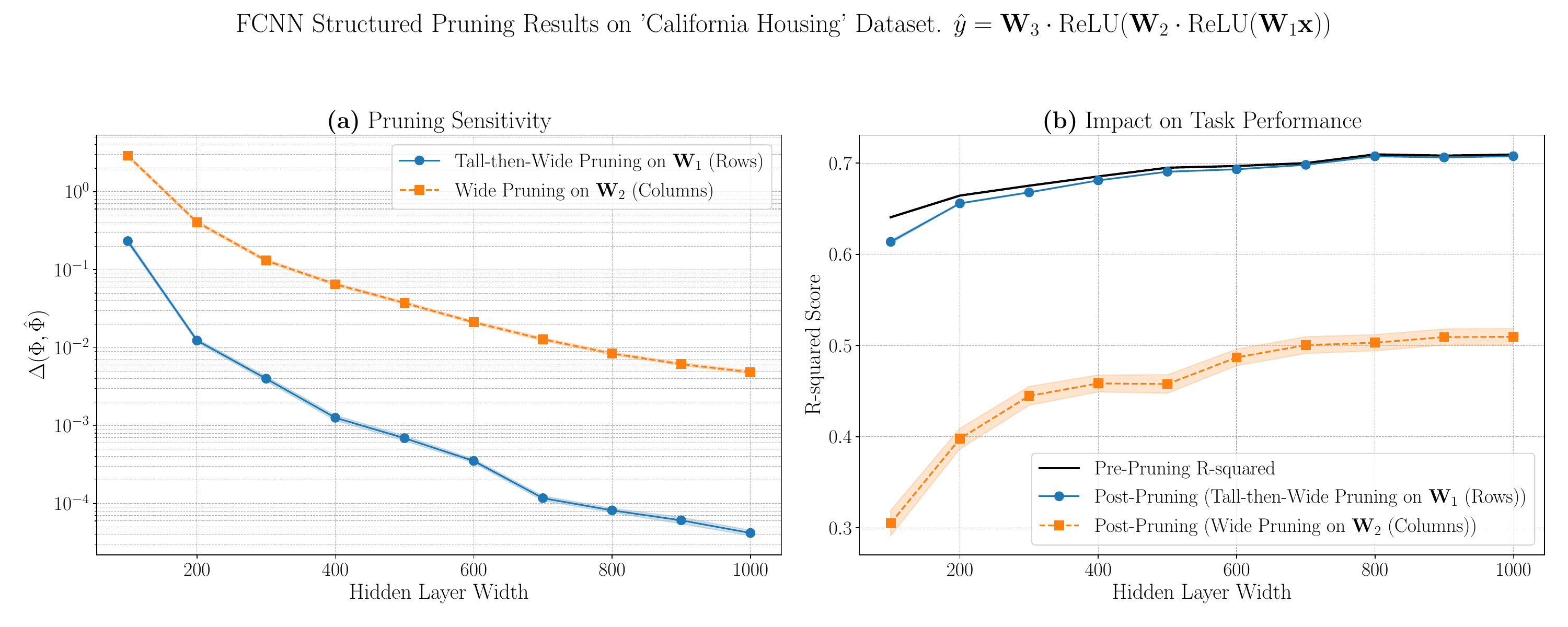}
    \caption{Evaluation plots for structured pruning on the California Housing dataset using an MLP model $\Phi_w(\b{x}) = \b{W}_3{\rm ReLU}(\b{W}_2 {\rm ReLU}(\b{W}_1 \b{x}))$ with $\b{W}_1\in \R^{w\times 8}, \b{W}_2\in \R^{20\times w}, \b{W}_3\in \R^{1\times 20}$.}
    \label{chapter3:fig:structured_fcnn_california_r2}
\end{figure}

\begin{figure}[H]
    \centering
    \includegraphics[width=\textwidth,trim=0 0 0 50,clip]{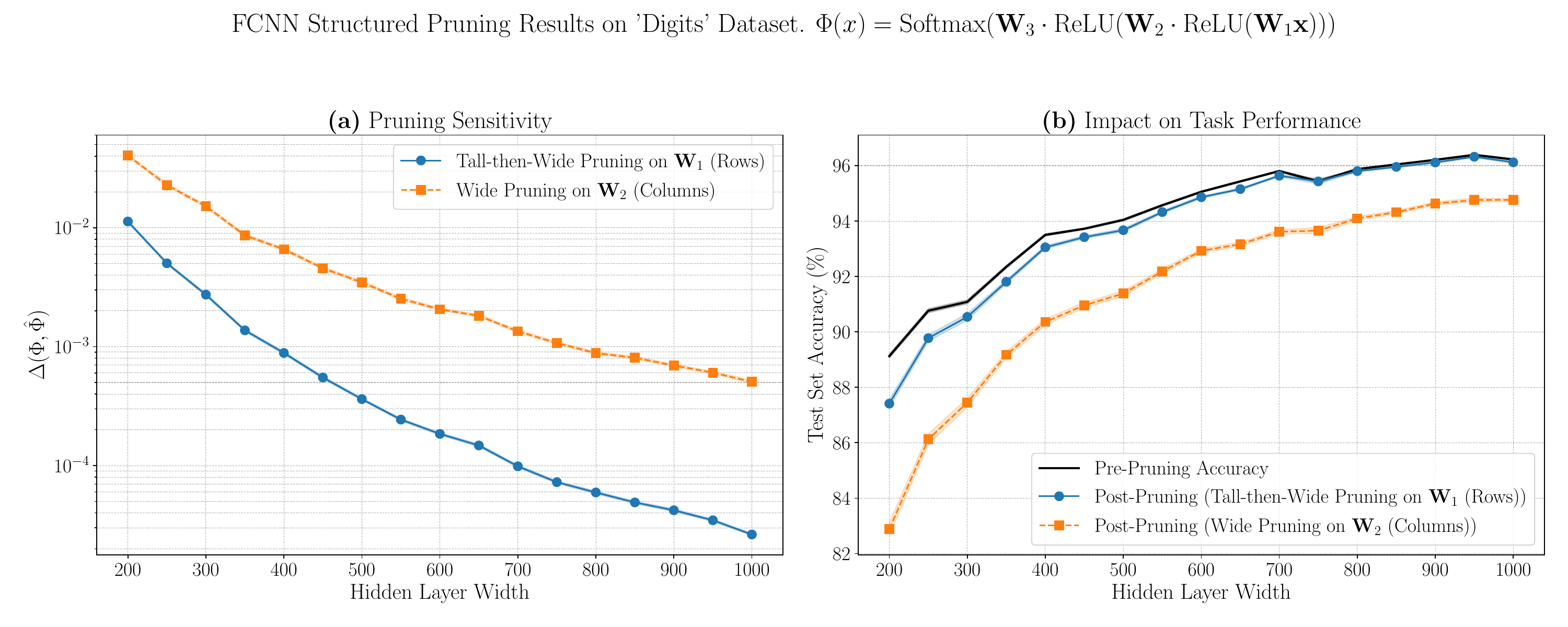}
    \caption{Evaluation plots for structured pruning on the Digits dataset using an MLP model $\Phi_w(\b{x}) = {\rm Softmax}(\b{W}_3{\rm ReLU}(\b{W}_2 {\rm ReLU}(\b{W}_1 \b{x})))$ with $\b{W}_1\in \R^{w\times 64}, \b{W}_2\in \R^{20\times w}, \b{W}_3\in \R^{10\times 20}$.}
    \label{chapter3:fig:fcnn_bottleneck_results}
\end{figure}

\begin{figure}[H]
    \centering
    \includegraphics[width=\textwidth,trim=0 0 0 50,clip]{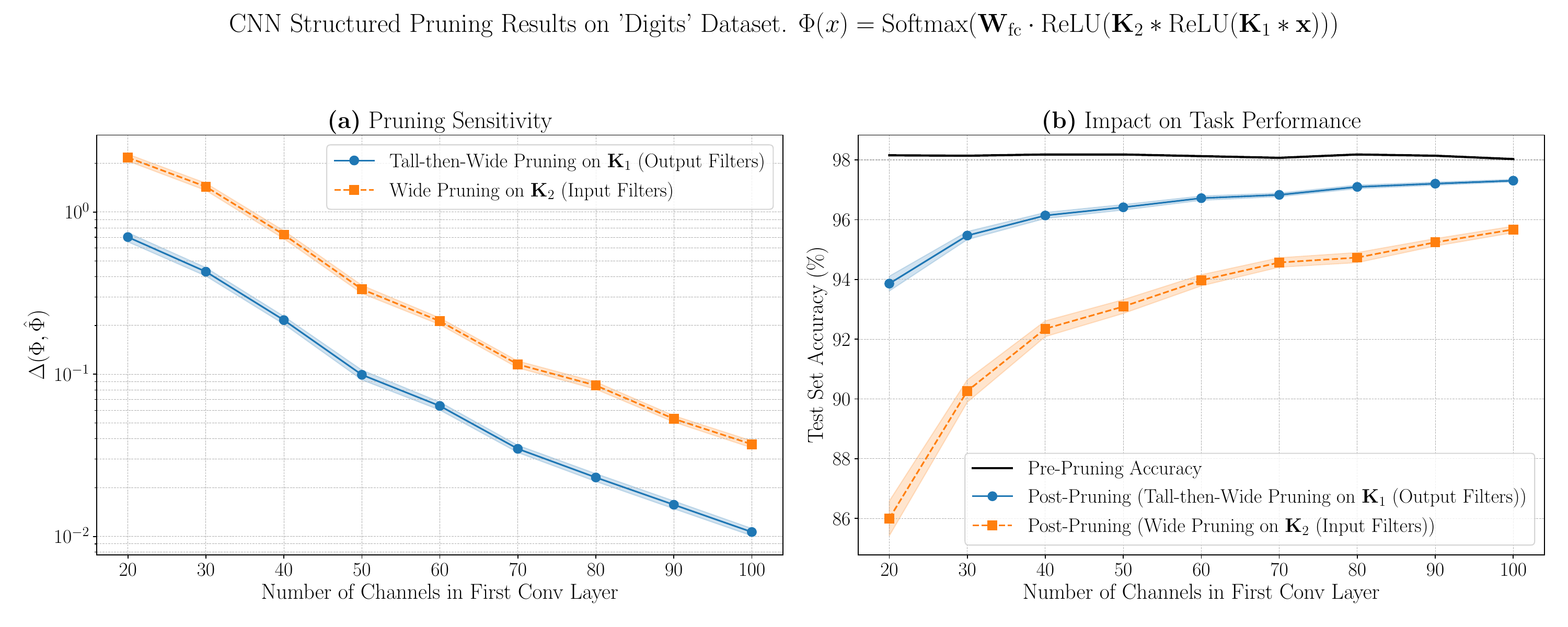}
    \caption{Evaluation plots for structured pruning on the Digits dataset using a CNN model $\Phi_w(\b{x}) = {\rm Softmax}(\b{W}_{\rm fc}{\rm ReLU}(\b{K}_2 {\rm ReLU}(\b{K}_1 \b{x})))$ with $\b{K}_1\in \R^{w\times 1\times 3\times 3}, \b{K}_2\in \R^{16 \times w \times 3 \times 3}, \b{W}_{\rm fc}\in \R^{10\times 1024}$.}
    \label{chapter3:fig:cnn_pruning_results}
\end{figure}


\section{Preliminary Results}\label{chapter3:appendix:preliminary.results}

\begin{lemma}\label{chapter3:lemma:0}
    Let $k\in \mathbb{Z}_{\geq 1}$,  $M\in \R_{>0}$. For $w\in [-M, M]$, it holds
    \begin{enumerate}
        \item $\E[q(w; M, k)] = w$.
        \item  $|q(w; M, k) - w| \leq  \frac{M}{k}$, almost surely.
    \end{enumerate}
\end{lemma}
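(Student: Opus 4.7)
The plan is to prove both claims by direct computation from the definition of $q(w;M,k)$. First, I would observe that by the definition of $\ell_w$ as the smallest integer in $[k]$ with $|w|\leq \ell_w M/k$, we have the bracketing
\begin{align*}
    (\ell_w-1)\tfrac{M}{k} \;<\; |w| \;\leq\; \ell_w\tfrac{M}{k},
\end{align*}
and I would record that the two stated probabilities $1-\ell_w+k|w|/M$ and $\ell_w-k|w|/M$ are both nonnegative (using this bracketing) and sum to $1$, so the distribution is well-defined.

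For part 1, I would factor out $\mathrm{sign}(w)\,M/k$ from the expectation to get
\begin{align*}
    \E[q(w;M,k)] = \mathrm{sign}(w)\tfrac{M}{k}\Bigl[\ell_w\bigl(1-\ell_w+\tfrac{k|w|}{M}\bigr) + (\ell_w-1)\bigl(\ell_w-\tfrac{k|w|}{M}\bigr)\Bigr].
\end{align*}
Expanding the bracket, the terms $\ell_w$, $\ell_w^2$, and $\ell_w\, k|w|/M$ all cancel, leaving $k|w|/M$. Multiplying back by $\mathrm{sign}(w)\,M/k$ yields $\mathrm{sign}(w)|w|=w$.

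For part 2, I would just note that $q(w;M,k)$ equals $\mathrm{sign}(w)$ times one of the two adjacent grid points $\ell_w M/k$ or $(\ell_w-1)M/k$, which bracket $|w|=\mathrm{sign}(w)\cdot w$. Therefore $|q(w;M,k)-w|$ is at most the distance between these two grid points, which is exactly $M/k$. Since both parts reduce to arithmetic, there is really no main obstacle here; the only small care needed is in checking that $\ell_w$ is well-defined (which uses $w\in[-M,M]$ so that $\ell=k$ always satisfies $|w|\leq \ell M/k$) and that the probabilities lie in $[0,1]$, which follows from the bracketing above.
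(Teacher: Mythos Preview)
Your proposal is correct and follows essentially the same approach as the paper: both compute the expectation directly by expanding the two-point distribution and use the bracketing $(\ell_w-1)M/k \leq |w| \leq \ell_w M/k$ for the almost-sure bound. The only cosmetic difference is that the paper first reduces to $w\geq 0$ via the antisymmetry $q(-w;M,k)=-q(w;M,k)$, whereas you carry the factor $\mathrm{sign}(w)$ throughout; your additional remarks on well-definedness of $\ell_w$ and of the probabilities are a nice touch the paper omits.
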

\begin{proof}
    As $q(-w;M, k)=-q(w;M,k)$, we might suppose without loss of generality that $w\geq 0$. Let $\ell_w = \min \{\ell \in [k] \mid |w|\leq \ell M /k\}$. By construction, we have
    \begin{align}
        \E[q(w; M, k)] &= \frac{M}{k}\l( \ell_w \l(1-\ell_w + \frac{kw}{M}\r) + (\ell_w - 1) \l(\ell_w - \frac{kw}{M}\r) \r) \nonumber\\
        &= \frac{ M}{k} \l(\ell_w - \ell_w + \frac{kw}{M} \r) \nonumber \\
        &= w \nonumber,
    \end{align}
    which ends the proof of the first item in the lemma. To show the second item, note that
    \begin{align}
            \frac{(\ell_w-1) M}{k} \leq w \leq \frac{\ell_w M}{k} \nonumber.
    \end{align}
    Since $\frac{\ell_w M}{k} - \frac{(\ell_w-1) M}{k}  = \frac{M}{k}$, it follows that $|q(w; M, k)-w|\leq \frac{M}{k}$ almost surely. This ends the proof.
\end{proof}


\begin{lemma}\label{chapter3:lemma:1}
    Let $\b{u}, \b{v} \in \R^n$, and $\kappa \in [\|\b{u}\|, \infty]$. Then $\|\b{u} - [\b{v}]_{\kappa}\| \leq \|\b{u} - \b{v}\|$.
\end{lemma}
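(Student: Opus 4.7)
The plan is to reduce the inequality to the standard fact that Euclidean projection onto a closed convex set is nonexpansive, and then exploit the observation that $\b{u}$ already lies in the target ball.

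First, I would note that $\mathbb{B}^n_2(\kappa)$ is a nonempty closed convex subset of $\R^n$, so $[\cdot]_\kappa$ is well-defined and, by the standard first-order optimality condition for Euclidean projection onto a convex set, it satisfies
\begin{align*}
\langle \b{z} - [\b{v}]_\kappa,\, \b{v} - [\b{v}]_\kappa \rangle \leq 0 \qquad \text{for all } \b{z} \in \mathbb{B}^n_2(\kappa).
\end{align*}
From this one deduces the standard nonexpansiveness bound $\|[\b{v}_1]_\kappa - [\b{v}_2]_\kappa\| \leq \|\b{v}_1 - \b{v}_2\|$ by summing the two optimality inequalities (one with $\b{z} = [\b{v}_2]_\kappa$ in the projection of $\b{v}_1$, and symmetrically) and applying Cauchy--Schwarz.

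Next, since $\|\b{u}\| \leq \kappa$, the point $\b{u}$ already lies in $\mathbb{B}^n_2(\kappa)$, so the projection leaves it unchanged: $[\b{u}]_\kappa = \b{u}$. Applying the nonexpansiveness inequality with $\b{v}_1 = \b{u}$ and $\b{v}_2 = \b{v}$ then gives
\begin{align*}
\|\b{u} - [\b{v}]_\kappa\| \;=\; \|[\b{u}]_\kappa - [\b{v}]_\kappa\| \;\leq\; \|\b{u} - \b{v}\|,
\end{align*}
which is the claim.

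There is no real obstacle here; the only minor subtlety is handling the endpoint $\kappa = \infty$ (where $[\b{v}]_\kappa = \b{v}$ and the bound is trivial) and, if one prefers a fully self-contained argument avoiding general projection theory, one can instead split into cases: if $\|\b{v}\| \leq \kappa$ the inequality is immediate, while if $\|\b{v}\| > \kappa$ one has $[\b{v}]_\kappa = \kappa \b{v}/\|\b{v}\|$, and a direct computation using $\|\b{u}\| \leq \kappa \leq \|\b{v}\|$ and expanding the squares shows $\|\b{u} - \kappa \b{v}/\|\b{v}\|\|^2 \leq \|\b{u}-\b{v}\|^2$ after reducing to the scalar inequality $(\kappa/\|\b{v}\| - 1)\langle \b{u}, \b{v}\rangle \leq \kappa\|\b{v}\| - \|\b{v}\|^2$, which follows from $\langle \b{u}, \b{v}\rangle \leq \|\b{u}\|\|\b{v}\| \leq \kappa\|\b{v}\|$.
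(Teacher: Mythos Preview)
Your proof is correct and follows essentially the same approach as the paper: both observe that $[\b{u}]_\kappa = \b{u}$ since $\|\b{u}\|\leq\kappa$, and then apply the $1$-Lipschitz (nonexpansiveness) property of the Euclidean projection onto the ball. Your version is simply more detailed in justifying nonexpansiveness and in treating the boundary cases.
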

\begin{proof}
    Note that $[\b{u}]_{\kappa} = \b{u}$. Since the projection operator $\b{v}\mapsto [\b{v}]_{\kappa}$ is $1$-Lipschitz, it follows that
    \begin{align*}
        \|\b{u} - [\b{v}]_{\kappa}\| &= \|[\b{u}]_{\kappa} - [\b{v}]_{\kappa}\| \leq \|\b{u} - \b{v}\|,
    \end{align*}
    which readily yields the result of the lemma.
\end{proof}


\begin{lemma}\label{chapter3:lemma:2}
    Suppose $(a_k)_{k \in \Sm}$ is a sequence of nonnegative reals indexed by a finite set $\Sm$, and let $\mu\in \R$ satisfy $\mu \geq \sum_{i\in \Sm} a_i / |\Sm|$. For $\eta \in \R_{> 0}$, let $\Sm(\eta) = \{i\in \Sm \mid a_i \leq \eta \mu\}$. Then 
    \begin{align}
        \forall \eta\in \R_{\geq 1}, \quad  |\Sm(\eta)|\geq \l(1-\frac{1}{\eta}\r)|\Sm|.\nonumber
    \end{align}
\end{lemma}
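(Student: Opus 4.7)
The plan is to recognize this as a Markov-inequality-type statement applied to the counting measure on $\mathcal{S}$, and to proceed by bounding the complement $\mathcal{S}\setminus \mathcal{S}(\eta) = \{i \in \mathcal{S} \mid a_i > \eta \mu\}$ directly. First I would observe that since the $a_i$ are nonnegative, the hypothesis $\mu \geq \sum_i a_i/|\mathcal{S}|$ forces $\mu \geq 0$. The degenerate case $\mu = 0$ then immediately implies all $a_i = 0$, so $\mathcal{S}(\eta) = \mathcal{S}$ and the bound holds trivially; so I may assume $\mu > 0$.

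For the nondegenerate case, I would sum the inequality $a_i > \eta \mu$ over $i \in \mathcal{S}\setminus \mathcal{S}(\eta)$ to get
\begin{align*}
\eta \mu \,|\mathcal{S}\setminus \mathcal{S}(\eta)| < \sum_{i \in \mathcal{S}\setminus \mathcal{S}(\eta)} a_i \leq \sum_{i \in \mathcal{S}} a_i \leq \mu |\mathcal{S}|,
\end{align*}
where the second inequality uses nonnegativity of the $a_i$ and the third uses the hypothesis on $\mu$. Dividing by $\eta \mu > 0$ yields $|\mathcal{S}\setminus \mathcal{S}(\eta)| < |\mathcal{S}|/\eta$, and then $|\mathcal{S}(\eta)| = |\mathcal{S}| - |\mathcal{S}\setminus \mathcal{S}(\eta)| \geq (1 - 1/\eta)|\mathcal{S}|$, giving the result.

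There is no real obstacle here: this is a one-line Markov bound once the complement is set up. The only subtlety worth stating cleanly is that the strict inequality $a_i > \eta \mu$ on the complement is what forces the bound on $|\mathcal{S}\setminus \mathcal{S}(\eta)|$, and that the hypothesis $\eta \geq 1$ is not actually used in the argument itself; it only ensures that the conclusion $(1-1/\eta)|\mathcal{S}|$ is a nonnegative quantity (and thus nonvacuous).
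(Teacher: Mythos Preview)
Your argument is correct and is essentially the same Markov-type counting argument as the paper's proof: the paper phrases it as a contradiction (assuming $|\mathcal{S}(\eta)| < (1-1/\eta)|\mathcal{S}|$ and deriving $\mu > \mu$), whereas you bound $|\mathcal{S}\setminus\mathcal{S}(\eta)|$ directly, but the underlying inequality chain is identical. One small clean-up: your strict inequality $\eta\mu\,|\mathcal{S}\setminus\mathcal{S}(\eta)| < \sum_{i \in \mathcal{S}\setminus\mathcal{S}(\eta)} a_i$ requires the complement to be nonempty, so you should note that the empty-complement case is trivial before writing that line.
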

\begin{proof}
    By way of contradiction, suppose that $|\Sm(\eta)|< (1-1/\eta)|\Sm|$ for some $\eta\in \R_{\geq 1}$. We have
    \begin{align}
        \mu &\geq \frac{1}{|\Sm|} \sum_{i\in \Sm} a_i\nonumber\\
        &\geq \frac{1}{|\Sm|} \sum_{i\in \Sm\setminus \Sm(\eta)} a_i\nonumber\\
        &> \l(1-\frac{|\Sm(\eta)|}{|\Sm|}\r) \eta\mu\nonumber\\
        &> \mu,\nonumber
    \end{align}
    which yields the result of the lemma.
\end{proof}

\begin{lemma}\label{chapter3:lemma:3}
    Let $n\in \mathbb{Z}_{\geq 1}$, and $(a_k)_{k\geq 0}$ be a sequence of nonnegative reals satisfying $|a_{k} - a_{k+1}| \leq u a_k + v$ for all $k\in [0, n-1]$, where $u ,v\in  \R_{>0}$. Then
    \begin{align}
        \forall k\in [0,n-1], \quad |a_k - a_{k+1}| &\leq e^{nu}\l(u a_0 + v\r).\nonumber
    \end{align}
\end{lemma}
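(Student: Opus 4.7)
The plan is to establish a discrete Gronwall-type estimate for $a_k$ itself, and then plug this back into the given one-step bound to obtain the claim. First I would observe that the hypothesis $|a_k - a_{k+1}| \leq u a_k + v$ implies in particular the one-sided bound $a_{k+1} \leq (1+u)\,a_k + v$, which is the form amenable to iteration. Unrolling this recurrence and summing the resulting geometric series yields
\begin{align*}
a_k &\leq (1+u)^k a_0 + v\sum_{j=0}^{k-1}(1+u)^j = (1+u)^k a_0 + \frac{v\bigl((1+u)^k - 1\bigr)}{u},
\end{align*}
valid for every $k \in [0, n]$.

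Next I would substitute this bound back into the hypothesis. Multiplying by $u$ and adding $v$ gives
\begin{align*}
u a_k + v \;\leq\; u(1+u)^k a_0 + v(1+u)^k - v + v \;=\; (1+u)^k\bigl(u a_0 + v\bigr).
\end{align*}
Combining with $|a_k - a_{k+1}|\leq u a_k + v$ and using the elementary inequality $(1+u)^k \leq e^{uk} \leq e^{un}$ for all $k\in[0,n-1]$ immediately yields the stated bound.

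I do not expect any genuine obstacle: the lemma is a standard discrete Gronwall argument, and the only care required is in the bookkeeping of the geometric sum and in verifying that the $e^{nu}$ prefactor (rather than something tighter such as $e^{(n-1)u}$) is what is claimed, so that the inequality holds uniformly over $k\in[0, n-1]$.
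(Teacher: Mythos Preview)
Your proposal is correct and follows essentially the same approach as the paper's proof: both derive the one-sided recursion $a_{k+1}\leq (1+u)a_k+v$, unroll it via a geometric sum, and then substitute back into the original hypothesis, using $1+u\leq e^u$ at the end. The only cosmetic difference is that you factor $(1+u)^k(ua_0+v)$ before passing to the exponential bound, whereas the paper bounds $a_k$ by $e^{nu}a_0+\frac{v}{u}(e^{nu}-1)$ first and then simplifies; the resulting inequality is identical.
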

\begin{proof}
    Let $k\in [0, n-1]$. It follows that $a_{k+1} \leq (u+1)a_k + v $. Iterating the latter inequality we obtain for all $k\in [n]$
    \begin{align}
        a_{k} &\leq (u+1)^k a_0 + v\sum_{\ell=0}^{k-1} (u+1)^\ell\nonumber\\
        &= (u+1)^k a_0 + \frac{v}{u}((u+1)^k - 1)\nonumber\\
        &\leq (u+1)^n a_0 + \frac{v}{u}((u+1)^n - 1)\nonumber\\
        &\leq e^{nu} a_0 + \frac{v}{u}(e^{nu}-1),\nonumber
    \end{align}
    where we used the inequality  $1+x \leq e^x$ in the last line. Plugging the above in $|a_k - a_{k+1}|\leq u a_k + v$ yields for $k\in [0, n-1]$
\begin{align}
        |a_{k}-a_{k+1}| &\leq u\l(e^{nu} a_0 + \frac{v}{u}(e^{nu}-1)\r) + v\nonumber\\
        &=e^{nu} \l( ua_0 + v \r),\nonumber
\end{align}
 which concludes the proof of the lemma.
\end{proof}

\begin{lemma}\label{chapter3:lemma:4}
    Let $(f_n)_{n\geq 1}$ be a sequence of functions with  $f_n: \R^n \to \R_{\geq 0}$. Let $ p\in (0, 1)$ and $\b{b}_n = \b{ Ber}^n_p$. Moreover, suppose  $\E_{\b{b}_n}[f_n(\b{b}_n)] \leq 1$ for all $n\in \mathbb{Z}_{\geq 1}$.  Given $\gamma, \varepsilon \in (0, 1)$, there exists a constant $n_0 = n_0(\gamma, p, \varepsilon)$ such that for all $n\geq n_0$, there exists a nonrandom realization $\hat{\b{b}}_n$ satisfying $f_n(\hat{\b{b}}_n) \leq 1+\varepsilon$ and $\|\hat{\b{b}}_n \|_0 \leq (1+\gamma)np$.
\end{lemma}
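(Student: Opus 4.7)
The plan is to apply the probabilistic method: show that with positive probability a realization of $\b{b}_n$ simultaneously achieves the two desired properties, which then guarantees existence of a nonrandom $\hat{\b{b}}_n$. The two events we want to hold jointly are $A_n = \{f_n(\b{b}_n) \leq 1+\varepsilon\}$ and $B_n = \{\|\b{b}_n\|_0 \leq (1+\gamma)np\}$. It suffices to show $\P(A_n \cap B_n) > 0$, i.e., $\P(A_n^c) + \P(B_n^c) < 1$.

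For the first event, I would use Markov's inequality directly on the nonnegative random variable $f_n(\b{b}_n)$: since $\E_{\b{b}_n}[f_n(\b{b}_n)] \leq 1$, we have
\begin{align*}
\P(f_n(\b{b}_n) > 1+\varepsilon) \leq \frac{\E_{\b{b}_n}[f_n(\b{b}_n)]}{1+\varepsilon} \leq \frac{1}{1+\varepsilon},
\end{align*}
so $\P(A_n^c) \leq 1/(1+\varepsilon) = 1 - \varepsilon/(1+\varepsilon)$.

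For the second event, the quantity $\|\b{b}_n\|_0$ is a sum of $n$ i.i.d.\ Bernoulli$(p)$ random variables with mean $np$, so the standard multiplicative Chernoff bound gives
\begin{align*}
\P(\|\b{b}_n\|_0 > (1+\gamma)np) \leq \exp\!\l(-\frac{\gamma^2 np}{3}\r).
\end{align*}
Since $p$ and $\gamma$ are fixed positive constants, this bound decays exponentially in $n$, and therefore by choosing
\begin{align*}
n_0 = n_0(\gamma, p, \varepsilon) = \l\lceil \frac{3}{\gamma^2 p} \log\!\l(\frac{2(1+\varepsilon)}{\varepsilon}\r) \r\rceil,
\end{align*}
we ensure that for every $n \geq n_0$ the tail bound above is at most $\varepsilon/(2(1+\varepsilon))$, which is strictly less than $\varepsilon/(1+\varepsilon)$.

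Combining via the union bound, $\P(A_n^c \cup B_n^c) \leq \tfrac{1}{1+\varepsilon} + \tfrac{\varepsilon}{2(1+\varepsilon)} < 1$, so $\P(A_n \cap B_n) > 0$. In particular, there exists at least one realization $\hat{\b{b}}_n \in \{0,1\}^n$ of $\b{b}_n$ satisfying both $f_n(\hat{\b{b}}_n) \leq 1+\varepsilon$ and $\|\hat{\b{b}}_n\|_0 \leq (1+\gamma)np$, which is the desired conclusion. There is no real obstacle here: Markov handles the function bound, Chernoff handles the sparsity, and the only role of $n_0$ is to make the Chernoff tail small enough relative to the Markov slack.
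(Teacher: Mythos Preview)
Your proof is correct and follows essentially the same approach as the paper: both combine a Chernoff-type concentration bound on $\|\b{b}_n\|_0$ with the expectation hypothesis on $f_n$ and conclude via the probabilistic method. The only cosmetic difference is that the paper conditions on the sparsity event and bounds the conditional expectation of $f_n$, whereas you apply Markov's inequality to $f_n$ directly and then union-bound; the two arguments are equivalent in substance.
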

\begin{proof}
    Since $f_n\geq 0$, we have
    \begin{align*}
        1\geq \E[f_n(\b{b}_n)] \geq \E\l[f_n(\b{b}_n) \biggm| \|\b{b}_n \|_0 \leq (1+\gamma)np \r] \p\l(\|\b{b}_n \|_0 \leq (1+\gamma)np\r),
    \end{align*}
    by standard concentration inequalities, we have $\p\l(\|\b{b}_n \|_0 > (1+\gamma)np\r) \leq e^{-c\gamma^2np}$, where $c$ is an explicit constant. Let $n_0=n_0(\gamma, p, \varepsilon)$ be the positive constant satisfying $e^{-c\gamma^2 n_0 p} = \varepsilon/(1+\varepsilon)$. It follows that for $n\geq n_0$ we have $\p\l(\|\b{b}_n \|_0 \leq (1+\gamma)np\r)\geq 1/(1+\varepsilon)$. Therefore,
    \begin{align*}
        1 \geq \frac{\E\l[f_n(\b{b}_n) \biggm| \|\b{b}_n \|_0 \leq (1+\gamma)np \r] }{1+\varepsilon}.
    \end{align*}
    Hence, there exists a realization $\hat{\b{b}}_n$ such that $\|\hat{\b{b}}_n \|_0 \leq (1+\gamma)np$ and $1 \geq \frac{f_n(\hat{\b{b}}_n)}{1+\varepsilon}$, which ends the proof.
\end{proof}

\begin{lemma}\label{chapter3:lemma:5}
    Let $\b{K}\in \R^{d_{\rm out} \times d_{\rm in} \times r \times r}$, $\b{x}\in \R^{d_{\rm in}r^2}$, and $\b{z}\in \R^{d_{\rm out}r^2}$ as in Section \ref{chapter3:section.cnn.networks.representation.fcnn}. Then $\b{z} = \b{W}(\b{K}) \b{x}$.
\end{lemma}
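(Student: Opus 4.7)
The plan is to verify the claim by a direct coordinate-wise computation, matching the right-hand side $\b{W}(\b{K})\b{x}$ against the definition of $\b{z}$ block by block. Since $\b{W}(\b{K})$ is a block matrix whose $(o,i)$ block is $\mathcal{C}([\b{K}]_{o,i})$ and $\b{x}$ is the block-stacked vector $(\b{x}_1^\top,\ldots,\b{x}_{d_{\rm in}}^\top)^\top$, the $o$-th block of $\b{W}(\b{K})\b{x}$ is $\sum_{i=1}^{d_{\rm in}} \mathcal{C}([\b{K}]_{o,i})\,\b{x}_i$. The goal will be to show that the $\phi(u,v)$-th coordinate of this sum equals $\b{Z}_{o,u,v}$, which by definition equals $[\b{z}_o]_{\phi(u,v)}$.

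First, I would invoke the defining identity for doubly block circulant matrices: for any $\b{U}\in\R^{r\times r}$ and $\b{y}\in\R^{r^2}$,
\begin{align*}
\bigl[\mathcal{C}(\b{U})\b{y}\bigr]_{\phi(u,v)} = \sum_{a=1}^{r}\sum_{b=1}^{r}\b{U}_{a,b}\,\b{y}_{\phi(\langle u+a\rangle_r,\langle v+b\rangle_r)}.
\end{align*}
Applying this to $\b{U}=[\b{K}]_{o,i}$ and $\b{y}=\b{x}_i$, and using $[\b{x}_i]_{\phi(u',v')}=\b{X}_{i,u',v'}$ from the flattening convention, the $\phi(u,v)$ coordinate of the $o$-th block of $\b{W}(\b{K})\b{x}$ becomes
\begin{align*}
\sum_{i=1}^{d_{\rm in}}\sum_{a=1}^{r}\sum_{b=1}^{r} \b{K}_{o,i,a,b}\,\b{X}_{i,\langle u+a\rangle_r,\langle v+b\rangle_r}.
\end{align*}

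Next, I would use the zero-extension convention stated just before the lemma: kernel entries $\b{K}_{o,i,a,b}$ with $\max(a,b)>q$ are zero, so the inner double sum over $[r]\times[r]$ collapses to the original convolution sum over $[q]\times[q]$. Combined with the shorthand $\b{X}_{i,u+a,v+b} = \b{X}_{i,\langle u+a\rangle_r,\langle v+b\rangle_r}$, this expression coincides exactly with the definition of $\b{Z}_{o,u,v}$ given in Section~\ref{chapter3:section.cnn.networks.representation.fcnn}. Since $[\b{z}]_{(o-1)r^2+\phi(u,v)} = [\b{z}_o]_{\phi(u,v)} = \b{Z}_{o,u,v}$, the two vectors agree coordinate-wise for every $(o,u,v)\in[d_{\rm out}]\times[r]\times[r]$, which yields $\b{z}=\b{W}(\b{K})\b{x}$.

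This is essentially a bookkeeping argument: the only step requiring any care is ensuring that the indexing conventions ($\phi$ flattening, circular modular arithmetic $\langle\cdot\rangle_r$, and block-row/column ordering in $\b{W}(\b{K})$) line up consistently on both sides. There is no genuine obstacle; the ``hardest'' part is simply tracking indices across the three nested layers (channels, spatial, flattened) without error.
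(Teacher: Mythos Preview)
Your proposal is correct and follows essentially the same coordinate-wise verification as the paper: both identify the $o$-th block of $\b{W}(\b{K})\b{x}$ as $\sum_i \mathcal{C}([\b{K}]_{o,i})\b{x}_i$, unfold the definition of $\mathcal{C}$, and match against $\b{Z}_{o,u,v}$ via the flattening map $\phi$. The only cosmetic difference is that the paper starts from $\b{z}$ and rewrites toward $\b{W}(\b{K})\b{x}$, whereas you go in the reverse direction; your remark about collapsing the sum back to $[q]\times[q]$ is harmless but unnecessary, since after the zero-extension convention both the convolution and $\mathcal{C}$ are already written as sums over $[r]\times[r]$.
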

\begin{proof}
    Using the same notation of Section \ref{chapter3:section.cnn.networks.representation.fcnn} for $\phi$ and $\langle .\rangle_r$, we have for $o\in[d_{\rm out}]$
    \begin{align*}
        (\b{z}_o)_{\phi(u, v)} &= \b{Z}_{o, u, v}\\
        &= \sum_{i=1}^{d_{\rm in}} \sum_{a=1}^{r} \sum_{b=1}^{r} \b{K}_{o, i, a, b} \b{X}_{i, \langle u+a \rangle_r, \langle v+b \rangle_r}\\
        &= \sum_{i=1}^{d_{\rm in}} \sum_{a=1}^{r} \sum_{b=1}^{r} \b{K}_{o, i, a, b} (\b{x}_i)_{\phi(\langle u+a \rangle_r, \langle v+b \rangle_r)}\\
        &= \sum_{i=1}^{d_{\rm in}} (\mathcal{C}(\b{K}_{o, i}) \b{x}_i)_{\phi(u, v)},
    \end{align*}
    thus $\b{z}_o = \sum_{i=1}^{d_{\rm in}} \mathcal{C}(\b{K}_{o, i}) \b{x}_i$. Hence

\[
\b{z}
=
\begin{bmatrix}
\sum_{i=1}^{d_{\rm in}}\mathcal{C}(\b{K}_{1,i})\,\b{x}_i\\[2pt]
\vdots\\[2pt]
\sum_{i=1}^{d_{\rm in}}\mathcal{C}(\b{K}_{d_{\rm out},i})\,\b{x}_i
\end{bmatrix}
=
\begin{bmatrix}
\mathcal{C}(\b{K}_{1,1}) & \cdots & \mathcal{C}(\b{K}_{1,d_{\rm in}})\\
\vdots & \ddots & \vdots\\
\mathcal{C}(\b{K}_{d_{\rm out},1}) & \cdots & \mathcal{C}(\b{K}_{d_{\rm out},d_{\rm in}})
\end{bmatrix}
\begin{bmatrix}
\b{x}_1\\ \vdots\\ \b{x}_{d_{\rm in}}
\end{bmatrix}
\;=\;
\b{W}(\b{K})\b{x},
\]
which ends the proof.
\end{proof}

\begin{lemma}\label{chapter3:lemma:6}
    Let $\b{U}\in \R^{r \times r}$ as in Section \ref{chapter3:section.cnn.networks.representation.fcnn}. Then,
    \begin{align*}
        \| \mathcal{C}(\b{U}) \| \leq \|\b{U} \|_{\infty} \| \b{U}\|_{0}.
    \end{align*}
\end{lemma}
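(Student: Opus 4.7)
The plan is to express $\mathcal{C}(\mathbf{U})$ as a sparse linear combination of permutation matrices, apply the triangle inequality to the operator norm, and use the fact that each permutation matrix has unit operator norm.

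First I would, for each pair $(a,b) \in [r]\times[r]$, introduce the shift matrix $\mathbf{P}_{a,b} \in \mathbb{R}^{r^2 \times r^2}$ defined by
\begin{align*}
    [\mathbf{P}_{a,b} \mathbf{y}]_{\phi(u,v)} = \mathbf{y}_{\phi(\langle u+a \rangle_r, \langle v+b \rangle_r)}, \qquad \forall \mathbf{y}\in \mathbb{R}^{r^2}, (u,v)\in[r]\times[r].
\end{align*}
Since the map $(u,v) \mapsto (\langle u+a \rangle_r, \langle v+b\rangle_r)$ is a bijection on $[r]\times[r]$ (being a cyclic shift in each coordinate), and $\phi$ is a bijection from $[r]\times[r]$ onto $[r^2]$, the matrix $\mathbf{P}_{a,b}$ is a permutation matrix. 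In particular, $\|\mathbf{P}_{a,b}\| = 1$.

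Next, I would directly read off from the definition of $\mathcal{C}(\mathbf{U})$ that
\begin{align*}
    \mathcal{C}(\mathbf{U}) = \sum_{a=1}^{r} \sum_{b=1}^{r} \mathbf{U}_{a,b}\, \mathbf{P}_{a,b}.
\end{align*}
By the triangle inequality for the operator norm, it then follows that
\begin{align*}
    \|\mathcal{C}(\mathbf{U})\| \;\leq\; \sum_{a=1}^{r}\sum_{b=1}^{r} |\mathbf{U}_{a,b}|\, \|\mathbf{P}_{a,b}\| \;\leq\; \|\mathbf{U}\|_{\infty} \sum_{(a,b):\, \mathbf{U}_{a,b}\neq 0} 1 \;=\; \|\mathbf{U}\|_{\infty}\, \|\mathbf{U}\|_{0},
\end{align*}
which is the claimed bound.

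There is essentially no obstacle here; the only mild technicality is confirming that $\mathbf{P}_{a,b}$ is indeed a permutation matrix, which I would handle in one line via the bijection argument above. The bound is not tight in general (one could replace $\|\mathbf{U}\|_0$ by its square root via a Schur-type estimate on row/column sums), but $\|\mathbf{U}\|_{\infty}\|\mathbf{U}\|_0$ is exactly the form needed downstream to control the spectral norm of the MLP representation $\mathbf{W}(\mathbf{K})$ in terms of the kernel entries and the sparsity pattern $q^2$.
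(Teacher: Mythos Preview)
Your proof is correct, and it takes a genuinely different route from the paper's. The paper diagonalizes $\mathcal{C}(\mathbf{U})$ explicitly via the two-dimensional DFT: it exhibits the vectors $\mathbf{v}^{k,\ell}$ with $\mathbf{v}^{k,\ell}_{\phi(u,v)} = \omega_r^{ku+\ell v}$ as an orthogonal eigenbasis, reads off the eigenvalues $\hat{\mathbf{U}}_{k,\ell} = \sum_{a,b}\mathbf{U}_{a,b}\,\omega_r^{ka+\ell b}$, and then bounds $\|\mathcal{C}(\mathbf{U})\| = \max_{k,\ell}|\hat{\mathbf{U}}_{k,\ell}| \le \sum_{a,b}|\mathbf{U}_{a,b}| \le \|\mathbf{U}\|_\infty\|\mathbf{U}\|_0$. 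Your argument bypasses the spectral computation entirely by writing $\mathcal{C}(\mathbf{U})$ as a sum of $\|\mathbf{U}\|_0$ scaled permutation matrices and applying the triangle inequality. Your approach is shorter and more elementary; the paper's approach gives strictly more information (the exact eigenvalues and normality of $\mathcal{C}(\mathbf{U})$), though none of that extra structure is used downstream. One small caveat: your parenthetical remark that a Schur-type estimate would improve the $\|\mathbf{U}\|_0$ factor to $\sqrt{\|\mathbf{U}\|_0}$ is not correct here---every row and every column of $\mathcal{C}(\mathbf{U})$ has exactly $\|\mathbf{U}\|_0$ nonzero entries, so the Schur/Holmgren bound also gives $\|\mathbf{U}\|_\infty\|\mathbf{U}\|_0$, and indeed this is sharp (take $\mathbf{U}$ with all entries equal).
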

\begin{proof}
    We use the same notation of Section \ref{chapter3:section.cnn.networks.representation.fcnn} for $\phi$ and $\langle.\rangle_r$.  Let $\omega_r=\exp(-2\pi i/r)$. We will show that the eigenvalues of $\mathcal{C}(\b{U})$ are given by
    \begin{align*}
        \hat{\b{U}}_{k, \ell} = \sum_{a=1}^{r} \sum_{b=1}^{r} \b{U}_{a, b} \omega_r^{ka +\ell b}, \quad (k, \ell) \in [r] \times [r].
    \end{align*}
    Fix $(k, \ell) \in [r]\times [r]$, and let $\b{v}^{k, \ell} \in \mathbb{C}^{r^2}$ be a vector given by $\b{v}^{k, \ell}_{\phi(u, v)} = \omega_r^{ku + \ell v}$. We have for $(u, v) \in [r]\times [r]$,
    \begin{align*}
        (\mathcal{C}(\b{U}) \b{v}^{k,\ell})_{\phi(u, v)} &= \sum_{a=1}^{r}\sum_{b=1}^{r} \b{U}_{a, b} \b{v}^{k, \ell}_{\phi(\langle  u+a\rangle_r, \langle v+b  \rangle_r)}\\
        &=\sum_{a=1}^{r}\sum_{b=1}^{r} \b{U}_{a, b} \omega_r^{k(u+a) + \ell (v+b)}\\
        &= \omega_r^{ku + \ell v} \sum_{a=1}^{r}\sum_{b=1}^{r} \b{U}_{a, b} \omega_r^{ka + \ell b}\\
        &= \b{v}^{k, \ell}_{\phi(u, v)} \hat{\b{U}}_{k, \ell}.
    \end{align*}
    Therefore, $\b{v}^{k, \ell}$ is an eigenvector of $\mathcal{C}(\b{U})$ with eigenvalue $\hat{\b{U}}_{k, \ell}$. Moreover, the eigenvectors $\b{v}^{k, \ell}_{\phi(u, v)}$ form an orthogonal basis. Indeed, let $(k, \ell), (k', \ell')\in [r]\times [r]$. We have
    \begin{align*}
        \sum_{u=1}^{r}\sum_{v=1}^{r} \b{v}^{k, \ell}_{\phi(u, v)} \bar{\b{v}}^{k', \ell'}_{\phi(u, v)}        &= \sum_{u=1}^{r}\sum_{v=1}^{r} \omega_r^{(k-k')u + (\ell - \ell')v}\\
        &= \l(\sum_{u=1}^{r} \omega_r^{(k-k')u} \r) \l(\sum_{v=1}^{r} \omega_r^{(\ell-\ell')v}\r)\\
        &= r^2 1_{k=k'}1_{\ell = \ell'}.
    \end{align*}
    Hence, the eigenvalues of $\mathcal{C}(\b{U})$ are $\{\hat{\b{U}}_{k, \ell}, (k, \ell) \in [r]\times [r]\}$. Note then that
    \begin{align*}
        |\hat{\b{U}}_{k, \ell}| &\leq \sum_{a=1}^{r} \sum_{b=1}^{r} \l|\b{U}_{a, b}\r|\leq \|\b{U}\|_{\infty} \|\b{U}\|_0.
    \end{align*}
    Therefore
    \begin{align*}
        \|\mathcal{C}(\b{U})\| &= \max_{(k, \ell) \in [r] \times [r]} |\hat{\b{U}}_{k, \ell}|\leq \|\b{U}\|_{\infty} \|\b{U}\|_0,
    \end{align*}
    which concludes the proof.
\end{proof}

\begin{lemma}\label{chapter3:lemma:7}
Let $\rho, \theta \in \R_{>0}$,   $\b{W}_1\in \R^{n_2\times n_1}$, $\b{W}_2 \in \R^{n_3 \times n_2}$ be random matrices, and $\mathcal{Q}_{\rho, \theta}$ be a joint distribution over $\mathbb{B}_{2}^{n_1}(\rho)\times \mathbb{B}^{n_1}_{2}(\theta)$, and $(\b{x}, \b{x}')\sim \mathcal{Q}_{\rho, \theta}$. Suppose $f: \R^{n_1} \to \R^{n}$ for some $n\in \Z_{\geq 1}$ is a function with parameters $\b{A}\in \R^{n_2\times n_1}, \b{B}\in\R^{n_3\times n_2}$. That is, $f(\b{x}) = f(\b{x}; \b{A}, \b{B})$. Introduce
\begin{align}
    \Psi_f: \quad \R^{n_2\times n_1} \times \R^{n_3 \times n_2} \to \R, \quad (\b{A}, \b{B}) \mapsto \E\l[\|f(\b{x}; \b{W}_1, \b{W}_2) - f(\b{x}'; \b{A}, \b{B})\|^2\r],\nonumber
\end{align}
and the expectation is taken over $(\b{x}, \b{x}')\sim \mathcal{Q}_{\rho, \theta}$ and $\b{W}_1, \b{W}_2$. Suppose $\mathcal{S}_1 \cup \mathcal{S}_2 = [n_2]\times [n_1]$ is a partition of $[n_2]\times [n_1]$ with $\Sm_1 \neq \emptyset$, and $(h_{ij})_{(i, j) \in [n_2] \times [n_1]}$ is a collection of random variables.  Let $\b{Z}\in \R^{n_2 \times n_1}$ be given by
    \begin{alignat*}{2}
        \b{Z}_{ij} &= [\b{W}_1]_{ij}, \quad &&\text{if } (i,j) \in \mathcal{S}_1,\nonumber\\
        \b{Z}_{ij} &= h_{ij}, \quad &&\text{if } (i,j) \in \mathcal{S}_2.\nonumber
    \end{alignat*}
    Let $\mathcal{T} \triangleq \{h_{ij} \mid (i, j)\in \Sm_2 \} $, and $\sigma_i = \| \b{W}_i\|,  \nu_i = \|\b{W}_i\|_{\infty}$ for $i=1,2$. Suppose  $(t_{ij})_{(i,j)\in \Sm_1} $ are random variables. 
    \begin{enumerate}
        \item \textbf{Single Layer}. Suppose $f(\b{x}; \b{A}, \b{B}) = \b{A}\b{x}$, and write $f(\b{x}; \b{A}), \Psi_f(\b{A})$ for simplicity as $\b{B}$ has no effect on $f$ or $\Psi_f$. Then, for all $\eta \geq 1$ there exists at least $\l(1- \frac{1}{\eta}\r)|\Sm_1|$ many pairs $(i, j)$ in $\Sm_1$ such that the following holds
    \begin{align}
         \E\l[ \l|\frac{\partial^2 \Psi_f(\b{Z}(t_{ij}; i, j))}{\partial t_{ij}^2} \r| \r] &\leq \frac{2\eta \theta^2 n_2}{|\Sm_1|}. \label{chapter3:lemma:7.1} 
    \end{align}
         where the expectation is taken over the randomness in $h_{ij}, t_{ij}, \b{W}_1$ and $\b{W}_2$.
        \item \textbf{Block of Two Layers}. Suppose $f(\b{x}; \b{A}, \b{B}) = \b{B}\varphi(\b{A}\b{x})$, where $\varphi$ is a function satisfying (\ref{chapter3:assumption:1.1}), (\ref{chapter3:assumption:1.2}), and (\ref{chapter3:assumption:1.3}) in Assumption \ref{chapter3:assumption:1}. Furthermore, suppose that $|t_{ij} -[\b{W}_1]_{ij}|\leq \tau \nu_1$  where $\tau \in \R_{\geq 0}$.  Then, for all $\eta \geq 1$ there exists at least $\l(1- \frac{1}{\eta}\r)|\Sm_1|$ many pairs $(i, j)$ in $\Sm_1$ such that the following holds
            \begin{align}
         \E&\l[ \l|\frac{\partial^2 \Psi_f(\b{Z}(t_{ij}; i, j), \b{W}_2)}{\partial t_{ij}^2} \r| \r] \leq \frac{2\eta \theta^2 \sigma_2 \sqrt{n_2}}{|\Sm_1|} \nonumber\\
         &\times \l( \sqrt{\E_{\T} \l[ \Psi_f(\b{Z}, \b{W}_2) \r]} + \frac{\nu_2^2 n_3\sqrt{n_2}}{\sigma_2}  + \theta \tau  \nu_1 \nu_2 \sqrt{n_2 n_3}\r). \label{chapter3:lemma:7.2}
    \end{align}
        where the expectation is taken over the randomness in $h_{ij}, t_{ij}, \b{W}_1$ and $ \b{W}_2$.
    \end{enumerate}
\end{lemma}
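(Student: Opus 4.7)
The plan is to prove both parts by the same template: compute $\partial^2_{t_{ij}}\Psi_f$ explicitly, derive an $(i,j)$-uniform bound on $\E|\partial^2_{t_{ij}}\Psi_f|$ whose sum over $(i,j)\in \Sm_1$ is tractable, and then apply the averaging Lemma \ref{chapter3:lemma:2} with parameter $\eta$ to extract at least $(1-1/\eta)|\Sm_1|$ pairs satisfying the desired inequality. For the single-layer case, $\Psi_f(\b{A})=\E\|\b{W}_1\b{x}-\b{A}\b{x}'\|^2$ depends on $\b{A}_{ij}$ only through the $j$-th entry of $\b{x}'$ entering the $i$-th output coordinate, so a direct computation gives $\partial^2_{\b{A}_{ij}}\Psi_f = 2\,\E[(\b{x}'_j)^2]$, a deterministic constant independent of $\b{A}$. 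Summing over $\Sm_1\subseteq[n_2]\times[n_1]$ yields $\sum_{(i,j)\in \Sm_1} 2\,\E[(\b{x}'_j)^2] \leq 2 n_2\,\E\|\b{x}'\|^2 \leq 2 n_2 \theta^2$, so the average is at most $2 n_2 \theta^2/|\Sm_1|$ and Lemma \ref{chapter3:lemma:2} immediately delivers (\ref{chapter3:lemma:7.1}).

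For the two-layer case, set $\psi := \b{W}_2\varphi(\b{W}_1\b{x}) - \b{B}\varphi(\b{A}\b{x}')$. Differentiating through the chain rule yields
\begin{align*}
\partial^2_{\b{A}_{ij}}\Psi_f(\b{A},\b{B}) = 2\|\b{B}_{:,i}\|^2\,\E\l[\varphi'((\b{A}\b{x}')_i)^2(\b{x}'_j)^2\r] - 2\,\E\l[\psi^\top\b{B}_{:,i}\,\varphi''((\b{A}\b{x}')_i)(\b{x}'_j)^2\r].
\end{align*}
Using $|\varphi'|,|\varphi''|\leq 1$ from (\ref{chapter3:assumption:1.2})--(\ref{chapter3:assumption:1.3}), bounding $|\psi^\top\b{B}_{:,i}|\leq \|\psi\|\|\b{B}_{:,i}\|$, and invoking Cauchy--Schwarz via $\E[\|\psi\|(\b{x}'_j)^2]\leq \sqrt{\E\|\psi\|^2}\sqrt{\E(\b{x}'_j)^4} = \sqrt{\Psi_f(\b{A},\b{B})}\sqrt{\E(\b{x}'_j)^4}$, one obtains the pointwise bound
\begin{align*}
\l|\partial^2_{\b{A}_{ij}}\Psi_f(\b{A},\b{B})\r| \leq 2\|\b{B}_{:,i}\|^2\,\E(\b{x}'_j)^2 + 2\|\b{B}_{:,i}\|\sqrt{\Psi_f(\b{A},\b{B})}\sqrt{\E(\b{x}'_j)^4}.
\end{align*}

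We then substitute $\b{A}=\b{Z}(t_{ij};i,j)$, $\b{B}=\b{W}_2$, take outer expectation over $\T,t_{ij},\b{W}_1,\b{W}_2$, and sum over $(i,j)\in \Sm_1$. The quadratic contribution is controlled by $\|\b{W}_{2,:,i}\|^2\leq n_3\nu_2^2$ together with $\sum_j\E(\b{x}'_j)^2\leq \theta^2$, which after summing and applying Lemma \ref{chapter3:lemma:2} produces the middle term of (\ref{chapter3:lemma:7.2}). The residual term still refers to $\Psi_f$ at the perturbed matrix $\b{Z}(t_{ij};i,j)$, so I would invoke the hypothesis $|t_{ij}-[\b{W}_1]_{ij}|\leq \tau\nu_1$ through a drift estimate: the Lipschitz property $\|\b{B}(\varphi(\b{A}_1\b{x}')-\varphi(\b{A}_0\b{x}'))\|\leq \|\b{B}\|\|\b{x}'\||t_{ij}-[\b{W}_1]_{ij}|$, combined with the reverse triangle inequality in $L^2$, yields
\[\sqrt{\Psi_f(\b{Z}(t_{ij};i,j),\b{W}_2)} \;\leq\; \sqrt{\Psi_f(\b{Z},\b{W}_2)} + \tau\nu_1\theta\sigma_2.\]
Plugging this drift estimate into the residual bound, using $\|\b{W}_{2,:,i}\|\leq \sigma_2$, the moment bound $\sqrt{\E(\b{x}'_j)^4}\leq \theta\sqrt{\E(\b{x}'_j)^2}$, a Cauchy--Schwarz step on the $(i,j)$-sum, and Jensen's inequality to exchange $\sqrt{\cdot}$ with $\E_{\T}$, produces the first and third contributions of (\ref{chapter3:lemma:7.2}). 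A final invocation of Lemma \ref{chapter3:lemma:2} converts the resulting average bound into the pointwise statement claimed in the lemma.

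The main technical obstacle is the residual term, because its natural bound refers to $\Psi_f$ at the perturbed argument $\b{Z}(t_{ij};i,j)$ rather than at the baseline $(\b{Z},\b{W}_2)$ appearing on the right-hand side of (\ref{chapter3:lemma:7.2}). The Lipschitz drift estimate above—whose availability is precisely why the hypothesis $|t_{ij}-[\b{W}_1]_{ij}|\leq \tau\nu_1$ is imposed—is what isolates the additive correction $\theta\tau\nu_1\nu_2\sqrt{n_2n_3}$ as a separate summand. The delicate bookkeeping in the $(i,j)$-sum, namely pairing $\|\b{W}_{2,:,i}\|$ with the fourth-moment factor $\sqrt{\E(\b{x}'_j)^4}$ and balancing the operator-norm bound $\sigma_2$ against the column-norm bound $\sqrt{n_3}\nu_2$ to recover the prefactor $\theta^2\sigma_2\sqrt{n_2}$, is the most intricate part; the chain-rule computation and the final invocation of Lemma \ref{chapter3:lemma:2} are essentially routine.
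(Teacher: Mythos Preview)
Your treatment of Part~1 is correct and matches the paper exactly.

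For Part~2 there is a genuine gap: the two Cauchy--Schwarz steps you propose, $|\psi^\top \b{W}_{2,:,i}|\leq \|\psi\|\,\|\b{W}_{2,:,i}\|$ followed by $\E[\|\psi\|(\b{x}'_j)^2]\leq \sqrt{\Psi_f}\,\sqrt{\E(\b{x}'_j)^4}$, are each too crude to recover the prefactor $\theta^2\sigma_2\sqrt{n_2}$ in (\ref{chapter3:lemma:7.2}). After your drift estimate the residual contribution reads $2\|\b{W}_{2,:,i}\|\sqrt{\Psi_f(\b{Z},\b{W}_2)}\sqrt{\E(\b{x}'_j)^4}$; summing this over $(i,j)$ factors as $\bigl(\sum_i\|\b{W}_{2,:,i}\|\bigr)\bigl(\sum_j\sqrt{\E(\b{x}'_j)^4}\bigr)$. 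With your bound $\|\b{W}_{2,:,i}\|\leq\sigma_2$ the first factor is $n_2\sigma_2$, a full $\sqrt{n_2}$ larger than needed; via Cauchy--Schwarz it is $\sqrt{n_2}\,\|\b{W}_2\|_F$, which only equals $\sqrt{n_2}\,\sigma_2$ for rank-one $\b{W}_2$. The second factor, even after $\sqrt{\E(\b{x}'_j)^4}\leq\theta\sqrt{\E(\b{x}'_j)^2}$, carries an unavoidable $\sqrt{n_1}$ from the $j$-sum. The aggregate thus overshoots the target by a factor of order $\sqrt{n_1}\,\|\b{W}_2\|_F/\sigma_2$ (or $\sqrt{n_1 n_2}$), and the final invocation of Lemma~\ref{chapter3:lemma:2} cannot deliver (\ref{chapter3:lemma:7.2}).

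The paper avoids both losses by never separating the inner product and never decoupling $\psi$ from $(\b{x}'_j)^2$ inside the expectation. It writes $\psi^\top \b{W}_{2,:,i} = (\b{W}_2^\top\psi)_i$ and splits the cross term, as you do, into a drift piece $H_3$ (the $\b{Z}(t_{ij})$-to-$\b{Z}$ correction) and a baseline piece $H_4$ with $\psi$ evaluated at $\b{Z}$, hence independent of $(i,j)$. For $H_4$ it keeps $(\b{x}'_j)^2$ inside the expectation and sums over $j$ first using the pointwise identity $\sum_j(\b{x}'_j)^2=\|\b{x}'\|^2\leq\theta^2$, then sums over $i$ via $\sum_i|(\b{W}_2^\top\psi)_i|\leq \sqrt{n_2}\,\|\b{W}_2^\top\psi\|\leq \sqrt{n_2}\,\sigma_2\|\psi\|$, and only then applies Jensen $\E\|\psi\|\leq\sqrt{\E_\T\Psi_f(\b{Z},\b{W}_2)}$. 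This ordering is what produces $\theta^2\sigma_2\sqrt{n_2}\sqrt{\E_\T\Psi_f(\b{Z},\b{W}_2)}$ with no spurious dimensional factors.
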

\begin{proof}
We first show (\ref{chapter3:lemma:7.1}). Note that for $\ell \in [n_2]$
\begin{align}
     \frac{\partial f_{\ell}(\b{x}', \b{Z}(t;i,j))}{\partial t} &= \b{x}'_j 1_{\ell=i}, \quad \text{and} \quad \frac{\partial^2 f_{\ell}(\b{x}'; \b{Z}(t;i,j))}{\partial t^2} = 0.\nonumber
\end{align}
Using the standard Leibniz integral rule twice and the above, it follows that 
\begin{align}
    \frac{\partial^2 \Psi_f(\b{Z}(t;i,j))}{\partial t^2} &= 2 \E\l[  \l\| \frac{\partial f(\b{x}'; \b{Z}(t;i,j))}{\partial t}\r\|^2 \r]=2\E\l[   \l(\b{x}'_j\r)^2 \r].\nonumber
\end{align}
Setting $t=t_{ij}$, taking the expectation over $\{t_{ij}\}\cup \T$ and summing over all $t_{ij} \in \Sm_1$ yields
\begin{align}
    \sum_{(i, j) \in \Sm_1} \E_{\{t_{ij}\} \cup \T}\l[\l| \frac{\partial^2 \Psi_f(\b{Z}(t_{ij};i,j))}{\partial t_{ij}^2} \r|\r] & \leq 2\sum_{(i, j) \in \Sm_1} \E\l[(\b{x}'_j)^2\r]\nonumber\\
    &\leq 2\sum_{(i, j) \in [n_2] \times [n_1]} \E\l[(\b{x}'_j)^2\r]\nonumber\\
    &= 2n_2\E\l[   \|\b{x}'\|^2 \r] \nonumber\\
    &\leq 2n_2 \theta^2.\nonumber
\end{align}
The result of (\ref{chapter3:lemma:7.1}) follows  readily by applying Lemma \ref{chapter3:lemma:2} to the sequence $a_{ij}$ defined as $a_{ij} \triangleq \E_{\{t_{ij}\} \cup \T}\l[\l|\frac{\partial^2 \Psi_f(\b{Z}({t}_{ij};i,j))}{\partial t_{ij}^2 }\r|\r]$. We now show (\ref{chapter3:lemma:7.2}). Similarly to the previous case, we start by deriving the  expressions of the derivatives of $t\mapsto f(\b{x}', \b{Z}(t;i, j), \b{W}_2)$. We have for $\ell \in [n_3]$
\begin{align}
     \frac{\partial f_{\ell}(\b{x}'; \b{Z}(t;i,j), \b{W}_2)}{\partial t} &= [\b{W}_2]_{\ell i} \varphi^{(1)}(\b{Z}(t;i,j)\b{x}')_i \b{x}'_j,\nonumber\\
     \frac{\partial^2 f_{\ell}(\b{x}'; \b{Z}(t;i,j), \b{W}_2)}{\partial t^2} &= [\b{W}_2]_{\ell i} \varphi^{(2)}(\b{Z}(t;i,j)\b{x}')_i \l(\b{x}'_j\r)^2.\nonumber
\end{align}
Using the standard Leibniz integral rule twice and the above, it follows that 
\begin{align}
    \frac{\partial^2 \Psi_f(\b{Z}(t;i,j), \b{W}_2)}{\partial t^2} & = 2\E\l[\l\|\frac{\partial f(\b{x}'; \b{Z}(t;i,j), \b{W}_2)}{\partial t}\r\|^2\r] \nonumber \\
    &+ 2\E\l[ \l(f(\b{x}'; \b{Z}(t; i, j), \b{W}_2) - f(\b{x}; \b{W}_1, \b{W}_2)\r)^\top\frac{\partial^2 f(\b{x}'; \b{Z}(t;i,j), \b{W}_2)}{\partial t^2}\r] \nonumber \\
    &= 2\sum_{\ell\in[n_3]}\E\l[ \l( [\b{W}_2]_{\ell i}\varphi^{(1)}(\b{Z}(t; i, j) \b{x}')_i \b{x}'_j  \r)^2\r]\nonumber\\
     &+2\sum_{\ell\in[n_3]}\E\l[\bigg([\b{W}_2]_{\ell,:} \varphi(\b{Z}(t;i,j) \b{x}') - [\b{W}_2]_{\ell, :}\varphi(\b{W}_1\b{x})\bigg)  \r. \nonumber\\
     &\l. \times[\b{W}_2]_{\ell i}\varphi^{(2)}(\b{Z}(t; i, j) \b{x}')_i \l( \b{x}'_j \r)^2  \r]\nonumber\\
     &= H_1(t) + H_2(t).  \label{chapter3:lemma:7.3}
\end{align}
Setting $t=t_{ij}$ and using (\ref{chapter3:assumption:1.2}), we obtain
\begin{align}
    \l|H_1(t_{ij})\r| &\leq 2\nu_2^2 n_3  \E\l[\l(\b{x}'_j\r)^2\r].\nonumber
\end{align}
Taking the expectation of the above over $\{t_{ij}\}\cup \T$ and $\b{Z}$, and summing over all $(i,j) \in \Sm_1$ yields
\begin{align}
    \sum_{(i,j) \in \Sm_1} \E\l[|H_1(t_{ij})|\r]  &\leq 2 \nu_2^2 n_2 n_3 \E[\|\b{x}'\|^2]\nonumber\\
    &\leq 2 \theta^2 \nu_2^2 n_2 n_3. \label{chapter3:lemma:7.H1}
\end{align}
We next bound $|H_2(t_{ij})|$. We have by the triangle inequality 
\begin{align}
    |H_2(t_{ij})| &\leq 2\E\l[\underbrace{\bigg| \sum_{\ell\in[n_3]} \bigg( [\b{W}_2]_{\ell,:} \varphi(\b{Z}(t_{ij};i,j) \b{x}') - [\b{W}_2]_{\ell, :}\varphi(\b{Z}\b{x}') \bigg)  [\b{W}_2]_{\ell i} \bigg|}_{H_3(t_{ij})}  \l( \b{x}'_j \r)^2  \r]\nonumber\\
    &+2\E\l[\underbrace{\bigg| \sum_{\ell\in[n_3]} \bigg( [\b{W}_2]_{\ell,:} \varphi(\b{Z} \b{x}') - [\b{W}_2]_{\ell, :}\varphi(\b{W}_1\b{x}) \bigg) [\b{W}_2]_{\ell i} \bigg| \l( \b{x}'_j \r)^2  }_{H_4(i, j)}  \r].\nonumber
\end{align}
Using the Cauchy-Schwarz inequality, we have 
\begin{align}
    H_3(t_{ij}) &\leq \|\b{W}_2 \varphi(\b{Z}(t_{ij}; i, j)\b{x}') - \b{W}_2 \varphi(\b{Z}\b{x}')\| \l(\sum_{\ell \in [n_3]} ([\b{W}_2]_{\ell i})^2 \r)^{\frac{1}{2}}\nonumber\\
    &\leq \sigma_2  |t_{ij} - [\b{W}_1]_{ij}| |\b{x}_j'| \nu_2\sqrt{n_3} \nonumber\\
    &\leq \theta \tau \sigma_2   \nu_1 \nu_2\sqrt{n_3}. \nonumber
\end{align}
Therefore
\begin{align}
    \sum_{(i, j) \in \Sm_1} \E\l[H_3(t_{ij}) \l( \b{x}'_j \r)^2  \r] &\leq  \theta \tau \sigma_2   \nu_1 \nu_2 n_2 \sqrt{n_3}\E[\|\b{x}'\|^2]\nonumber\\
    &\leq  \theta^3 \tau \sigma_2   \nu_1 \nu_2 n_2 \sqrt{n_3}.  \label{chapter3:lemma:7.H3}
\end{align}
We next bound $\sum_{(i,j) \in \Sm_1} \E[H_4(i, j)]$. We have 
\begin{align}
     &\sum_{(i,j) \in \Sm_1} \E\l[H_4(i, j)\r]  \nonumber \\
     &= \sum_{(i,j)\in \Sm_1}\E\l[\bigg| \sum_{\ell\in[n_3]} \bigg( [\b{W}_2]_{\ell,:} \varphi(\b{Z} \b{x}') - [\b{W}_2]_{\ell, :}\varphi(\b{W}_1\b{x})  \bigg) [\b{W}_2]_{\ell i} \bigg| \l( \b{x}'_j \r)^2  \r]\nonumber\\
     &\leq \sum_{i\in[n_2]} \E\l[  \bigg|  \sum_{\ell\in[n_3]}  \bigg([\b{W}_2]_{\ell,:} \varphi(\b{Z}\b{x}') - [\b{W}_2]_{\ell, :}\varphi(\b{W}_1\b{x})  \bigg)  [\b{W}_2]_{\ell i}\bigg| \|\b{x}'\|^2 \r]\nonumber\\
     &\leq  \theta^2\sqrt{n_2} \E\l[ \|\b{W}_2^{\top} \l(\b{W}_2 \varphi(\b{Z}\b{x}') - \b{W}_2 \varphi(\b{W}_1 \b{x}) \r)\| \r]  \label{chapter3:lemma:7.5}\\
     &\leq \theta^2 \sigma_2 \sqrt{n_2} \E\l[ \|\b{W}_2 \varphi(\b{Z}\b{x}') - \b{W}_2 \varphi(\b{W}_1 \b{x}) \| \r] \nonumber\\
     &\leq  \theta^2 \sigma_2 \sqrt{n_2}   \sqrt{\E_{\T} \l[ \Psi_f(\b{Z}, \b{W}_2) \r]},  \label{chapter3:lemma:7.H4}
\end{align}
where we used the Cauchy-Schwarz inequality in (\ref{chapter3:lemma:7.5}), and Jensen's inequality in (\ref{chapter3:lemma:7.H4}). Combining (\ref{chapter3:lemma:7.H4}) with (\ref{chapter3:lemma:7.H3}) and (\ref{chapter3:lemma:7.H1}) we obtain
\begin{align}
    &\sum_{(i, j) \in \Sm_1} \E\l[\l|  \frac{\partial^2 \Psi_f(\b{Z}(t_{ij};i,j), \b{W}_2)}{\partial t_{ij}^2} \r|\r] \nonumber\\
    &\leq 2 \theta^2 \nu_2^2 n_2 n_3  +  2\theta^3 \tau \sigma_2   \nu_1 \nu_2 n_2 \sqrt{n_3} + 2\theta^2 \sigma_2 \sqrt{n_2}   \sqrt{\E_{\T} \l[ \Psi_f(\b{Z}, \b{W}_2) \r]}\nonumber\\
    &\leq 2 \theta^2 \sigma_2 \sqrt{n_2}  \l( \sqrt{\E_{\T} \l[ \Psi_f(\b{Z}, \b{W}_2) \r]} + \frac{\nu_2^2 n_3\sqrt{n_2}}{\sigma_2}  + \theta \tau  \nu_1 \nu_2 \sqrt{n_2 n_3}\r). \nonumber
\end{align}
The result of (\ref{chapter3:lemma:7.2}) follows readily by applying Lemma \ref{chapter3:lemma:2} to the sequence $a_{ij}$ given by $a_{ij}\triangleq \E_{\{t_{ij}\}\cup \T}\l[\l|\frac{\partial^2 \Psi_f(\b{Z}({t}_{ij};i,j), \b{W}_2)}{\partial t_{ij}^2} \r|\r]$. This concludes the proof.
\end{proof}

\begin{lemma}\label{chapter3:lemma:8}
Let $\rho, \theta \in \R_{>0}$, $\b{W}_1\in \R^{n_2\times n_1}$, $\b{W}_2 \in \R^{n_3 \times n_2}$ be random matrices, and $\mathcal{Q}_{\rho, \theta}$ be a joint distribution over $\mathbb{B}_{2}^{n_1}(\rho)\times \mathbb{B}^{n_1}_{2}(\theta)$, and $(\b{x}, \b{x}')\sim \mathcal{Q}_{\rho, \theta}$. Suppose $f: \R^{n_1} \to \R^n$ for some $n\in \Z_{\geq 1}$ is a function with parameters $\b{A}\in \R^{n_2\times n_1}, \b{B}\in \R^{n_3\times n_2}$. That is, $f(\b{x})=f(\b{x}; \b{A}, \b{B})$. Introduce
\begin{align}
    \Psi_f: \quad \R^{n_2\times n_1} \times \R^{n_3 \times n_2} \to \R, \quad (\b{A}, \b{B}) \mapsto \E\l[\|f(\b{x}; \b{W}_1, \b{W}_2) - f(\b{x}'; \b{A}, \b{B})\|^2\r],\nonumber
\end{align}
where the expectation is taken over $(\b{x}, \b{x}')\sim \mathcal{Q}_{\rho, \theta}$ and $\b{W}_1, \b{W}_2$. Let $k | \gcd(n_1, n_2, n_3)$ and  $\mathcal{E}_{\ell}=\{k(\ell-1) + q \mid q\in [k]\}$ for $\ell \in \mathbb{Z}_{\geq 1}$. Let $\b{B}^{1}_{i,j} \in \R^{k\times k}$ for $(i,j) \in \l[n_2/k]\times[n_1/k\r]$ be the block matrix of $\b{W}_1$ given by $[\b{B}^{1}_{i,j}]_{u, v} = [\b{W}_1]_{k(i-1) + u, k(j-1) + v}$. Define $\b{B}^{2}_{i,j}$ similarly for $(i, j) \in [n_3/k] \times [n_2/k]$ with $\b{W}_2$. Finally, let $\sigma_i = \| \b{W}_i\|$ for $i=1,2$, and
\begin{align*}
    \nu_1 = \max_{(i,j) \in [n_2/k]\times [n_1/k]} \|\b{B}^1_{i,j}\|, \quad \nu_2 = \max_{(i,j) \in [n_3/k]\times [n_2/k]} \|\b{B}^2_{i,j}\|.
\end{align*}
\begin{enumerate}
    \item \textbf{Single Layer}. Suppose $f(\b{x}; \b{A}, \b{B}) = \b{A}\b{x}$. Write $f(\b{x}; \b{A})$ and $ \Psi_f(\b{A})$ for simplicity as $\b{B}$ has no effect on $f$ or $\Psi_f$. Let $\Sm_1 \cup \Sm_2 = [n_1/k]$ be a partition of $[n_1/k]$. Suppose $\b{U}_1\in\R^{n_2 \times n_1}$ satisfies
    \begin{align*}
        [\b{U}_1]_{:, \mathcal{E}_\ell} = [\b{W}_1]_{:, \mathcal{E}_\ell}, \quad \forall \ell \in \Sm_1.
    \end{align*}

    Then, for $\eta>1$ and a  $t\in \R$, there exists at least $(1-1/\eta)|\Sm_1|$ indices $\ell \in \Sm_1$ such that
    \begin{align}
        \E \l[ \l| \frac{\partial^2 \Psi_f (\b{U}_1 \b{G}(t; \mathcal{E}_\ell)) }{\partial t^2} \r|_{t=1} \r] &\leq \frac{2\eta \theta^2}{|\Sm_1|} \frac{\nu_1^2 n_2}{k}, \label{chapter3:lemma:8.1}
    \end{align}
    where the expectation is taken over the randomness in $\b{U}_1$, $\b{W}_1$ and $\b{W}_2$.
    \item \textbf{Block of Two Layers}. Suppose $f(\b{x}; \b{A}, \b{B}) = \b{B}\varphi(\b{A}\b{x})$, where $\varphi$ is a function satisfying (\ref{chapter3:assumption:1.1}), (\ref{chapter3:assumption:1.2}), and (\ref{chapter3:assumption:1.3}) in Assumption \ref{chapter3:assumption:1}. Let $\Sm_1 \cup \Sm_2 = [n_2/k]$ be a partition of $[n_2/k]$ and  $(t_\ell)_{\ell\in [n_2/k]}$ be a collection of random variables. Let $\tau\in \R_{>0}$ be an upper bound on the terms $|1-t_\ell|$. Suppose $\b{U}_1\in\R^{n_2 \times n_1}$ satisfies
    \begin{align*}
        [\b{U}_1]_{\mathcal{E}_\ell, :} &= [\b{W}_1]_{ \mathcal{E}_\ell, :}, \quad \forall \ell \in \Sm_1.
    \end{align*}
    Then, for $\eta>1$, there exists at least $(1-1/\eta)|\Sm_1|$ indices  $\ell \in \Sm_1$ such that
    \begin{align}
        \E \l[ \l| \frac{\partial^2 \Psi_f (\b{G}(t_\ell; \mathcal{E}_\ell)\b{U}_1, \b{W}_2) }{\partial t_\ell^2} \r| \r] &\leq \frac{2\eta \theta^2 }{|\Sm_1|}\frac{\sigma_1^2 \nu_2 \sqrt{n_3} }{\sqrt{k}} \nonumber\\
        &\times \l( \frac{\nu_2 \sqrt{n_3} + \tau \theta  \sigma_2  \nu_1 \sqrt{n_1}}{\sqrt{k}} + \sqrt{\E\l[\Psi_f(\b{U}_1, \b{W}_2)\r]} \r), \label{chapter3:lemma:8.2}
    \end{align}
    where the expectation is taken over the randomness in $t_{\ell}, \b{U}_1, \b{W}_1$, and $\b{W}_2$.
\end{enumerate}
\end{lemma}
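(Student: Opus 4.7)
My plan is to mirror the proof of Lemma~\ref{chapter3:lemma:7}, replacing the single-entry perturbation $\b{Z}(t;i,j)$ with the block gate-matrix perturbation $\b{G}(t;\mathcal{E}_\ell)$ acting on either the columns (part 1) or the rows (part 2) of $\b{U}_1$, and then to exploit the block structure of $\b{W}_1$ and $\b{W}_2$ through the quantities $\nu_1,\nu_2,\sigma_1,\sigma_2$. In both parts, Leibniz's rule yields
$$\frac{\partial^2\Psi_f}{\partial t^2}=2\,\E\l[\l\|\frac{\partial f}{\partial t}\r\|^2\r]+2\,\E\l[(f(\b{x}';\cdot)-f(\b{x};\b{W}_1,\b{W}_2))^{\top}\frac{\partial^2 f}{\partial t^2}\r],$$
and Lemma~\ref{chapter3:lemma:2} is invoked at the very end to convert an average bound over $\Sm_1$ into an individual bound valid for a $(1-1/\eta)$-fraction of the indices.

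For part 1 the map $t\mapsto\b{U}_1\b{G}(t;\mathcal{E}_\ell)\b{x}'$ is affine in $t$, so $\partial^2 f/\partial t^2=0$ and only $2\,\E[\|\b{U}_1\b{P}_{\mathcal{E}_\ell}\b{x}'\|^2]$ survives. Since $\b{U}_1$ agrees with $\b{W}_1$ on the columns indexed by $\mathcal{E}_\ell$ for $\ell\in\Sm_1$, I decompose $[\b{U}_1]_{:,\mathcal{E}_\ell}$ into the vertical concatenation of the $k\times k$ blocks $\b{B}^1_{i,\ell}$, $i\in[n_2/k]$, and use $\|\b{B}^1_{i,\ell}\|\le\nu_1$ to obtain $\|\b{U}_1\b{P}_{\mathcal{E}_\ell}\b{x}'\|^2\le(n_2/k)\nu_1^2\|[\b{x}']_{\mathcal{E}_\ell}\|^2$. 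Summing over $\ell\in\Sm_1$ collapses the $\|[\b{x}']_{\mathcal{E}_\ell}\|^2$ terms into $\|\b{x}'\|^2\le\theta^2$, giving an average bound that Lemma~\ref{chapter3:lemma:2} upgrades into (\ref{chapter3:lemma:8.1}).

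For part 2, write $\b{y}=\b{U}_1\b{x}'$; the perturbation replaces $\b{y}$ by $\b{y}+(t_\ell-1)\b{P}_{\mathcal{E}_\ell}\b{y}$, so $\partial f/\partial t_\ell$ and $\partial^2 f/\partial t_\ell^2$ are $\b{W}_2$ applied to vectors supported on $\mathcal{E}_\ell$ with entries bounded by $|\b{y}_i|$ and $\b{y}_i^2$ respectively (using Assumption~\ref{chapter3:assumption:1}). The key block bound is $\|[\b{W}_2]_{:,\mathcal{E}_\ell}v\|\le\nu_2\sqrt{n_3/k}\,\|v\|$ for any $v$ supported on $\mathcal{E}_\ell$, coming from the vertical stacking $[\b{W}_2]_{:,\mathcal{E}_\ell}=(\b{B}^2_{i,\ell})_{i\in[n_3/k]}$. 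Applied to the first Leibniz term it yields $\sum_{\ell\in\Sm_1}\E[\|\partial f/\partial t_\ell\|^2]\le(n_3/k)\nu_2^2\sum_{\ell\in\Sm_1}\E[\|\b{P}_{\mathcal{E}_\ell}\b{y}\|^2]$, and since for $\ell\in\Sm_1$ the row block of $\b{U}_1$ equals that of $\b{W}_1$, the $\|\b{P}_{\mathcal{E}_\ell}\b{y}\|^2$ telescope into $\|\b{W}_1\b{x}'\|^2\le\sigma_1^2\theta^2$, producing the first summand in (\ref{chapter3:lemma:8.2}). For the second Leibniz term I add and subtract $f(\b{x}';\b{U}_1,\b{W}_2)$ to split it into $H_3$ (controlled by Lipschitzness of $\varphi$, $|t_\ell-1|\le\tau$, and the block-row bound $\|\b{P}_{\mathcal{E}_\ell}\b{U}_1\b{x}'\|\le\nu_1\sqrt{n_1/k}\,\theta$ derived from $[\b{W}_1]_{\mathcal{E}_\ell,:}=\sum_j\b{B}^1_{\ell,j}\cdot$ and Cauchy-Schwarz) and $H_4$ (handled by Cauchy-Schwarz as $\|\b{r}\|\cdot\|\b{W}_2\b{u}^\ell\|$ followed by Jensen's inequality on $\E[\|\b{r}\|]\le\sqrt{\E[\Psi_f(\b{U}_1,\b{W}_2)]}$).

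The main obstacle is the bookkeeping in $H_4$: a naive Cauchy-Schwarz on $\sum_{\ell\in\Sm_1}\|\b{u}^\ell_{\mathcal{E}_\ell}\|$ introduces a spurious $\sqrt{|\Sm_1|}$ factor and spoils the scaling. The fix is the pointwise inequality $\|\b{u}^\ell_{\mathcal{E}_\ell}\|^2\le\sum_{i\in\mathcal{E}_\ell}\b{y}_i^4\le(\sum_{i\in\mathcal{E}_\ell}\b{y}_i^2)^2=\|\b{P}_{\mathcal{E}_\ell}\b{y}\|^4$, which turns $\sum_\ell\|\b{u}^\ell_{\mathcal{E}_\ell}\|\le\sum_\ell\|\b{P}_{\mathcal{E}_\ell}\b{y}\|^2$, and this last sum telescopes to exactly $\sigma_1^2\theta^2$ on $\Sm_1$ thanks to the $\b{U}_1$--$\b{W}_1$ row agreement. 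This fourth-power-to-squared-norm step is what produces the correct $\sigma_1^2$ factor (rather than an operator norm of the otherwise uncontrolled $\b{U}_1$) and avoids the extra $|\Sm_1|$, so that dividing by $|\Sm_1|$ at the averaging step and applying Lemma~\ref{chapter3:lemma:2} yields (\ref{chapter3:lemma:8.2}).
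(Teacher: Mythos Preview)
Your proposal is correct and matches the paper's proof essentially step for step: the Leibniz expansion, the block decomposition $\|[\b{W}_2]_{:,\mathcal{E}_\ell}v\|\le\nu_2\sqrt{n_3/k}\,\|v\|$, the add--subtract of $f(\b{x}';\b{U}_1,\b{W}_2)$, the Lipschitz/$\tau$ control of the perturbation term via $\|[\b{W}_1]_{\mathcal{E}_\ell,:}\b{x}'\|\le\nu_1\sqrt{n_1/k}\,\|\b{x}'\|$, and the final application of Lemma~\ref{chapter3:lemma:2} are all identical to the paper's argument (your $H_3,H_4$ are the paper's $H_\ell^2,H_\ell^1$ with labels swapped). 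In particular, the ``obstacle'' you flag and its resolution $\|\b{u}^\ell_{\mathcal{E}_\ell}\|\le\|\b{P}_{\mathcal{E}_\ell}\b{y}\|^2$ is precisely the paper's step $\|\b{v}\odot\b{v}\|\le\|\b{v}\|^2$, after which the telescoping $\sum_{\ell\in\Sm_1}\|[\b{W}_1\b{x}']_{\mathcal{E}_\ell}\|^2\le\|\b{W}_1\b{x}'\|^2\le\sigma_1^2\theta^2$ and Jensen's inequality close the bound exactly as you describe.
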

\begin{proof}
Using the standard Leibniz integral rule twice, we have
\begin{align}
    \frac{\partial^2 \Psi_f(\b{A}(t), \b{B}(t))}{\partial t^2} &= 2\E\l[ \l\|\frac{\partial f(\b{x}'; \b{A}(t), \b{B}(t))}{\partial t} \r\|^2 \r]\nonumber\\
    &+ 2\E\l[  \l( f(\b{x}'; \b{A}(t), \b{B}(t)) - f(\b{x}, \b{W}_1, \b{W}_2) \r)^\top \frac{\partial^2 f(\b{x}'; \b{A}(t), \b{B}(t))}{\partial t^2} \r]. \label{chapter3:lemma:8.3}
\end{align}
    \begin{enumerate}
        \item \textbf{Single Layer}. We have for $\ell\in [n_1/k]$
        \begin{align}
            \frac{\partial f(\b{x}'; \b{U}_1\b{G}(t; \mathcal{E}_\ell))}{\partial t} &= \b{U}_1 \frac{\partial \b{G}(t; \mathcal{E}_\ell)}{\partial t} \b{x}' \nonumber \\
            &= [\b{U}_1]_{:, \mathcal{E}_\ell} \b{x}_{\mathcal{E}_\ell}' \nonumber \\
            &=[\b{W}_1]_{:, \mathcal{E}_\ell} \b{x}_{\mathcal{E}_\ell}', \label{chapter3:lemma:8.4}
        \end{align}
        and
        \begin{align}
            \frac{\partial^2 f(\b{x}'; \b{U}_1\b{G}(t; \mathcal{E}_\ell))}{\partial t^2} &= \b{0}_{\R^{n_2}}. \label{chapter3:lemma:8.5}
        \end{align}
        Combining (\ref{chapter3:lemma:8.5}) and (\ref{chapter3:lemma:8.4}) in (\ref{chapter3:lemma:8.3}), we obtain
        \begin{align*}
             \l\| \frac{\partial^2 \Psi_f (\b{U}_1 \b{G}(t; \mathcal{E}_\ell)) }{\partial t^2} \r\|  &= 2\E\l[ \l\| \frac{\partial f(\b{x}'; \b{U}_1\b{G}(t; \mathcal{E}_\ell))}{\partial t} \r\|^2\r]\\
             &= 2\E \l[ \|[\b{W}_1]_{:, \mathcal{E}_\ell} \b{x}_{\mathcal{E}_\ell}' \|^2 \r]\\
             &= 2\E\l[ \sum_{i=1}^{n_2/k} \| \b{B}^{1}_{i, \ell} \b{x}_{\mathcal{E}_\ell}'\|^2 \r]\\
             &\leq \frac{2\nu_1^2 n_2}{k} \E\l[\|\b{x}_{\mathcal{E}_\ell}'\|^2\r].
        \end{align*}
        Taking the expectation of the above over $\b{U}_1, \b{W}_1$ and summing over $\ell\in \Sm_1$, we obtain
        \begin{align*}
            \sum_{\ell \in \Sm_1} \E \l[\l| \frac{\partial^2 \Psi_f (\b{U}_1 \b{G}(t; \mathcal{E}_\ell)) }{\partial t^2} \r|_{t=1} \r] &\leq \frac{2\nu_1^2 n_2}{k} \E\l[\|\b{x}'\|^2\r]\\
            &\leq \frac{ 2\theta^2 \nu_1^2 n_2}{k}.
        \end{align*}
        Using the above inequality, the result of (\ref{chapter3:lemma:8.1}) follows by applying Lemma \ref{chapter3:lemma:2} to the sequence $a_{\ell} \triangleq\E \l[\l| \frac{\partial^2 \Psi_f (\b{U}_1 \b{G}(t; \mathcal{E}_\ell)) }{\partial t^2} \r|_{t=1}\r]$. 
        \item \textbf{Block of Two Layers}. We have for $\ell\in [n_2/k]$
        \begin{align}
            \frac{\partial f(\b{x}'; \b{G}(t_\ell; \mathcal{E}_\ell)\b{U}_1, \b{W}_2 )}{\partial t_\ell} &= \b{W}_2  \b{J}_{\varphi}(\b{G}(t_\ell;\mathcal{E}_\ell)\b{U}_1\b{x'}) \frac{\partial \b{G}(t_\ell; \mathcal{E}_\ell)}{\partial t_\ell} \b{U}_1 \b{x}' \nonumber \\
            &= \b{W}_2 \l(\varphi^{(1)}(t_\ell \b{U}_1 \b{x}') \odot  \b{U}_1 \b{x}' \odot \b{e}_{\mathcal{E}_\ell} \r), \label{chapter3:lemma:8.6}
        \end{align}
        where $\b{e}_{\mathcal{E}_\ell}\in \R^{n_2}$ is given by $(\b{e}_{\mathcal{E}_\ell})_j = 1_{j \in \mathcal{E}_\ell}$. Similarly, we have
        \begin{align}
            \frac{\partial^2 f(\b{x}'; \b{G}(t_\ell; \mathcal{E}_\ell)\b{U}_1, \b{W}_2)}{\partial t_\ell^2} &= \b{W}_2 \l( \varphi^{(2)}(t_\ell \b{U}_1 \b{x}') \odot  (\b{U}_1 \b{x}')^2 \odot \b{e}_{\mathcal{E}_\ell} \r), \label{chapter3:lemma:8.7}
        \end{align}
        where we used the notation $(\b{U}_1 \b{x}')^2 = (\b{U}_1 \b{x}') \odot (\b{U}_1 \b{x}')$.  We then have
        \begin{align*}
            \l\|\frac{\partial f(\b{x}'; \b{G}(t_\ell; \mathcal{E}_\ell)\b{U}_1, \b{W}_2)}{\partial t_\ell} \r\|^2 &\leq \l\|\b{W}_2 \l( \varphi^{(1)}(t_\ell \b{U}_1 \b{x}') \odot  \b{U}_1 \b{x}' \odot \b{e}_{\mathcal{E}_\ell} \r) \r\|^2 \\
            &= \sum_{i \in [n_3/k]} \l\| [\b{W}_2]_{\mathcal{E}_i, \mathcal{E}_\ell} \l[\varphi^{(1)}(t_\ell \b{U}_1 \b{x}') \odot  \b{U}_1 \b{x}'\r]_{\mathcal{E}_\ell} \r\|^2\\
            &= \sum_{i \in [n_3/k]} \l\| \b{B}^2_{i, \ell} \l[\varphi^{(1)}(t_\ell \b{U}_1 \b{x}') \odot  \b{U}_1 \b{x}'\r]_{\mathcal{E}_\ell} \r\|^2\\
            &\leq  \frac{\nu_2^2 n_3}{k} \l\| [\b{U}_1 \b{x}' ]_{\mathcal{E}_\ell} \r\|^2\\
            &=  \frac{\nu_2^2 n_3}{k} \l\| [\b{W}_1 \b{x}' ]_{\mathcal{E}_\ell} \r\|^2,
        \end{align*}
        where we used $[\b{U}_1 \b{x}' ]_{\mathcal{E}_\ell} = \sum_{j} [\b{U}_1]_{\mathcal{E}_\ell, \mathcal{E}_j } \b{x}_{\mathcal{E}_j}' = \sum_{j} [\b{W}_1]_{\mathcal{E}_\ell, \mathcal{E}_j } \b{x}_{\mathcal{E}_j}'  = [\b{W}_1 \b{x}']_{\mathcal{E}_\ell}$ in the last line.  Taking the expectation over $t_\ell, \b{U}_1, \b{W}_1, \b{W}_2$ and summing over $\ell\in \Sm_1$, we obtain
        \begin{align}
            \sum_{\ell \in \Sm_1} \E \l[ \l\|\frac{\partial f(\b{x}'; \b{G}(t_\ell; \mathcal{E}_\ell)\b{U}_1, \b{W}_2 )}{\partial t_\ell} \r\|^2\r] &\leq  \frac{\nu_2^2 n_3}{k} \E\l[ \|\b{W}_1 \b{x}'\|^2 \r] \nonumber \\
            &\leq \frac{(\sigma_1 \nu_2 )^2 n_3}{k} \E\l[ \|\b{x}'\|^2 \r] \nonumber \\
            &\leq \frac{\theta^2 (\sigma_1 \nu_2 )^2 n_3}{k}. \label{chapter3:lemma:8.8}
        \end{align}
        On the other hand, we have
        \begin{align*}
            &\E\l[\l| \l( f(\b{x}; \b{W}_1, \b{W}_2) - f(\b{x}'; \b{G}(t_\ell; \mathcal{E}_\ell)\b{U}_1, \b{W}_2) \r)^\top \frac{\partial^2 f(\b{x}'; \b{G}(t_\ell; \mathcal{E}_\ell)\b{U}_1, \b{W}_2 )}{\partial t_\ell^2}\r| \r] \\
            &\leq \E\l[ \l| \l( f(\b{x}; \b{W}_1, \b{W}_2) - f(\b{x}'; \b{U}_1, \b{W}_2) \r)^\top \frac{\partial^2 f(\b{x}'; \b{G}(t_\ell; \mathcal{E}_\ell)\b{U}_1, \b{W}_2 )}{\partial t_\ell^2}\r| \r]\\
            &+ \E\l[\l| \l( f(\b{x}'; \b{U}_1, \b{W}_2) - f(\b{x}'; \b{G}(t_\ell; \mathcal{E}_\ell)\b{U}_1, \b{W}_2) \r)^\top \frac{\partial^2 f(\b{x}'; \b{G}(t_\ell; \mathcal{E}_\ell)\b{U}_1, \b{W}_2 )}{\partial t_\ell^2}\r|\r]\\
            &\leq \E\l[ \l\|  f(\b{x}; \b{W}_1, \b{W}_2) - f(\b{x}'; \b{U}_1, \b{W}_2) \r\| \l\| \frac{\partial^2 f(\b{x}'; \b{G}(t_\ell; \mathcal{E}_\ell)\b{U}_1, \b{W}_2 )}{\partial t_\ell^2}  \r\| \r]\\
            &+ \E\l[\l\| f(\b{x}'; \b{U}_1, \b{W}_2) - f(\b{x}'; \b{G}(t_\ell; \mathcal{E}_\ell)\b{U}_1, \b{W}_2) \r\| \l\| \frac{\partial^2 f(\b{x}'; \b{G}(t_\ell; \mathcal{E}_\ell)\b{U}_1, \b{W}_2 )}{\partial t_\ell^2}\r \|\r]\\
            &= H^1_\ell + H^2_\ell.
        \end{align*}
        We next bound $\sum_{\ell \in \Sm_1} H_\ell^1$. We have
        \begin{align*}
            \l\|\frac{\partial^2 f(\b{x}'; \b{G}(t_\ell; \mathcal{E}_\ell)\b{U}_1, \b{W}_2 )}{\partial t_\ell^2} \r\| &= \l\| \b{W}_2 \l(\varphi^{(2)}(t_\ell \b{U}_1 \b{x}') \odot  (\b{U}_1 \b{x}')^2 \odot \b{e}_{\mathcal{E}_\ell} \r) \r\| \\
            &= \sqrt{\sum_{i \in [n_3/k]} \l\| \b{B}^2_{i, \ell} \l[\varphi^{(2)}(t_\ell\b{U}_1 \b{x}') \odot  (\b{U}_1 \b{x}')^2 \r]_{\mathcal{E}_\ell} \r\|^2 }\\
            &\leq  \sqrt{\frac{\nu_2^2 n_3}{k} \l\| \l([\b{U}_1 \b{x}' ]_{\mathcal{E}_\ell}\r)^2 \r\|^2}\\
            &= \frac{\nu_2 \sqrt{n_3}}{\sqrt{k}} \l\| \l([\b{W}_1 \b{x}' ]_{\mathcal{E}_\ell}\r)^2 \r\|\\
            &\leq \frac{\nu_2 \sqrt{n_3}}{\sqrt{k}} \l\|[\b{W}_1 \b{x}' ]_{\mathcal{E}_\ell} \r\|^2,
        \end{align*}
        where we used $\|\b{v} \odot \b{v}\| \leq \|\b{v}\|^2$ for all vectors $\b{v}$. Using the above, it follows that
        \begin{align}
            \sum_{\ell \in \Sm_1} H_\ell^1 &\leq \frac{\nu_2 \sqrt{n_3}}{\sqrt{k}} \sum_{\ell \in \Sm_1} \E\l[ \l\|  f(\b{x}; \b{W}_1, \b{W}_2) - f(\b{x}'; \b{U}_1, \b{W}_2) \r\|  \l\|[\b{W}_1 \b{x}' ]_{\mathcal{E}_\ell} \r\|^2\r] \nonumber \\
            &\leq \frac{\nu_2 \sqrt{n_3}}{\sqrt{k}} \E\l[ \l\|  f(\b{x}; \b{W}_1, \b{W}_2) - f(\b{x}'; \b{U}_1, \b{W}_2) \r\|  \l\|\b{W}_1 \b{x}'  \r\|^2\r] \nonumber \\
            &\leq \frac{\theta^2 \sigma_1^2 \nu_2 \sqrt{n_3}}{\sqrt{k}} \E\l[ \l\|  f(\b{x}; \b{W}_1, \b{W}_2) - f(\b{x}'; \b{U}_1, \b{W}_2) \r\| \r] \nonumber \\
            &\leq \frac{\theta^2 \sigma_1^2 \nu_2 \sqrt{n_3}}{\sqrt{k}} \sqrt{\E\l[\Psi_f(\b{U}_1, \b{W}_2)\r]}, \label{chapter3:lemma:8.9}
        \end{align}
        where we used the Cauchy-Schwarz inequality in the last line. We next bound $\sum_{\ell \in \Sm_1} H_\ell^2$. We have
        \begin{align*}
             \l\| f(\b{x}'; \b{U}_1, \b{W}_2) - f(\b{x}'; \b{G}(t_\ell; \mathcal{E}_{\ell})\b{U}_1, \b{W}_2 )\r\| &= \|\b{W}_2 \varphi(\b{U}_1 \b{x}') - \b{W}_2 \varphi(\b{G}(t_\ell; \mathcal{E}_\ell)\b{U}_1 \b{x}') \|\\
             &\leq \sigma_2 \|\varphi(\b{U}_1 \b{x}') -  \varphi(\b{G}(t_\ell; \mathcal{E}_\ell)\b{U}_1 \b{x}') \|\\
             &\leq \sigma_2 \|\b{U}_1 \b{x}' -  \b{G}(t_\ell; \mathcal{E}_\ell)\b{U}_1 \b{x}' \|\\
             &= \sigma_2 |1-t_\ell| \l\| \sum_{j \in [n_1/k]} \b{B}^1_{\ell,j} \b{x}'_{\mathcal{E}_j}\r\| \\
             &\leq \tau \sigma_2  \nu_1 \sum_{j \in [n_1/k]} \| \b{x}'_{\mathcal{E}_j}\| \\
             &\leq \tau \sigma_2  \nu_1   \sqrt{\frac{n_1}{k}} \| \b{x}'\|,
        \end{align*}
        where we used the Cauchy-Schwarz inequality in the last line. Using the above, it follows
        \begin{align}
            &\sum_{\ell \in \Sm_1} H^2_\ell \nonumber \\
            &\leq \sum_{\ell \in \Sm_1} \E\l[\l\| f(\b{x}'; \b{U}_1, \b{W}_2) - f(\b{x}'; \b{G}(t_\ell; \mathcal{E}_\ell)\b{U}_1, \b{W}_2) \r\| \l\| \frac{\partial^2 f(\b{x}'; \b{G}(t_\ell; \mathcal{E}_\ell)\b{U}_1, \b{W}_2 )}{\partial t_\ell^2}\r \|\r] \nonumber \\
            &\leq \tau \sigma_2 \nu_1   \sqrt{\frac{n_1}{k}} \frac{\nu_2 \sqrt{n_3}}{\sqrt{k}} \E\l[ \| \b{x}'\|  \sum_{\ell \in \Sm_1} \l\| [\b{W}_1 \b{x}' ]_{\mathcal{E}_\ell} \r\|^2 \r] \nonumber \\
            &\leq \tau \sigma_2 \nu_1 \nu_2   \frac{\sqrt{n_1 n_3}}{k}\E\l[ \|\b{x}'\| \| \b{W}_1 \b{x}'\|^2\r] \nonumber \\
            &\leq \tau \sigma_1^2 \sigma_2 \nu_1 \nu_2   \frac{\sqrt{n_1 n_3}}{k} \E\l[ \|\b{x}'\|^3\r] \nonumber \\
            &\leq \tau \theta^3 \sigma_1^2 \sigma_2  \nu_1 \nu_2  \frac{\sqrt{n_1 n_3}}{k}. \label{chapter3:lemma:8.10}
        \end{align}
        Combining (\ref{chapter3:lemma:8.8}), (\ref{chapter3:lemma:8.9}), and (\ref{chapter3:lemma:8.10}), we obtain 
        \begin{align*}
            &\sum_{\ell \in \Sm_1} \E \l[\l| \frac{\partial^2 \Psi_f ( \b{G}(t_\ell; \mathcal{E}_\ell)\b{U}_1, \b{W}_2) }{\partial t_\ell^2} \r| \r] \\
            &\leq \frac{2\theta^2 (\sigma_1 \nu_2 )^2 n_3}{k} +  \frac{2\theta^2 \sigma_1^2 \nu_2  \sqrt{n_3}}{\sqrt{k}}  \sqrt{\E\l[\Psi_f(\b{U}_1, \b{W}_2)\r]} \\
            &+ 2\tau \theta^3  \sigma_1^2\sigma_2 \nu_1 \nu_2   \frac{\sqrt{n_1 n_3}}{k} \\
            &= \frac{2\theta^2 \sigma_1^2 \nu_2 \sqrt{n_3} }{\sqrt{k}} \l( \frac{\nu_2 \sqrt{n_3} + \tau \theta \sigma_2 \nu_1  \sqrt{n_1}}{\sqrt{k}} + \sqrt{\E\l[\Psi_f(\b{U}_1, \b{W}_2) \r]} \r).
        \end{align*}
        Using the above bound, the result of (\ref{chapter3:lemma:8.2}) follows then by applying Lemma \ref{chapter3:lemma:2} to the sequence $a_{\ell} \triangleq \E \l[\l| \frac{\partial^2 \Psi_f ( \b{G}(t_\ell; \mathcal{E}_\ell)\b{U}_1, \b{W}_2) }{\partial t_\ell^2} \r| \r]$. 
    \end{enumerate}
\end{proof}

\begin{lemma}\label{chapter3:lemma:9}
    Let $\mathcal{R}$ be a distribution over $\mathbb{B}^{n_1}_{2}(1) $ and $\b{x} \sim \mathcal{R}$. Given a network $\Phi$ with layers $\b{W}_\ell \in \R^{n_{\ell+1} \times n_\ell }, \ell \in [m]$ and activations $\varphi_\ell, \ell \in [m]$, let $\mathcal{W}, \mathcal{B} $ be disjoint subsets of $[m]$ satisfying $(\mathcal{B}+1) \cap (\mathcal{W} \cup \mathcal{B})= \emptyset$, and $\hat{\b{W}}_\ell \in \R^{n_{\ell+1} \times n_\ell}, \ell \in [m]$ be a collection of matrices. Let $\hat{\Phi}$ be the network  given  by $\hat{\Phi}(\b{x}) = [\varphi_m(\hat{\b{W}}_m[\varphi_{m-1}(\dots \hat{\b{W}}_1\b{x})]_{\kappa_{m-1}})]_{\kappa_m} $ where $\kappa_\ell \triangleq \sigma^\ell$ for $\ell\geq 0$,  and $\sigma \geq \max_{\ell \in [m]} \|\b{W}_\ell \|$. Moreover, let $\b{z}^\ell, \hat{\b{z}}^\ell$ be given recursively by $\b{z}^0 = \hat{\b{z}}^0 = \b{x}$, and for $\ell \in [m]$
    \begin{alignat}{2}
        &\b{z}^{\ell} &&= \varphi_{\ell}(\b{W}_\ell \b{z}^{\ell-1}),\nonumber\\
        &\hat{\b{z}}^{\ell} &&= \varphi_{\ell}(\hat{\b{W}}_\ell [\hat{\b{z}}^{\ell-1}]_{\kappa_{\ell-1}}).\nonumber
    \end{alignat}
   Suppose that the activations $\varphi_\ell$ satisfy (\ref{chapter3:assumption:1.1}), (\ref{chapter3:assumption:1.2}), and (\ref{chapter3:assumption:1.3}). If the following holds
    \begin{alignat}{2}
        &\forall \ell \not \in \mathcal{W}\cup \mathcal{B}, \quad \hat{\b{W}}_\ell = \b{W}_\ell, && \label{chapter3:lemma:9.1} \\
        &\forall \ell \in \mathcal{W}, \quad \E\l[  \|\b{z}^\ell - [\hat{\b{z}}^\ell]_{\kappa_\ell}\|^{2} \r] &&\leq (1+\varepsilon_1)\sigma^2 \E\l[\|\b{z}^{\ell-1} - [\hat{\b{z}}^{\ell-1}]_{\kappa_{\ell-1}}\|^2\r] +  \varepsilon_2 \sigma^{2\ell}, \label{chapter3:lemma:9.2}\\
        &\forall \ell \in \mathcal{B}, \quad \E\l[  \|\b{z}^{\ell+1} - [\hat{\b{z}}^{\ell+1}]_{\kappa_{\ell+1}}\|^{2} \r] &&\leq (1+\varepsilon_3)\sigma^{4} \E\l[\|\b{z}^{\ell-1} - [\hat{\b{z}}^{\ell-1}]_{\kappa_{\ell-1}}\|^2\r] +  \varepsilon_4 \sigma^{2(\ell+1)}, \label{chapter3:lemma:9.3}
    \end{alignat}
    where $\varepsilon_j, j\in [4]$ are positive constants. Then, for all $\ell \in [0, m] \setminus \mathcal{B}$, we have
    \begin{align}
       \mathcal{A}_\ell: \quad \E\l[ \| \b{z}^\ell - [\hat{\b{z}}^\ell]_{\kappa_\ell} \|^2 \r] &\leq \sigma^{2\ell} (1+\xi)^{\ell} \xi, \label{chapter3:lemma:9.4}
    \end{align}
    for all $\xi \geq  2\varepsilon_1 \vee 2\varepsilon_3 \vee \sqrt{2\varepsilon_2} \vee \sqrt{2 \varepsilon_4}$.
\end{lemma}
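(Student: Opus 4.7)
The plan is to prove $\mathcal{A}_\ell$ by strong induction on $\ell \in [0,m]\setminus\mathcal{B}$. The base case $\ell=0$ is immediate: $\b{z}^0 = \hat{\b{z}}^0 = \b{x}$, so the left-hand side of (\ref{chapter3:lemma:9.4}) is $0$, and the right-hand side $\xi \geq 0$ handles it. For the inductive step, I would split into three cases determined by the structure of $\mathcal{W}$ and $\mathcal{B}$ and use the constraint $(\mathcal{W}\cup\mathcal{B})\cap(\mathcal{B}+1)=\emptyset$ to see they cover all possibilities.

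\textbf{Case 1:} $\ell \notin \mathcal{W}\cup\mathcal{B}$ and $\ell-1 \notin \mathcal{B}$. Here hypothesis (\ref{chapter3:lemma:9.1}) gives $\hat{\b{W}}_\ell = \b{W}_\ell$. First I would observe that $\|\b{z}^\ell\| \leq \prod_{i\leq \ell}\|\b{W}_i\| \cdot \|\b{x}\| \leq \sigma^\ell = \kappa_\ell$ using (\ref{chapter3:assumption:1.1})--(\ref{chapter3:assumption:1.2}) and $\|\b{x}\|\leq 1$, so Lemma \ref{chapter3:lemma:1} yields $\|\b{z}^\ell - [\hat{\b{z}}^\ell]_{\kappa_\ell}\| \leq \|\b{z}^\ell - \hat{\b{z}}^\ell\|$. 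The Lipschitz property of $\varphi_\ell$ then gives $\|\b{z}^\ell - \hat{\b{z}}^\ell\| \leq \sigma \|\b{z}^{\ell-1}-[\hat{\b{z}}^{\ell-1}]_{\kappa_{\ell-1}}\|$. Applying $\mathcal{A}_{\ell-1}$ yields $\E[\|\b{z}^\ell-[\hat{\b{z}}^\ell]_{\kappa_\ell}\|^2] \leq \sigma^{2\ell}(1+\xi)^{\ell-1}\xi \leq \sigma^{2\ell}(1+\xi)^{\ell}\xi$ as desired.

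\textbf{Case 2:} $\ell \in \mathcal{W}$. The disjointness condition forces $\ell-1 \notin \mathcal{B}$, so $\mathcal{A}_{\ell-1}$ is available. Using (\ref{chapter3:lemma:9.2}) and the inductive bound, the quantity to control becomes $(1+\varepsilon_1)(1+\xi)^{\ell-1}\xi + \varepsilon_2$, and I would verify this is at most $(1+\xi)^\ell \xi$ by rearranging to $\varepsilon_2 \leq (1+\xi)^{\ell-1}\xi(\xi - \varepsilon_1)$; since $\xi \geq 2\varepsilon_1$ yields $\xi-\varepsilon_1 \geq \xi/2$, this reduces to $\varepsilon_2 \leq \xi^2/2$, which is guaranteed by $\xi\geq\sqrt{2\varepsilon_2}$.

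\textbf{Case 3:} $\ell-1 \in \mathcal{B}$ (and hence $\ell\notin\mathcal{W}\cup\mathcal{B}$). Here the disjointness again forces $\ell-2 \notin \mathcal{B}$, so $\mathcal{A}_{\ell-2}$ holds. Applying (\ref{chapter3:lemma:9.3}) with $\ell$ replaced by $\ell-1$ gives $\E[\|\b{z}^\ell-[\hat{\b{z}}^\ell]_{\kappa_\ell}\|^2] \leq (1+\varepsilon_3)\sigma^{2\ell}(1+\xi)^{\ell-2}\xi + \varepsilon_4 \sigma^{2\ell}$, and I need $(1+\varepsilon_3)(1+\xi)^{\ell-2}\xi + \varepsilon_4 \leq (1+\xi)^\ell\xi$, equivalently $\varepsilon_4 \leq (1+\xi)^{\ell-2}\xi(2\xi + \xi^2 - \varepsilon_3)$. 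Using $\xi \geq 2\varepsilon_3$ one gets $2\xi-\varepsilon_3 \geq 3\xi/2$, so the right-hand side is at least $3\xi^2/2$, and $\xi \geq \sqrt{2\varepsilon_4}$ suffices.

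The main subtlety rather than difficulty will be bookkeeping: making sure the disjointness hypothesis is invoked to guarantee that the required inductive hypothesis (either $\mathcal{A}_{\ell-1}$ or $\mathcal{A}_{\ell-2}$) is indeed available from the set $[0,m]\setminus\mathcal{B}$, and calibrating the threshold on $\xi$ in each case so the geometric factor $(1+\xi)^\ell$ absorbs both the multiplicative slack $1+\varepsilon_i$ and the additive residual $\varepsilon_j\sigma^{2\ell}$ uniformly in $\ell$. The threshold $\xi \geq \max(2\varepsilon_1, 2\varepsilon_3, \sqrt{2\varepsilon_2}, \sqrt{2\varepsilon_4})$ emerges naturally from Cases 2 and 3 as shown above.
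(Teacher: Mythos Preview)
Your proposal is correct and follows essentially the same approach as the paper's proof: strong induction on $\ell\in[0,m]\setminus\mathcal{B}$ with the same three-way case split ($\ell\notin\mathcal{W}\cup\mathcal{B}\cup(\mathcal{B}+1)$, $\ell\in\mathcal{W}$, $\ell\in\mathcal{B}+1$), invoking Lemma~\ref{chapter3:lemma:1} plus the Lipschitz bound in the first case and the hypotheses (\ref{chapter3:lemma:9.2})--(\ref{chapter3:lemma:9.3}) together with the inductive bound in the other two. The only cosmetic difference is the arithmetic bookkeeping: the paper factors out $(1+\xi)^\ell\xi$ and bounds the bracketed sum by $\frac{1+\xi/2}{1+\xi}+\frac{\xi/2}{1+\xi}=1$, whereas you rearrange to isolate $\varepsilon_2$ (resp.\ $\varepsilon_4$) and verify the resulting inequality directly; both routes use exactly the same thresholds on $\xi$.
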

\begin{proof}
    We prove $\mathcal{A}_\ell$ recursively over $\ell \in [0, m]$. Suppose $\ell=0$. Since $\kappa_0 \geq \|\b{x}\|$, it follows that $[\hat{\b{z}}^0]_{\kappa_0} = \b{x} = \b{z}^0$. Thus $\E\l[\| \b{z}^\ell - [\hat{\b{z}}^\ell]_{\kappa_\ell}\|^2\r]=0 \leq \xi$ and $\mathcal{A}_0$ holds trivially. Suppose we have shown $\mathcal{A}_i, \forall i<\ell$ for some $\ell\in [m]$, we next show $\A_\ell$. Noting that
    \begin{align*}
        [m]\setminus \mathcal{B} = ([m]\setminus (\mathcal{W} \cup \mathcal{B} \cup (\mathcal{B}+1))) \cup \mathcal{W}  \cup  (\mathcal{B} + 1),
    \end{align*}
    we consider $3$ cases.
    \begin{enumerate}
        \item Case 1. $\ell \not \in \mathcal{W} \cup \mathcal{B} \cup (\mathcal{B}+1)$.
       In this case, we have $\hat{\b{W}}_{\ell}=\b{W}_{\ell}$. Since $\kappa_\ell = \sigma^\ell \geq \|\b{z}^\ell\|$, it follows from  Lemma \ref{chapter3:lemma:1} that
        \begin{align}
            \E\l[\|\b{z}^{\ell}-[\hat{\b{z}}^{\ell}]_{\kappa_\ell}\|^2\r] &\leq \E\l[\|\b{z}^\ell - \hat{\b{z}}^\ell \|^2\r]\nonumber\\
            &= \E\l[\| \varphi_{\ell}(\b{W}_{\ell}\b{z}^{\ell-1} ) - \varphi_{\ell}(\b{W}_{\ell } [\hat{\b{z}}^{\ell-1}]_{\kappa_{\ell-1}}) \|^2\r]\nonumber\\
            &\leq \sigma^2 \E\l[\|\b{z}^{\ell-1} - [\hat{\b{z}}^{\ell-1}]_{\kappa_{\ell-1}} \|^2\r] \label{chapter3:lemma:9.5}\\
            &\leq \sigma^{2\ell}(1+\xi)^{\ell-1} \xi \label{chapter3:lemma:9.6}\\
            &\leq \sigma^{2\ell}(1+\xi)^{\ell}\xi, \nonumber
        \end{align}
        where we used  $\|\varphi_{\ell}\|_{\rm Lip}\leq 1$ and $\|\b{W}_\ell\|\leq \sigma$ in (\ref{chapter3:lemma:9.5}), and the inductive hypothesis $\mathcal{A}_{\ell-1}$ in (\ref{chapter3:lemma:9.6}) (note that we have $\ell-1 \not \in \mathcal{B}$). This yields $\mathcal{A}_\ell$.

        \item Case 2. $\ell\in \mathcal{W}$. Note that $(\b{z}^{\ell-1}, [\hat{\b{z}}^{\ell-1}]_{\kappa_{\ell-1}}) \in \mathbb{B}^{n_{\ell}}_2(\kappa_{\ell-1}) \times \mathbb{B}^{n_{\ell}}_{2}(\kappa_{\ell-1})$. We have using (\ref{chapter3:lemma:9.2}) 
        \begin{align}
            \E\l[\| \b{z}^\ell  - [\hat{\b{z}}^{\ell}]_{\kappa_\ell} \|^2\r] &\leq (1+\varepsilon_1) \sigma^2 \E\l[\| \b{z}^{\ell-1} - [\hat{\b{z}}^{\ell-1}]_{\kappa_{\ell-1}} \|^2\r]  + \sigma^{2\ell} \varepsilon_2 \nonumber\\
            &\leq \sigma^{2\ell}(1+\varepsilon_1)(1+\xi)^{\ell-1} \xi  + \sigma^{2\ell} \varepsilon_2 \label{chapter3:lemma:9.7}\\
            &\leq \sigma^{2\ell} (1+\xi)^\ell \xi \l(  \frac{1+\varepsilon_1}{1+\xi} +  \frac{\varepsilon_2}{(1+\xi)^\ell \xi}\r)\nonumber\\
            &\leq \sigma^{2\ell} (1+\xi)^\ell \xi \l(\frac{1 + \frac{\xi}{2}}{1+\xi} + \frac{\frac{\xi}{2}}{1+\xi}\r) \label{chapter3:lemma:9.8}\\
            &= \sigma^{2\ell}(1+\xi)^\ell \xi, \nonumber
        \end{align}
        where we used the inductive hypothesis $\mathcal{A}_{\ell-1}$ in (\ref{chapter3:lemma:9.7}), and $\xi \geq \sqrt{2\varepsilon_2} \vee 2\varepsilon_1$ together with $(1+\xi)^{\ell}\geq 1+\xi$ in line (\ref{chapter3:lemma:9.8}). This yields $\mathcal{A}_\ell$.

        \item $\ell\in \mathcal{B}+1$. Note that $(\b{z}^{\ell-1}, [\hat{\b{z}}^{\ell-1}]_{\kappa_{\ell-1}}) \in \mathbb{B}^{n_{\ell}}_2(\kappa_{\ell-1}) \times \mathbb{B}^{n_{\ell}}_{2}(\kappa_{\ell-1})$. We have using (\ref{chapter3:lemma:9.3})
        \begin{align}
            \E\l[\| \b{z}^\ell  - [\hat{\b{z}}^{\ell}]_{\kappa_\ell} \|^2\r] &\leq (1+\varepsilon_3) \sigma^4 \E\l[\| \b{z}^{\ell-2} - [\hat{\b{z}}^{\ell-2}]_{\kappa_{\ell-2}} \|^2\r]  + \sigma^{2\ell} \varepsilon_4 \nonumber \\
            &\leq \sigma^{2\ell}(1+\varepsilon_3)(1+\xi)^{\ell-2} \xi  + \sigma^{2\ell} \varepsilon_4 \label{chapter3:lemma:9.9}\\
            &\leq \sigma^{2\ell} (1+\xi)^\ell \xi \l(  \frac{1+\varepsilon_3}{(1+\xi)^2} +  \frac{\varepsilon_4}{(1+\xi)^\ell \xi}\r)\nonumber \\
            &\leq \sigma^{2\ell} (1+\xi)^\ell \xi \l(\frac{1 + \frac{\xi}{2}}{1+\xi} + \frac{\frac{\xi}{2}}{1+\xi}\r) \label{chapter3:lemma:9.10}\\
            &= \sigma^{2\ell}(1+\xi)^\ell \xi ,\nonumber
        \end{align}
        where we used the inductive hypothesis $\mathcal{A}_{\ell-2}$ in (\ref{chapter3:lemma:9.9}), and  $\xi \geq \sqrt{2\varepsilon_4} \vee 2\varepsilon_3$ together with $(1+\xi)^\ell \geq 1+\xi$ in (\ref{chapter3:lemma:9.10}). The latter yields $\mathcal{A}_\ell$.
    \end{enumerate}
    This concludes the proof.
\end{proof}

\section{Proofs for Unstructured Compression of Multilayer Perceptrons}\label{chapter3:appendix:unstructured.shallow}
\subsection{Single-Layer Perceptron}\label{chapter3:appendix:unstructured.single.layer}

In this section we consider a one-layer perceptron. Namely, we let $m=1$ in (\ref{chapter3:fcnn}). Let $\rho, \theta \in \R_{>0}$ and $\mathcal{Q}_{\rho, \theta}$ be a joint distribution over $\mathbb{B}_{2}^{n_1}(\rho)\times \mathbb{B}^{n_1}_{2}(\theta)$.  Introduce
\begin{align}
    \Psi: \quad \R^{n_2\times n_1}\to \R, \quad \b{A} \mapsto \E\l[\|\b{W}_1 \b{x} - \b{A} \b{x}'\|^2\r], \nonumber
\end{align}
where the expectation is taken over $(\b{x}, \b{x}') \sim \mathcal{Q}_{\rho, \theta}$. Given $\kappa \in (0, \infty)$, we also introduce
\begin{align}
    \Pi^{\kappa}: \quad \R^{n_2\times n_1}\to \R, \quad \b{A} \mapsto \E\l[\|\varphi_1(\b{W}_1 \b{x}) - [\varphi_1(\b{A} \b{x}')]_\kappa\|^2\r].\nonumber
\end{align}
We consider two compression techniques: quantization and pruning. In the latter case, we aim at zeroing as many weights in $\b{W}_1$ while preserving the loss $\Pi^{\kappa}$. In the former case, we replace the weights in $\b{W}_1$ by a discrete approximation from a fixed set of possible weight values.  We first state this section's result for quantization. 
\begin{proposition}\label{chapter3:appendix:unstructured.single.proposition:1}
    Suppose the activation $\varphi_1$ satisfies (\ref{chapter3:assumption:1.1}) and (\ref{chapter3:assumption:1.2}) in Assumption \ref{chapter3:assumption:1}.
    Let $\alpha \in \l(\frac{1}{n_1 n_2}, 1-\frac{1}{n_1 n_2}\r]$, $\sigma_1 = \|\b{W}_1\|, \nu_1 = \|\b{W}_1\|_{\infty}$, and $\kappa\in [ \sigma_1 \rho, \infty)$. Given a quantization parameter $k \in \mathbb{Z}_{\geq 1}$, there exists a mask matrix $\b{M} \in \l\{0, 1\r\}^{n_2\times n_1}$ satisfying
    \begin{align}
        \frac{\|\b{M}\|_0}{n_1 n_2} = 1-\frac{\lfloor \alpha n_1 n_2 \rfloor}{n_1 n_2},  \label{chapter3:appendix:unstructured.single.proposition:1.1}
    \end{align}
     and a matrix $\b{Q} \in \l\{ \pm \frac{\ell \nu_1 }{k}  \mid \ell \in [k]\r\}^{n_2 \times n_1}$, such that 
    \begin{align}
        \Pi^{\kappa}\l( (\b{1} \b{1}^\top - \b{M})\odot \b{Q} + \b{M} \odot \b{W}_1\r) &\leq  \sigma_1^2\E\l[\|\b{x} - \b{x}'\|^2\r]+ \frac{2 \alpha \theta^2 \nu_1^2  n_2}{k^2(1-\alpha)}.  \label{chapter3:appendix:unstructured.single.proposition:1.2}
    \end{align}
\end{proposition}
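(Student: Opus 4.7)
The plan is to construct $\hat{\b{W}}_1$ by running the randomized greedy quantization subroutine of Algorithm~\ref{chapter3:alg:pruning} restricted to this single layer with $t_{ij}=q([\b{W}_1]_{ij};\nu_1,k)$, track the internal potential $\Psi$ through the iteration, and derandomize at the end. As a preliminary reduction, $\|\varphi_1(\b{W}_1\b{x})\|\leq\|\b{W}_1\b{x}\|\leq\sigma_1\rho\leq\kappa$ by (\ref{chapter3:assumption:1.1})--(\ref{chapter3:assumption:1.2}), so Lemma~\ref{chapter3:lemma:1} applied to the second argument together with the $1$-Lipschitz property of $\varphi_1$ yields $\Pi^\kappa(\b{A})\leq\E\|\b{W}_1\b{x}-\b{A}\b{x}'\|^2=\Psi(\b{A})$ for every $\b{A}$. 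It therefore suffices to bound $\E\Psi(\hat{\b{W}}_1)$ with the expectation taken over the quantization randomness.

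Set $\b{Z}^{(0)}=\b{W}_1$, $\Sm_1^{(0)}=[n_2]\times[n_1]$, and iterate for $s=1,\dots,N:=\lfloor\alpha n_1 n_2\rfloor$: draw independent $t_{ij}^{(s)}=q([\b{W}_1]_{ij};\nu_1,k)$ for each $(i,j)\in\Sm_1^{(s-1)}$, pick $(i^\star,j^\star)$ minimizing the score $|\E_{t_{ij}^{(s)}}[\Psi(\b{Z}^{(s-1)}(t_{ij}^{(s)};i,j))]-\Psi(\b{Z}^{(s-1)})|$, set $[\b{Z}^{(s)}]_{i^\star j^\star}=t_{i^\star j^\star}^{(s)}$, and remove $(i^\star,j^\star)$ from $\Sm_1^{(s-1)}$. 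The terminal $\hat{\b{W}}_1:=\b{Z}^{(N)}$ has exactly $N$ quantized entries and $n_1 n_2-N$ preserved entries; letting $\b{M}$ be the indicator mask of preserved entries and $\b{Q}$ the corresponding matrix of quantized values recovers the decomposition $\hat{\b{W}}_1=(\b{1}\b{1}^\top-\b{M})\odot\b{Q}+\b{M}\odot\b{W}_1$, with $\|\b{M}\|_0=n_1 n_2-N$ matching (\ref{chapter3:appendix:unstructured.single.proposition:1.1}) and entries of $\b{Q}$ lying in the required alphabet by definition of $q$.

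For the per-step analysis, $\Psi$ is quadratic in each single coordinate $\b{A}_{ij}$ (the second derivative being $2\E[(\b{x}'_j)^2]$), so the scalar Taylor expansion of $t\mapsto\Psi(\b{Z}^{(s-1)}(t;i,j))$ about $[\b{Z}^{(s-1)}]_{ij}=[\b{W}_1]_{ij}$ is exact. Since $\E[t_{ij}^{(s)}]=[\b{W}_1]_{ij}$ by Lemma~\ref{chapter3:lemma:0}, part~1, the first-order term vanishes in conditional expectation given the filtration $\mathcal{F}_{s-1}$ generated by earlier draws. Applying Lemma~\ref{chapter3:lemma:7}, eq.~(\ref{chapter3:lemma:7.1}), conditionally on $\mathcal{F}_{s-1}$ with $\Sm_1=\Sm_1^{(s-1)}$ and $\eta=2$ shows that at least half of the indices in $\Sm_1^{(s-1)}$ have conditional expected second derivative at most $4\theta^2 n_2/|\Sm_1^{(s-1)}|$, and so in particular does the greedy minimizer. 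Combined with $\E[(t_{ij}^{(s)}-[\b{W}_1]_{ij})^2]\leq\nu_1^2/k^2$ from Lemma~\ref{chapter3:lemma:0}, part~2, the conditional expected increment of $\Psi$ at step $s$ is at most $2\theta^2 n_2\nu_1^2/(|\Sm_1^{(s-1)}|k^2)$.

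Telescoping via the tower property with $|\Sm_1^{(s-1)}|=n_1 n_2-s+1$ gives
\[
\E[\Psi(\hat{\b{W}}_1)]\leq\Psi(\b{W}_1)+\frac{2\theta^2 n_2\nu_1^2}{k^2}\sum_{s=1}^{N}\frac{1}{n_1 n_2-s+1}\leq\sigma_1^2\E\|\b{x}-\b{x}'\|^2+\frac{2\alpha\theta^2\nu_1^2 n_2}{k^2(1-\alpha)},
\]
using $\Psi(\b{W}_1)\leq\sigma_1^2\E\|\b{x}-\b{x}'\|^2$ and the bound $\sum_{s=1}^{N}1/(n_1 n_2-s+1)\leq N/(n_1 n_2-N+1)<\alpha/(1-\alpha)$ (since $N\leq\alpha n_1 n_2$ and $n_1 n_2-N+1>(1-\alpha)n_1 n_2$). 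The probabilistic method then extracts a deterministic realization of the $t_{ij}^{(s)}$'s whose $\Psi$ is at most the mean, and combining with $\Pi^\kappa\leq\Psi$ yields (\ref{chapter3:appendix:unstructured.single.proposition:1.2}). The only delicate point is the conditioning: both the Taylor cancellation of the linear term and Lemma~\ref{chapter3:lemma:7} must be applied conditionally on $\mathcal{F}_{s-1}$, which is routine because the draws $t_{ij}^{(s)}$ are independent of the history and $\Psi$ remains quadratic in each coordinate regardless of which other entries have already been frozen.
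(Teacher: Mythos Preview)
Your proof is correct and follows essentially the same approach as the paper's. The paper routes through an intermediate Proposition~\ref{chapter3:appendix:unstructured.single.proposition:3} (stated for a generic random replacement $\b{h}$ with bounded variance) and then specializes to $\b{h}_{ij}=q([\b{W}_1]_{ij};\nu_1,k)$, whereas you carry out the greedy-switch/Lindeberg argument directly for the quantization randomness; both rely on the same ingredients (the reduction $\Pi^\kappa\leq\Psi$, the quadratic Taylor expansion with vanishing first-order term, Lemma~\ref{chapter3:lemma:7} to locate a good index, telescoping, and a final derandomization). One small wording issue: the greedy index minimizes the \emph{score}, not the second derivative, so ``in particular does the greedy minimizer'' is not quite the right logical step---what you actually use is that some index in the good half has score at most $\tfrac{1}{2}\cdot\tfrac{\nu_1^2}{k^2}\cdot\tfrac{4\theta^2 n_2}{|\Sm_1^{(s-1)}|}$, hence so does the minimizer; your stated increment bound is unaffected.
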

Next, we state this section's result for pruning.
\begin{proposition}\label{chapter3:appendix:unstructured.single.proposition:2}
    Suppose the activation $\varphi_1$ satisfies (\ref{chapter3:assumption:1.1}) and (\ref{chapter3:assumption:1.2}) in Assumption \ref{chapter3:assumption:1}. Let $p,\gamma, \varepsilon \in (0, 1)$, $\alpha \in \l(\frac{1}{n_1 n_2}, 1-\frac{1}{n_1 n_2}\r], \sigma_1=\|\b{W}_1\|, \nu_1 = \|\b{W}_1\|_{\infty}$ and $\kappa\in [\sigma_1 \rho, \infty)$. There exists a constant $n_0 = n_0(\gamma, p, \alpha, \varepsilon)$ such that if $n_1 \vee n_2 \geq n_0$, then  there exists a mask matrix $\b{M} \in \l\{0, 1, \frac{1}{p}\r\}^{n_2\times n_1}$ satisfying
    \begin{align}
        \frac{\|\b{M}\|_0}{n_1 n_2} \leq 1-\frac{\lfloor \alpha n_1 n_2 \rfloor}{n_1 n_2}+ \frac{\lfloor \alpha n_1 n_2 \rfloor}{n_1 n_2} (1+\gamma)p,  \label{chapter3:appendix:unstructured.single.proposition:2.1}
    \end{align}
     such that 
    \begin{align}
        \Pi^{\kappa}(\b{M} \odot \b{W}_1) &\leq (1+\varepsilon)\sigma_1^2\E\l[\|\b{x} - \b{x}'\|^2\r] +  (1+\varepsilon)\frac{2\alpha \theta^2 (1-p)\nu_1^2  n_2}{p(1-\alpha)}. \label{chapter3:appendix:unstructured.single.proposition:2.2}
    \end{align}
\end{proposition}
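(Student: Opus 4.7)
The plan is to apply Algorithm \ref{chapter3:alg:pruning} (with $\mathcal{W} = \{1\}$, $\mathcal{B} = \emptyset$) to $\b{W}_1$ for $M = \lfloor \alpha n_1 n_2 \rfloor$ greedy steps, bound its expected loss via a Lindeberg-style second-order argument, and then derandomize the resulting Bernoulli mask via Lemma~\ref{chapter3:lemma:4}. At step $k$, let $\b{Z}^{(k-1)}$ denote the current partially compressed matrix and $\Sm_1^{(k-1)}$ the uncompressed index set of size $n_1 n_2 - (k-1)$. For each $(i,j) \in \Sm_1^{(k-1)}$, set $t_{ij} = [\b{W}_1]_{ij} b_{ij}/p$ with $b_{ij} \sim \text{Bernoulli}(p)$. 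Because $\Psi$ is quadratic in any single entry of its argument, the Taylor expansion in $t$ is exact, with constant second derivative $\partial_t^2 \Psi = 2\E[(\b{x}'_j)^2]$. The Lindeberg key step is that $\E[t_{ij}] = [\b{W}_1]_{ij}$ eliminates the linear term in expectation over $t_{ij}$, yielding the exact score $\mathrm{Var}(t_{ij})\,\E[(\b{x}'_j)^2] = [\b{W}_1]_{ij}^2 (1-p)/p \cdot \E[(\b{x}'_j)^2]$.

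The greedy rule selects the weight of minimum score. Bounding the minimum by the average over $\Sm_1^{(k-1)}$, using $[\b{W}_1]_{ij}^2 \leq \nu_1^2$ uniformly together with $\sum_{(i,j) \in [n_2]\times[n_1]} \E[(\b{x}'_j)^2] = n_2 \E[\|\b{x}'\|^2] \leq n_2 \theta^2$ (which is Lemma~\ref{chapter3:lemma:7}, part 1, in averaging form), gives a per-step increment of at most $\nu_1^2 (1-p) n_2 \theta^2 / (p |\Sm_1^{(k-1)}|)$. Since $|\Sm_1^{(k-1)}| \geq (1-\alpha) n_1 n_2$ for $k \leq M$, summing the $M$ expected increments by a tower argument yields, with modest slack, $\E[\Psi(\b{Z}^{(M)})] \leq \Psi(\b{W}_1) + 2\alpha \nu_1^2 (1-p) n_2 \theta^2 / (p(1-\alpha))$. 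Combining with $\Psi(\b{W}_1) = \E[\|\b{W}_1(\b{x}-\b{x}')\|^2] \leq \sigma_1^2 \E[\|\b{x}-\b{x}'\|^2]$, and passing to $\Pi^\kappa$ via Lemma~\ref{chapter3:lemma:1} (applicable since $\|\varphi_1(\b{W}_1 \b{x})\| \leq \sigma_1 \rho \leq \kappa$ by (\ref{chapter3:assumption:1.1})-(\ref{chapter3:assumption:1.2}), giving $\Pi^\kappa(\b{A}) \leq \Psi(\b{A})$ pointwise) yields the desired mean bound on $\Pi^\kappa(\b{Z}^{(M)})$, modulo the $(1+\varepsilon)$ factor.

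For the derandomization, write $\hat{\b{Z}} = \b{M} \odot \b{W}_1$ with $\b{M}_{ij} = 1$ on $\Sm_1^{(M)}$ and $\b{M}_{ij} = b_{ij}/p \in \{0, 1/p\}$ on the $M$ compressed indices $\Sm_2^{(M)}$. Viewing $\Pi^\kappa(\b{M} \odot \b{W}_1)$ as a nonnegative function of the $M$ Bernoulli variables $(b_{ij})_{(i,j) \in \Sm_2^{(M)}}$ normalized by the bound above, Lemma~\ref{chapter3:lemma:4} with parameters $\gamma, p, \varepsilon$ produces a deterministic realization $\hat{\b{b}}$ with Hamming weight at most $(1+\gamma) p M$ and normalized loss at most $1+\varepsilon$, provided $M \geq n_0(\gamma, p, \varepsilon)$. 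Choosing $n_0(\gamma, p, \alpha, \varepsilon)$ so that $\lfloor \alpha n_1 n_2 \rfloor \geq n_0(\gamma, p, \varepsilon)$ whenever $n_1 \vee n_2 \geq n_0$ (which is feasible since $n_1 n_2 \geq n_1 \vee n_2$) secures this hypothesis. The decomposition $\|\b{M}\|_0 = (n_1 n_2 - M) + \|\hat{\b{b}}\|_0$ then yields (\ref{chapter3:appendix:unstructured.single.proposition:2.1}), and the loss bound gives (\ref{chapter3:appendix:unstructured.single.proposition:2.2}).

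The main delicacy is carrying the per-step bound through the stochastic greedy trajectory via the tower property, since each $\b{Z}^{(k-1)}$ is a random function of prior Bernoullis. In the single-layer case this is essentially free because $\partial_t^2 \Psi$ depends only on $\b{x}'$ and not on the random current matrix $\b{Z}^{(k-1)}$, so the per-step bound is pathwise and the induction is straightforward; this is why the present proposition is the simplest in the sequence. The harder version of this argument, in which the relevant second derivative depends nontrivially on $\b{Z}^{(k-1)}$, is what forces the use of Lemma~\ref{chapter3:lemma:7}, part 2, and a careful tracking of the pruned matrix's operator norm through the projection step in the subsequent multi-layer analyses.
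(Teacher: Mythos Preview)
Your proposal is correct and follows essentially the same approach as the paper: the paper factors the argument through an intermediate Proposition~\ref{chapter3:appendix:unstructured.single.proposition:3} (the general $\phi$-variance version) and then specializes to $\b{h} = p^{-1}\b{Ber}^{n_2,n_1}(p)\odot \b{W}_1$ with $\phi^2 = (1-p)/p$, while you inline that interpolation directly. The only minor variation is that you bound the greedy minimum by the average score, whereas the paper uses Lemma~\ref{chapter3:lemma:2} with $\eta=2$ to exhibit a nonempty good set $\mathcal{C}$; in the single-layer case both give the same constant, and your observation that the second derivative $2\E[(\b{x}'_j)^2]$ is independent of the running matrix (so the greedy trajectory and per-step bounds are deterministic) is exactly why the tower argument is trivial here, as the paper's telescoping of the $\beta_d$'s confirms.
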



\subsubsection{Proof of Propositions \ref{chapter3:appendix:unstructured.single.proposition:1} and \ref{chapter3:appendix:unstructured.single.proposition:2}}
We first show the following Proposition.
\begin{proposition}\label{chapter3:appendix:unstructured.single.proposition:3}
    Suppose the activation $\varphi_1$ satisfies (\ref{chapter3:assumption:1.1}) and (\ref{chapter3:assumption:1.2}) in Assumption \ref{chapter3:assumption:1}. Let $\alpha \in \l(\frac{1}{n_1 n_2}, 1-\frac{1}{n_1 n_2}\r]$, $\sigma_1=\|\b{W}_1\|, \nu_1 = \|\b{W}_1\|_{\infty}$, and $\kappa\in [\sigma_1 \rho, \infty)$. Suppose $\b{h}\in \R^{n_2 \times n_1}$ is a random matrix with independent entries $\b{h}_{ij}$ satisfying
    \begin{align}
        \forall (i, j) \in [n_2] \times [n_1], \quad \E[\b{h}_{ij}] &= [\b{W}_1]_{ij},  \quad \text{and } \quad {\rm Var}(\b{h}_{ij}) \leq \phi^2 \nu_1^2 , \nonumber
    \end{align}
    where $\phi\in \mathbb{R}_{>0}$. Then, there exists a (nonrandom) mask matrix $\b{M} \in \l\{0, 1\r\}^{n_2\times n_1}$ satisfying
    \begin{align}
        \frac{\|\b{M}\|_0}{n_1 n_2} = 1- \frac{\lfloor \alpha n_1 n_2 \rfloor}{n_1 n_2},\label{chapter3:appendix:unstructured.single.proposition:3.1}
    \end{align}
     such that 
    \begin{align}
        \E_{\b{h}}\l[\Pi^{\kappa}\l( (\b{1} \b{1}^\top - \b{M})\odot \b{h} + \b{M} \odot \b{W}_1\r) \r] &\leq  \sigma_1^2\E\l[\|\b{x} - \b{x}'\|^2\r] + \frac{2 \alpha\theta^2 \phi^2 \nu_1^2  n_2}{1-\alpha}. \label{chapter3:appendix:unstructured.single.proposition:3.2}
    \end{align}
\end{proposition}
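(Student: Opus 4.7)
The plan is to specialize the randomized greedy procedure of Section \ref{chapter3:section:algorithm.description.1} to a single linear layer, and then invoke the Lindeberg-style averaging encapsulated in the single-layer part of Lemma \ref{chapter3:lemma:7}. First I would reduce $\Pi^{\kappa}$ to the unactivated loss $\Psi(\b{A}) = \E[\|\b{W}_1 \b{x} - \b{A}\b{x}'\|^2]$: since $\|\b{W}_1\b{x}\| \le \sigma_1 \rho \le \kappa$, Lemma \ref{chapter3:lemma:1} combined with the $1$-Lipschitz property of $\varphi_1$ yields $\Pi^{\kappa}(\b{A}) \le \Psi(\b{A})$, so it suffices to bound $\E_{\b{h}}[\Psi(\cdot)]$ evaluated at the compressed matrix. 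Also $\Psi(\b{W}_1) = \E[\|\b{W}_1(\b{x}-\b{x}')\|^2] \le \sigma_1^2 \E[\|\b{x}-\b{x}'\|^2]$, which accounts for the first term on the right-hand side of (\ref{chapter3:appendix:unstructured.single.proposition:3.2}).

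Next, I would set $\b{Z}^{(0)} = \b{W}_1$ and, for $s = 0, \dots, K-1$ with $K = \lfloor \alpha n_1 n_2 \rfloor$, greedily pick $(i^*, j^*) \in \Sm_1^{(s)}$ (the currently uncompressed index set) minimizing the score $|\E_{\b{h}_{ij}}[\Psi(\b{Z}^{(s)}(\b{h}_{ij}; i, j))] - \Psi(\b{Z}^{(s)})|$, and then set $\b{Z}^{(s+1)}_{i^* j^*} = \b{h}_{i^* j^*}$. The key structural observation is that $\Psi$ is quadratic in any single entry, so its Taylor expansion around $\b{Z}^{(s)}_{i^* j^*} = [\b{W}_1]_{i^* j^*}$ terminates exactly at order two. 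Because $\E[\b{h}_{ij}] = [\b{W}_1]_{ij}$, the first-order term vanishes upon taking $\E_{\b{h}_{ij}}$, and the conditional expected increment in $\Psi$ reduces to $\tfrac{1}{2}\mathrm{Var}(\b{h}_{i^* j^*})\, \partial_t^2 \Psi_{(i^*, j^*)} \le \tfrac{1}{2}\phi^2 \nu_1^2\, \partial_t^2 \Psi_{(i^*, j^*)}$. This Lindeberg cancellation is the decisive step that isolates a clean variance-times-Hessian contribution.

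I would then control the greedy minimum via Lemma \ref{chapter3:lemma:7} (item 1): since $\partial_t^2 \Psi_{(i,j)} = 2 \E[(\b{x}'_j)^2]$ and $\sum_{(i,j) \in \Sm_1^{(s)}} \partial_t^2 \Psi_{(i,j)} \le 2 \sum_{(i,j) \in [n_2]\times[n_1]} \E[(\b{x}'_j)^2] = 2 n_2 \E[\|\b{x}'\|^2] \le 2 n_2 \theta^2$, the greedy minimum is at most $2 n_2 \theta^2 / (n_1 n_2 - s)$. Telescoping over $s = 0, \dots, K-1$ and crudely bounding the harmonic-like sum $\sum_{s=0}^{K-1} 1/(n_1 n_2 - s)$ by $K / (n_1 n_2 - K) \le \alpha/(1 - \alpha)$ yields
\begin{align*}
    \E_{\b{h}}\bigl[\Psi(\b{Z}^{(K)})\bigr] \le \sigma_1^2 \E\bigl[\|\b{x} - \b{x}'\|^2\bigr] + \frac{2\alpha \phi^2 \nu_1^2 n_2 \theta^2}{1 - \alpha},
\end{align*}
which, combined with $\Pi^{\kappa} \le \Psi$, gives (\ref{chapter3:appendix:unstructured.single.proposition:3.2}).

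Finally, to obtain a \emph{nonrandom} mask $\b{M}$ satisfying (\ref{chapter3:appendix:unstructured.single.proposition:3.1}), I would observe that the single-layer setting is special: $\partial_t^2 \Psi_{(i,j)} = 2 \E[(\b{x}'_j)^2]$ depends neither on the current state $\b{Z}^{(s)}$ nor on the realization of $\b{h}$, so the greedy rule is a deterministic sorting of columns by $\E[(\b{x}'_j)^2]$ followed by retention of the bottom $K$ entries. The main obstacle is the careful bookkeeping of the per-step conditional expectations and the verification that the first-order Taylor term cancels exactly under the mean-preserving replacement; once that is in place, the argument collapses to deterministic averaging over the smallest column-wise second derivatives, and the probabilistic step is entirely absorbed by the variance bound ${\rm Var}(\b{h}_{ij}) \le \phi^2 \nu_1^2$.
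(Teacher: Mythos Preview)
Your proposal is correct and follows the paper's proof essentially step for step: the same greedy Lindeberg interpolation, the same exact quadratic Taylor expansion with the first-order term killed by $\E[\b{h}_{ij}] = [\b{W}_1]_{ij}$, the same telescoping with $|\Sm_1^{(s)}| \ge (1-\alpha)n_1 n_2$, and the same reduction $\Pi^\kappa \le \Psi$ via Lemma \ref{chapter3:lemma:1}. The only cosmetic difference is that you bound the greedy minimum directly by the average (legitimate here because $\partial_t^2\Psi_{(i,j)} = 2\E[(\b{x}'_j)^2]$ is deterministic), whereas the paper routes through Lemma \ref{chapter3:lemma:7} with $\eta=2$ and picks up an immaterial extra factor of two; one small imprecision is that the score is $\mathrm{Var}(\b{h}_{ij})\,\E[(\b{x}'_j)^2]$, so the greedy order is not purely a sort on columns $j$, but both factors are nonrandom and your determinism conclusion for $\b{M}$ still holds.
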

\begin{proof}
We use an interpolation argument whereby we iteratively pick \textit{suitable} indices $(i, j) \in [n_2]\times [n_1]$ and switch the $(i,j)$-th weight  from $[\b{W}_1]_{ij}$ to $\b{h}_{ij}$. We control the resulting error from each switch, and  make $\lfloor \alpha n_1 n_2 \rfloor$ switches. Let $\b{U}^d$ be the weight matrix at the end of the $d$-th  interpolation step where $d \in [0, \lfloor \alpha n_1 n_2 \rfloor - 1]$, and let $\mathcal{S}^d_1 \cup \mathcal{S}^d_2 = [n_2]\times [n_1]$ track the interpolation process where $\mathcal{S}^d_1$ contains indices of  unswitched weights, and $\mathcal{S}^d_2$ contains indices of switched weights. Namely,
\begin{alignat}{2}
    \b{U}^d_{ij} &= [\b{W}_1]_{ij}, \quad &&\text{if } (i,j) \in \mathcal{S}^d_1,\nonumber\\
    \b{U}^d_{ij} &= \b{h}_{ij} \quad &&\text{if } (i,j)  \in \mathcal{S}^d_2.\nonumber
\end{alignat}
In particular $\b{U}^0 = \b{W}_1$ (i.e. $\mathcal{S}^0_1=[n_2]\times [n_1], \mathcal{S}^0_2=\emptyset$). Finally, we let $\T^d = \{\b{h}_{ij}, (i,j) \in \Sm_2^d\}$, with $\T^0 = \emptyset$. Suppose we are at step $d< \lfloor \alpha n_1 n_2 \rfloor$ so that  $|\mathcal{S}^d_2| < \lfloor \alpha n_1 n_2 \rfloor$ (otherwise the interpolation is over). Let $(i, j) \in \mathcal{S}^d_1$ (note that $\mathcal{S}^d_1 \neq \emptyset$ as $|\mathcal{S}^d_1|\geq  n_1 n_2 - \lfloor \alpha  n_1 n_2 \rfloor + 1 \geq 1 $). Since $t\mapsto \Psi(\b{U}^d(t; i,j))$ is a quadratic function,  we have by Taylor's
\begin{align}
    \Psi(\b{U}^{d}(\b{h}_{ij};i,j)) - \Psi(\b{U}^{d}) & =  \Psi(\b{U}^{d}(\b{h}_{ij};i,j)) - \Psi(\b{U}^{d}([\b{W}_1]_{ij};i, j)) \nonumber \\
    &= (\b{h}_{ij}-[\b{W}_1]_{ij})\frac{\partial \Psi(\b{U}^{d}([\b{W}_1]_{ij};i,j))}{\partial t}  \nonumber\\
    &+ \frac{(\b{h}_{ij}  - [\b{W}_1]_{ij})^2}{2}\frac{\partial^2 \Psi(\b{U}^{d}([\b{W}_1]_{ij};i,j))}{\partial t^2}. \nonumber
\end{align}
Taking the expectation of the above over  $\{\b{h}_{ij}\}\cup \T^d$  we obtain
\begin{align}
    \l|\E_{\{\b{h}_{ij}\}\cup \T^d}\l[ \Psi(\b{U}^{d}(\b{h}_{ij};i,j)) \r] - \E_{\T^d}\l[\Psi(\b{U}^{d}) \r]  \r|     &\leq \frac{\phi^2 \nu_1^2}{2} \E_{\{\b{h}_{ij}\} \cup \T^d}\l[ \l| \frac{\partial^2 \Psi(\b{U}^{d}([\b{W}_1]_{ij};i,j))}{\partial t^2} \r| \r].   \label{chapter3:appendix:unstructured.single.proposition:3.3}
\end{align}
Let $\eta \in \R_{\geq 1}$ and introduce
\begin{alignat}{2}
    \mathcal{C} &\triangleq \Bigg\{ (i, j) \in \mathcal{S}^d_1 \Biggm| \quad \E_{ \{\b{h}_{ij}\} \cup\T^d}&&\l[ \l|\frac{\partial^2 \Psi(\b{U}^{d}([\b{W}_1]_{ij};i,j))}{\partial t^2} \r| \r] \leq \frac{2\eta\theta^2 n_2}{|\mathcal{S}^d_1|}  \Bigg\}.\nonumber
\end{alignat}
Using Lemma \ref{chapter3:lemma:7}, it follows that $|\mathcal{C}| \geq \l(1- \frac{1}{\eta}\r)|\mathcal{S}^d_1|$.  Since $\alpha\leq 1-\frac{1}{n_1 n_2}$, we have $|\Sm^d_1|\geq n_1 n_2 - \lfloor \alpha n_1 n_2\rfloor + 1 \geq (1-\alpha)n_1 n_2 + 1 \geq 2 $. Setting $\eta=2$, it follows that $|\mathcal{C}| \geq (1-1/\eta)2=1$, and thus $\mathcal{C}  \neq \emptyset$. Henceforth, if we pick the pair $(i,j)\in \Sm_1^d$ with the smallest \textit{score} given by $\l|\E_{\{\b{h}_{ij}\}\cup \T^d}\l[ \Psi(\b{U}^{d}(\b{h}_{ij};i,j)) \r] - \E_{\T^d}\l[\Psi(\b{U}^{d}) \r]  \r|$, it follows from (\ref{chapter3:appendix:unstructured.single.proposition:3.3}) that
\begin{align}
    \l| \E_{\{\b{h}_{ij}\}\cup \T^d}\l[ \Psi(\b{U}^{d}(\b{h}_{ij};i,j)) \r]  - \E_{\T^d}\l[\Psi(\b{U}^{d})\r] \r| &\leq  \frac{2\theta^2 \phi^2 \nu_1^2   n_2}{|\mathcal{S}^d_1|}.  \label{chapter3:appendix:unstructured.single.proposition:3.4}
\end{align}
 We then update $\Sm^{d+1}_1 = \Sm^{d}_1 \setminus \{(i,j)\}$ and $\Sm^{d+1}_2 = \Sm^{d}_2 \cup \{(i,j)\}$. Suppose we repeat this switching operation as long as $d < \lfloor\alpha n_1 n_2\rfloor$, and let $\mathcal{K}= \Sm^{\lfloor \alpha n_1 n_2 \rfloor}_2$. That is, $\mathcal{K}$ is the set of all swapped  pairs $(i, j)$ at the end of the interpolation process, so that $|\mathcal{K}|= \lfloor \alpha n_1 n_2 \rfloor$. Let $\beta_d =\E_{\T^d}[\Psi(\b{U}^d)]$ for $d\in [0, |\mathcal{K}|]$ and note that if we switch  entry $(i, j)$  at the $d$-th step, then $\beta_d = \E_{\T^d}\l[\Psi(\b{U}^{d}([\b{W}_1]_{ij};i,j))\r]=\E_{\T^d}[\Psi(\b{U}^d)]$ and 
$\beta_{d+1} = \E_{\T^{d+1}}\l[\Psi(\b{U}^{d}(\b{h}_{ij};i,j))\r]$. Therefore, we can rewrite (\ref{chapter3:appendix:unstructured.single.proposition:3.4}) as 
\begin{align}
    |\beta_{d+1} - \beta_{d}| &\leq  \frac{2 \theta^2 \phi^2 \nu_1^2  n_2}{|\Sm^d_1|} \leq \frac{2 \theta^2 \phi^2 \nu_1^2 }{1-\alpha}  \frac{1}{n_1},\nonumber
\end{align}
where we used $|\Sm^d_1| \geq (1-\alpha)n_1 n_2$ in the last inequality. Telescoping the above inequality over $d\in [0, \lfloor \alpha n_1 n_2 \rfloor -1]$ yields
\begin{align}
    | \beta_{\lfloor \alpha n_1 n_2 \rfloor} - \beta_0| &\leq  \frac{2 \theta^2 \phi^2 \nu_1^2 }{1-\alpha}  \frac{\lfloor \alpha n_1 n_2 \rfloor}{n_1} \leq \frac{2 \alpha \theta^2 \phi^2 \nu_1^2  n_2}{1-\alpha} .\nonumber
\end{align}
Letting $\b{U} = \b{U}^{\lfloor \alpha n_1 n_2 \rfloor}$ and $\T = \T^{\lfloor \alpha n_1 n_2 \rfloor}$, it follows that
\begin{align}
    \E_{\T}[\Psi(\b{U})] &\leq \Psi(\b{W}_1) +  \frac{2 \alpha \theta^2 \phi^2 \nu_1^2  n_2}{1-\alpha}\nonumber.
\end{align}
We have $\Psi(\b{W}_1) \leq \sigma_1^2 \E[\|\b{x}-\b{x}'\|^2]$. Therefore, 
\begin{align}
     \E_{\T}[\Psi(\b{U})] &\leq \sigma_1^2\E\l[\|\b{x} - \b{x}'\|^2\r] + \frac{2 \alpha \theta^2 \phi^2 \nu_1^2  n_2}{1-\alpha}. \label{chapter3:appendix:unstructured.single.proposition:3.5}
\end{align}
We next show
\begin{align}
    \E_{\T}\l[\Pi^{\kappa}(\b{U})\r] &\leq \sigma_1^2\E\l[\|\b{x} - \b{x}'\|^2\r] + \frac{2 \alpha \theta^2 \phi^2 \nu_1^2  n_2}{1-\alpha}. \label{chapter3:appendix:unstructured.single.proposition:3.6}
\end{align}
By (\ref{chapter3:appendix:unstructured.single.proposition:3.5}), it suffices to prove that
\begin{align}
    \E_{\T}\l[\Pi^{\kappa}(\b{U})\r] &\leq \E_{\T}\l[\Psi(\b{U})\r],  \label{chapter3:proposition:onerlayer.7}
\end{align}
which we show next. Fix a pair of vectors $(\b{x}, \b{x}') \in \mathbb{B}^{n_1}_{2}(\rho) \times \mathbb{B}^{n_1}_{2}(\theta)$, and denote by $\b{z}, \b{z}'$ the output vectors $\varphi_1(\b{W}_1 \b{x}),  \varphi_1(\b{U} \b{x}')$. As $\|\b{x}\|\leq \rho$  it follows from (\ref{chapter3:assumption:1.1}) and (\ref{chapter3:assumption:1.2}) that $  \|\b{z}\| = \|\varphi_1(\b{W}_1 \b{x})\| \leq \sigma_1 \rho $. Since $\|\b{z}\|\leq \sigma_1 \rho $ and $\sigma_1\rho \leq \kappa$ from the assumptions of Proposition \ref{chapter3:appendix:unstructured.single.proposition:3}, we have using Lemma \ref{chapter3:lemma:1} that 
\begin{align}
    \|\b{z} - [\b{z}']_{\kappa}\|  &\leq \|\b{z} - \b{z}'\|\nonumber \\
    &= \|\varphi_1(\b{W}_1 \b{x}) - \varphi_1(\b{U}\b{x}')\|\nonumber \\
    &\leq \|\b{W}_1 \b{x} - \b{U} \b{x}'\|, \nonumber
\end{align}
where we used (\ref{chapter3:assumption:1.2}) in the last line. Therefore,
\begin{align}
    \E_{\T}\l[\Pi^{\kappa}(\b{U})\r] &= \E_{\T}\E_{(\b{x}, \b{x}')}\l[\|\b{z} - [\b{z}']_{\kappa}\|^2\r]\nonumber\\
    &\leq  \E_{\T}\E_{(\b{x}, \b{x}')}\l[\|\b{W}_1 \b{x} - \b{U} \b{x}'\|^2\r]\nonumber\\
    &= \E_{\T}\l[\Psi(\b{U})\r]. \nonumber
\end{align}
This concludes the proof of (\ref{chapter3:appendix:unstructured.single.proposition:3.6}), and also shows (\ref{chapter3:appendix:unstructured.single.proposition:3.2}). Note  that by construction
\begin{align}
    \b{U} =(\b{1}\b{1}^\top-\b{M}) \odot \b{h} + \b{M} \odot \b{W}_1,\nonumber
\end{align}
where $\b{M}_{ij} = 1_{(i,j)\not \in \mathcal{K}}$ is a mask matrix satisfying $\|\b{M}\|_{0} = n_1 n_2 - |\mathcal{K}|= n_1 n_2 - \lfloor \alpha n_1 n_2 \rfloor$. This shows (\ref{chapter3:appendix:unstructured.single.proposition:3.1}), and  ends the proof of Proposition  \ref{chapter3:appendix:unstructured.single.proposition:3}.    
\end{proof}
\subsubsection{Proof of Proposition \ref{chapter3:appendix:unstructured.single.proposition:1}}
\begin{proof}[Proof of Proposition \ref{chapter3:appendix:unstructured.single.proposition:1}]
Let $\b{h}_{ij} = q\l([\b{W}_1]_{ij}; \nu_1, k\r)$ for $(i, j) \in [n_2] \times [n_1]$. We then have by Lemma \ref{chapter3:lemma:0} that $\E[\b{h}_{ij}]=[\b{W}_1]_{ij}$, and ${\rm Var}(\b{h}_{ij}) \leq \frac{\nu_1^2}{k^2} $. Using Proposition \ref{chapter3:appendix:unstructured.single.proposition:3} with $\phi^2 = 1/k^2$, there exists a mask matrix $\b{M}\in \{0, 1\}^{n_2 \times n_1}$ satisfying 
\begin{align}
    \frac{\|\b{M}\|_0}{n_1 n_2} &= 1 - \frac{\lfloor \alpha n_1 n_2 \rfloor}{n_1 n_2},\nonumber
\end{align}
and
\begin{align}
    \E_{\b{h}}\l[\Pi^{\kappa}\l((\b{1}\b{1}^\top-\b{M}) \odot \b{h} + \b{M} \odot \b{W}_1  \r)\r] &\leq \sigma_1^2 \E\l[\|\b{x} - \b{x}'\|^2\r] + \frac{2 \alpha \theta^2  \nu_1^2  n_2}{k^2(1-\alpha)}. \nonumber
\end{align}
Since the above inequality holds in expectation over $\b{h}$, it follows that there exists a (nonrandom) realization $\b{Q} \in\l\{ \pm \frac{\ell \nu_1}{k} \big| \ell \in [k]\r\}^{n_2 \times n_1}$ of $\b{h}$ such that
\begin{align}
    \Pi^{\kappa}\l((\b{1}\b{1}^\top-\b{M}) \odot \b{Q} + \b{M} \odot \b{W}_1   \r) &\leq \E_{\b{h}}\l[\Pi^{\kappa}\l((\b{1}\b{1}^\top-\b{M}) \odot \b{h} + \b{M} \odot \b{W}_1   \r)\r] \nonumber\\
    &\leq \sigma_1^2 \E\l[\|\b{x} - \b{x}'\|^2\r] + \frac{2 \alpha \theta^2 \nu_1^2  n_2}{k^2(1-\alpha)}. \nonumber
\end{align}
This ends the proof of Proposition \ref{chapter3:appendix:unstructured.single.proposition:1}.
\end{proof}
\subsubsection{Proof of Proposition \ref{chapter3:appendix:unstructured.single.proposition:2}}

\begin{proof}[Proof of Proposition \ref{chapter3:appendix:unstructured.single.proposition:2}]
Let $\b{h} = \frac{\b{Ber}^{n_2, n_1}(p)}{p} \odot \b{W}_1$. We have $\E[\b{h}_{ij}]=[\b{W}_1]_{ij}$, and ${\rm Var}(\b{h}_{ij}) = \frac{1-p}{p} ([\b{W}_1]_{ij})^2\leq \frac{(1-p) \nu_1^2}{p}$. Using Proposition \ref{chapter3:appendix:unstructured.single.proposition:3} with $\phi^2 = (1-p)/p$, there exists a mask matrix $\b{M}\in \{0, 1\}^{n_2 \times n_1}$ satisfying 
\begin{align}
    \frac{\|\b{M}\|_0}{n_1 n_2} &= 1 - \frac{\lfloor \alpha n_1 n_2 \rfloor}{n_1 n_2},
\end{align}
and
\begin{align}
    \E_{\b{h}}\l[\Pi^{\kappa}\l((\b{1}\b{1}^\top-\b{M}) \odot \b{h} + \b{M} \odot \b{W}_1  \r)\r] &\leq \sigma_1^2 \E\l[\|\b{x} - \b{x}'\|^2\r] + \frac{2\alpha \theta^2 (1-p)\nu_1^2  n_2}{p(1-\alpha)}. \label{chapter3:appendix:unstructured.single.proposition:2.3}
\end{align}
Let $\b{U} = (\b{1}\b{1}^\top-\b{M}) \odot \b{h} + \b{M} \odot \b{W}_1 $ and $\b{b} = (\b{1} \b{1}^{\top}  - \b{M}) \odot \b{h}$. In particular, the term $\b{b}$ depends on at most $\lfloor \alpha n_1 n_2 \rfloor $ independent ${\rm Bernoulli}(p)$ random variables. Using Lemma {\ref{chapter3:lemma:4}}, it follows that there exists a positive constant $n_0=n_0(\gamma, p, \alpha, \varepsilon)$ such that if $\lfloor \alpha n_1 n_2\rfloor \geq n_0 $, then there exists a nonrandom realization $\hat{\b{b}}$ of $\b{b}$ such that $\|\hat{\b{b}}\|_0 \leq (1+\gamma)p\lfloor \alpha n_1 n_2\rfloor$ and
\begin{align*}
    \Pi^{\kappa}\l(\hat{\b{U}} \r) &\leq (1+\varepsilon)\E_{\b{h}}\l[\Pi^{\kappa}\l((\b{1}\b{1}^\top-\b{M}) \odot \b{h} + \b{M} \odot \b{W}_1  \r)\r]\\
    &\leq (1+\varepsilon)\sigma_1^2 \E\l[\|\b{x} - \b{x}'\|^2\r] + (1+\varepsilon)\frac{2\alpha \theta^2 (1-p)\nu_1^2  n_2}{p(1-\alpha)},
\end{align*}
where $\hat{\b{U}}$ is given by $\hat{\b{U}} = \hat{\b{b}} + \b{M} \odot \b{W}_1$. The above shows (\ref{chapter3:appendix:unstructured.single.proposition:2.2}). Moreover, we have by construction of $\hat{\b{U}}$ that
    \begin{align}
        \|\hat{\b{U}}\|_0 &\leq \|\b{M}\|_0 + \|\hat{\b{b}}\|_0\nonumber\\
        &=  n_1 n_2 -\lfloor \alpha n_1 n_2 \rfloor + \|\hat{\b{b}}\|_0. \nonumber\\
        &\leq n_1 n_2 - \lfloor \alpha n_1 n_2 \rfloor + (1+\gamma) p \lfloor \alpha n_1 n_2 \rfloor.
    \end{align}
Note that $\hat{\b{b}}$ is the entrywise product of a mask with entries in $\l\{0, \frac{1}{p}\r\}$ and the matrix $\b{W}_1$. Therefore, $\hat{\b{U}}$ is the product of a matrix with entries in $\l\{0, 1, \frac{1}{p}\r\}$ and $\b{W}_1$. Combining the latter with the previous bound readily shows (\ref{chapter3:appendix:unstructured.single.proposition:2.1}). This ends the proof of Proposition \ref{chapter3:appendix:unstructured.single.proposition:2}.
\end{proof}

\subsection{Two-Layer Perceptron}\label{chapter3:appendix:unstructured.two.layers}
In this section we consider a two-layer network. Namely, we let $m=2$  in (\ref{chapter3:fcnn}).  Following the same notation as in Section \ref{chapter3:appendix:unstructured.single.layer} let $\rho, \theta \in \R_{>0}$ and $\mathcal{Q}_{\rho, \theta}$ be a joint distribution over $\mathbb{B}_{2}^{n_1}(\rho)\times \mathbb{B}^{n_1}_{2}(\theta)$, and introduce
\begin{align}
    \Psi: \quad \R^{n_2\times n_1}\to \R, \quad \b{A} \mapsto \E\l[\|\b{W}_2\varphi_1(\b{W}_1 \b{x}) - \b{W}_2\varphi_1(\b{A} \b{x}')\|^2\r],\nonumber
\end{align}
where the expectation is taken over $(\b{x}, \b{x}') \sim \mathcal{Q}_{\rho, \theta}$. Given $\kappa \in (0, \infty)$, we also introduce
\begin{align}
    \Pi^{\kappa}: \quad \R^{n_2\times n_1}\to \R, \quad \b{A} \mapsto \E\l[\|\varphi_2(\b{W}_2\varphi_1(\b{W}_1 \b{x})) - [\varphi_2(\b{W}_2\varphi_1(\b{A} \b{x}'))]_{\kappa}\|^2\r].\nonumber
\end{align}
Similarly to section \ref{chapter3:appendix:unstructured.single.layer}, we consider both quantization and pruning. We next state this section's result for quantization. 
\begin{proposition}\label{chapter3:appendix:unstructured.two.layers.proposition:1}
    Suppose the activations $\varphi_1, \varphi_2$ satisfy (\ref{chapter3:assumption:1.1}) and (\ref{chapter3:assumption:1.2}), and $\varphi_1$ also satisfies (\ref{chapter3:assumption:1.3}) in Assumption \ref{chapter3:assumption:1}. Let $\alpha \in \l(\frac{1}{n_1 n_2}, 1-\frac{1}{n_1 n_2}\r], \sigma_i = \|\b{W}_i\|,  \nu_i = \|\b{W}_i\|_{\infty}$ for $i=1,2$, and $\kappa\in [\sigma_1 \sigma_2 \rho, \infty)$. Given a quantization parameter $k \in \mathbb{Z}_{\geq 1}$, there exists a mask matrix $\b{M} \in \l\{0, 1\r\}^{n_2\times n_1}$ satisfying
    \begin{align}
        \frac{\|\b{M}\|_0}{n_1 n_2} &= 1-\frac{\lfloor \alpha n_1 n_2 \rfloor}{n_1 n_2},  \label{chapter3:appendix:unstructured.two.layers.proposition:1.1}
    \end{align}
     and a matrix $\b{Q} \in \l\{ \pm \frac{\ell \nu_1 }{k} \big| \ell \in [k]\r\}^{n_2 \times n_1}$ such that 
    \begin{align}
        \Pi^{\kappa}\l( (\b{1} \b{1}^\top - \b{M})\odot \b{Q} + \b{M} \odot \b{W}_1\r) &\leq   (1 + f\l(\Lambda \r))(\sigma_1 \sigma_2)^2 \E\l[\|\b{x} - \b{x}'\|^2\r] +f\l(\Lambda \r) \omega,  \label{chapter3:appendix:unstructured.two.layers.proposition:1.2}
    \end{align}
 where $\Lambda = \frac{2 \alpha \theta \sigma_2 \nu_1^2  \sqrt{n_2} }{k^2(1-\alpha)}$,  $\omega =  \theta^2+\frac{2\theta \nu_2^2 n_3 \sqrt{n_2}}{\sigma_2} + \frac{2\theta^2  \nu_1 \nu_2 \sqrt{n_2 n_3}}{k} $, and $f(x) = xe^x$.
\end{proposition}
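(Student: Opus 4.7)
The plan is to mirror the greedy interpolation argument of Proposition \ref{chapter3:appendix:unstructured.single.proposition:3}, replacing Lemma \ref{chapter3:lemma:7}(1) by Lemma \ref{chapter3:lemma:7}(2) and adding machinery to handle the fact that $\Psi$ is no longer quadratic in the two-layer setting. First I set $\b{h}_{ij} = q([\b{W}_1]_{ij}; \nu_1, k)$, which by Lemma \ref{chapter3:lemma:0} satisfies $\E[\b{h}_{ij}] = [\b{W}_1]_{ij}$ and $|\b{h}_{ij} - [\b{W}_1]_{ij}| \leq \nu_1/k$ almost surely. Starting from $\b{U}^0 = \b{W}_1$ and $\Sm_1^0 = [n_2]\times [n_1]$, I build $\b{U}^{d+1}$ from $\b{U}^d$ by swapping the index $(i,j) \in \Sm_1^d$ minimizing the score $|\E[\Psi(\b{U}^d(\b{h}_{ij}; i,j))] - \E[\Psi(\b{U}^d)]|$, and I write $\beta_d = \E_{\T^d}[\Psi(\b{U}^d)]$.

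Since $\Psi$ is not quadratic in its argument, I apply Taylor's formula with Lagrange remainder: for some $\xi_{ij}$ between $[\b{W}_1]_{ij}$ and $\b{h}_{ij}$,
\[\Psi(\b{U}^d(\b{h}_{ij}; i, j)) - \Psi(\b{U}^d) = (\b{h}_{ij} - [\b{W}_1]_{ij})\, \partial_t \Psi\big|_{[\b{W}_1]_{ij}} + \tfrac{1}{2}(\b{h}_{ij} - [\b{W}_1]_{ij})^2\, \partial_t^2 \Psi\big|_{\xi_{ij}}.\]
Averaging over $\b{h}_{ij}$ kills the linear term (since $\E[\b{h}_{ij}] = [\b{W}_1]_{ij}$), and inspecting the proof of Lemma \ref{chapter3:lemma:7}(2) reveals that its upper bound on $\E[|\partial_t^2 \Psi|]$ depends on $t$ only through the constraint $|t - [\b{W}_1]_{ij}| \leq \tau \nu_1$, so it applies at $t = \xi_{ij}$ with $\tau = 1/k$. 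Combined with greedy selection at $\eta = 2$ and $|\Sm_1^d| \geq (1-\alpha) n_1 n_2 + 1 \geq 2$, this yields
\[|\beta_{d+1} - \beta_d| \leq \frac{2\theta^2 \nu_1^2 \sigma_2 \sqrt{n_2}}{k^2 (1-\alpha) n_1 n_2}\left(\sqrt{\beta_d} + \frac{\nu_2^2 n_3 \sqrt{n_2}}{\sigma_2} + \frac{\theta \nu_1 \nu_2 \sqrt{n_2 n_3}}{k}\right).\]

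The main obstacle is that this recursion is self-referential through $\sqrt{\beta_d}$ and is not directly of the form required by Lemma \ref{chapter3:lemma:3}. I linearize via AM--GM, $\sqrt{\beta_d} \leq \beta_d/\theta + \theta/4$, which brings the bound into the form $|\beta_{d+1} - \beta_d| \leq u \beta_d + v$ with the coefficient $u$ chosen so that $\lfloor \alpha n_1 n_2 \rfloor \cdot u = \Lambda$ exactly. Iterating Lemma \ref{chapter3:lemma:3} over $\lfloor \alpha n_1 n_2 \rfloor$ steps then produces $\beta_{\lfloor \alpha n_1 n_2 \rfloor} \leq e^\Lambda \beta_0 + (v/u)(e^\Lambda - 1)$, where $v/u$ is dominated by $\omega$ up to absolute constants. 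Using $\beta_0 = \Psi(\b{W}_1) \leq (\sigma_1 \sigma_2)^2 \E[\|\b{x} - \b{x}'\|^2]$ and the elementary inequality $e^\Lambda - 1 \leq \Lambda e^\Lambda = f(\Lambda)$, this delivers the target bound on $\E_{\T}[\Psi(\b{U})]$.

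To finish, I pass from $\Psi$ to $\Pi^\kappa$ exactly as around (\ref{chapter3:proposition:onerlayer.7}) in the single-layer proof: since $\kappa \geq \sigma_1 \sigma_2 \rho$ dominates $\|\varphi_2(\b{W}_2 \varphi_1(\b{W}_1 \b{x}))\|$, Lemma \ref{chapter3:lemma:1} combined with $\|\varphi_2\|_{\rm Lip} \leq 1$ gives $\Pi^\kappa(\b{U}) \leq \E_{\T}[\Psi(\b{U})]$. Finally, since the bound holds in expectation over $\b{h}$, the probabilistic method extracts a deterministic realization $\b{Q} \in \{\pm \ell \nu_1 / k : \ell \in [k]\}^{n_2 \times n_1}$ of $\b{h}$ achieving it, while the set of $\lfloor \alpha n_1 n_2 \rfloor$ swapped indices furnishes the mask $\b{M}$ satisfying (\ref{chapter3:appendix:unstructured.two.layers.proposition:1.1}).
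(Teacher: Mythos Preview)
Your proof is correct and follows essentially the same approach as the paper: greedy interpolation with Taylor expansion, Lemma~\ref{chapter3:lemma:7}(2) to control the second-order term, AM--GM linearization of $\sqrt{\beta_d}$, Lemma~\ref{chapter3:lemma:3} to iterate, the projection inequality via Lemma~\ref{chapter3:lemma:1} to pass from $\Psi$ to $\Pi^\kappa$, and finally the probabilistic method to extract $\b{Q}$. The only noteworthy difference is that the paper factors the interpolation through a general auxiliary proposition (Proposition~\ref{chapter3:appendix:unstructured.two.layers.proposition:3}) which must condition on $\{\b{h}_{ij}=0\}$ versus $\{\b{h}_{ij}\neq 0\}$ to accommodate pruning as well, forcing $\eta=4$; since you work directly with quantization where $|\b{h}_{ij}-[\b{W}_1]_{ij}|\leq \nu_1/k$ uniformly, you correctly avoid the conditioning and may take $\eta=2$, yielding slightly sharper intermediate constants that still land inside the stated $\Lambda,\omega$.
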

Next, we state this section's result for pruning.
\begin{proposition}\label{chapter3:appendix:unstructured.two.layers.proposition:2}
    Suppose the activations $\varphi_1, \varphi_2$ satisfy (\ref{chapter3:assumption:1.1}) and (\ref{chapter3:assumption:1.2}), and $\varphi_1$ also satisfies (\ref{chapter3:assumption:1.3}) in Assumption \ref{chapter3:assumption:1}. Let $p,\gamma, \varepsilon \in (0, 1)$, $\alpha \in \l(\frac{1}{n_1 n_2}, 1-\frac{1}{n_1 n_2}\r], \sigma_i = \|\b{W}_i\|, \nu_i = \|\b{W}_i\|_{\infty}$ for $i=1,2$,  and $\kappa\in [\sigma_1 \sigma_2 \rho, \infty)$. There exists a constant $n_0 = n_0(\gamma, p, \alpha, \varepsilon)$ such that if $n_1\vee n_2 \geq n_0$, then there exists a mask matrix $\b{M} \in \l\{0, 1, \frac{1}{p}\r\}^{n_2\times n_1}$ satisfying
    \begin{align}
        \frac{\|\b{M}\|_0}{n_1 n_2} \leq 1- \frac{\lfloor \alpha n_1 n_2 \rfloor}{n_1 n_2} + \frac{\lfloor \alpha n_1 n_2 \rfloor}{n_1 n_2}(1+\gamma)p,  \label{chapter3:appendix:unstructured.two.layers.proposition:2.1}
    \end{align}
     such that 
    \begin{align}
        \Pi^{\kappa}(\b{M} \odot \b{W}_1) &\leq   (1+\varepsilon)(1 + f\l(\Lambda \r))(\sigma_1 \sigma_2)^2 \E\l[\|\b{x} - \b{x}'\|^2\r]  +(1+\varepsilon)f\l(\Lambda \r) \omega,   \label{chapter3:appendix:unstructured.two.layers.proposition:2.2}
    \end{align}
 where $\Lambda = \frac{2 \alpha \theta (1-p) \sigma_2 \nu_1^2 \sqrt{n_2}}{p(1-\alpha)}$,  $\omega =  \theta^2+\frac{2\theta \nu_2^2 n_3 \sqrt{n_2}}{\sigma_2} +\frac{2\theta^2(p\vee (1-p)) \nu_1 \nu_2 \sqrt{n_2 n_3}}{p} $, and $f(x) = xe^x$.
\end{proposition}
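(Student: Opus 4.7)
The plan is to mirror the proof of Proposition \ref{chapter3:appendix:unstructured.single.proposition:2}, extending the single-layer pruning argument to the two-layer setting by combining the greedy interpolation of Section \ref{chapter3:appendix:unstructured.single.layer} with the sharper second-derivative bound of Lemma \ref{chapter3:lemma:7}, item 2. As in the single-layer case I would first establish an auxiliary proposition, the direct analogue of Proposition \ref{chapter3:appendix:unstructured.single.proposition:3}, asserting that for any random $\b{h}\in\R^{n_2\times n_1}$ with independent entries satisfying $\E[\b{h}_{ij}]=[\b{W}_1]_{ij}$, $\Var(\b{h}_{ij})\le \phi^2 \nu_1^2$, and $|\b{h}_{ij}-[\b{W}_1]_{ij}|\le \tau \nu_1$ almost surely, there exists a deterministic mask $\b{M}\in\{0,1\}^{n_2\times n_1}$ with $\|\b{M}\|_0=n_1 n_2-\lfloor \alpha n_1 n_2\rfloor$ such that $\E_{\b{h}}\l[\Pi^{\kappa}\l((\b{1}\b{1}^\top-\b{M})\odot \b{h}+\b{M}\odot \b{W}_1\r)\r]$ is bounded by an expression of the form $(1+f(\Lambda))(\sigma_1\sigma_2)^2 \E\l[\|\b{x}-\b{x}'\|^2\r]+f(\Lambda)\omega$ with $\Lambda,\omega$ of the advertised shape but parametrized by $(\phi,\tau)$ rather than $p$. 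This is the exact analogue of Proposition \ref{chapter3:appendix:unstructured.two.layers.proposition:1}, now with generic $(\tau,\phi^2)$ in place of the quantization parameters $(1/k,1/k^2)$.

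To prove this auxiliary proposition I run the same greedy interpolation: at each step $d$ I freeze a pair $(i,j)\in\Sm_1^d$ and write, using Taylor's theorem with remainder,
\[
\E_{\{\b{h}_{ij}\}\cup\T^d}\l[\Psi(\b{U}^d(\b{h}_{ij};i,j))\r]-\E_{\T^d}\l[\Psi(\b{U}^d)\r] = \tfrac{1}{2}\,\E\l[(\b{h}_{ij}-[\b{W}_1]_{ij})^2\,\partial_t^2\Psi(\b{U}^d(\xi;i,j))\r],
\]
the first-order term vanishing because $\E[\b{h}_{ij}]=[\b{W}_1]_{ij}$. Unlike the single-layer case, $\Psi$ is no longer quadratic in $t$, so the remainder must be controlled by Lemma \ref{chapter3:lemma:7}, item 2, applied with $\eta=2$: for at least half the pairs in $\Sm_1^d$ the expected modulus of $\partial_t^2\Psi$ is at most $\frac{4\theta^2\sigma_2\sqrt{n_2}}{|\Sm_1^d|}\bigl(\sqrt{\beta_d}+\frac{\nu_2^2 n_3\sqrt{n_2}}{\sigma_2}+\tau\theta\nu_1\nu_2\sqrt{n_2 n_3}\bigr)$, where $\beta_d\triangleq \E_{\T^d}[\Psi(\b{U}^d)]$. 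Picking the pair with the smallest score then gives a recursion of the form $|\beta_{d+1}-\beta_d|\le \frac{C\phi^2\nu_1^2}{|\Sm_1^d|}\l(\sqrt{\beta_d}+R\r)$ with $R$ absorbing the two lower-order terms.

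The main technical step is to iterate this over $d\in[0,\lfloor\alpha n_1 n_2\rfloor-1]$. I would convert the awkward $\sqrt{\beta_d}$ into a multiplicative term via AM-GM, $\sqrt{\beta_d}\le \beta_d/(2A)+A/2$ for a parameter $A$ to be optimized, reducing the recursion to $|\beta_{d+1}-\beta_d|\le u\beta_d+v$ so that Lemma \ref{chapter3:lemma:3} applies and yields $\beta_N\le e^{Nu}\beta_0+(e^{Nu}-1)v/u$ with $N=\lfloor\alpha n_1 n_2\rfloor$. Choosing $A$ so that $Nu$ matches the target $\Lambda$ and $v/u$ matches $\omega$ produces precisely the $(1+f(\Lambda))\beta_0+f(\Lambda)\omega$ form, since $f(x)=xe^x$ arises naturally from $e^{Nu}-1\le Nu\, e^{Nu}$. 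Passing from $\Psi$ to $\Pi^{\kappa}$ is identical to the single-layer argument: by (\ref{chapter3:assumption:1.1})--(\ref{chapter3:assumption:1.2}) the output $\varphi_2(\b{W}_2\varphi_1(\b{W}_1\b{x}))$ has norm at most $\sigma_1\sigma_2\rho\le\kappa$, so Lemma \ref{chapter3:lemma:1} combined with $\|\varphi_2\|_{\rm Lip}\le 1$ gives $\Pi^{\kappa}(\b{U})\le \Psi(\b{U})$.

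The proposition itself then follows by applying the auxiliary result to $\b{h}=(\b{Ber}^{n_2,n_1}(p)/p)\odot \b{W}_1$, for which $\phi^2=(1-p)/p$ and $|\b{h}_{ij}-[\b{W}_1]_{ij}|\le \nu_1(p\vee(1-p))/p$, which reproduces exactly the $\Lambda$ and $\omega$ of (\ref{chapter3:appendix:unstructured.two.layers.proposition:2.2}). Writing $\b{U}=\b{b}+\b{M}\odot \b{W}_1$ with $\b{b}=(\b{1}\b{1}^\top-\b{M})\odot \b{h}$ exhibits $\b{b}$ as a function of at most $\lfloor\alpha n_1 n_2\rfloor$ independent ${\rm Bernoulli}(p)$ coordinates, and Lemma \ref{chapter3:lemma:4} then converts the in-expectation bound into a deterministic realization $\hat{\b{b}}$ with $\|\hat{\b{b}}\|_0\le (1+\gamma)p\lfloor\alpha n_1 n_2\rfloor$ and loss inflated by at most a factor $1+\varepsilon$, provided $n_1\vee n_2\ge n_0(\gamma,p,\alpha,\varepsilon)$. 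The resulting matrix $\hat{\b{U}}$ has entries lying in $\{0,1,1/p\}\cdot \b{W}_1$, yielding the sparsity bound (\ref{chapter3:appendix:unstructured.two.layers.proposition:2.1}). I expect the main obstacle to be the AM-GM trade-off: the parameter $A$ must be selected so that $Nu$ is small enough for $e^{Nu}-1$ to be well approximated by $f(Nu)$, while $v/u$ simultaneously aligns with the stated $\omega$; carrying through the bookkeeping on the constants $\nu_i,\sigma_i,\theta,n_2,n_3$ is the only delicate piece.
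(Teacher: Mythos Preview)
Your plan is essentially the paper's argument, and the final derandomization via Lemma~\ref{chapter3:lemma:4} is identical. However, there is one genuine technical gap in how you pass from the Taylor remainder to the recursion with the factor $\phi^2$.

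In the single-layer case, $t\mapsto\Psi(\b{U}^d(t;i,j))$ is quadratic, so $\partial_t^2\Psi$ is a constant independent of the evaluation point, and the remainder factors as $\tfrac12\Var(\b{h}_{ij})\cdot\partial_t^2\Psi$, giving the $\phi^2\nu_1^2$ you want. In the two-layer case this fails: $\Psi$ is not quadratic in $t$, so the Lagrange remainder is $\tfrac12(\b{h}_{ij}-[\b{W}_1]_{ij})^2\,\partial_t^2\Psi(\b{U}^d(\hat t_{ij};i,j))$ with $\hat t_{ij}$ a random point depending on $\b{h}_{ij}$. The two factors are therefore correlated, and you cannot simply pull out $\E[(\b{h}_{ij}-[\b{W}_1]_{ij})^2]=\phi^2\nu_1^2$. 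If instead you use the almost-sure bound $(\b{h}_{ij}-[\b{W}_1]_{ij})^2\le\tau^2\nu_1^2$ and then apply Lemma~\ref{chapter3:lemma:7}, the prefactor becomes $\tau^2=(p\vee(1-p))^2/p^2$, which for $p<1/2$ is $(1-p)^2/p^2$ and strictly exceeds $\phi^2=(1-p)/p$; the resulting $\Lambda$ then no longer matches the statement.

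The paper's fix is to parametrize the auxiliary proposition not by a single pair $(\phi,\tau)$ but by two bounds $\tau_0,\tau_1$ with $|\b{h}_{ij}-[\b{W}_1]_{ij}|\le\nu_1(\tau_0\,1_{\b{h}_{ij}=0}+\tau_1\,1_{\b{h}_{ij}\neq 0})$, then condition on $\{\b{h}_{ij}=0\}$ and $\{\b{h}_{ij}\neq 0\}$ \emph{before} invoking Lemma~\ref{chapter3:lemma:7}. On each event the prefactor $(\b{h}_{ij}-[\b{W}_1]_{ij})^2$ is bounded by $\tau_0^2\nu_1^2$ or $\tau_1^2\nu_1^2$, and Lemma~\ref{chapter3:lemma:7} is applied separately to each conditional distribution of $\hat t_{ij}$. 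Since the two ``good'' sets $\mathcal C_0,\mathcal C_1\subset\Sm_1^d$ produced by the lemma need not coincide, one must take $\eta=4$ (not $\eta=2$) so that $|\mathcal C_0\cap\mathcal C_1|\ge(1-2/\eta)|\Sm_1^d|\ge 1$. Recombining yields the prefactor $\Delta=u\tau_0^2+(1-u)\tau_1^2$ with $u=\mathbb P(\b{h}_{ij}=0)$; for the Bernoulli choice this is exactly $(1-p)+\tfrac{(1-p)^2}{p}=\tfrac{1-p}{p}=\phi^2$, while $\tau=\tau_0\vee\tau_1=(p\vee(1-p))/p$ survives only in the lower-order term of $\omega$. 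With this correction, the rest of your outline (AM--GM with $A=\theta$, Lemma~\ref{chapter3:lemma:3}, $\Pi^\kappa\le\Psi$ via Lemma~\ref{chapter3:lemma:1}, then Lemma~\ref{chapter3:lemma:4}) matches the paper exactly.
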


\subsubsection{Proof of Propositions \ref{chapter3:appendix:unstructured.two.layers.proposition:1} and \ref{chapter3:appendix:unstructured.two.layers.proposition:2}}
We first show the following proposition.
\begin{proposition}\label{chapter3:appendix:unstructured.two.layers.proposition:3}
    Suppose the activations $\varphi_1, \varphi_2$ satisfy (\ref{chapter3:assumption:1.1}) and (\ref{chapter3:assumption:1.2}), and $\varphi_1$ also satisfies (\ref{chapter3:assumption:1.3}) in Assumption \ref{chapter3:assumption:1}. Let $\alpha \in \l(\frac{1}{n_1 n_2}, 1-\frac{1}{n_1 n_2}\r],  \sigma_i = \|\b{W}_i\|, \nu_i = \|\b{W}_i\|_{\infty}$ for $i=1,2$,  and $\kappa\in [\sigma_1 \sigma_2 \rho, \infty)$. Suppose $\b{h} \in \R^{n_2 \times n_1}$ is a random matrix with independent entries $\b{h}_{ij}$ satisfying 
    \begin{align}
        \forall (i, j) \in [n_2] \times [n_1], \quad  \E[\b{h}_{ij}] &= [\b{W}_1]_{ij}, \quad \text{and } \quad |\b{h}_{ij} - [\b{W}_1]_{ij}| \leq \nu_1  (\tau_1 1_{  \b{h}_{ij} \neq 0} + \tau_0 1_{  \b{h}_{ij} = 0}),\label{chapter3:appendix:unstructured.two.layers.proposition:3.1}
    \end{align}
    where $ \tau_1, \tau_0 \in \mathbb{R}_{\geq 0}$. Furthermore, let $\Delta = \max_{(i,j)\in[n_2]\times [n_1]} (\tau_0^2 \p(\b{h}_{ij}=0) + \tau_1^2 \p(\b{h}_{ij}\neq 0))$. Then, there exists a (nonrandom) mask matrix $\b{M} \in \l\{0, 1\r\}^{n_2\times n_1}$ satisfying
    \begin{align}
        \frac{\|\b{M}\|_0}{n_1 n_2} = 1- \frac{\lfloor \alpha n_1 n_2 \rfloor}{n_1 n_2}, \label{chapter3:appendix:unstructured.two.layers.proposition:3.2}
    \end{align}
     such that 
    \begin{align}
        \E_{\b{h}}\l[\Pi^{\kappa}\l( (\b{1} \b{1}^\top - \b{M})\odot \b{h} + \b{M} \odot \b{W}_1\r) \r] &\leq   (1 + f\l(\Lambda\r)) (\sigma_1 \sigma_2)^2 \E\l[\|\b{x} - \b{x}'\|^2\r] + f\l(\Lambda \r) \omega, \label{chapter3:appendix:unstructured.two.layers.proposition:3.3}
    \end{align}
    where $\Lambda = \frac{2 \alpha \theta \Delta \sigma_2 \nu_1^2  \sqrt{n_2}}{1-\alpha}$,  $\omega =  \theta^2+\frac{2\theta \nu_2^2 n_3 \sqrt{n_2}}{\sigma_2} + 2\theta^2 \tau \nu_1 \nu_2 \sqrt{n_2 n_3} $, $\tau = \tau_0\vee \tau_1$ and $f(x) = xe^x$.
\end{proposition}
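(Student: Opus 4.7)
The plan is to mirror the proof of Proposition \ref{chapter3:appendix:unstructured.single.proposition:3}, and I would set up the same iterative switching scheme: sequences $(\b{U}^d, \Sm_1^d, \Sm_2^d, \T^d)$ with $\b{U}^0 = \b{W}_1$, where at step $d$ one entry $[\b{W}_1]_{ij}$ with $(i,j) \in \Sm_1^d$ is replaced by its randomized counterpart $\b{h}_{ij}$, and the process stops after $\lfloor \alpha n_1 n_2 \rfloor$ switches. The key new ingredient is that $t \mapsto \Psi(\b{U}^d(t;i,j))$ is no longer quadratic, since $\varphi_1$ is nonlinear, so I would invoke a second-order Taylor expansion around $t = [\b{W}_1]_{ij}$. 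Because $\E[\b{h}_{ij}] = [\b{W}_1]_{ij}$, the first-order term vanishes in expectation and only the second-order remainder survives. For this remainder I would rely on the second part of Lemma \ref{chapter3:lemma:7}, which, for at least half of the indices in $\Sm_1^d$ (taking $\eta = 2$), bounds the expected second derivative by a quantity proportional to $\sqrt{\E_{\T^d}[\Psi(\b{U}^d)]} + \nu_2^2 n_3 \sqrt{n_2}/\sigma_2 + \theta \tau \nu_1 \nu_2 \sqrt{n_2 n_3}$.

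I would then pick the index with the smallest score, combine this with the moment bound $\E(\b{h}_{ij} - [\b{W}_1]_{ij})^2 \leq \Delta \nu_1^2$ implied by (\ref{chapter3:appendix:unstructured.two.layers.proposition:3.1}), and use $|\Sm_1^d| \geq (1-\alpha)n_1 n_2$ to obtain a one-step recursion of the form
\begin{align*}
|\beta_{d+1} - \beta_d| \leq \frac{C_1}{n_1 n_2}\l(\sqrt{\beta_d} + C_2\r),
\end{align*}
where $\beta_d \triangleq \E_{\T^d}[\Psi(\b{U}^d)]$ and $C_1, C_2$ collect the appropriate products of $\theta, \Delta, \sigma_2, \nu_1, \nu_2, n_2, n_3, \tau$. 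The main technical hurdle, absent in the single-layer case, is the self-referential term $\sqrt{\beta_d}$: I would linearize it via AM-GM ($\sqrt{\beta_d} \leq \tfrac{1}{2}(1+\beta_d)$) to bring the recursion into the form $|\beta_{d+1}-\beta_d| \leq u\beta_d + v$ required by Lemma \ref{chapter3:lemma:3}. Applying Lemma \ref{chapter3:lemma:3} with $n = \lfloor \alpha n_1 n_2 \rfloor$ yields $\beta_n \leq e^{nu}\beta_0 + (v/u)(e^{nu} - 1)$; I would then identify $nu$ with $\Lambda$ (up to the bookkeeping of constants) and use the elementary inequality $e^x - 1 \leq xe^x = f(x)$, so that $\beta_n \leq \beta_0 + f(\Lambda)\beta_0 + f(\Lambda)\omega$ with $\omega = v/u$. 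Since $\beta_0 = \Psi(\b{W}_1) \leq (\sigma_1\sigma_2)^2 \E\|\b{x}-\b{x}'\|^2$, this matches exactly the form of (\ref{chapter3:appendix:unstructured.two.layers.proposition:3.3}). Getting the constants in $\Lambda$ and $\omega$ to line up precisely requires care when splitting the contributions of $\sqrt{\beta_d}$ between the $\beta_d$-term and the absolute-constant term.

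Finally, the passage from $\Psi$ to $\Pi^\kappa$ proceeds exactly as in the single-layer case. Since $\|\varphi_2(\b{W}_2 \varphi_1(\b{W}_1 \b{x}))\| \leq \sigma_1 \sigma_2 \rho \leq \kappa$ under the assumption on $\kappa$, Lemma \ref{chapter3:lemma:1} lets me drop the outer projection without increasing the $\ell_2$ distance, and the Lipschitz bound $\|\varphi_2\|_{\rm Lip} \leq 1$ removes the outer activation, yielding $\E_\T[\Pi^\kappa(\b{U})] \leq \E_\T[\Psi(\b{U})]$. The nonrandom mask $\b{M}_{ij} = 1_{(i,j) \not\in \mathcal{K}}$ with $\mathcal{K} = \Sm_2^{\lfloor \alpha n_1 n_2\rfloor}$ has exactly $n_1 n_2 - \lfloor \alpha n_1 n_2\rfloor$ nonzero entries by construction, giving (\ref{chapter3:appendix:unstructured.two.layers.proposition:3.2}) and completing the argument.
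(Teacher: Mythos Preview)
Your overall architecture matches the paper's proof, and the steps for the $\Psi \to \Pi^\kappa$ passage, the mask construction, and the use of Lemma~\ref{chapter3:lemma:3} are all correct. However, there is a genuine gap in how you propose to combine the second-moment bound with Lemma~\ref{chapter3:lemma:7}. After the Taylor expansion, the remainder term is $\tfrac{1}{2}(\b{h}_{ij}-[\b{W}_1]_{ij})^2\,\partial_t^2\Psi(\b{U}^d(\hat t_{ij};i,j))$, where $\hat t_{ij}$ lies between $[\b{W}_1]_{ij}$ and $\b{h}_{ij}$. The two factors are correlated through $\b{h}_{ij}$, so you cannot simply multiply $\E[(\b{h}_{ij}-[\b{W}_1]_{ij})^2]\leq \Delta\nu_1^2$ by the Lemma~\ref{chapter3:lemma:7} bound on $\E[|\partial_t^2\Psi(\hat t_{ij})|]$. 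If instead you pull out the almost-sure bound $(\b{h}_{ij}-[\b{W}_1]_{ij})^2\leq \tau^2\nu_1^2$, you get $\tau^2=(\tau_0\vee\tau_1)^2$ in $\Lambda$, not $\Delta$, which is strictly worse and does not match the statement. The paper resolves this by conditioning on the two events $\{\b{h}_{ij}=0\}$ and $\{\b{h}_{ij}\neq 0\}$: on each event the first factor is bounded by a constant ($\tau_0^2\nu_1^2$ or $\tau_1^2\nu_1^2$), and Lemma~\ref{chapter3:lemma:7} is applied separately to each conditional distribution of $\hat t_{ij}$, producing two good sets $\mathcal{C}_0,\mathcal{C}_1$. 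Their intersection is nonempty only once you take $\eta=4$ (not $\eta=2$), and the probability-weighted recombination $u\tau_0^2+(1-u)\tau_1^2\leq\Delta$ is precisely what delivers the stated $\Lambda$.

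A secondary point: your AM--GM linearization $\sqrt{\beta_d}\leq (1+\beta_d)/2$ is dimensionally off from the paper's $\sqrt{\beta_d}\leq \beta_d/(2\theta)+\theta/2$. The latter is what produces the single power of $\theta$ in $\Lambda$ and the $\theta^2$ additive term in $\omega$; your version would give an extra $\theta$ in $\Lambda$ and a $1$ in place of $\theta^2$ in $\omega$. You flagged this as bookkeeping, but it is the specific choice that makes the constants match.
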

\begin{proof}
 Similarly to the proof of Proposition \ref{chapter3:appendix:unstructured.single.proposition:3}, we use an interpolation argument whereby we iteratively pick \textit{suitable} indices $(i, j) \in [n_2]\times [n_1]$ and switch the  $(i,j)$-th weight  from $[\b{W}_1]_{ij}$ to $\b{h}_{ij}$. Reusing the same notation, we have by Taylor's
\begin{align}
    \Psi(\b{U}^{d}(\b{h}_{ij};i,j)) - \Psi(\b{U}^{d}) &= \Psi(\b{U}^{d}(\b{h}_{ij};i,j)) - \Psi(\b{U}^{d}([\b{W}_1]_{ij};i, j)) \nonumber \\
    &= (\b{h}_{ij}-[\b{W}_1]_{ij})\frac{\partial \Psi(\b{U}^{d}([\b{W}_1]_{ij};i,j))}{\partial t}  \nonumber\\
    &+ \frac{(\b{h}_{ij}  - [\b{W}_1]_{ij})^2}{2}\frac{\partial^2 \Psi(\b{U}^{d}(\hat{t}_{ij};i,j))}{\partial t^2}, \nonumber
\end{align}
 where $\hat{t}_{ij}$ are random variables in $([\b{W}_1]_{ij} \wedge \b{h}_{ij}, [\b{W}_1]_{ij}\vee\b{h}_{ij})$. Let  $u=\p(\b{h}_{ij}=0)$ and suppose that $u\in (0, 1)$ (the cases $u\in \{0, 1\}$ are identical in treatment, and thus are skipped). Taking the expectation of the Taylor expansion over $\{\b{h}_{ij}\}\cup \mathcal{T}^d$ and using (\ref{chapter3:appendix:unstructured.two.layers.proposition:3.1}) we obtain
\begin{align}
    &\l|\E_{\{\b{h}_{ij}\}\cup \T^d}\l[ \Psi(\b{U}^{d}(\b{h}_{ij};i,j)) \r]- \E_{\T^d}\l[\Psi(\b{U}^{d}) \r]  \r| \nonumber \\
    &\leq \frac{ u \tau_0^2 \nu_1^2 }{2} \E_{\{\b{h}_{ij}\} \cup \T^d}\l[ \l| \frac{\partial^2 \Psi(\b{U}^{d}(\hat{t}_{ij};i,j))}{\partial t^2} \r| \biggm| \b{h}_{ij}=0 \r]\nonumber\\
    &+ \frac{ (1-u)\tau_1^2 \nu_1^2 }{2} \E_{\{\b{h}_{ij}\} \cup \T^d}\l[ \l| \frac{\partial^2 \Psi(\b{U}^{d}(\hat{t}_{ij};i,j))}{\partial t^2} \r| \biggm| \b{h}_{ij}\neq 0 \r].\label{chapter3:appendix:unstructured.two.layers.proposition:3.4}
\end{align}
Let $\eta \in \R_{\geq 1}$ and introduce
\begin{alignat}{2}
    \mathcal{C}_0 &\triangleq \Bigg\{ (i, j) \in \mathcal{S}^d_1 \Biggm| \quad \E_{\{\b{h}_{ij}\} \cup \T^d}&&\l[ \l|\frac{\partial^2 \Psi(\b{U}^{d}(\hat{t}_{ij};i,j))}{\partial t^2} \r| \Biggm| \b{h}_{ij}=0  \r] \leq  \frac{2\eta  \theta^2 \sigma_2 \sqrt{n_2}}{|\Sm^d_1|} \nonumber\\
    & &&\times \l(  \sqrt{\E_{\T^d} \l[ \Psi(\b{U}^d) \r]} + \frac{\nu_2^2 n_3\sqrt{n_2}}{\sigma_2}  + \theta \tau_0  \nu_1 \nu_2 \sqrt{n_2 n_3}\r)\Bigg\},\nonumber\\
    \mathcal{C}_1 &\triangleq \Bigg\{ (i, j) \in \mathcal{S}^d_1 \Biggm| \quad \E_{\{\b{h}_{ij}\} \cup \T^d}&&\l[ \l|\frac{\partial^2 \Psi(\b{U}^{d}(\hat{t}_{ij};i,j))}{\partial t^2} \r| \Biggm| \b{h}_{ij}\neq 0  \r] \leq  \frac{2\eta  \theta^2 \sigma_2 \sqrt{n_2}}{|\Sm^d_1|}  \nonumber\\
    & && \times\l( \sqrt{\E_{\T^d} \l[ \Psi(\b{U}^d) \r]} + \frac{\nu_2^2 n_3\sqrt{n_2}}{\sigma_2}  + \theta \tau_1  \nu_1 \nu_2 \sqrt{n_2 n_3}\r)\Bigg\} \nonumber.
\end{alignat}
Using Lemma \ref{chapter3:lemma:7}, it follows that  $|\mathcal{C}_0|, |\mathcal{C}_1| \geq \l(1- \frac{1}{\eta}\r)|\mathcal{S}^d_1|$.  Since $\alpha\leq1-\frac{1}{n_1 n_2}$, we have $|\Sm^d_1|\geq n_1 n_2 - \lfloor \alpha n_1 n_2 \rfloor + 1 \geq 2$. Setting $\eta=4$, it follows that $|\mathcal{C}_0 \cap \mathcal{C}_1| \geq |\mathcal{C}_0|+|\mathcal{C}_1|-|\Sm^d_1| \geq \l(1- \frac{2}{\eta}\r)|\Sm^d_1|  \geq (1-2/\eta)2=1$, and thus $\mathcal{C}_0 \cap \mathcal{C}_1  \neq \emptyset$. Henceforth, if we pick the pair $(i,j) \in \Sm_1^d$ with the smallest \textit{score} given by $\l|\E_{\{\b{h}_{ij}\}\cup \T^d}\l[ \Psi(\b{U}^{d}(\b{h}_{ij};i,j)) \r] - \E_{\T^d}\l[\Psi(\b{U}^{d}) \r]  \r|$, it follows from (\ref{chapter3:appendix:unstructured.two.layers.proposition:3.4}) that
\begin{align}
    &\l| \E_{\{\b{h}_{ij}\}\cup \T^d}\l[ \Psi(\b{U}^{d}(\b{h}_{ij};i,j)) \r]  - \E_{\T^d}\l[\Psi(\b{U}^{d})\r] \r| \nonumber\\
    &\leq \frac{4\theta^2   (u \tau_0^2 + (1-u)\tau_1^2) \sigma_2 \nu_1^2 \sqrt{n_2}}{|\Sm^d_1| }  \l(  \sqrt{\E_{\T^d} \l[ \Psi(\b{U}^d) \r]} + \frac{\nu_2^2 n_3\sqrt{n_2}}{\sigma_2}  + \theta \tau  \nu_1 \nu_2 \sqrt{n_2 n_3}\r)\nonumber\\
    &\leq \frac{4 \theta^2 \Delta  \sigma_2 \nu_1^2   \sqrt{n_2 }}{|\Sm^d_1|} \l(  \frac{\E_{\T^d} \l[ \Psi(\b{U}^d) \r]}{2\theta} + \frac{\theta}{2} + \frac{\nu_2^2 n_3\sqrt{n_2}}{\sigma_2}  + \theta \tau  \nu_1 \nu_2 \sqrt{n_2 n_3}\r)\label{chapter3:appendix:unstructured.two.layers.proposition:3.5} \\
    &\leq \frac{2 \theta \Delta \sigma_2 \nu_1^2 \sqrt{n_2 }}{|\Sm^d_1|} \l(  \E_{\T^d} \l[ \Psi(\b{U}^d) \r] +  \theta^2+\frac{2\theta \nu_2^2 n_3 \sqrt{n_2}}{\sigma_2} + 2\theta^2 \tau \nu_1 \nu_2 \sqrt{n_2 n_3} \r), \label{chapter3:appendix:unstructured.two.layers.proposition:3.6}
\end{align}
where we used the arithmetic-geometric inequality in (\ref{chapter3:appendix:unstructured.two.layers.proposition:3.5}). Suppose we repeat this switching operation as long as $|\mathcal{S}^d_1|> n_1 n_2 - \lfloor\alpha n_1 n_2\rfloor$, and let $\mathcal{K}= \Sm^{\lfloor \alpha n_1 n_2 \rfloor}_2$. That is, $\mathcal{K}$ is the set of all swapped pairs $(i, j)$ at the end of the interpolation process, so that $|\mathcal{K}|= \lfloor \alpha n_1 n_2 \rfloor$. Let $\beta_d =\E_{\T^d}[\Psi(\b{U}^d)]$ for $d\in [0, \lfloor \alpha n_1 n_2 \rfloor]$ and note that if we switch entry $(i, j)$  at the $d$-th step, then $\beta_d = \E_{\T^d}\l[\Psi(\b{U}^{d}([\b{W}_1]_{ij};i,j))\r]=\E_{\T^d}\l[\Psi(\b{U}^{d})\r]$ and 
$\beta_{d+1} = \E_{\T^{d+1}}\l[\Psi(\b{U}^{d}(\b{h}_{ij};i,j))\r]$. Therefore, we have using (\ref{chapter3:appendix:unstructured.two.layers.proposition:3.6}) that
\begin{align}
    |\beta_{d+1} - \beta_{d}| &\leq \lambda \l(  \beta_d + \omega \r),\nonumber
\end{align}
where  $\lambda = \frac{2\theta \Delta \sigma_2 \nu_1^2  }{(1-\alpha)n_1 \sqrt{n_2}}$,  $\omega =  \theta^2+\frac{2\theta \nu_2^2 n_3 \sqrt{n_2}}{\sigma_2} + 2\theta^2 \tau \nu_1 \nu_2 \sqrt{n_2 n_3} $, and we used $|\Sm^d_1|\geq (1-\alpha)n_1 n_2$. Using Lemma \ref{chapter3:lemma:3}, we obtain
\begin{align}
    |\beta_{d+1} - \beta_d| &\leq \exp\l(\lambda \lfloor \alpha n_1 n_2 \rfloor\r)\lambda (\beta_0 + \omega).  \nonumber
\end{align}
Summing the above over $d\in [0, \lfloor \alpha n_1 n_2 \rfloor -1]$ we obtain
\begin{align}
    |\beta_{\lfloor \alpha n_1 n_2 \rfloor} - \beta_0| &\leq \exp\l( \lambda \lfloor \alpha n_1 n_2 \rfloor\r) \lambda \lfloor \alpha n_1 n_2 \rfloor (\beta_0 + \omega) \nonumber \\
    &\leq f\l(\lambda \alpha n_1 n_2 \r)\l(\beta_0 + \omega\r) \nonumber \\
    &= f(\Lambda)(\beta_0 + \omega), \nonumber
\end{align}
where $\Lambda = \lambda\alpha n_1 n_2=\frac{2 \alpha \theta \Delta \sigma_2 \nu_1^2  \sqrt{n_2}}{1-\alpha}$. Let $\b{U} = \b{U}^{\lfloor \alpha n_1 n_2 \rfloor}$, and $\T = \T^{\lfloor \alpha n_1 n_2 \rfloor}$. Since $\|\b{\varphi}_1\|_{\rm Lip} \leq 1$ by (\ref{chapter3:assumption:1.2}), we have that  $\beta_0 = \Psi(\b{W}_1) \leq (\sigma_1 \sigma_2)^2 \E\l[\|\b{x} - \b{x}'\|^2\r]$.  Therefore,
\begin{align}
    \E_{\T}[\Psi(\b{U})]  &\leq  (1 + f\l(\Lambda \r)) (\sigma_1 \sigma_2)^2 \E\l[\|\b{x} - \b{x}'\|^2\r] + f\l(\Lambda \r) \omega \label{chapter3:appendix:unstructured.two.layers.proposition:3.7}.
\end{align}
We next show
\begin{align}
    \E_{\T}\l[\Pi^{\kappa}(\b{U})\r] &\leq  (1 + f\l(\Lambda \r)) (\sigma_1 \sigma_2)^2 \E\l[\|\b{x} - \b{x}'\|^2\r] + f\l(\Lambda \r) \omega. \label{chapter3:appendix:unstructured.two.layers.proposition:3.8}
\end{align}
By (\ref{chapter3:appendix:unstructured.two.layers.proposition:3.7}), it suffices to prove that
\begin{align}
    \E_{\T}\l[\Pi^{\kappa}(\b{U})\r] &\leq \E_{\T}\l[\Psi(\b{U})\r],  \label{chapter3:appendix:unstructured.two.layers.proposition:3.9}
\end{align}
which we show next. Fix a pair of vectors $(\b{x}, \b{x}') \in \mathbb{B}^{n_1}_{2}(\rho) \times \mathbb{B}^{n_1}_{2}(\theta)$, and denote by $\b{z}, \b{z}'$ the output vectors $\varphi_2(\b{W}_2 \varphi_1(\b{W}_1 \b{x})),  \varphi_2(\b{W}_2 \varphi_1(\b{U} \b{x}'))$. As $\|\b{x}\|\leq \rho$   it follows from (\ref{chapter3:assumption:1.1}) and (\ref{chapter3:assumption:1.2}) that $  \|\b{z}\| = \|\varphi_2(\b{W}_2\varphi_1(\b{W}_1 \b{x}))\| \leq \sigma_1 \sigma_2 \rho $. Since  $\|\b{z}\|\leq \sigma_1 \sigma_2 \rho $ and $\sigma_1 \sigma_2 \rho \leq \kappa$ from the assumptions of Proposition \ref{chapter3:appendix:unstructured.two.layers.proposition:3}, we have using Lemma \ref{chapter3:lemma:1} that 
\begin{align}
    \|\b{z} - [\b{z}']_{\kappa}\|  &\leq \|\b{z} - \b{z}'\|\nonumber \\
    &= \|\varphi_2(\b{W}_2\varphi_1(\b{W}_1 \b{x})) - \varphi_2(\b{W}_2\varphi_1(\b{U}\b{x}'))\| \nonumber \\
    &\leq \|\b{W}_2 \varphi_1(\b{W}_1\b{x}) - \b{W}_2 \varphi_1(\b{U} \b{x}')\|, \nonumber
\end{align}
where we used  $\|\varphi_2\|_{\rm Lip}\leq 1$ in the last line. Therefore,
\begin{align}
    \E_{\T}\l[\Pi^{\kappa}(\b{U})\r] &= \E_{\T}\E_{(\b{x}, \b{x}')}\l[\|\b{z} - [\b{z}']_{\kappa}\|^2\r]\nonumber\\
    &\leq  \E_{\T}\E_{(\b{x}, \b{x}')}\l[\|\b{W}_2 \varphi_1(\b{W}_1\b{x}) - \b{W}_2 \varphi_1(\b{U} \b{x}')\|^2\r]\nonumber\\
    &= \E_{\T}\l[\Psi(\b{U})\r]. \nonumber
\end{align}
The latter completes the proof of (\ref{chapter3:appendix:unstructured.two.layers.proposition:3.8}). Note that by construction
\begin{align}
    \b{U} =(\b{1}\b{1}^\top-\b{M}) \odot \b{h} + \b{M} \odot \b{W}_1 ,  \nonumber
\end{align}
where $\b{M}_{ij} = 1_{(i,j)\not \in \mathcal{K}}$ is a mask matrix satisfying $\|\b{M}\|_{0} = n_1 n_2 - |\mathcal{K}|= n_1 n_2 - \lfloor \alpha n_1 n_2 \rfloor $. The latter combined with (\ref{chapter3:appendix:unstructured.two.layers.proposition:3.8}) ends the proof of Proposition \ref{chapter3:appendix:unstructured.two.layers.proposition:3}. 
\end{proof}

\subsubsection{Proof of Proposition \ref{chapter3:appendix:unstructured.two.layers.proposition:1}}
\begin{proof}[Proof of Proposition \ref{chapter3:appendix:unstructured.two.layers.proposition:1}]
Let $\b{h}_{ij} = q\l([\b{W}_1]_{ij}; \nu_1, k\r)$ for $(i, j) \in [n_2] \times [n_1]$. We then have by Lemma \ref{chapter3:lemma:0} that $\E[\b{h}_{ij}]=[\b{W}_1]_{ij}$, and $\tau_1=\tau_0 = \frac{1}{k}$. Therefore $\Delta = \frac{1}{k^2}$. Using Proposition \ref{chapter3:appendix:unstructured.two.layers.proposition:3}, there exists a mask matrix $\b{M}\in \{0, 1\}^{n_2 \times n_1}$ satisfying 
\begin{align}
    \frac{\|\b{M}\|_0}{n_1 n_2} &= 1 - \frac{\lfloor \alpha n_1 n_2 \rfloor}{n_1 n_2},\nonumber
\end{align}
and
    \begin{align}
        \E_{\b{h}}\l[\Pi^{\kappa}\l( (\b{1} \b{1}^\top - \b{M})\odot \b{h} + \b{M} \odot \b{W}_1\r) \r] &\leq   (1 + f\l(\Lambda \r)) (\sigma_1 \sigma_2)^2 \E\l[\|\b{x} - \b{x}'\|^2\r] + f\l(\Lambda \r) \omega, \label{chapter3:proposition:twolayer.2}
    \end{align}
    where $\Lambda = \frac{2 \alpha\theta  \sigma_2 \nu_1^2 \sqrt{n_2}}{k^2(1-\alpha)}$ and  $\omega =  \theta^2+\frac{2\theta \nu_2^2 n_3 \sqrt{n_2}}{\sigma_2} + \frac{2\theta^2  \nu_1 \nu_2 \sqrt{n_2 n_3}}{k} $. Since the above inequality holds in expectation over $\b{h}$, it follows that there exists a (nonrandom) realization $\b{Q} \in\l\{ \pm \frac{\ell \nu_1 }{k} \big| \ell \in [k]\r\}^{n_2 \times n_1}$ of $\b{h}$ such that
\begin{align}
    \Pi^{\kappa}\l((\b{1}\b{1}^\top-\b{M}) \odot \b{Q} + \b{M} \odot \b{W}_1  \r) &\leq \E\l[\Pi^{\kappa}\l((\b{1}\b{1}^\top-\b{M}) \odot \b{h} + \b{M} \odot \b{W}_1    \r)\r]\nonumber \\
    &\leq (1 + f\l(\Lambda \r))(\sigma_1 \sigma_2)^2 \E\l[\|\b{x} - \b{x}'\|^2\r] + f\l(\Lambda \r) \omega.\nonumber
\end{align}
This ends the proof of Proposition \ref{chapter3:appendix:unstructured.two.layers.proposition:1}.
\end{proof}

\subsubsection{Proof of Proposition \ref{chapter3:appendix:unstructured.two.layers.proposition:2}} 
\begin{proof}[Proof of Proposition \ref{chapter3:appendix:unstructured.two.layers.proposition:2}]
Let $\b{h} = \frac{\b{Ber}^{n_2, n_1}(p)}{p} \odot \b{W}_1$. We have $\E[\b{h}_{ij}]=[\b{W}_1]_{ij}$. Furthermore, if $[\b{W}_1]_{ij}\neq 0$ then $\tau_0=1, \tau_1=\frac{1-p}{p}$ which yields $\p(\b{h}_{ij}=0) \tau_0^2 + (1-\p(\b{h}_{ij}=0)) \tau_1^2 =1-p + \frac{(1-p)^2}{p} = \frac{1-p}{p}$ and $\tau = \frac{p\vee (1-p)}{p}$. If $[\b{W}_1]_{ij}=0$, then $\b{h}_{ij}=0$ and thus $\tau_0=\tau=0$. Combining both cases, we have $\Delta \leq \frac{1-p}{p}$. Using Proposition \ref{chapter3:appendix:unstructured.two.layers.proposition:3}, there exists a mask matrix $\b{M}\in \{0, 1\}^{n_2 \times n_1}$ satisfying 
\begin{align}
    \frac{\|\b{M}\|_0}{n_1 n_2} &= 1 - \frac{\lfloor \alpha n_1 n_2 \rfloor}{n_1 n_2},  \label{chapter3:appendix:unstructured.two.layers.proposition:2.3}
\end{align}
and
\begin{align}
    \E\l[\Pi^{\kappa}\l((\b{1}\b{1}^\top-\b{M}) \odot \b{h} + \b{M} \odot \b{W}_1  \r)\r] &\leq (1 + f\l(\Lambda \r))(\sigma_1 \sigma_2)^2 \E\l[\|\b{x} - \b{x}'\|^2\r] + f\l(\Lambda \r) \omega, \nonumber
\end{align}
where $\Lambda = \frac{2 \alpha \theta (1-p) \sigma_2 \nu_1^2 \sqrt{n_2}}{p(1-\alpha)}$ and  $\omega =  \theta^2+\frac{2\theta \nu_2^2 n_3 \sqrt{n_2}}{\sigma_2} + \frac{2\theta^2 (p\vee (1-p)) \nu_1 \nu_2 \sqrt{n_2 n_3}}{p} $. Let $\b{U} = (\b{1}\b{1}^\top-\b{M}) \odot \b{h} + \b{M} \odot \b{W}_1 $ and $\b{b} = (\b{1} \b{1}^{\top}  - \b{M}) \odot \b{h}$. Using Lemma \ref{chapter3:lemma:4} and following the same  arguments in the proof of Proposition \ref{chapter3:appendix:unstructured.single.proposition:2}, there exists a positive constant $n_0=n_0(\gamma, p, \alpha, \varepsilon)$ such that if $n_1\vee n_2 \geq n_0$, then there exists a (nonrandom) realization $\hat{\b{b}}, \hat{\b{U}}, \hat{\b{M}}$ of $\b{b}, \b{U}, \b{M}$ such that $\|\hat{\b{b}}\|_0 \leq (1+\gamma)p\lfloor \alpha n_1 n_2 \rfloor $, and
    \begin{align}
        \Pi^{\kappa}(\hat{\b{U}}) &\leq  (1+\varepsilon)(1 + f\l(\Lambda \r))(\sigma_1 \sigma_2)^2 \E\l[\|\b{x} - \b{x}'\|^2\r]+(1+\varepsilon)f\l(\Lambda \r) \omega. 
    \end{align}
    This shows (\ref{chapter3:appendix:unstructured.two.layers.proposition:2.2}). Furthermore, we have using (\ref{chapter3:appendix:unstructured.two.layers.proposition:2.3})
    \begin{align}
        \|\hat{\b{U}}\|_0 &\leq \|\hat{\b{M}}\|_0 + \|\hat{\b{b}}\|_0\nonumber\\
        &\leq \l(1-\frac{\lfloor \alpha n_1 n_2 \rfloor}{n_1 n_2}\r) n_1 n_2 + (1+\gamma)p\lfloor \alpha n_1 n_2 \rfloor.\nonumber
    \end{align}
    Note that $\hat{\b{b}}$ is the entrywise product of a mask with entries in $\l\{0,  \frac{1}{p}\r\}$ and the matrix $\b{W}_1$. Therefore, $\hat{\b{U}}$ is the product of a matrix with entries in $\l\{0, 1, \frac{1}{p}\r\}$ and $\b{W}_1$. Combining the latter with the previous bound readily shows (\ref{chapter3:appendix:unstructured.two.layers.proposition:2.1}) and concludes the proof of the proposition.
\end{proof}

\subsection{Multilayer Perceptron}\label{chapter3:appendix:unstructured.deep}

We show Proposition \ref{chapter3:proposition:1} below. The proof of Proposition \ref{chapter3:proposition:2} is identical to the proof of Proposition \ref{chapter3:proposition:1} with the only difference being the use of Propositions \ref{chapter3:appendix:unstructured.single.proposition:1} and \ref{chapter3:appendix:unstructured.two.layers.proposition:1} instead of Propositions \ref{chapter3:appendix:unstructured.single.proposition:2} and \ref{chapter3:appendix:unstructured.two.layers.proposition:2}. We thus  skip its proof.
\begin{proof}[Proof of Proposition \ref{chapter3:proposition:1}]
    We will construct sparse matrices $\hat{\b{W}}_\ell$ recursively and verify conditions (\ref{chapter3:lemma:9.1}), (\ref{chapter3:lemma:9.2}), and (\ref{chapter3:lemma:9.3}) from Lemma \ref{chapter3:lemma:9}, which would then readily yield (\ref{chapter3:proposition:1.2}) for small enough $\delta$ by an application of (\ref{chapter3:lemma:9.4}). Following the notation of Lemma \ref{chapter3:lemma:9}, let $\b{z}^\ell, \hat{\b{z}}^\ell$ be given recursively by $\b{z}^0 = \hat{\b{z}}^0 = \b{x}$, and for $\ell \in [m]$
    \begin{alignat}{2}
        &\b{z}^{\ell} &&= \varphi_{\ell}(\b{W}_\ell \b{z}^{\ell-1}),\nonumber\\
        &\hat{\b{z}}^{\ell} &&= \varphi_{\ell}(\hat{\b{W}}_\ell [\hat{\b{z}}^{\ell-1}]_{\kappa_{\ell-1}}),\nonumber
    \end{alignat}
    where $\kappa_\ell = c_1^\ell$ and $\hat{\b{W}}_{\ell}$ denotes the $\ell$-th weight matrix of the pruned network $\hat{\Phi}$. Namely, the vectors $\b{z}^\ell, \hat{\b{z}}^\ell$ track the outputs of the dense/pruned networks for $\ell\in [ m]$. Introduce $\sigma_\ell = \|\b{W}_\ell\|$, $\nu_\ell = \|\b{W}_\ell\|_{\infty}$ for $\ell \in [m]$. Let $\alpha\approx 0.99,\gamma \approx 0.01$, and $\varepsilon=\xi/2$. Denote by $n_0^1$ the constant $n_0=n_0(\gamma, p,\alpha, \varepsilon)=n_0(p,\xi)$ in Proposition \ref{chapter3:appendix:unstructured.single.proposition:2}. Similarly, denote by $n_0^2$ the constant $n_0=n_0(\gamma, p,\alpha, \varepsilon)=n_0(p,\xi)$ in Proposition \ref{chapter3:appendix:unstructured.two.layers.proposition:2}. Set $n_0 = n_0(p, \xi) = n_0^1 \vee n_0^2 \vee  \frac{1}{1-\alpha}$. we consider $3$ cases.
    \begin{enumerate}
        \item Case 1. $\ell \not \in \mathcal{W} \cup \mathcal{B} $. In this case, we set $\hat{\b{W}}_\ell = \b{W}_\ell$, which satisfies (\ref{chapter3:lemma:9.1}) in Lemma \ref{chapter3:lemma:9}.

        \item Case 2. $\ell\in \mathcal{W}$. Note that $(\b{z}^{\ell-1}, [\hat{\b{z}}^{\ell-1}]_{\kappa_{\ell-1}}) \in \mathbb{B}^{n_{\ell}}_2(\kappa_{\ell-1}) \times \mathbb{B}^{n_{\ell}}_{2}(\kappa_{\ell-1})$. Applying Proposition \ref{chapter3:appendix:unstructured.single.proposition:2} with $(\b{x}, \b{x}') = (\b{z}^{\ell-1}, [\hat{\b{z}}^{\ell-1}]_{\kappa_{\ell-1}})$,  $(\rho, \theta) = (\kappa_{\ell-1}, \kappa_{\ell-1}),\kappa=\kappa_\ell=c_1 \rho$ and $\alpha\approx0.99, \gamma\approx 0.01$ (note that by construction of $n_0$, the inequality $\alpha \leq 1- 1/n_{\ell}n_{\ell+1}$ holds in the statement of  Proposition \ref{chapter3:appendix:unstructured.single.proposition:2}), it follows that there exists a mask matrix $\b{M}\in \l\{0, 1, \frac{1}{p}\r\}^{n_{\ell+1} \times n_{\ell}}$, such that  $\frac{\|\b{M}\|_0}{n_{\ell} n_{\ell+1}} \leq 0.01 + 1.01p$, and 
            \begin{align}
        \Pi^{\kappa_{\ell}}(\b{M} \odot \b{W}_{\ell}) &\leq \l(1+\frac{\xi}{2}\r)\sigma_\ell^2  \E\l[\|\b{z}^{\ell-1} - [\hat{\b{z}}^{\ell-1}]_{\kappa_{\ell-1}}\|^2\r]  \nonumber \\
        &+  \l(1+\frac{\xi}{2}\r)\underbrace{\frac{2 \alpha \kappa_{\ell-1}^2 (1-p)\nu_{\ell}^2  n_{\ell+1}}{p(1-\alpha)}}_{t} \label{chapter3:proposition:1.case2.1}. 
        \end{align}
        Set $\hat{\b{W}}_{\ell} = \b{M} \odot \b{W}_{\ell} $ in $\hat{\Phi}$. It follows that
        \begin{align}
            \frac{\|\hat{\b{W}}_{\ell}\|_0}{n_{\ell}n_{\ell+1}} &\leq 0.01 + 1.01p. \label{chapter3:proposition:1.case2.2}
        \end{align}
        We now bound $t$. We have using $\nu_{\ell} \leq c_2/\sqrt{n_{\ell}}$ from (\ref{chapter3:assumption:1.5}) 
        \begin{align*}
            t &\leq   \frac{2c_2^2   \kappa_{\ell-1}^2 \alpha (1-p) }{p(1-\alpha)} \frac{n_{\ell+1}}{n_{\ell}}\\
            &=\frac{2c_2^2 c_1^{2(\ell-1)} \alpha(1-p) }{p(1-\alpha)} \frac{n_{\ell+1}}{n_{\ell} }\\
            &\leq 2c_1^{2\ell} \delta,
        \end{align*}
        where we used (\ref{chapter3:proposition:1.w}) in the last line. Set $\varepsilon_1 = \frac{\xi}{2}$ and $\varepsilon_2 = 2\delta\l(1+\frac{\xi}{2}\r)$. It follows that 
        \begin{align}
            \E\l[\|\b{z}^{\ell}-[\hat{\b{z}}^{\ell}]_{\kappa_\ell}\|^2\r]  &=\Pi^{\kappa_{\ell}}(\hat{\b{W}}_{\ell})\nonumber \\ &\leq \l(1+\varepsilon_1\r) c_1^2\E\l[\|\b{z}^{\ell-1} - [\hat{\b{z}}^{\ell-1}]_{\kappa_{\ell-1}}\|^2\r]  + c_1^{2\ell}\varepsilon_2 
        \end{align}
        which satisfies (\ref{chapter3:lemma:9.2}) in Lemma \ref{chapter3:lemma:9} with $\sigma = c_1$.

        \item Case 3. $\ell \in \mathcal{B}$. Note that $(\b{z}^{\ell-1}, [\hat{\b{z}}^{\ell-1}]_{\kappa_{\ell-1}}) \in \mathbb{B}^{n_{\ell}}_2(\kappa_{\ell-1}) \times \mathbb{B}^{n_{\ell}}_{2}(\kappa_{\ell-1})$. Applying Proposition \ref{chapter3:appendix:unstructured.two.layers.proposition:2} with $(\b{x}, \b{x}') = (\b{z}^{\ell-1}, [\hat{\b{z}}^{\ell-1}]_{\kappa_{\ell-1}})$,  $(\rho, \theta) = (\kappa_{\ell-1}, \kappa_{\ell-1}),\kappa=\kappa_{\ell+1}=c_1^2 \rho$ and $\alpha\approx0.99, \gamma\approx 0.01$ (note that by construction of $n_0$, the inequality $\alpha \leq 1- 1/n_{\ell}n_{\ell+1}$ holds in the statement of Proposition \ref{chapter3:appendix:unstructured.two.layers.proposition:2}), it follows that there exists a  mask  matrix $\b{M} \in \l\{0, 1, \frac{1}{p}\r\}^{n_{\ell+1} \times n_{\ell}}$ such that $\frac{\|\b{M}\|_0}{n_{\ell} n_{\ell+1}} \leq 0.01 + 1.01p$, and
        \begin{align}
            \Pi^{\kappa_{\ell+1}}(\b{M} \odot \b{W}_{\ell}) &\leq\l(1+\frac{\xi}{2}\r)(1 + f\l( \Lambda \r))(\sigma_{\ell} \sigma_{\ell+1})^2 \E\l[\|\b{z}^{\ell-1} - [\hat{\b{z}}^{\ell-1}]_{\kappa_{\ell-1}}\|^2\r] \nonumber \\
            &+\l(1+\frac{\xi}{2}\r)f\l( \Lambda \r) \omega,   \label{chapter3:proposition:1.case3.1}
        \end{align}
        where 
        \begin{align*}
            \Lambda &= \frac{2\alpha \kappa_{\ell-1} (1-p) \sigma_{\ell+1}\nu_{\ell}^2\sqrt{n_{\ell+1}}}{p(1-\alpha)},\\
            \omega &=  \kappa_{\ell-1}^2+\frac{2\kappa_{\ell-1} \nu_{\ell+1}^2 n_{\ell+2} \sqrt{n_{\ell+1}}}{\sigma_{\ell+1}} + \frac{2\kappa_{\ell-1}^2(p\vee (1-p))  \nu_{\ell} \nu_{\ell+1} \sqrt{n_{\ell+1} n_{\ell+2}}}{p}.
        \end{align*}
        Set $\hat{\b{W}}_{\ell} = \b{M} \odot \b{W}_{\ell} $ in $\hat{\Phi}$. It follows that
        \begin{align}
            \frac{\|\hat{\b{W}}_{\ell}\|_0}{n_{\ell}n_{\ell+1}} &\leq 0.01 + 1.01p. \label{chapter3:proposition:1.case3.2}
        \end{align}
        We next bound $\Lambda$ and $\omega$. We have using $\nu_{\ell} \leq c_2/\sqrt{n_{\ell+1}}$ from (\ref{chapter3:assumption:1.5}) and (\ref{chapter3:proposition:1.b.1})
        \begin{align}
            \Lambda &\leq \frac{2c_2^2 c_1^{\ell} \alpha(1-p)}{p(1-\alpha)} \frac{1}{\sqrt{n_{\ell+1}}} \leq 2\delta \label{chapter3:proposition:1.case3.3}.
        \end{align}
        Similarly, we have
        \begin{align}
            f(\Lambda)\omega &= e^{\Lambda} \Lambda \l(\kappa_{\ell-1}^2+\frac{2\kappa_{\ell-1} \nu_{\ell+1}^2 n_{\ell+2} \sqrt{n_{\ell+1}}}{\sigma_{\ell+1}} + \frac{2(p\vee (1-p))\kappa_{\ell-1}^2  \nu_{\ell} \nu_{\ell+1} \sqrt{n_{\ell+1} n_{\ell+2}}}{p}\r) \nonumber \\
            &= e^{\Lambda}\Bigg( \frac{2\alpha \kappa_{\ell-1}^3 (1-p)\sigma_{\ell+1}\nu_\ell^2 \sqrt{n_{\ell+1}}}{p(1-\alpha)} + \frac{4\alpha(1-p) \kappa_{\ell-1}^2 \nu_{\ell}^2 \nu_{\ell+1}^2 n_{\ell+1} n_{\ell+2}}{p(1-\alpha)} \\&+ \frac{4\alpha(p(1-p)\vee (1-p)^2) \kappa_{\ell-1}^3 \sigma_{\ell+1} \nu_{\ell}^3 \nu_{\ell+1}n_{\ell+1}\sqrt{n_{\ell+2}}}{p^2(1-\alpha)}\Bigg) \nonumber \\
            &\leq \frac{\alpha(1-p)e^{\Lambda}}{p(1-\alpha)} \l( 2c_2^2 c_1^{3\ell-2}\frac{1}{\sqrt{n_{\ell+1}}} + 4c_2^4 c_1^{2(\ell -1)} \frac{ n_{\ell+2}}{n_{\ell+1}} + \frac{4c_2^4c_1^{3\ell -2} (p\vee (1-p))}{p} \frac{1}{\sqrt{n_{\ell+1}}}\r) \label{chapter3:proposition:1.case3.4}\\
            &\leq \frac{\alpha(1-p)e^{\Lambda}}{p(1-\alpha)} \l( 2c_1^{3\ell-2}\l(c_2^2 + 2c_2^4\l(1\vee \frac{1-p}{p}\r)\r) \frac{1}{\sqrt{n_{\ell+1}}}  + 4c_2^4 c_1^{2(\ell -1)} \frac{ n_{\ell+2}}{n_{\ell+1}} \r) \nonumber  \\
            &\leq c_1^{2(\ell+1)}e^{2\delta} \l(6\delta + 4 \delta\r) \label{chapter3:proposition:1.case3.5} \\
            &=c_1^{2(\ell+1)} 5f(2\delta), \nonumber
        \end{align}
        where we used $\sigma_{\ell+1}\leq c_1$ and $\nu_{\ell}, \nu_{\ell+1}\leq c_2/\sqrt{n_{\ell+1} \vee n_{\ell+2}}$ in line (\ref{chapter3:proposition:1.case3.4}), and (\ref{chapter3:proposition:1.b.1}) and (\ref{chapter3:proposition:1.b.2}) in line (\ref{chapter3:proposition:1.case3.5}).  Setting $\varepsilon_3 = \frac{\xi}{2} + \l(1 + \frac{\xi}{2}\r)f(2\delta)$ and $\varepsilon_4 = \l(1+\frac{\xi}{2}\r)5f(2\delta)$, it follows that 
        \begin{align}
            \E\l[\|\b{z}^{\ell}-[\hat{\b{z}}^{\ell}]_{\kappa_\ell}\|^2\r]  &=\Pi^{\kappa_{\ell}}(\hat{\b{W}}_{\ell})\nonumber \\ &\leq (1+\varepsilon_3)c_1^4 \E\l[\|\b{z}^{\ell-1} - [\hat{\b{z}}^{\ell-1}]_{\kappa_{\ell-1}}\|^2\r]  + c_1^{2(\ell+1)}\varepsilon_4,  \nonumber
        \end{align}
        which satisfies (\ref{chapter3:lemma:9.3}) in Lemma \ref{chapter3:lemma:9}.
    \end{enumerate}
    Using Lemma \ref{chapter3:lemma:9}, it follows that (\ref{chapter3:proposition:1.2}) holds for small enough $\delta=\delta(\xi)$. Moreover, (\ref{chapter3:proposition:1.1}) readily holds from (\ref{chapter3:proposition:1.case2.2}) and (\ref{chapter3:proposition:1.case3.2}). This concludes the proof of (\ref{chapter3:proposition:1.1}) and (\ref{chapter3:proposition:1.2}). Note then that
\begin{align}
    \mathcal{L}(\hat{\Phi}; \mathcal{D}) &= \E_{(\b{x}, \b{y})\sim \mathcal{D}} \l[\|\hat{\Phi}(\b{x}) - \b{y}\|^2 \r]\nonumber\\
    &= \E_{(\b{x}, \b{y}) \sim \mathcal{D}} \l[\|\hat{\Phi}(\b{x}) - \Phi(\b{x}) + \Phi(\b{x})  - \b{y}\|^2 \r]\nonumber\\
    &= \mathcal{L}(\Phi; \mathcal{D}) + 2 \E[\langle \hat{\Phi}(\b{x}) - \Phi(\b{x}),\Phi(\b{x})  - \b{y} \rangle ] + \E_{\b{x}} [\| \hat{\Phi}(\b{x}) - \Phi(\b{x})\|^2]\nonumber\\
    &\leq \mathcal{L}(\Phi; \mathcal{D}) + 2 \sqrt{\mathcal{L}(\Phi; \mathcal{D})} \sqrt{\E_{\b{x}} [\| \hat{\Phi}(\b{x}) - \Phi(\b{x})\|^2]} + \E_{\b{x}} [\| \hat{\Phi}(\b{x}) - \Phi(\b{x})\|^2]\nonumber\\
    &\leq \mathcal{L}(\Phi; \mathcal{D}) + 2 c_1^m \sqrt{\varepsilon \mathcal{L}(\Phi; \mathcal{D})} + c_1^{2m} \varepsilon,\nonumber
\end{align}
where we used (\ref{chapter3:proposition:1.2}) in the last line and $\varepsilon = (1+\xi)^m \xi$. This ends the proof of (\ref{chapter3:proposition:1.3}), and concludes the proof of Proposition \ref{chapter3:proposition:1}.
\end{proof}


\section{Proofs for Structured Pruning of Multilayer Perceptrons}\label{chapter3:appendix:structured.shallow}

\subsection{Single-Layer Perceptron}\label{chapter3:appendix:structured.single.layer}
In this section we consider structured pruning of a single-layer perceptron and adopt the same  introduced in Section \ref{chapter3:appendix:unstructured.single.layer} for $\Psi, \Pi^{\kappa}, \b{W}_1, \varphi_1$ and $\mathcal{Q}_{\rho, \theta}$. We now state this section's result for pruning.

\begin{proposition}\label{chapter3:appendix:structured.single.proposition}
    Suppose the activation $\varphi_1$ satisfies (\ref{chapter3:assumption:1.1}) and (\ref{chapter3:assumption:1.2}) in Assumption \ref{chapter3:assumption:1}. Let $k | \gcd(n_1, n_2)$, $p,\gamma, \varepsilon \in (0, 1)$, $\alpha \in \l(\frac{k}{n_1}, 1-\frac{k}{n_1}\r]$, $\sigma_1=\|\b{W}_1\|$, and $\kappa\in [\sigma_1 \rho, \infty)$. Finally, let  $\EE_\ell = \{k(\ell-1) + q \mid q\in [k]\}$ for $\ell\in \mathbb{Z}_{\geq 1}$, and $\nu_1=\max_{(i,j)} \|[\b{W}_1]_{\EE_i,\EE_j}\|$. There exists a constant $n_0 = n_0(\gamma, p, \alpha, \varepsilon)$ such that if $n_1/k \geq n_0$, then  there exists a diagonal mask matrix $\b{D} \in \l\{0, 1, \frac{1}{p}\r\}^{n_1 \times n_1}$ satisfying
    \begin{align}
        \l|\l\{\EE_\ell \mid \ell\in[n_1/k],  \b{D}_{\EE_\ell, \EE_\ell} \neq \b{0}_{k\times k} \r\}\r| \leq (1+\gamma)p \l\lfloor  \frac{\alpha n_1}{k} \r\rfloor + \frac{n_1}{k} - \l\lfloor \frac{\alpha n_1}{k} \r\rfloor, \label{chapter3:appendix:structured.single.proposition.1}
    \end{align}
     such that 
    \begin{align}
        \Pi^{\kappa}(\b{W}_1 \b{D}) &\leq (1+\varepsilon)\l(\sigma_1^2\E\l[\|\b{x} - \b{x}'\|^2\r] + \frac{2\alpha\theta^2(1-p) }{p(1-\alpha)} \frac{ \nu_1^2 n_2 }{k} \r). \label{chapter3:appendix:structured.single.proposition.2}
    \end{align}
\end{proposition}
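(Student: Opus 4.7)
[Proof proposal for Proposition \ref{chapter3:appendix:structured.single.proposition}]
The plan is to adapt the randomized greedy interpolation argument used in Proposition \ref{chapter3:appendix:unstructured.single.proposition:2}, but acting on column-blocks rather than on individual weights. Concretely, I will run an interpolation over block indices $\ell\in[n_1/k]$: for each $\ell$, let $t_\ell = \mathrm{Ber}(p)/p$ (independent across blocks), so that $\E[t_\ell]=1$ and $\mathrm{Var}(t_\ell)=(1-p)/p$, and replace the block of columns $[\b{W}_1]_{:,\EE_\ell}$ by $t_\ell [\b{W}_1]_{:,\EE_\ell}$, equivalently multiply the current weight matrix on the right by the gate matrix $\b{G}(t_\ell;\EE_\ell)$. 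After $\lfloor\alpha n_1/k\rfloor$ block-pruning steps, the resulting diagonal matrix is block-diagonal with entries in $\{0,1,1/p\}$, which is the desired structure.

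The interpolation proceeds exactly as in Proposition \ref{chapter3:appendix:unstructured.single.proposition:3}: maintain partitions $\Sm_1^d\cup\Sm_2^d=[n_1/k]$ (unpruned vs.\ pruned blocks), form the running matrix $\b{U}^d$, and at step $d$ pick the block $\ell\in\Sm_1^d$ with the smallest score $|\E_{t_\ell}\Psi(\b{U}^d\b{G}(t_\ell;\EE_\ell))-\Psi(\b{U}^d)|$. Because $t\mapsto \Psi(\b{U}^d\b{G}(t;\EE_\ell))$ is quadratic, a Taylor expansion around $t=1$ and $\E[t_\ell]=1$ reduce this score to $\tfrac{1-p}{2p}\,|\partial_t^2\Psi(\b{U}^d\b{G}(t;\EE_\ell))|_{t=1}|$. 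Here is where Lemma \ref{chapter3:lemma:8} (Single Layer case) enters: since $[\b{U}^d]_{:,\EE_\ell}=[\b{W}_1]_{:,\EE_\ell}$ for all $\ell\in\Sm_1^d$ by construction, the lemma with $\eta=2$ guarantees that at least half of the indices $\ell\in\Sm_1^d$ have expected second derivative bounded by $4\theta^2\nu_1^2 n_2/(|\Sm_1^d|k)$, so the greedy choice has score at most $2\theta^2(1-p)\nu_1^2 n_2/(p|\Sm_1^d|k)$.

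Telescoping across $d=0,\ldots,\lfloor\alpha n_1/k\rfloor-1$ and using $|\Sm_1^d|\ge (1-\alpha)n_1/k$ gives the bound
$\E_{\b{t}}[\Psi(\b{U})] \le \sigma_1^2\E[\|\b{x}-\b{x}'\|^2] + \tfrac{2\alpha(1-p)\theta^2\nu_1^2 n_2}{p(1-\alpha)k}$,
where $\b{t}=(t_\ell)_\ell$ collects all the Bernoulli randomness on the pruned blocks. Passing from $\Psi$ to $\Pi^\kappa$ is handled exactly as in the unstructured case by Lemma \ref{chapter3:lemma:1}, using $\kappa\ge\sigma_1\rho$ and the Lipschitz bound on $\varphi_1$.

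The final ingredient is the derandomization: $\b{U}$ depends on at most $\lfloor\alpha n_1/k\rfloor$ independent Bernoullis (one per pruned block), and $\Pi^\kappa(\b{U})$ viewed as a function of those bits is nonnegative with expectation bounded as above. Applying Lemma \ref{chapter3:lemma:4} with the given $\gamma,\varepsilon$ then yields, for $n_1/k$ large enough, a nonrandom realization $\b{D}$ whose number of nonzero-$k$-blocks on the diagonal is at most $(1+\gamma)p\lfloor\alpha n_1/k\rfloor$ among the pruned blocks plus the $n_1/k-\lfloor\alpha n_1/k\rfloor$ unpruned identity blocks, giving (\ref{chapter3:appendix:structured.single.proposition.1}), while inflating the loss bound by the factor $(1+\varepsilon)$ to give (\ref{chapter3:appendix:structured.single.proposition.2}). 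The main obstacle in this plan is purely bookkeeping: checking that the hypothesis $[\b{U}^d]_{:,\EE_\ell}=[\b{W}_1]_{:,\EE_\ell}$ for $\ell\in\Sm_1^d$ is preserved under the greedy updates so that Lemma \ref{chapter3:lemma:8} applies at every step, and verifying that $\alpha\in(k/n_1, 1-k/n_1]$ ensures $|\Sm_1^d|\ge 2$ so that the $\eta=2$ choice produces a nonempty candidate set; both are straightforward translations of the scalar argument in Proposition \ref{chapter3:appendix:unstructured.single.proposition:3} to the block setting.
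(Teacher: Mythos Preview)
Your proposal is correct and mirrors the paper's proof essentially step for step: the same block-column interpolation via gate matrices $\b{G}(t_\ell;\EE_\ell)$ with $t_\ell=\mathrm{Ber}(p)/p$, the same second-order Taylor bound controlled by Lemma~\ref{chapter3:lemma:8} (single layer, $\eta=2$), the same telescoping with $|\Sm_1^d|\ge(1-\alpha)n_1/k$, the same $\Psi\to\Pi^\kappa$ passage via Lemma~\ref{chapter3:lemma:1}, and the same derandomization via Lemma~\ref{chapter3:lemma:4}. The bookkeeping checks you flag (preservation of $[\b{U}^d]_{:,\EE_\ell}=[\b{W}_1]_{:,\EE_\ell}$ on unpruned blocks, and $|\Sm_1^d|\ge 2$ from the range of $\alpha$) are exactly the ones the paper handles, and they go through as you describe.
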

\begin{proof}
Let $(h_\ell)_{\ell \in [n_1/k]}$ be a sequence of independent random variables with distribution ${\rm Bernoulli}(p) / p$. We use an interpolation argument whereby we iteratively pick a \textit{suitable} set $\mathcal{E}_\ell$ indexed by $\ell \in  [n_1/k]$ and switch the $\ell$-th block-column  from $[\b{W}_1]_{:, \mathcal{E}_{\ell}}$ to $h_{\ell} [\b{W}_1]_{:, \mathcal{E}_\ell}$. We control the resulting error from each switch, and  make $\lfloor \alpha n_1/k \rfloor$ switches. Let $\b{U}^d$ be the weight matrix at the end of the $d$-th  interpolation step where $d \in [0, \lfloor \alpha n_1/k \rfloor - 1]$, and let $\mathcal{S}^d_1 \cup \mathcal{S}^d_2 =  [n_1/k]$ track the interpolation process where $\mathcal{S}^d_1$ contains indices of  unswitched block-columns, and $\mathcal{S}^d_2$ contains indices of switched ones. Namely
\begin{alignat}{2}
    \b{U}^d_{ij} &= [\b{W}_1]_{ij}, \quad &&\text{if } j \in \bigcup_{\ell \in \mathcal{S}^d_1}\mathcal{E}_\ell,\nonumber\\
    \b{U}^d_{ij} &= h_{\ell} [\b{W}_1]_{ij} \quad &&\text{if } j \in \mathcal{E}_\ell \text{ for some } \ell \in \mathcal{S}^d_2.\nonumber
\end{alignat}
for all $(i, j) \in [n_2] \times [n_1]$. In particular $\b{U}^0 = \b{W}_1$ (i.e. $\mathcal{S}^0_1= [n_1/k], \mathcal{S}^0_2=\emptyset$). Finally, we let $\T^d = \{h_{\ell}\mid \ell \in \Sm_2^d\}$, with $\T^0 = \emptyset$. Suppose we are at step $d< \lfloor \alpha n_1/k \rfloor$ so that  $|\mathcal{S}^d_2| < \lfloor \alpha n_1/k \rfloor$ (otherwise the interpolation is over). Let $\ell \in \mathcal{S}^d_1$ (note that $\mathcal{S}^d_1 \neq \emptyset$ as $|\mathcal{S}^d_1|\geq  n_1/k  - \lfloor \alpha  n_1/k \rfloor + 1 \geq 1 $). Since $t\mapsto \Psi(\b{U}^d \b{G}(t; \mathcal{E}_\ell))$ is a quadratic function,  we have by Taylor's
\begin{align}
    \Psi(\b{U}^d \b{G}(h_{\ell}; \EE_\ell)) - \Psi(\b{U}^d) &= \Psi(\b{U}^d \b{G}(h_{\ell}; \EE_\ell)) - \Psi(\b{U}^d \b{G}(1; \EE_\ell)) \nonumber \\ 
    &= (h_{\ell}-1)\frac{\partial \Psi(\b{U}^{d} \b{G}(1; \EE_\ell))}{\partial t}  + \frac{(h_{\ell}  - 1)^2}{2}\frac{\partial^2 \Psi(\b{U}^{d}\b{G}(1; \EE_\ell))}{\partial t^2}. \nonumber
\end{align}
Taking the expectation of the above over  $\{h_{\ell}\}\cup \T^d$  we obtain
\begin{align}
    \l|\E_{\{h_\ell\} \cup \T^{d}}\l[ \Psi(\b{U}^{d}\b{G}(h_{\ell};\EE_\ell)) \r] - \E_{\T^d}\l[\Psi(\b{U}^{d}) \r]  \r|  &\leq \frac{1-p}{2p} \E_{\T^d}\l[ \l| \frac{\partial^2 \Psi(\b{U}^{d}\b{G}(1;\EE_\ell))}{\partial t^2} \r| \r].   \label{chapter3:appendix:structured.single.proposition.3}
\end{align}
Let $\eta \in \R_{\geq 1}$ and introduce
\begin{alignat}{2}
    \mathcal{C} &\triangleq \Bigg\{ \ell \in \mathcal{S}^d_1 \Biggm| \quad \E_{ \T^d}&&\l[ \l|\frac{\partial^2 \Psi(\b{U}^{d}\b{G}(1;\EE_\ell))}{\partial t^2} \r| \r] \leq  \frac{2\eta \theta^2}{|\Sm^d_1|} \frac{\nu_1^2 n_2}{k} \Bigg\}.\nonumber
\end{alignat}
Using part (\ref{chapter3:lemma:8.1}) from Lemma \ref{chapter3:lemma:8}, it follows that $|\mathcal{C}| \geq \l(1- \frac{1}{\eta}\r)|\mathcal{S}^d_1|$.  Since $\alpha\leq 1-\frac{k}{n_1}$, we have $|\Sm^d_1|\geq n_1/k - \lfloor \alpha n_1/k\rfloor + 1 \geq (1-\alpha)n_1/k  + 1 \geq 2 $. Setting $\eta=2$, it follows that $|\mathcal{C}| \geq (1-1/\eta)2=1$, and thus $\mathcal{C}  \neq \emptyset$. Henceforth, if we pick $\ell\in \Sm_1^d$ with the smallest \textit{score} given by $\l|\E_{\{h_{\ell}\}\cup \T^d}\l[ \Psi(\b{U}^{d}\b{G}(h_{\ell};\EE_\ell)) \r] - \E_{\T^d}\l[\Psi(\b{U}^{d}) \r]  \r|$, it follows from (\ref{chapter3:appendix:structured.single.proposition.3}) that
\begin{align}
    \l| \E_{\{h_{\ell}\}\cup \T^d}\l[ \Psi(\b{U}^{d}\b{G}(h_{\ell};\EE_\ell)) \r]  - \E_{\T^d}\l[\Psi(\b{U}^{d})\r] \r| &\leq \frac{2\theta^2(1-p) }{p|\Sm^d_1|} \frac{\nu_1^2 n_2}{k}. \label{chapter3:appendix:structured.single.proposition.4}
\end{align}
 We then update $\Sm^{d+1}_1 = \Sm^{d}_1 \setminus \{\ell\}$ and $\Sm^{d+1}_2 = \Sm^{d}_2 \cup \{\ell\}$. Suppose we repeat this switching operation as long as $d < \lfloor\alpha n_1/k \rfloor$, and let $\mathcal{K}= \Sm^{\lfloor \alpha n_1/k \rfloor}_2$. That is, $\mathcal{K}$ is the set of all swapped  indices $\ell$ at the end of the interpolation process, so that $|\mathcal{K}|= \lfloor \alpha n_1/k \rfloor$. Let $\beta_d =\E_{\T^d}[\Psi(\b{U}^d)]$ for $d\in [0, |\mathcal{K}|]$ and note that if we switch the index $\ell$  at the $d$-th step, then $\beta_d = \E_{\T^d}\l[\Psi(\b{U}^{d} \b{G}(1;\EE_\ell))\r]=\E_{\T^d}[\Psi(\b{U}^d)]$ and 
$\beta_{d+1} = \E_{\T^{d+1}}\l[\Psi(\b{U}^{d}\b{G}(h_{\ell};\EE_\ell))\r]$. Therefore, we can rewrite (\ref{chapter3:appendix:structured.single.proposition.4}) as 
\begin{align}
    |\beta_{d+1} - \beta_{d}| &\leq  \frac{2\theta^2(1-p) }{p|\Sm^d_1|} \frac{\nu_1^2 n_2}{k} \nonumber \\
    &\leq \frac{2\theta^2(1-p) }{p(1-\alpha)} \frac{\nu_1^2 n_2}{n_1}  ,\nonumber
\end{align}
where we used $|\Sm^d_1| \geq (1-\alpha)n_1/k$ in the last line. Telescoping the above inequality over $d\in [0, \lfloor \alpha n_1/k \rfloor -1]$ yields
\begin{align}
    \l| \beta_{\l\lfloor \frac{\alpha n_1}{k} \r\rfloor} - \beta_0 \r| &\leq  \frac{2\theta^2(1-p) }{p(1-\alpha)} \frac{ \nu_1^2 \lfloor \alpha n_1/k \rfloor n_2 }{n_1 }\nonumber\\
    &\leq  \frac{2\theta^2\alpha(1-p) }{p(1-\alpha)} \frac{ \nu_1^2 n_2 }{k}.\nonumber
\end{align}
Letting $\b{U} = \b{U}^{\l\lfloor \frac{\alpha n_1}{k} \r\rfloor}$ and $\T = \T^{\l\lfloor \frac{\alpha n_1}{k} \r\rfloor}$ and noting that $\beta_{\l\lfloor \frac{\alpha n_1}{k} \r\rfloor} = \E_{\T}[\Psi(\b{U})], \beta_0 = \Psi(\b{W}_1)$, it follows that
\begin{align}
    \E_{\T}[\Psi(\b{U})] &\leq \Psi(\b{W}_1) + \frac{2\alpha\theta^2(1-p) }{p(1-\alpha)} \frac{ \nu_1^2 n_2 }{k}\nonumber.
\end{align}
Since  $\|\b{W}_1\|=\sigma_1$ and $\|\varphi_1\|_{\rm Lip}\leq 1$, we have that $\Psi(\b{W}_1) \leq \sigma_1^2 \E[\|\b{x}-\b{x}'\|^2]$. Therefore 
\begin{align}
     \E_{\T}[\Psi(\b{U})] &\leq \sigma_1^2\E\l[\|\b{x} - \b{x}'\|^2\r] +\frac{2\alpha\theta^2(1-p) }{p(1-\alpha)} \frac{ \nu_1^2 n_2 }{k} \label{chapter3:appendix:structured.single.proposition.5}.
\end{align}
We next show
\begin{align}
    \E_{\T}\l[\Pi^{\kappa}(\b{U})\r] &\leq \sigma_1^2\E\l[\|\b{x} - \b{x}'\|^2\r] + \frac{2\alpha\theta^2(1-p)}{p(1-\alpha)} \frac{ \nu_1^2 n_2 }{k}.  \label{chapter3:appendix:structured.single.proposition.6}
\end{align}
By (\ref{chapter3:appendix:structured.single.proposition.5}), it suffices to prove that
\begin{align}
    \E_{\T}\l[\Pi^{\kappa}(\b{U})\r] &\leq \E_{\T}\l[\Psi(\b{U})\r], 
\end{align}
which we show next. Fix a pair of vectors $(\b{x}, \b{x}') \in \mathbb{B}^{n_1}_{2}(\rho) \times \mathbb{B}^{n_1}_{2}(\theta)$, and denote by $\b{z}, \b{z}'$ the output vectors $\varphi_1(\b{W}_1 \b{x}),  \varphi_1(\b{U} \b{x}')$. As $\|\b{x}\|\leq \rho$  it follows from (\ref{chapter3:assumption:1.1}) and (\ref{chapter3:assumption:1.2}) that $  \|\b{z}\| = \|\varphi_1(\b{W}_1 \b{x})\| \leq \sigma_1 \rho $. Since $\|\b{z}\|\leq \sigma_1 \rho $ and $\sigma_1\rho \leq \kappa$ from the assumptions of Proposition \ref{chapter3:appendix:structured.single.proposition}, we have using Lemma \ref{chapter3:lemma:1} that 
\begin{align}
    \|\b{z} - [\b{z}']_{\kappa}\|  &\leq \|\b{z} - \b{z}'\|\nonumber \\
    &= \|\varphi_1(\b{W}_1 \b{x}) - \varphi_1(\b{U}\b{x}')\|\\
    &\leq \|\b{W}_1 \b{x} - \b{U} \b{x}'\|.
\end{align}
Therefore,\begin{align}
    \E_{\T}\l[\Pi^{\kappa}(\b{U})\r] &= \E_{\T}\E_{(\b{x}, \b{x}')}\l[\|\b{z} - [\b{z}']_{\kappa}\|^2\r]\nonumber\\
    &\leq  \E_{\T}\E_{(\b{x}, \b{x}')}\l[\|\b{W}_1 \b{x} - \b{U} \b{x}'\|^2\r]\nonumber\\
    &= \E_{\T}\l[\Psi(\b{U})\r], \nonumber
\end{align}
where we used  $\|\varphi_1\|_{\rm Lip}\leq 1$ in the last line. This concludes the proof of (\ref{chapter3:appendix:structured.single.proposition.6}). Note  that by construction $\b{U} =\b{W}_1 \b{D}$, where $\b{D}$ is a diagonal matrix given by 
\begin{alignat}{2}
    \b{D}_{i,i} &= h_{\ell}, \quad &&\text{if } i \in  \EE_\ell \text{ and } \ell\in\mathcal{K},\nonumber\\
    \b{D}_{i,i} &= 1\quad &&\text{otherwise}.\nonumber
\end{alignat}
Let $\b{b}\in \R^{|\mathcal{K}|}$ be the vector obtained by stacking the variables $ph_\ell$ for $\ell \in \mathcal{K}$. Clearly, the entries of $\b{b}$ are independent with distribution ${\rm Bernoulli}(p)$. Moreover, $\Pi^\kappa(\b{U})$ only depends on $h_\ell$ through $\b{b}$, and
\begin{align*}
    \frac{\|\b{D}\|_0}{k} &= \sum_{\ell \in \mathcal{K}} 1_{h_\ell \neq 0} +  \frac{n_1}{k} - |\mathcal{K}| \\
    &\leq \|\b{b}\|_0 + \frac{n_1}{k} - \l\lfloor \frac{\alpha n_1}{k} \r\rfloor. 
\end{align*}
It follows from Lemma~\ref{chapter3:lemma:4} that there exists a positive constant  $n_0=n_0(\gamma , p, \varepsilon)$ such that if $\alpha n_1/k \geq n_0$, then there exist  nonrandom realizations $\hat{\b{b}}$ of $\b{b}$, and $\hat{h}_\ell,\hat{\b{U}}, \hat{\b{D}}$ of $h_\ell, \b{U}, \b{D}$ satisfying
    \begin{align}
        \Pi^{\kappa}(\hat{\b{U}}) &\leq (1+\varepsilon)  \E_{\T}\l[\Pi^{\kappa}(\b{U})\r] \nonumber \\
        &\leq (1+\varepsilon) \l( \sigma_1^2\E\l[\|\b{x} - \b{x}'\|^2\r] + \frac{2\alpha\theta^2(1-p) }{p(1-\alpha)} \frac{ \nu_1^2 n_2 }{k} \r), \label{chapter3:appendix:structured.single.proposition.7}
    \end{align}
    and $\|\hat{\b{b}}\|_0 \leq (1+\gamma)p \lfloor \frac{\alpha n_1}{k} \rfloor $. The latter implies 
    \begin{align}
        \frac{\|\hat{\b{D}}\|_0}{k} \leq (1+\gamma)p\l\lfloor \frac{\alpha n_1}{k} \r\rfloor + \frac{n_1}{k} - \l\lfloor \frac{\alpha n_1}{k} \r\rfloor \label{chapter3:appendix:structured.single.proposition.8}.
    \end{align}
    Finally, Note that by construction each null $\hat{\b{b}}_i$ leads to a null submatrix $\hat{\b{D}}_{\EE_i,\EE_i}$. Hence, (\ref{chapter3:appendix:structured.single.proposition.7}) and (\ref{chapter3:appendix:structured.single.proposition.8}) readily yield the result of the proposition.
\end{proof}

\subsection{Two-Layer Perceptron}\label{chapter3:appendix:structured.two.layers}
In this section we consider structured pruning of a block of two layers and adopt the same notation introduced in Section \ref{chapter3:appendix:unstructured.two.layers} for $\Psi, \Pi^\kappa, \b{W}_1, \b{W}_2, \varphi_1, \varphi_2$.  We now state this section's result for pruning.
\begin{proposition}\label{chapter3:appendix:structured.two.layers.proposition}
 Suppose the activations $\varphi_1, \varphi_2$ satisfy (\ref{chapter3:assumption:1.1}) and (\ref{chapter3:assumption:1.2}) , and $\varphi_1$ also satisfies $(\ref{chapter3:assumption:1.3})$  in Assumption \ref{chapter3:assumption:1}. Let $k | \gcd(n_1, n_2, n_3)$, $p,\gamma, \varepsilon \in (0, 1)$, $\alpha \in \l(\frac{k}{n_2}, 1-\frac{k}{n_2}\r]$, $\sigma_i=\|\b{W}_i\|$ for $i=1,2$, and $\kappa\in [\sigma_1\sigma_2 \rho, \infty)$. Finally, let  $\mathcal{E}_\ell = \{k(\ell-1) + q \mid q\in [k]\}$ for $\ell \in \mathbb{Z}_{\geq 1}$, and $\nu_i=\max_{(a,b)} \|[\b{W}_i]_{\EE_a,\EE_b}\|$ for $i=1,2$. There exists a constant $n_0 = n_0(\gamma, p, \alpha, \varepsilon)$ such that if $n_2/k \geq n_0$, then  there exists a diagonal mask matrix $\b{D} \in \l\{0, 1, \frac{1}{p}\r\}^{n_2 \times n_2}$ satisfying
    \begin{align}
        \l|\l\{\EE_\ell \mid \ell\in[n_2/k],  \b{D}_{\EE_\ell, \EE_\ell} \neq \b{0}_{k\times k} \r\}\r| \leq (1+\gamma)p \l\lfloor \frac{\alpha n_2}{k} \r\rfloor + \frac{n_2}{k} - \l\lfloor \frac{\alpha n_2}{k} \r\rfloor, \label{chapter3:appendix:structured.two.layers.proposition.1}
    \end{align}
     such that 
    \begin{align}
        \Pi^{\kappa}(\b{D}\b{W}_1) &\leq (1+\varepsilon) \bigg(\l(1 + f\l(\Lambda\r) \r)(\sigma_1\sigma_2)^2\E\l[\|\b{x} - \b{x}'\|^2\r] + f\l(\Lambda \r)\omega \bigg). \label{chapter3:appendix:structured.two.layers.proposition.2}
    \end{align}
    where $\Lambda =  \frac{2\alpha \theta (1-p)}{p(1-\alpha)} \frac{\sigma_1^2 \nu_2 \sqrt{n_3}}{\sqrt{k}}$, $\omega=\frac{2\theta\nu_2 \sqrt{n_3} + 2\tau \theta^2 \sigma_2 \nu_1 \sqrt{n_1}}{\sqrt{k}} + \theta^2$, $\tau = 1\vee \frac{1-p}{p}$, and $f(x)=xe^x$.
\end{proposition}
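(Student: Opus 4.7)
The plan is to mirror the argument used for Proposition~\ref{chapter3:appendix:unstructured.two.layers.proposition:3}, but with the structured (block) scoring scheme of Proposition~\ref{chapter3:appendix:structured.single.proposition} and the second-derivative bound from part~(\ref{chapter3:lemma:8.2}) of Lemma~\ref{chapter3:lemma:8} (which is designed precisely for block pruning on the first layer of a two-layer block). Concretely, I will run an interpolation over the $n_2/k$ row-blocks of $\b{W}_1$: let $(h_\ell)_{\ell\in[n_2/k]}$ be i.i.d.\ $\operatorname{Bernoulli}(p)/p$, let $\b{U}^d$ be the current interpolated matrix with switched row-blocks $\Sm_2^d$ and unswitched blocks $\Sm_1^d$, and at each step greedily switch the unswitched block-index $\ell\in\Sm_1^d$ with the smallest score $|\E[\Psi(\b{G}(h_\ell;\mathcal{E}_\ell)\b{U}^d,\b{W}_2)]-\E[\Psi(\b{U}^d,\b{W}_2)]|$.

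By a second-order Taylor expansion of $t\mapsto \Psi(\b{G}(t;\mathcal{E}_\ell)\b{U}^d,\b{W}_2)$ around $t=1$, the fact that $\E[h_\ell]=1$ kills the first-order term (after taking expectation), so the score is controlled by $\frac{1-p}{2p}\,\E_{\T^d}[|\partial_t^2\Psi(\b{G}(1;\mathcal{E}_\ell)\b{U}^d,\b{W}_2)|]$. Applying part~(\ref{chapter3:lemma:8.2}) of Lemma~\ref{chapter3:lemma:8} with $\tau=1\vee\frac{1-p}{p}$, together with $|\Sm_1^d|\geq (1-\alpha)n_2/k\geq 2$ and $\eta=2$, the averaging argument guarantees that the best index $\ell$ satisfies
\begin{align*}
|\beta_{d+1}-\beta_d|\;\leq\;\frac{2\theta^2(1-p)}{p\,|\Sm_1^d|}\,\frac{\sigma_1^2\nu_2\sqrt{n_3}}{\sqrt{k}}\!\left(\frac{\nu_2\sqrt{n_3}+\tau\theta\sigma_2\nu_1\sqrt{n_1}}{\sqrt{k}}+\sqrt{\beta_d}\right),
\end{align*}
where $\beta_d\triangleq \E_{\T^d}[\Psi(\b{U}^d,\b{W}_2)]$. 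Using the arithmetic-geometric inequality $\sqrt{\beta_d}\leq \frac{\beta_d}{2\theta}+\frac{\theta}{2}$ (as in~(\ref{chapter3:appendix:unstructured.two.layers.proposition:3.5})) converts this into the multiplicative form $|\beta_{d+1}-\beta_d|\leq \lambda(\beta_d+\omega)$ with $\lambda=\frac{2\theta(1-p)\sigma_1^2\nu_2\sqrt{n_3}}{p(1-\alpha)n_2\sqrt{k}}$ and $\omega$ as stated.

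Telescoping with Lemma~\ref{chapter3:lemma:3} over $d\in[0,\lfloor\alpha n_2/k\rfloor-1]$ yields $|\beta_{\lfloor\alpha n_2/k\rfloor}-\beta_0|\leq f(\Lambda)(\beta_0+\omega)$ with $\Lambda=\lambda\alpha n_2/k$, and the initial value $\beta_0=\Psi(\b{W}_1,\b{W}_2)\leq (\sigma_1\sigma_2)^2\E[\|\b{x}-\b{x}'\|^2]$ follows from $\|\varphi_i\|_{\mathrm{Lip}}\leq 1$. The replacement of $\Psi$ by $\Pi^\kappa$ then uses the projection argument verbatim from earlier proofs: since $\|\varphi_2(\b{W}_2\varphi_1(\b{W}_1\b{x}))\|\leq \sigma_1\sigma_2\rho\leq\kappa$, Lemma~\ref{chapter3:lemma:1} gives $\E_{\T}[\Pi^\kappa(\b{D}\b{W}_1)]\leq \E_{\T}[\Psi(\b{U},\b{W}_2)]$, where $\b{U}=\b{D}\b{W}_1$ and $\b{D}$ is the diagonal mask built from the $h_\ell$ with $\b{D}_{\mathcal{E}_\ell,\mathcal{E}_\ell}=h_\ell \b{I}_k$ for $\ell\in\Sm_2^{\lfloor\alpha n_2/k\rfloor}$ and identity otherwise.

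Finally, to convert the in-expectation statement into a deterministic bound with controlled block-sparsity, the block-switches are encoded by a Bernoulli vector $\b{b}\in\{0,1\}^{\lfloor\alpha n_2/k\rfloor}$ with entries $p h_\ell$; $\Pi^\kappa(\b{D}\b{W}_1)$ only depends on the $h_\ell$ through $\b{b}$, and the number of active blocks of $\b{D}$ is bounded by $\|\b{b}\|_0+\frac{n_2}{k}-\lfloor\alpha n_2/k\rfloor$. Applying Lemma~\ref{chapter3:lemma:4} produces, for $n_2/k\geq n_0(\gamma,p,\alpha,\varepsilon)$, a deterministic realization satisfying simultaneously the $(1+\varepsilon)$ loss inflation in~(\ref{chapter3:appendix:structured.two.layers.proposition.2}) and the sparsity bound~(\ref{chapter3:appendix:structured.two.layers.proposition.1}). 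The main obstacle I expect is the bookkeeping in the second step: correctly instantiating Lemma~\ref{chapter3:lemma:8} for the row-block-pruning convention (as opposed to column-blocks, which are handled by the same lemma but with transposed role of $\b{U}_1$) and carrying both the quadratic-in-$\beta_d$ and additive $\omega$ terms through the recursion so that the final form matches the stated $\Lambda$ and $\omega$.
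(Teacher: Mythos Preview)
Your overall plan is right and matches the paper's proof almost step for step: the block-row interpolation, Lemma~\ref{chapter3:lemma:8} part~(\ref{chapter3:lemma:8.2}) for the second-derivative bound, the AM--GM linearization, Lemma~\ref{chapter3:lemma:3} for the telescoping, the $\Pi^\kappa\leq\Psi$ projection step via Lemma~\ref{chapter3:lemma:1}, and the Lemma~\ref{chapter3:lemma:4} derandomization are all exactly what the paper does.

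There is, however, one genuine gap in your Taylor step. Unlike the single-layer case, here $t\mapsto\Psi(\b{G}(t;\mathcal{E}_\ell)\b{U}^d,\b{W}_2)$ is \emph{not} quadratic (it involves $\varphi_1$), so the second-order remainder is at a random Lagrange point $t_\ell\in(1\wedge h_\ell,\,1\vee h_\ell)$, not at $t=1$. Consequently you cannot factor $\E[(h_\ell-1)^2]=\tfrac{1-p}{p}$ out of the remainder: $(h_\ell-1)^2$ and $\partial_t^2\Psi\big|_{t=t_\ell}$ are correlated through $h_\ell$, so the bound $\tfrac{1-p}{2p}\,\E_{\T^d}[|\partial_t^2\Psi|_{t=1}|]$ is not justified. (Crudely replacing $(h_\ell-1)^2$ by its a.s.\ bound $\tau^2$ and applying Lemma~\ref{chapter3:lemma:8} once with $\eta=2$ would give $\tau^2$ in place of $\tfrac{1-p}{p}$, which for small $p$ is far too large to recover the stated $\Lambda$.)

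The paper's fix is to condition on $\{h_\ell=0\}$ and $\{h_\ell=1/p\}$ separately, apply Lemma~\ref{chapter3:lemma:8} part~(\ref{chapter3:lemma:8.2}) to each conditional distribution of $t_\ell$, build two good-index sets $\mathcal{C}_0,\mathcal{C}_1$ with $\eta=4$, and work on $\mathcal{C}_0\cap\mathcal{C}_1$ (nonempty since $|\Sm_1^d|\geq 2$). The two conditional contributions then combine as $\tfrac{1-p}{2}+\tfrac{(1-p)^2}{2p}=\tfrac{1-p}{2p}$, and the extra factor from $\eta=4$ (versus your $\eta=2$) is exactly what produces the constant $2$ in the stated $\Lambda$. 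This is the same device used in the proof of Proposition~\ref{chapter3:appendix:unstructured.two.layers.proposition:3} (see~(\ref{chapter3:appendix:unstructured.two.layers.proposition:3.4}) and the sets $\mathcal{C}_0,\mathcal{C}_1$ there). Once you make that correction, the rest of your argument goes through verbatim; your stated $\lambda$ also has a stray $1/\sqrt{k}$ where it should be $\sqrt{k}$ (from $|\Sm_1^d|\geq(1-\alpha)n_2/k$), but that is only bookkeeping.
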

\begin{proof}
Let $(h_{\ell})_{\ell \in [n_2/k]}$ be a sequence of independent random variables with distribution ${\rm Bernoulli}(p)/p$. Similarly to Proposition \ref{chapter3:appendix:structured.single.proposition}, we use an interpolation argument whereby we iteratively pick a \textit{suitable} set $\mathcal{E}_\ell$ indexed by $\ell \in  [n_2/k]$ and switch the $\ell$-th block-row of $\b{W}_1$ from $[\b{W}_1]_{\EE_\ell, :}$ to  $h_{\ell} [\b{W}_1]_{\EE_\ell, :} $. We control the resulting error from each switch, and  make $\lfloor \alpha n_2/k \rfloor$ switches. Let $\b{U}^d$ be the first layer weight matrix at the end of the $d$-th  interpolation step where $d \in [0, \lfloor \alpha n_2/k \rfloor - 1]$, and let $\mathcal{S}^d_1 \cup \mathcal{S}^d_2 =  [n_2/k]$ track the interpolation process similarly to the proof of Proposition \ref{chapter3:appendix:structured.single.proposition}. Suppose we are at step $d< \lfloor \alpha n_2/k \rfloor$ so that  $|\mathcal{S}^d_2| < \lfloor \alpha n_2/k \rfloor$ (otherwise the interpolation is over). Let $\ell \in \mathcal{S}^d_1$ (note that $\mathcal{S}^d_1 \neq \emptyset$ as $|\mathcal{S}^d_1|\geq  n_2/k  - \lfloor \alpha  n_2/k \rfloor + 1 \geq 1 $). Using Taylor's expansion on $t\mapsto \Psi(\b{G}(t; \mathcal{E}_\ell) \b{U}^d )$, we obtain
\begin{align}
    \Psi( \b{G}(h_{\ell}; \EE_\ell) \b{U}^d) - \Psi(\b{U}^d) &=\Psi( \b{G}(h_{\ell}; \EE_\ell) \b{U}^d) - \Psi(\b{G}(1; \EE_\ell) \b{U}^d) \nonumber \\
    &= (h_{\ell}-1)\frac{\partial \Psi(\b{G}(1; \EE_\ell) \b{U}^d)}{\partial t} 
    + \frac{(h_{\ell}  - 1)^2}{2}\frac{\partial^2 \Psi(\b{G}(t_{\ell}; \EE_\ell) \b{U}^d)}{\partial t^2}. \nonumber
\end{align}
where $t_{\ell} \in (1\wedge h_{\ell}, 1\vee h_{\ell})$ is a random variable. Note in particular that $|1-t_\ell| \leq \tau$ almost surely. Taking the expectation of the above over  $\{h_{\ell}\}\cup \T^d$  we obtain
\begin{align}
    &\l|\E_{\{h_{\ell}\}\cup \T^d}\l[ \Psi(\b{U}^{d}\b{G}(h_{\ell};\EE_\ell)) \r] - \E_{\T^d}\l[\Psi(\b{U}^{d}) \r]  \r| \nonumber \\
    &\leq \frac{1-p}{2}\E_{\{t_{\ell}\} \cup \T^d}\l[ \l| \frac{\partial^2 \Psi(\b{G}(t_{\ell};\EE_\ell)\b{U}^{d})}{\partial t^2} \r| \Biggm| h_{\ell}=0\r] \nonumber \\
    &+ \frac{(1-p)^2}{2p}\E_{\{t_{\ell}\} \cup \T^d}\l[ \l| \frac{\partial^2 \Psi(\b{G}(t_{\ell};\EE_\ell)\b{U}^{d})}{\partial t^2} \r| \Biggm| h_{\ell}=\frac{1}{p}\r]. \label{chapter3:appendix:structured.two.layers.proposition.3}
\end{align}
Let $\eta \in \R_{\geq 1}$ and introduce
\begin{alignat}{2}
    \mathcal{C}_0 &\triangleq \Bigg\{ \ell \in \mathcal{S}^d_1 \Biggm|  \E_{\{t_{\ell}\} \cup \T^d}&&\l[ \l|\frac{\partial^2 \Psi(\b{G}(t_\ell; \EE_\ell)\b{U}^{d})}{\partial t^2} \r| \Biggm| h_\ell=0  \r] \leq  \frac{2\eta \theta^2 }{|\Sm_1^d|}\frac{\sigma_1^2 \nu_2 \sqrt{n_3} }{\sqrt{k}}  \nonumber\\
    & &&\times \l(\frac{\nu_2 \sqrt{n_3} + \tau \theta \sigma_2  \nu_1 \sqrt{n_1}}{\sqrt{k}} + \sqrt{\E\l[\Psi(\b{U}^d)\r]} \r) \Bigg\},\nonumber\\
    \mathcal{C}_1 &\triangleq \Bigg\{ \ell \in \mathcal{S}^d_1 \Biggm|  \E_{\{t_{\ell}\} \cup \T^d}&&\l[ \l|\frac{\partial^2 \Psi(\b{G}(t_\ell; \EE_\ell)\b{U}^{d})}{\partial t^2} \r| \Biggm| h_\ell=\frac{1}{p}  \r] \leq  \frac{2\eta \theta^2 }{|\Sm_1^d|}\frac{\sigma_1^2 \nu_2 \sqrt{n_3} }{\sqrt{k}}   \nonumber\\
    & &&\times \l(\frac{\nu_2 \sqrt{n_3} + \tau \theta \sigma_2  \nu_1 \sqrt{n_1}}{\sqrt{k}} + \sqrt{\E\l[\Psi(\b{U}^d)\r]} \r) \Bigg\},\nonumber
\end{alignat}
Using (\ref{chapter3:lemma:8.2}) of Lemma \ref{chapter3:lemma:8}, it follows that $|\mathcal{C}_0|, |\mathcal{C}_1| \geq \l(1- \frac{1}{\eta}\r)|\mathcal{S}^d_1|$.  Since $\alpha\leq 1-\frac{1}{n_2/k}$, we have $|\Sm^d_1|\geq n_2/k - \lfloor \alpha n_2/k\rfloor + 1 \geq (1-\alpha)n_2/k  + 1 \geq 2 $. Setting $\eta=4$, it follows that $|\mathcal{C}_0\cap \mathcal{C}_1| \geq |\mathcal{C}_0| + |\mathcal{C}_1| - |\Sm_1^d| \geq (1-2/\eta)|\Sm_1^d|\geq (1-2/\eta)2=1$, and thus $\mathcal{C}_0\cap \mathcal{C}_1  \neq \emptyset$. Henceforth, if we pick $\ell\in \Sm_1^d$ with the smallest \textit{score} given by $\l|\E_{\{h_{\ell}\}\cup \T^d}\l[ \Psi(\b{G}(h_{\ell};\EE_\ell) \b{U}^{d}) \r] - \E_{\T^d}\l[\Psi(\b{U}^{d}) \r]  \r|$, it follows from (\ref{chapter3:appendix:structured.two.layers.proposition.3}) that
\begin{align}
    \l| \E_{\{h_{\ell}\}\cup \T^d}\l[ \Psi(\b{G}(h_{\ell};\EE_\ell)\b{U}^{d}) \r]  - \E_{\T^d}\l[\Psi(\b{U}^{d})\r] \r| &\leq \l(\frac{1-p}{2} + \frac{(1-p)^2}{2p}\r)  \frac{8 \theta^2 }{|\Sm_1^d|}\frac{\sigma_1^2\nu_2 \sqrt{n_3} }{\sqrt{k}}\nonumber \\
    &\times \l(\frac{\nu_2 \sqrt{n_3} + \tau \theta \sigma_2 \nu_1  \sqrt{n_1}}{\sqrt{k}} + \sqrt{\E\l[\Psi(\b{U}^d)\r]} \r)\nonumber \\
    &\leq \frac{4\theta^2(1-p)  }{p(1-\alpha)}\frac{\sigma_1^2 \nu_2 \sqrt{k n_3} }{n_2} \nonumber\\
    &\times \l(\frac{\nu_2 \sqrt{n_3} + \tau \theta \sigma_2 \nu_1  \sqrt{n_1}}{\sqrt{k}} + \frac{\E\l[\Psi(\b{U}^d)\r]}{2\theta} + \frac{\theta}{2} \r), \label{chapter3:appendix:structured.two.layers.proposition.4}
\end{align}
where we used $|\Sm_1^d| \geq (1-\alpha) n_2 /k$ and the arithmetic-geometric inequality in the last line. We then update $\Sm^{d+1}_1 = \Sm^{d}_1 \setminus \{\ell\}$ and $\Sm^{d+1}_2 = \Sm^{d}_2 \cup \{\ell\}$. We repeat this switching operation as long as $d < \lfloor\alpha n_2/k \rfloor$, and let $\mathcal{K}= \Sm^{\lfloor \alpha n_2/k \rfloor}_2$. That is, $\mathcal{K}$ is the set of all swapped  indices $\ell$ at the end of the interpolation process, so that $|\mathcal{K}|= \lfloor \alpha n_2/k \rfloor$. Let $\beta_d =\E_{\T^d}[\Psi(\b{U}^d)]$ for $d\in [0, |\mathcal{K}|]$ and note that if we switch the index $\ell$  at the $d$-th step, then $\beta_d = \E_{\T^d}\l[\Psi(\b{G}(1;\EE_\ell)\b{U}^{d})\r]=\E_{\T^d}[\Psi(\b{U}^d)]$ and 
$\beta_{d+1} = \E_{\T^{d+1}}\l[\Psi(\b{G}(t_{\ell};\EE_\ell)\b{U}^{d})\r]$. Therefore, we can rewrite (\ref{chapter3:appendix:structured.two.layers.proposition.4}) as 
\begin{align}
    |\beta_{d+1} - \beta_{d}| &\leq  \lambda (\beta_d + \omega) ,\nonumber
\end{align}
where $\lambda = \frac{2\theta (1-p)  }{p(1-\alpha)}\frac{\sigma_1^2 \nu_2 \sqrt{k n_3} }{n_2}$ and $\omega=\frac{2\theta\nu_2 \sqrt{n_3} + 2\tau \theta^2 \sigma_2 \nu_1 \sqrt{n_1}}{\sqrt{k}} + \theta^2$. Using Lemma \ref{chapter3:lemma:3}, we obtain
\begin{align*}
    |\beta_{d+1} - \beta_d| &\leq \exp\l(\lambda \l\lfloor \frac{\alpha n_2}{k} \r\rfloor\r)\lambda\l(\beta_0 + \omega\r),
\end{align*}
summing the above over $d\in \l\lfloor \frac{\alpha n_2}{k} \r\rfloor$, we have
\begin{align*}
    \l|\beta_{\l\lfloor \frac{\alpha n_2}{k} \r\rfloor} - \beta_0\r| &\leq \exp\l(\lambda \l\lfloor \frac{\alpha n_2}{k} \r\rfloor\r)\lambda \l\lfloor \frac{\alpha n_2}{k} \r\rfloor\l(\beta_0 + \omega\r)\\
    &\leq f\l(\frac{\lambda \alpha n_2}{k}\r) \l(\beta_0 + \omega\r)\\
    &= f(\Lambda)(\beta_0 + \omega),
\end{align*}
where $\Lambda = \frac{\lambda \alpha n_2}{k} = \frac{2\theta \alpha(1-p)}{p(1-\alpha)} \frac{\sigma_1^2 \nu_2 \sqrt{n_3}}{\sqrt{k}}$. It follows that
\begin{align}
    \E_{\T}[\Psi(\b{U})] &=   \beta_{\l\lfloor \frac{\alpha n_2}{k} \r\rfloor} \leq \l(1 + f\l(\Lambda \r)\r) \Psi(\b{W}_1) + f\l(\Lambda\r)\omega. \nonumber
\end{align}
Since  $\|\b{W}_1\|=\sigma_1, \|\b{W}_2\|=\sigma_2$ and $\|\varphi_1\|_{\rm Lip}\leq 1$ by (\ref{chapter3:assumption:1.2}), we have that $\Psi(\b{W}_1) \leq (\sigma_1 \sigma_2)^2 \E[\|\b{x}-\b{x}'\|^2]$. Therefore 
\begin{align}
     \E_{\T}[\Psi(\b{U})] &\leq \l(1 + f\l(\Lambda\r) \r)(\sigma_1\sigma_2)^2\E\l[\|\b{x} - \b{x}'\|^2\r] + f\l(\Lambda\r)\omega. \label{chapter3:appendix:structured.two.layers.proposition.5}
\end{align}
We next show
\begin{align}
    \E_{\T}\l[\Pi^{\kappa}(\b{U})\r] &\leq \l(1 + f\l(\Lambda\r) \r)(\sigma_1\sigma_2)^2\E\l[\|\b{x} - \b{x}'\|^2\r] + f\l(\Lambda\r)\omega. \label{chapter3:appendix:structured.two.layers.proposition.6}
\end{align}
By (\ref{chapter3:appendix:structured.two.layers.proposition.5}), it suffices to prove that
\begin{align}
    \E_{\T}\l[\Pi^{\kappa}(\b{U})\r] &\leq \E_{\T}\l[\Psi(\b{U})\r], \label{chapter3:appendix:structured.two.layers.proposition.7}
\end{align}
which we show next. Fix a pair of vectors $(\b{x}, \b{x}') \in \mathbb{B}^{n_1}_{2}(\rho) \times \mathbb{B}^{n_1}_{2}(\theta)$, and denote by $\b{z}, \b{z}'$ the output vectors $\varphi_2(\b{W}_2 \varphi_1(\b{W}_1 \b{x})),  \varphi_2(\b{W}_2 \varphi_1(\b{U} \b{x}'))$. As $\|\b{x}\|\leq \rho$   it follows from (\ref{chapter3:assumption:1.1}) and (\ref{chapter3:assumption:1.2}) that $  \|\b{z}\| = \|\varphi_2(\b{W}_2\varphi_1(\b{W}_1 \b{x}))\| \leq \sigma_1 \sigma_2 \rho $. Since $\|\b{z}\|\leq \sigma_1 \sigma_2 \rho $ and $\sigma_1 \sigma_2\rho \leq \kappa$ from the assumptions of Proposition \ref{chapter3:appendix:structured.two.layers.proposition}, we have using Lemma \ref{chapter3:lemma:1} that 
\begin{align}
    \|\b{z} - [\b{z}']_{\kappa}\|  &\leq \|\b{z} - \b{z}'\|\nonumber \nonumber \\
    &= \|\varphi_2(\b{W}_2\varphi_1(\b{W}_1 \b{x})) - \varphi_2(\b{W}_2\varphi_1(\b{U}\b{x}'))\| \nonumber \\
    &\leq \|\b{W}_2 \varphi_1(\b{W}_1\b{x}) - \b{W}_2 \varphi_1(\b{U} \b{x}')\|, \nonumber
\end{align}
where we used $\|\varphi_2\|_{\rm Lip}\leq 1$  in the last line. Therefore
\begin{align}
    \E_{\T}\l[\Pi^{\kappa}(\b{U})\r] &= \E_{\T}\E_{(\b{x}, \b{x}')}\l[\|\b{z} - [\b{z}']_{\kappa}\|^2\r]\nonumber\\
    &\leq  \E_{\T}\E_{(\b{x}, \b{x}')}\l[\|\b{W}_2 \varphi_1(\b{W}_1\b{x}) - \b{W}_2 \varphi_1(\b{U} \b{x}')\|^2\r]\nonumber\\
    &= \E_{\T}\l[\Psi(\b{U})\r]. \nonumber
\end{align}
This concludes the proof of (\ref{chapter3:appendix:structured.two.layers.proposition.6}). Note that by construction
\begin{align}
    \b{U} = \b{D} \b{W}_1, \nonumber
\end{align}
we then conclude similarly to the proof of Proposition \ref{chapter3:appendix:structured.single.proposition} that there exists a constant $n_0=n_0(\gamma , p, \alpha, \varepsilon)$, such that if $n_2/k \geq n_0$, then there exists a nonrandom realization $(\hat{h}_{\ell})_{\ell \in \mathcal{K}}$ of $(h_{\ell})_{\ell \in \mathcal{K}}$, and $\hat{\b{U}}, \hat{\b{D}}$ of $\b{U}, \b{D}$ such that
    \begin{align}
        \Pi^{\kappa}(\hat{\b{U}}) &\leq (1+\varepsilon) \bigg( \l(1 + f\l(\Lambda\r) \r)(\sigma_1\sigma_2)^2\E\l[\|\b{x} - \b{x}'\|^2\r] + f\l(\Lambda\r)\omega \bigg), \label{chapter3:appendix:structured.two.layers.proposition.8}
    \end{align}
    and 
    \begin{align}
        \frac{\|\hat{\b{D}}\|_0}{k} \leq (1+\gamma)p\l\lfloor \frac{\alpha n_2}{k} \r\rfloor + \frac{n_2}{k} - \l\lfloor \frac{\alpha n_2}{k} \r\rfloor. \label{chapter3:appendix:structured.two.layers.proposition.9}
    \end{align}
    Finally, note that by construction each null $\hat{h}_\ell$ leads to a null submatrix $\hat{\b{D}}_{\EE_\ell,\EE_\ell}$. Hence (\ref{chapter3:appendix:structured.two.layers.proposition.8}) and (\ref{chapter3:appendix:structured.two.layers.proposition.9}) yield the result of the proposition.
\end{proof}

\subsection{Multilayer Perceptron}\label{chapter3:appendix:structured.deep.fcnn}

\begin{proposition}\label{chapter3:appendix:structured.deep}
    Let $\mathcal{R}$ be a distribution over $\mathbb{B}^{n_1}_{2}(1)$, $\b{x} \sim \mathcal{R}$, $\xi, p\in(0,1)$ and $\alpha \approx 0.99$. Suppose $\Phi$ is an $m$-layer MLP with layers $\b{W}_\ell \in \R^{n_{\ell+1} \times n_{\ell}}, \ell \in[m]$. Furthermore, let $k|\gcd(\{n_\ell \mid \ell\in[m]\})$,  and $\sigma\geq \max_{\ell \in [m]} \|\b{W}_\ell\|, \nu_{\ell} =   \max_{(i, j) \in [n_{\ell+1}/k] \times [n_{\ell}/k]} \| [\b{W}_{\ell}]_{\EE_i, \EE_j}\|, \forall \ell \in [m]$, where $\mathcal{E}_{\ell} = \{k(\ell-1)+q \mid q\in[k]\}$. Then there exists $\delta = \delta(\xi)$ and $n_0 = n_0(\xi, p)$ such that if $\min_{\ell\in \mathcal{W}\cup (\mathcal{B}+1)} n_{\ell} / k \geq n_0$ and
    \begin{alignat}{3}
        &\forall \ell\in \mathcal{W}, \quad && \frac{\nu_{\ell}^2 n_{\ell+1}}{k} &&\leq \frac{ p(1-\alpha)\delta}{\alpha(1-p)} \sigma^2, \label{chapter3:appendix:structured.deep.w}\\
        &\forall \ell\in \mathcal{B}, \quad && \frac{\nu_{\ell+1}^2 n_{\ell+2}}{k} &&\leq \frac{p^2(1-p)^2\delta^2}{\alpha^2 (1-p)^2}\frac{(1 \wedge \sigma^{8})}{\sigma^{2(\ell+1)}}  \wedge  \frac{ p(1-\alpha)\delta}{\alpha(1-p)} \sigma^2, \label{chapter3:appendix:structured.deep.b.1}\\
        &\forall \ell\in \mathcal{B}, \quad  &&\frac{\nu_\ell \nu_{\ell+1} \sqrt{n_\ell n_{\ell+2}}}{k} &&\leq \frac{p(1-\alpha)\delta}{\alpha(1-p)} \frac{1}{\sigma^{\ell-2}\l(1\vee \frac{1-p}{p}\r)}  \label{chapter3:appendix:structured.deep.b.2},
    \end{alignat}
    then there exists a network $\hat{\Phi}$ given  by $\hat{\Phi}(\b{x}) = [\varphi_m(\hat{\b{W}}_m[\varphi_{m-1}(\dots \hat{\b{W}}_1\b{x})]_{\kappa_{m-1}})]_{\kappa_m} $ with $\kappa_\ell=\sigma^\ell$, such that
    \begin{enumerate}
        \item \label{chapter3:appendix:structured.deep.1} $\E_{\b{x}}\l[ \|\Phi(\b{x}) - \hat{\Phi}(\b{x})\|^{2} \r] \leq \sigma^{2m} (1+\xi)^{m}\xi$.
        \item \label{chapter3:appendix:structured.deep.2} $\forall \ell \in \mathcal{W}$, the matrix $\hat{\b{W}}_\ell$ has at most $1.01p$ fraction of its block-columns $\hat{\b{W}}_{:, \EE_j}$ not set to zero.
        \item \label{chapter3:appendix:structured.deep.3} $\forall \ell \in \mathcal{B}$, the matrix  $\hat{\b{W}}_\ell$ has at most $1.01p$ fraction of its block-rows $\hat{\b{W}}_{\EE_i,:}$ not set to zero.
    \end{enumerate}
    Furthermore, if $\mathcal{D}$ is a data distribution over $(\b{x}, \b{y}) \in \mathbb{B}^{n_1}_2(1) \times \R^{n_{m+1}}$, and $\varepsilon = (1+\xi)^m \xi$ then
    \begin{align}
        \mathcal{L}(\hat{\Phi}; \mathcal{D}) &\leq \mathcal{L}(\Phi; \mathcal{D}) + 2\sigma^{m} \sqrt{\varepsilon \mathcal{L}(\Phi; \mathcal{D})} + \sigma^{2m} \varepsilon.  \label{chapter3:appendix:structured.deep.4}
    \end{align}
\end{proposition}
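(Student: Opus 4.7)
The plan is to mirror the proof of Proposition~\ref{chapter3:proposition:1}, building the pruned layers $\hat{\b{W}}_\ell$ one at a time and then invoking Lemma~\ref{chapter3:lemma:9} to combine the per-layer error bounds into~(\ref{chapter3:appendix:structured.deep.1}). Following the notation of that lemma, I define $\b{z}^{\ell}$ and $\hat{\b{z}}^{\ell}$ as the outputs of $\Phi$ and $\hat{\Phi}$ at depth $\ell$, set $\kappa_\ell=\sigma^\ell$, and fix $\alpha\approx 0.99$, $\gamma\approx 0.01$, $\varepsilon=\xi/2$. I choose $n_0(p,\xi)$ to dominate the thresholds appearing in Propositions~\ref{chapter3:appendix:structured.single.proposition} and~\ref{chapter3:appendix:structured.two.layers.proposition} together with $k/(1-\alpha)$, which guarantees the range conditions on $\alpha$ in those results. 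Layers with $\ell\notin\mathcal{W}\cup\mathcal{B}$ are untouched: setting $\hat{\b{W}}_\ell=\b{W}_\ell$ satisfies~(\ref{chapter3:lemma:9.1}) trivially.

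For $\ell\in\mathcal{W}$ I apply Proposition~\ref{chapter3:appendix:structured.single.proposition} to the pair $(\b{x},\b{x}')=(\b{z}^{\ell-1},[\hat{\b{z}}^{\ell-1}]_{\kappa_{\ell-1}})$ with $\rho=\theta=\kappa_{\ell-1}$ and $\kappa=\kappa_\ell=\sigma\rho$, producing a diagonal mask $\b{D}$ with $\hat{\b{W}}_\ell=\b{W}_\ell\b{D}$. Substituting~(\ref{chapter3:appendix:structured.deep.w}) into the additive term $\frac{2\alpha\kappa_{\ell-1}^2(1-p)\nu_\ell^2 n_{\ell+1}}{p(1-\alpha)k}$ bounds it by $2\sigma^{2\ell}\delta$, so~(\ref{chapter3:lemma:9.2}) holds with $\varepsilon_1=\xi/2$, $\varepsilon_2=2\delta(1+\xi/2)$. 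For $\ell\in\mathcal{B}$ I apply Proposition~\ref{chapter3:appendix:structured.two.layers.proposition} at the block of layers $\ell,\ell+1$ with $\kappa=\kappa_{\ell+1}$, producing a diagonal mask $\b{D}$ and setting $\hat{\b{W}}_\ell=\b{D}\b{W}_\ell$, $\hat{\b{W}}_{\ell+1}=\b{W}_{\ell+1}$.

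The central technical step is showing that under~(\ref{chapter3:appendix:structured.deep.b.1}) and~(\ref{chapter3:appendix:structured.deep.b.2}), both $\Lambda=\frac{2\alpha\sigma^{\ell+1}(1-p)\nu_{\ell+1}\sqrt{n_{\ell+2}}}{p(1-\alpha)\sqrt{k}}$ and $f(\Lambda)\omega$ remain small. Taking the square root of~(\ref{chapter3:appendix:structured.deep.b.1}) yields $\Lambda\leq 2\delta(1\wedge\sigma^{4})$, so $f(\Lambda)\leq 2\delta e^{2\delta}$. I then decompose $\omega=\tfrac{2\sigma^{\ell-1}\nu_{\ell+1}\sqrt{n_{\ell+2}}}{\sqrt{k}}+\tfrac{2\tau\sigma^{2\ell-1}\nu_\ell\sqrt{n_\ell}}{\sqrt{k}}+\sigma^{2(\ell-1)}$ with $\tau=1\vee\tfrac{1-p}{p}$: the first summand is bounded using~(\ref{chapter3:appendix:structured.deep.b.1}), the cross-term $\nu_\ell\sqrt{n_\ell}$ together with $\nu_{\ell+1}\sqrt{n_{\ell+2}}$ arising after multiplication by $\Lambda$ is controlled by~(\ref{chapter3:appendix:structured.deep.b.2}), and the residual $f(\Lambda)\sigma^{2(\ell-1)}/\sigma^{2(\ell+1)}=\mathcal{O}(\delta)\sigma^{-4}$ is absorbed into a constant multiple of $\delta$. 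This gives~(\ref{chapter3:lemma:9.3}) with $\varepsilon_3,\varepsilon_4=\mathcal{O}(\delta)$, and choosing $\delta(\xi)$ small enough secures the hypothesis $\xi\geq 2\varepsilon_1\vee 2\varepsilon_3\vee\sqrt{2\varepsilon_2}\vee\sqrt{2\varepsilon_4}$ of Lemma~\ref{chapter3:lemma:9}; applying its conclusion~(\ref{chapter3:lemma:9.4}) at $\ell=m$ yields~(\ref{chapter3:appendix:structured.deep.1}).

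The structured sparsity assertions in items~\ref{chapter3:appendix:structured.deep.2} and~\ref{chapter3:appendix:structured.deep.3} follow directly from the block-column/row counts~(\ref{chapter3:appendix:structured.single.proposition.1}) and~(\ref{chapter3:appendix:structured.two.layers.proposition.1}) with our choices of $\alpha$ and $\gamma$. Finally,~(\ref{chapter3:appendix:structured.deep.4}) is derived verbatim from the argument following~(\ref{chapter3:proposition:1.3}): expanding $\mathcal{L}(\hat{\Phi};\mathcal{D})=\E\|\hat{\Phi}(\b{x})-\Phi(\b{x})+\Phi(\b{x})-\b{y}\|^2$ and applying Cauchy--Schwarz together with~(\ref{chapter3:appendix:structured.deep.1}) gives the claim. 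The most delicate bookkeeping will be the bottleneck case, where three distinct width-ratio contributions to $\omega$ must simultaneously be controlled by $\mathcal{O}(\delta)\sigma^{2(\ell+1)}$; the factor $\sigma^{\ell-1}$ coming from $\kappa_{\ell-1}$ is precisely what forces the $\sigma^{-2(\ell+1)}$ decay built into the right-hand sides of~(\ref{chapter3:appendix:structured.deep.b.1}) and~(\ref{chapter3:appendix:structured.deep.b.2}).
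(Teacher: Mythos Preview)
Your proposal is correct and follows essentially the same route as the paper: construct $\hat{\b{W}}_\ell$ layer by layer using Propositions~\ref{chapter3:appendix:structured.single.proposition} and~\ref{chapter3:appendix:structured.two.layers.proposition}, verify the hypotheses of Lemma~\ref{chapter3:lemma:9} by bounding the additive terms via~(\ref{chapter3:appendix:structured.deep.w})--(\ref{chapter3:appendix:structured.deep.b.2}), and then deduce~(\ref{chapter3:appendix:structured.deep.4}) by Cauchy--Schwarz. The only minor slip is that $n_0$ should dominate $1/(1-\alpha)$ rather than $k/(1-\alpha)$, since the condition $\min_{\ell}n_\ell/k\geq n_0$ already carries the factor of $k$; your handling of the $(1\wedge\sigma^4)$ refinement in the bottleneck case is exactly what the paper uses implicitly to control the $\sigma^{2(\ell-1)}$ term in $\omega$.
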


\begin{proof}[Proof of Proposition \ref{chapter3:appendix:structured.deep}]
    We will construct structurally sparse matrices $\hat{\b{W}}_\ell$ recursively and verify conditions (\ref{chapter3:lemma:9.1}), (\ref{chapter3:lemma:9.2}) and (\ref{chapter3:lemma:9.3}) from Lemma \ref{chapter3:lemma:9}, which would then readily yield the result of the proposition for small enough $\delta$. Following the notation of Lemma \ref{chapter3:lemma:9}, let $\b{z}^\ell, \hat{\b{z}}^\ell$ be given recursively by $\b{z}^0 = \hat{\b{z}}^0 = \b{x}$, and for $\ell \in [m]$
    \begin{alignat}{2}
        &\b{z}^{\ell} &&= \varphi_{\ell}(\b{W}_\ell \b{z}^{\ell-1}),\nonumber\\
        &\hat{\b{z}}^{\ell} &&= \varphi_{\ell}(\hat{\b{W}}_\ell [\hat{\b{z}}^{\ell-1}]_{\kappa_{\ell-1}}),\nonumber
    \end{alignat}
    where $\kappa_\ell = \sigma^{\ell}$. Let $\alpha\approx 0.99,\gamma \approx 0.01$, and $\varepsilon=\xi/2$. Denote by $n_0^1$ the constant $n_0=n_0(\gamma, p,\alpha, \varepsilon) = n_0(p, \xi)$ in Proposition \ref{chapter3:appendix:structured.single.proposition}. Similarly, denote by $n_0^2$ the constant $n_0=n_0(\gamma, p,\alpha, \varepsilon)=n_0(p, \xi)$ in Proposition \ref{chapter3:appendix:structured.two.layers.proposition}. Set $n_0 = n_0^1 \vee n_0^2 \vee  \frac{1}{1-\alpha}$. we consider three cases.
    \begin{enumerate}
        \item Case 1. $\ell \not \in \mathcal{W} \cup \mathcal{B}$. In this case we set $\hat{\b{W}}_\ell = \b{W}_\ell$, which satisfies (\ref{chapter3:lemma:9.1}) in Lemma \ref{chapter3:lemma:9}.
        \item Case 2. $\ell\in \mathcal{W}$. Note that $(\b{z}^{\ell-1}, [\hat{\b{z}}^{\ell-1}]_{\kappa_{\ell-1}}) \in \mathbb{B}^{n_{\ell}}_2(\kappa_{\ell-1}) \times \mathbb{B}^{n_{\ell}}_{2}(\kappa_{\ell-1})$. Applying Proposition \ref{chapter3:appendix:structured.single.proposition} with $(\b{x}, \b{x}') = (\b{z}^{\ell-1}, [\hat{\b{z}}^{\ell-1}]_{\kappa_{\ell-1}})$,  $(\rho, \theta) = (\kappa_{\ell-1}, \kappa_{\ell-1}),\kappa=\kappa_\ell=\sigma \rho$ and $\alpha\approx0.99, \gamma\approx 0.01$ (note that by construction of $n_0$, the inequality $\alpha \leq 1- k/n_{\ell} $ holds in the statement of  Proposition \ref{chapter3:appendix:structured.single.proposition}), it follows that there exists a diagonal mask matrix $\b{D}\in \l\{0, 1, \frac{1}{p}\r\}^{n_{\ell} \times n_{\ell}}$, such that 
        \begin{align}
            \frac{\l|\l\{\EE_\ell \mid \ell\in[n_\ell/k],  \b{D}_{\EE_\ell, \EE_\ell} \neq \b{0}_{k\times k} \r\}\r|}{n_\ell/k} &\leq 0.01 + 1.01p, \label{chapter3:appendix:structured.deep.case2.1}
        \end{align}
        and
        \begin{align}
        \Pi^{\kappa_{\ell}}(\b{W}_{\ell} \b{D}) &\leq \l(1+\frac{\xi}{2}\r)\sigma^2  \E\l[\|\b{z}^{\ell-1} - [\hat{\b{z}}^{\ell-1}]_{\kappa_{\ell-1}}\|^2\r]  \nonumber\\
        &+  \l(1+\frac{\xi}{2}\r)\frac{2\alpha \kappa_{\ell-1}^2(1-p)}{p(1-\alpha)} \frac{\nu_\ell^2 n_{\ell+1}}{k} \label{chapter3:appendix:structured.deep.case2.2}. 
            \end{align}
        Set $\hat{\b{W}}_{\ell} = \b{W}_{\ell} \b{D}$ in $\hat{\Phi}$. We then have by (\ref{chapter3:appendix:structured.deep.case2.1}) that at most $0.01 + 1.01p$ fraction of the block-columns $[\hat{\b{W}}_\ell]_{:, \mathcal{E}_i}, i\in [n_\ell/k]$ in $\hat{\b{W}}_\ell$ are nonzero. Moreover, 
        \begin{align*}
            \l(1+\frac{\xi}{2}\r)\frac{2\alpha  \kappa_{\ell-1}^2 (1-p) }{p(1-\alpha)} \frac{\nu_\ell^2 n_{\ell+1}}{k} &\leq 4 \sigma^{2(\ell-1)}   \frac{\alpha(1-p)}{p(1-\alpha)} \frac{\nu_\ell^2 n_{\ell+1}}{k} \\
            &\leq 4 \sigma^{2\ell}  \delta,
        \end{align*}
        where we used (\ref{chapter3:appendix:structured.deep.w}) in the last line.    Let $\varepsilon_1 = \frac{\xi}{2}$ and $\varepsilon_2 = 4\delta$. Combining the above with (\ref{chapter3:appendix:structured.deep.case2.2}), it follows that
        \begin{align*}
            \E\l[ \| \b{z}^{\ell} - [\hat{\b{z}}^\ell]_{\kappa_\ell} \|^2\r] &= \Pi^{\kappa_\ell}(\hat{\b{W}}_\ell)\\
            &\leq (1+\varepsilon_1)\sigma^2 \E\l[\|\b{z}^{\ell-1} - [\hat{\b{z}}^{\ell-1}]_{\kappa_{\ell-1}}\|^2\r] + \sigma^{2\ell} \varepsilon_2,
        \end{align*}
        which shows (\ref{chapter3:lemma:9.2}).        
        \item Case 3. $\ell \in \mathcal{B}$. Note that $(\b{z}^{\ell-1}, [\hat{\b{z}}^{\ell-1}]_{\kappa_{\ell-1}}) \in \mathbb{B}^{n_{\ell}}_2(\kappa_{\ell-1}) \times \mathbb{B}^{n_{\ell}}_{2}(\kappa_{\ell-1})$. Applying Proposition \ref{chapter3:appendix:structured.two.layers} with $(\b{x}, \b{x}') = (\b{z}^{\ell-1}, [\hat{\b{z}}^{\ell-1}]_{\kappa_{\ell-1}})$,  $(\rho, \theta) = (\kappa_{\ell-1}, \kappa_{\ell-1}),\kappa=\kappa_{\ell+1}=\sigma^2 \rho$ and $\alpha\approx0.99, \gamma\approx 0.01$ (note that by construction of $n_0$, the inequality $\alpha \leq 1- k/n_{\ell+1}$ holds in the statement of Proposition \ref{chapter3:appendix:structured.two.layers}), it follows that there exists a diagonal mask  matrix $\b{D} \in \l\{0, 1, \frac{1}{p}\r\}^{n_{\ell+1} \times n_{\ell+1}}$ such that
        \begin{align}
            \frac{ \l|\l\{\EE_\ell \mid \ell\in[n_{\ell+1}/k],  \b{D}_{\EE_\ell, \EE_\ell} \neq \b{0}_{k\times k} \r\}\r|}{n_{\ell+1}/k} &\leq 0.01 + 1.01p,\label{chapter3:appendix:structured.deep.case3.1}
        \end{align}
        and
        \begin{align}
        \Pi^{\kappa_{\ell+1}}(\b{D} \b{W}_{\ell}) &\leq \l(1+ \frac{\xi}{2}\r) \l(1 + f(\Lambda) \r) \sigma^4\E\l[\|\b{z}^{\ell-1}-[\hat{\b{z}}^{\ell-1}]_{\kappa_{\ell-1}}\|^2\r] + \l(1 + \frac{\xi}{2}\r) f(\Lambda) \omega\label{chapter3:appendix:structured.deep.case3.2} ,
        \end{align}
        where $\Lambda =  \frac{2\alpha\kappa_{\ell-1}(1-p)  }{p(1-\alpha)}\frac{\sigma^2 \nu_{\ell+1} \sqrt{n_{\ell+2}} }{\sqrt{k}}$, $\omega = \frac{2\kappa_{\ell-1}\nu_{\ell+1} \sqrt{n_{\ell+2}} + 2\tau (\kappa_{\ell-1})^2\sigma \nu_{\ell} \sqrt{n_{\ell}}}{\sqrt{k}} + (\kappa_{\ell-1})^2$, and $\tau = 1\vee \frac{1-p}{p}$. We first bound the terms in (\ref{chapter3:appendix:structured.deep.case3.2}). We have
        \begin{align}
            \Lambda & = \frac{2\alpha\sigma^{\ell+1}(1-p)}{p(1-\alpha)} \frac{\nu_{\ell+1} \sqrt{n_{\ell+2}}}{\sqrt{k}} \leq 2\delta \label{chapter3:appendix:structured.deep.case3.3}, 
        \end{align}
        where we used (\ref{chapter3:appendix:structured.deep.b.1}) in the last inequality. Similarly, we have
        \begin{align}
            f(\Lambda)\omega  &= e^{\Lambda} \Lambda\l(\frac{2\sigma^{\ell-1} \nu_{\ell+1} \sqrt{n_{\ell+2}}}{\sqrt{k}} + \frac{2\sigma^{2\ell-1} \tau \nu_{\ell}\sqrt{n_{\ell}}}{\sqrt{k}} + \sigma^{2(\ell-1)}\r) \nonumber \\
            &= \frac{2\alpha (1-p)e^{\Lambda}}{p(1-\alpha)} \l( \frac{2\sigma^{2\ell} (\nu_{\ell+1})^2 n_{\ell+2}}{k} + \frac{2 \sigma^{3\ell} \tau \nu_{\ell} \nu_{\ell+1} \sqrt{n_{\ell} n_{\ell+2}} }{k}  + \frac{\sigma^{3\ell-1} \nu_{\ell+1} \sqrt{n_{\ell+2}}}{\sqrt{k}}\r) \nonumber \\
            &\leq 2e^{2\delta}\sigma^{2(\ell+1)} \l( 2\delta + 2 \delta + \delta\r) \label{chapter3:appendix:structured.deep.case3.4}\\
            &= 10\delta e^{2\delta} \sigma^{2(\ell+1)} \nonumber \\
            &= 5f(2\delta) \sigma^{2(\ell+1)}, \label{chapter3:appendix:structured.deep.case3.5}
        \end{align}
        where we used (\ref{chapter3:appendix:structured.deep.b.1}) and (\ref{chapter3:appendix:structured.deep.b.2}) in line (\ref{chapter3:appendix:structured.deep.case3.3}).   Set $\hat{\b{W}}_\ell = \b{D}\b{W}_\ell$, $\varepsilon_3 = \frac{\xi}{2}+\l(1+\frac{\xi}{2}\r)f(2\delta)$, and $\varepsilon_4 = \l( 1 + \frac{\xi}{2}\r)5f\l(2\delta\r)$. We then have that at most $0.01 + 1.01p$ fraction of the rows of $\hat{\b{W}}_\ell$ are nonzero. Moreover, we have from (\ref{chapter3:appendix:structured.deep.case3.3}) and  (\ref{chapter3:appendix:structured.deep.case3.4})
        \begin{align*}
             \E\l[ \| \b{z}^{\ell+1} - [\hat{\b{z}}^{\ell+1}]_{\kappa_{\ell+1}} \|^2\r] &= \Pi^{\kappa_{\ell+1}}(\hat{\b{W}}_{\ell})\\
             &\leq  (1+\varepsilon_3) \sigma^4 \E\l[ \| \b{z}^{\ell-1} - [\hat{\b{z}}^{\ell-1}]_{\kappa_{\ell-1}} \|^2\r] + \sigma^{2(\ell+1)} \varepsilon_4,
        \end{align*}
        which shows (\ref{chapter3:lemma:9.3}).
    \end{enumerate}
    Using Lemma \ref{chapter3:lemma:9}, it follows that item \ref{chapter3:appendix:structured.deep.1} in Proposition \ref{chapter3:appendix:structured.deep} holds for small enough $\delta =\delta(\xi)$. Moreover, items \ref{chapter3:appendix:structured.deep.2} and \ref{chapter3:appendix:structured.deep.3} readily hold from (\ref{chapter3:appendix:structured.deep.case2.1}) and (\ref{chapter3:appendix:structured.deep.case3.1}). It remains to show (\ref{chapter3:appendix:structured.deep.4}). We have
\begin{align}
    \mathcal{L}(\hat{\Phi}; \mathcal{D}) &= \E_{(\b{x}, \b{y})\sim \mathcal{D}} \l[\|\hat{\Phi}(\b{x}) - \b{y}\|^2 \r]\nonumber\\
    &= \E_{(\b{x}, \b{y}) \sim \mathcal{D}} \l[\|\hat{\Phi}(\b{x}) - \Phi(\b{x}) + \Phi(\b{x})  - \b{y}\|^2 \r]\nonumber\\
    &= \mathcal{L}(\Phi; \mathcal{D}) + 2 \E[\langle \hat{\Phi}(\b{x}) - \Phi(\b{x}),\Phi(\b{x})  - \b{y} \rangle ] + \E_{\b{x}} [\| \hat{\Phi}(\b{x}) - \Phi(\b{x})\|^2]\nonumber\\
    &\leq \mathcal{L}(\Phi; \mathcal{D}) + 2 \sqrt{\mathcal{L}(\Phi; \mathcal{D})} \sqrt{\E_{\b{x}} [\| \hat{\Phi}(\b{x}) - \Phi(\b{x})\|^2]} + \E_{\b{x}} [\| \hat{\Phi}(\b{x}) - \Phi(\b{x})\|^2]\nonumber\\
    &\leq \mathcal{L}(\Phi; \mathcal{D}) + 2 \sigma^m \sqrt{\varepsilon \mathcal{L}(\Phi; \mathcal{D})} + \sigma^{2m} \varepsilon,\nonumber
\end{align}
where we used (\ref{chapter3:appendix:structured.deep.1}) in the last line and $\varepsilon = (1+\xi)^m \xi$. This ends the proof of (\ref{chapter3:appendix:structured.deep.4}), and concludes the proof of Proposition \ref{chapter3:appendix:structured.deep}.
\end{proof}

\subsubsection{Proof of Proposition \ref{chapter3:proposition:3}}
\begin{proof}[Proof of Proposition \ref{chapter3:proposition:3}]
We apply the result of Proposition \ref{chapter3:appendix:structured.deep}. Let  $k=1$, and $\sigma=c_1 \geq \max_{\ell \in [m]}\|\b{W}_\ell\|$ in the setting of Proposition \ref{chapter3:appendix:structured.deep}. We have by (\ref{chapter3:assumption:1.5}) from Assumption \ref{chapter3:assumption:1} that $\nu_\ell \leq c_2 / \sqrt{n_\ell \vee n_{\ell+1}}$. Using (\ref{chapter3:proposition:3.w}), we   have for $\ell \in \mathcal{W}$
\begin{align*}
    \frac{\nu_\ell^2 n_{\ell+1}}{k} &\leq c_2^2 \frac{n_{\ell+1}}{n_{\ell}} \leq \frac{p(1-\alpha)\delta}{\alpha (1-p)}\sigma^2,
\end{align*}
which satisfies (\ref{chapter3:appendix:structured.deep.w}). Similarly, we have from (\ref{chapter3:proposition:3.b.1}) for $\ell\in \mathcal{B}$
\begin{align*}
    \frac{\nu_{\ell+1}^2 n_{\ell+2}}{k} &\leq c_2^2 \frac{n_{\ell+2}}{n_{\ell+1}} \leq \frac{p^2(1-p)^2\delta^2}{\alpha^2 (1-p)^2}\frac{(1 \wedge \sigma^{8})}{\sigma^{2(\ell+1)}}  \wedge  \frac{ p(1-\alpha)\delta}{\alpha(1-p)} \sigma^2,
\end{align*}
which verifies (\ref{chapter3:appendix:structured.deep.b.1}). Finally, we have  from (\ref{chapter3:proposition:3.b.2})
\begin{align*}
    \frac{\nu_\ell \nu_{\ell+1} \sqrt{n_\ell n_{\ell+2}}}{k} &\leq c_2^2 \frac{\sqrt{n_\ell n_{\ell+2}}}{n_{\ell+1}} \leq \frac{p(1-\alpha)\delta}{\alpha(1-p)} \frac{1}{\sigma^{\ell-2}\l(1\vee \frac{1-p}{p}\r)} ,
\end{align*}
which verifies (\ref{chapter3:appendix:structured.deep.b.2}) in Proposition \ref{chapter3:appendix:structured.deep}. Therefore, we conclude from Proposition \ref{chapter3:appendix:structured.deep} that there exists $\delta =\delta(\xi)$ and $n_0=n_0(\xi, p)$, such that if $\min_{\ell \in \mathcal{W} \cup (\mathcal{B}+1)} n_\ell \geq n_0$, then there exists a network $\hat{\Phi}$ satisfying item \ref{chapter3:proposition:3.1} in Proposition \ref{chapter3:proposition:3}, with layers $\hat{\b{W}}$ satisfying items \ref{chapter3:appendix:structured.deep.2} and \ref{chapter3:appendix:structured.deep.3} in Proposition \ref{chapter3:appendix:structured.deep}, which readily yields \ref{chapter3:proposition:3.2} and \ref{chapter3:proposition:3.3} in Proposition \ref{chapter3:proposition:3}. Finally, (\ref{chapter3:proposition:3.4}) follows from (\ref{chapter3:appendix:structured.deep.4}). This concludes the proof of Proposition \ref{chapter3:proposition:3}.

\end{proof}

\subsubsection{Proof of Proposition \ref{chapter3:proposition:4}}
\begin{proof}[Proof of Proposition \ref{chapter3:proposition:4}]
In the setting of Proposition \ref{chapter3:appendix:structured.deep}, let $k=r^2$, and $\sigma=c_1 \geq \max_{\ell \in [m]}\|\b{W}_\ell\|$. We first derive bounds for $\nu_\ell$. Let $\b{K}_{\ell} \in \R^{d_{\ell+1} \times d_{\ell} \times r \times r}$ be the four-dimensional tensor representing the $\ell$-th convolutional layer. Namely, $[\b{W}_\ell]_{\EE_o, \EE_i} = \mathcal{C}([\b{K}_\ell]_{o, i}), \forall (o, i) \in [d_{\ell+1}]\times [d_\ell]$. By construction $\|\b{K}_\ell\|_{\infty} = \|\b{W}_\ell\|_{\infty} \leq \frac{c_2}{q\sqrt{d_\ell \vee d_{\ell+1}}}$. Thus, we have using Lemma \ref{chapter3:lemma:6} for $(o, i) \in [d_{\ell+1}] \times [d_\ell]$
\begin{align}
    \|[\b{W}_\ell]_{\EE_o, \EE_i}\| &= \| \mathcal{C}([\b{K}_\ell]_{o, i}) \| \\
                                    &\leq \|[\b{K}_\ell]_{o, i}\|_{\infty} \|[\b{K}_\ell]_{o, i}\|_0\\
                                    &\leq q^2 \|[\b{K}_\ell]_{o, i}\|_{\infty}, \label{chapter3:proposition:4.6}
\end{align}
where we used the fact that $[\b{K}_\ell]_{o, i}$ was obtained by padding a $q\times q$ kernel matrix with zeros. Note that by construction, the set of entries in $[\b{W}_\ell]_{\EE_o, \EE_i}$ is identical to the set of entries in $[\b{K}_\ell]_{o, i}$. Thus 
\begin{align*}
    \|[\b{K}_\ell]_{o, i}\|_{\infty}&=\|[\b{W}_\ell]_{\EE_o, \EE_i}\|_{\infty}\\
    &\leq \|\b{W}_\ell\|_{\infty}\\
    &\leq \frac{c_2}{q\sqrt{d_{\ell} \vee d_{\ell+1}}},
\end{align*}
where we used (\ref{chapter3:proposition:4.0}) in the last line. Combining the above with (\ref{chapter3:proposition:4.6}), we obtain
\begin{align*}
    \forall (o, i) \in [d_{\ell+1}] \times [d_\ell], \quad \|[\b{W}_\ell]_{\EE_o, \EE_i}\| &\leq \frac{c_2 q}{\sqrt{d_{\ell} \vee d_{\ell+1}}},
\end{align*}
therefore
\begin{align}
    \forall \ell \in [m], \quad \nu_\ell \leq \frac{c_2 q}{\sqrt{d_\ell \vee d_{\ell+1}}}. \label{chapter3:proposition:4.7}
\end{align}
We now apply the result of Proposition \ref{chapter3:appendix:structured.deep}. We have by (\ref{chapter3:proposition:4.7}) for $\ell\in \mathcal{W}$
\begin{align*}
    \frac{\nu_{\ell}^2 n_{\ell+1}}{k} &\leq \l(\frac{c_2 q}{\sqrt{d_{\ell} \vee d_{\ell+1}}}\r)^2\frac{d_{\ell+1} r^2}{r^2}\\
    &= \frac{c_2^2 q^2 d_{\ell+1}}{d_{\ell}}\\
    &\leq \frac{p(1-\alpha)\delta}{\alpha (1-p)} \sigma^2,
\end{align*}
where we used (\ref{chapter3:proposition:4.w}) in the last line. The latter satisfies (\ref{chapter3:appendix:structured.deep.w}).  Similarly, we have from (\ref{chapter3:proposition:4.b.1}) for $\ell \in \mathcal{B}$
\begin{align*}
    \frac{\nu_{\ell+1}^2 n_{\ell+2}}{k} &\leq \frac{c_2^2 q^2 d_{\ell+2}}{d_{\ell+1}}\\
    &\leq \frac{p^2(1-p)^2\delta^2}{\alpha^2 (1-p)^2}\frac{(1 \wedge \sigma^{8})}{\sigma^{2(\ell+1)}}  \wedge  \frac{ p(1-\alpha)\delta}{\alpha(1-p)} \sigma^2,
\end{align*}
which verifies (\ref{chapter3:appendix:structured.deep.b.1}). Finally, we have from (\ref{chapter3:proposition:4.b.2})
\begin{align*}
    \frac{\nu_{\ell} \nu_{\ell+1} \sqrt{n_{\ell} n_{\ell+2}}}{k} &\leq c_2^2 q^2 \frac{\sqrt{d_{\ell} d_{\ell+2}}}{d_{\ell+1}} \\
    &\leq \frac{p(1-\alpha)\delta}{\alpha(1-p)} \frac{1}{\sigma^{\ell-2}\l(1\vee \frac{1-p}{p}\r)},
\end{align*}
which verifies (\ref{chapter3:appendix:structured.deep.b.2}). Therefore, we conclude from Proposition \ref{chapter3:appendix:structured.deep} that there exists $\delta =\delta(\xi)$ and $n_0=n_0(\xi, p)$, such that if $\min_{\ell \in \mathcal{W} \cup (\mathcal{B}+1)} d_\ell \geq n_0$, then there exists a network $\hat{\Phi}$ satisfying Item \ref{chapter3:proposition:4.1} in Proposition \ref{chapter3:proposition:4}, with layers $\hat{\b{W}}$ satisfying Items \ref{chapter3:appendix:structured.deep.2} and \ref{chapter3:appendix:structured.deep.3} from  Proposition \ref{chapter3:appendix:structured.deep}. Note that the input and output filters of the $\ell$-th layer are given by the block matrices $[\hat{\b{W}}_\ell]_{:, \mathcal{E}_i}$ and $[\hat{\b{W}}_\ell]_{\mathcal{E}_o, :}$ respectively. Hence,  items \ref{chapter3:appendix:structured.deep.2} and \ref{chapter3:appendix:structured.deep.3} from Proposition \ref{chapter3:appendix:structured.deep} readily yields parts \ref{chapter3:proposition:4.2} and  \ref{chapter3:proposition:4.3}  in Proposition \ref{chapter3:proposition:4}. Finally, (\ref{chapter3:proposition:4.4}) follows from (\ref{chapter3:appendix:structured.deep.4}). This concludes the proof of Proposition \ref{chapter3:proposition:4}.

\end{proof}



\bibliographystyle{plain}
\bibliography{References}

\end{document}